\def\thmref#1{Theorem~\ref{#1}}
\def\propref#1{Proposition~\ref{#1}}
\def\lmmref#1{Lemma~\ref{#1}}
\def\norm#1{\lVert#1\rVert}
\def\inner#1{\left\langle#1\right\rangle}
\def\bignorm#1{\left\lVert#1\right\rVert}
\def\bigabs#1{\left|#1\right|}
\def\bigopen#1{\left(#1\right)}
\def\bigset#1{\left\{#1\right\}}
\newcommand{\mycomment}[1]{}
\newcommand{\GG}[1]{}
\newcommand\note[1]{\textcolor{red}{#1}}
\renewcommand\note[1]{\textcolor{black}{#1}}
\newcommand{\sgdrr}{\textsf{SGD-RR}}
\def\1{\bm{1}}
\def\va{{\bm{a}}}
\def\vb{{\bm{b}}}
\def\vv{{\bm{v}}}
\def\vx{{\bm{x}}}
\def\vy{{\bm{y}}}
\def\vz{{\bm{z}}}
\def\mI{{\bm{I}}}
\DeclareMathAlphabet{\mathsfit}{\encodingdefault}{\sfdefault}{m}{sl}
\SetMathAlphabet{\mathsfit}{bold}{\encodingdefault}{\sfdefault}{bx}{n}
\def\gE{{\mathcal{E}}}
\def\gF{{\mathcal{F}}}
\def\gO{{\mathcal{O}}}
\def\gS{{\mathcal{S}}}
\def\sP{{\mathbb{P}}}
\newcommand{\E}{\mathbb{E}}
\newcommand{\R}{\mathbb{R}}
\DeclareMathOperator*{\argmax}{arg\,max}
\theoremstyle{plain}
\newtheorem{theorem}{Theorem}[section]
\newtheorem{proposition}[theorem]{Proposition}
\newtheorem{lemma}[theorem]{Lemma}
\theoremstyle{definition}
\newtheorem{definition}[theorem]{Definition}
\newtheorem{assumption}[theorem]{Assumption}
\theoremstyle{remark}
\icmltitlerunning{Tighter Lower Bounds for Shuffling SGD: Random Permutations and Beyond}
\begin{document}

\twocolumn[
\icmltitle{Tighter Lower Bounds for Shuffling SGD: Random Permutations and Beyond}



\icmlsetsymbol{equal}{*}

\begin{icmlauthorlist}
    \icmlauthor{Jaeyoung Cha}{kaist}
    \icmlauthor{Jaewook Lee}{kaist}
    \icmlauthor{Chulhee Yun}{kaist}
\end{icmlauthorlist}

\icmlaffiliation{kaist}{Kim Jaechul Graduate School of AI, KAIST, Seoul, South Korea}

\icmlcorrespondingauthor{Chulhee Yun}{chulhee.yun@kaist.ac.kr} 

\icmlkeywords{Optimization, SGD, Shuffling, Lower Bounds} 

\vskip 0.3in
]



\printAffiliationsAndNotice{}  

\begin{abstract}
    We study convergence lower bounds of without-replacement stochastic gradient descent (SGD) for solving smooth (strongly-)convex finite-sum minimization problems. Unlike most existing results focusing on final iterate lower bounds in terms of the number of components $n$ and the number of epochs $K$, we seek bounds for arbitrary weighted average iterates that are tight in all factors including the condition number $\kappa$. For SGD with \emph{Random Reshuffling}, we present lower bounds that have tighter $\kappa$ dependencies than existing bounds. Our results are the first to perfectly close the gap between lower and upper bounds for weighted average iterates in both strongly-convex and convex cases. We also prove weighted average iterate lower bounds for \emph{arbitrary} permutation-based SGD, which apply to all variants that carefully choose the best permutation. Our bounds improve the existing bounds in factors of $n$ and $\kappa$ and thereby match the upper bounds shown for a recently proposed algorithm called \textsf{GraB}.
\end{abstract}



\section{Introduction} \label{sec:1}

One of the most common frameworks used in machine learning is the following finite-sum minimization problem,
\begin{align}
    \min_{\vx} F(\vx) = \frac{1}{n} \sum_{i=1}^{n} f_i(\vx). \label{eq:intro}
\end{align}
Stochastic gradient descent (SGD), an algorithm first proposed by \citet{10.1214/aoms/1177729586}, is highly capable of numerically solving finite-sum optimization problems. In the $t$-th iteration, SGD randomly samples a component index $i(t)$ and computes a gradient-based update equation of the form $\vx_{t} = \vx_{t-1} - \eta_t \nabla f_{i(t)}(\vx_{t-1})$, where $\eta_t$ is a step size parameter, often set to a fixed constant.

Many prior studies on SGD have shown convergence results assuming \textit{with-replacement} sampling of the component index $i(t)$ (\citet{SPS_1999__33__1_0, doi:10.1137/16M1080173, MAL-050} and many others), where we independently choose $i(t)$ from a uniform random distribution over the index set every time. This uniform sampling makes each step of SGD an unbiased noisy estimate of vanilla gradient descent (GD).

In real-world applications, however, it is much more common to use \textit{without-replacement} SGD, where each epoch runs over the entire shuffled set of $n$ components. Without-replacement SGD has gained popularity for both its simplicity and empirical observations of faster convergence rates \citep{bottou, DBLP:journals/mpc/RechtR13, pmlr-v134-open-problem-yun21a}. However, theoretical analysis on without-replacement SGD remains quite elusive, especially because of the lack of independence between iterates. Nevertheless, recent works have managed to successfully deal with without-replacement SGD in theoretical aspects \citep{pmlr-v97-haochen19a, pmlr-v97-nagaraj19a, pmlr-v23-recht12}.

A simple and popular method of without-replacement sampling is to randomly shuffle the $n$ components independently on each epoch, often referred to as \textit{Random Reshuffling} or \sgdrr. Some studies show upper bounds of convergence rates for a certain class of functions \citep{DBLP:journals/siamjo/GurbuzbalabanOP19, ahnyun}, while some others present lower bounds by analyzing a function contained in a certain class with a slow convergence rate \citep{safran2020good, rajput20}. These preliminary results highlight that without-replacement SGD is in fact capable of converging provably faster than its with-replacement counterpart.

A recent line of work \citep{rajput2022permutation, lu2021general, mohtashami2022characterizing} opens a new field of studies on \textit{permutation-based SGD}, which covers all cases where the permutation of the $n$ component functions is chosen according to a certain policy, instead of simple random reshuffling.
The aim of this line of research is to design a policy that yields \emph{faster} convergence compared to random permutations.
Indeed, a recent result by \citet{lu2022grab} proposes \textsf{GraB}, a permutation-based SGD algorithm that uses the gradient information from previous epochs to manipulate the permutation of the current epoch, and shows that \textsf{GraB} provably converges faster than \textit{Random Reshuffling}. This raises the following question:
\begin{equation}
\begin{aligned}
&\,\,\text{
\textit{Is \textsf{GraB} optimal, or can we find an even faster}
}\\
&\,\,\text{
\textit{permutation-based SGD algorithm?}
}
\end{aligned}
\label{eq:quote}
\end{equation}

\subsection{Related Work}

Before summarizing our contributions, we list up related prior results so as to better contextualize our results relative to them. In all convergence rates, we write $\gO (\cdot)$ for upper bounds and $\Omega (\cdot)$ for lower bounds. The tilde notation $\tilde{\gO} (\cdot)$ hides polylogarithmic factors. For simplicity, here we write convergence rates only with respect to the number of component functions $n$ and the number of epochs $K$ (i.e., number of passes through the entire components).

SGD with replacement is known to have a tight convergence rate of $\gO \left( \frac{1}{T} \right)$ after $T$ iterations, which translates to $\gO \left( \frac{1}{nK} \right)$ in our notation.
One of the first studies on \sgdrr{} by \citet{DBLP:journals/siamjo/GurbuzbalabanOP19} shows an \textit{upper bound} of $\tilde{\gO} \left( \frac{1}{K^2} \right)$ for strongly convex objectives with smooth components, along with the assumption that $n$ is a constant.
\citet{pmlr-v97-haochen19a} show a convergence rate of $\tilde{\gO} \left( \frac{1}{n^2 K^2} + \frac{1}{K^3} \right)$ for functions with Lipschitz-continuous Hessians, which explicitly depends on both $n$ and $K$. \citet{rajput20} further show that the upper bound for strongly convex quadratics is $\tilde{\gO} \left( \frac{1}{n^2 K^2} + \frac{1}{nK^3} \right)$. Follow-up studies prove upper bounds in broader settings, such as $\tilde{\gO} \left( \frac{1}{n K^2} \right)$ for strongly convex (but not necessarily quadratic) functions \citep{pmlr-v97-nagaraj19a, ahnyun, mish20}, or $\gO \left( \frac{1}{n^{1/3} K^{2/3}} \right)$ under convex assumptions \citep{mish20}. Some further generalize to other variants of \sgdrr, including Minibatch and Local SGD in federated learning~\citep{yun22}, Nesterov's acceleration~\citep{tran2022nesterov}, or Stochastic Gradient Descent-Ascent used in minimax problems~\citep{hanseul}. Meanwhile, investigations on \textit{lower bounds} have started from simple quadratic assumptions, where \citet{safran2020good} prove a lower bound of rate $\Omega \left( \frac{1}{n^2 K^2} + \frac{1}{n K^3} \right)$. Lower bounds were then extended to smooth and strongly convex settings, as in \citet{rajput20} and \citet{yun22} which both derive a lower bound of $\Omega \left( \frac{1}{n K^2} \right)$.

Recent works provide evidence of designing algorithms that converge faster than \sgdrr.
Concretely, \citet{rajput2022permutation} introduce a permutation-based SGD algorithm called \textsf{FlipFlop} and prove that it can outperform \sgdrr{} for quadratic objectives. 
The authors also propose a lower bound applicable to arbitrary permutation-based SGD, by proving that no algorithm can converge faster than $\Omega \bigopen{\frac{1}{n^3K^2}}$ for some strongly convex objectives.
\citet{lu2021general} and \citet{mohtashami2022characterizing} propose methods to find ``good'' permutations via a greedy strategy. Extending their previous work, \citet{lu2022grab} propose \textsf{GraB} and gain a convergence rate $\tilde{\gO} \bigopen{\frac{1}{n^2K^2}}$ for PŁ functions which is faster than $\tilde{\gO} \bigopen{\frac{1}{nK^2}}$ for \sgdrr{} \citep{ahnyun}.

Most prior results \citep{rajput20, rajput2022permutation} mainly concern achieving tight convergence rates with respect to $n$ and $K$, while recent studies delve deeper to unveil how other parameters can also affect the convergence properties.
The \textit{condition number} $\kappa$ (defined in \cref{sec:2}) is an example of such parameters, which is closely related to the problem's geometry. 
If we take $\kappa$ into account\footnote{For this section, we treat $\kappa = \Theta \left( 1 / \mu \right)$ for simplicity, following the convention of other existing results in the literature \citep{pmlr-v97-haochen19a,pmlr-v97-nagaraj19a,safran_shamir}.} in the strongly convex case, the best known upper and lower bounds for \sgdrr{} are $\tilde{\gO} \bigopen{\frac{\kappa^3}{n K^2}}$ \citep{pmlr-v97-nagaraj19a, ahnyun, mish20} and $\Omega \bigopen{\frac{\kappa}{n K^2}}$ \citep{rajput20, yun22}, which differ by a factor of $\kappa^2$, and those for permutation-based SGD are  $\tilde{\gO} \bigopen{\frac{\kappa^3}{n^2 K^2}}$ \citep{lu2022grab} and $\Omega \bigopen{\frac{1}{n^3 K^2}}$ \citep{rajput2022permutation}, which differ by both $n$ and some factors of $\kappa$---that is, the bounds are \textit{not} completely tight for all factors yet.

While it is tempting to neglect the looseness in $\kappa$ by treating factors in $\kappa$ as ``leading constants,'' characterizing the right dependence on $\kappa$ becomes imperative for understanding the regimes in which without-replacement SGD is faster than the with-replacement version. 
For example, the aforementioned rate $\tilde{\gO} \bigopen{\frac{\kappa^3}{n K^2}}$ of \sgdrr{} improves upon the known tight rate $\gO \bigopen{\frac{\kappa}{nK}}$ of with-replacement SGD only if $K\gtrsim \kappa^2$. It turns out that this requirement of \emph{large enough} $K$ is in fact unavoidable in the strongly convex case~\citep{safran_shamir};
by developing a lower bound, \citet{safran_shamir} show that \sgdrr{} cannot converge faster than with-replacement SGD when $K \lesssim \kappa$. 
Characterizing the exact threshold ($\kappa$ vs.\ $\kappa^2$) for faster convergence of \sgdrr{} requires a tighter analysis of the $\kappa$ dependence of its convergence rate.

\begin{table*}[tb] \footnotesize
    \centering
    \vspace{-10pt} 
    \caption{A comparison of existing convergence rates and our results for permutation-based SGD. Parameters $L$, $\mu$, $\nu$, and $D$ are defined in \cref{sec:2}. Algorithm outputs $\hat{\vx}$, $\hat{\vx}_{\text{tail}}$, and $\hat{\vx}_{\text{avg}}$ are defined in \cref{sec:3}. Function classes $\gF$ and $\gF_{\text{PŁ}}$ are defined in Sections \ref{sec:2} and \ref{sec:4}, respectively. The herding bound $H$, which closely relates to the convergence rate of \cref{alg:grab}, is defined in \cref{sec:4}. The upper bound results are colored white and the lower bound results are colored gray. For a more detailed comparison with prior work, please refer to \cref{tab:summary2} in \cref{app:table}.}
    \label{tab:summary}
    \vspace{3pt} 
    \begin{threeparttable}[t]
    \centering
    \begin{tabular}{|c|c|lcr|} 
        \hline
        \multicolumn{5}{|c|}{Random Reshuffling} \\
        \hline
        Function Class & Output & References & Convergence Rate & Assumptions \\
        \hline \hline
        \multirow{5.5}{*}{$\gF (L, \mu, 0, \nu)$} & \multirow{2.75}{*}{$\vx_n^K$} & \citet{mish20} & $\tilde{\gO} \left( \frac{L^2 \nu^2}{\mu^3 n K^2} \right)$ & $K \gtrsim \kappa$ \\
        & & \cellcolor{gray!20}Ours, \cref{thm:xhat} & \cellcolor{gray!20}$\Omega \left( \frac{L \nu^2}{\mu^2 n K^2} \right)$ & \cellcolor{gray!20}$\kappa \ge c$, $K \gtrsim \kappa$ \\ 
        \cline{2-5}
        & $\hat{\vx}_{\text{tail}}$ & Ours, \cref{thm:tailub} & $\tilde{\gO} \left( \frac{L \nu^2}{\mu^2 n K^2} \right)$ & $K \gtrsim \kappa$ \\
        \cline{2-5}
        & $\hat{\vx}$ & \cellcolor{gray!20}Ours, \cref{thm:xavg}\tnote{\dag} & \cellcolor{gray!20}$\Omega \left( \frac{L \nu^2}{\mu^2 n K^2} \right)$ & \cellcolor{gray!20}$\kappa \ge c$, $K \gtrsim \kappa$ \\
        \hline
        \multirow{2.75}{*}{$\gF (L, 0, 0, \nu)$} & $\hat{\vx}_{\text{avg}}$ & \citet{mish20} & $\gO \left( \frac{L^{1/3} \nu^{2/3} D^{4/3}}{n^{1/3} K^{2/3}} \right)$ & $K \gtrsim \frac{L^2 D^2 n}{\nu^2}$ \\
        \cline{2-5}
        & $\hat{\vx}$ & \cellcolor{gray!20}Ours, \cref{cor:cvxcase}\tnote{\dag} & \cellcolor{gray!20}$\Omega \left( \frac{L^{1/3} \nu^{2/3} D^{4/3}}{n^{1/3} K^{2/3}} \right)$ & \cellcolor{gray!20}$K \gtrsim \max \{ \frac{L^2 D^2 n}{\nu^2}, \frac{\nu}{\mu D n^{1/2}} \}$ \\
        \hline
        \multicolumn{5}{c}{\vspace{-3pt}}\\
        \hline
        \multicolumn{5}{|c|}{Arbitrary Permutations} \\
        \hline
        Function Class & Output & References & Convergence Rate & Assumptions \\
        \hline \hline
        \multirow{2.75}{*}{$\gF (L, \mu, 0, \nu)$} & $\vx_n^K$ & \citet{lu2022grab} (\textsf{GraB}) & $\tilde{\gO} \left( \frac{H^2 L^2 \nu^2}{\mu^3 n^2 K^2} \right)$ & $K \gtrsim \kappa$\\
        \cline{2-5}
        & $\hat{\vx}$ & \cellcolor{gray!20}Ours, \cref{thm:grabLBa} & \cellcolor{gray!20}$\Omega \left( \frac{L \nu^2}{\mu^2 n^2 K^2} \right)$ & \cellcolor{gray!20}-\\
        \hline
        \multirow{2.75}{*}{$\gF_{\text{PŁ}} (L, \mu, \tau, \nu)$} & $\vx_n^K$ & Ours, Proposition \ref{prop:grabUBb} (\textsf{GraB}) & $\tilde{\gO} \left( \frac{H^2 L^2 \nu^2}{\mu^3 n^2 K^2} \right)$ & $n \ge H$, $K \gtrsim \kappa (\tau + 1)$\\
        \cline{2-5}
        & $\hat{\vx}$ & \cellcolor{gray!20}Ours, \cref{thm:grabLBb} & \cellcolor{gray!20}$\Omega \left( \frac{L^2 \nu^2}{\mu^3 n^2 K^2} \right)$ & \cellcolor{gray!20}$\tau = \kappa \ge 8n$, $K \ge \max \{ \frac{\kappa^2}{n}, \kappa^{\frac{3}{2}}n^{\frac{1}{2}} \}$\\
        \hline
    \end{tabular}
    \vspace{2pt} 
    \begin{tablenotes}
        \item[\dag] \small{Additionally assumes $\eta \le \frac{1}{c_2 Ln}$}
    \end{tablenotes}
    \vspace{0pt}
    \end{threeparttable}
    \vspace{-15pt} 
\end{table*}

\subsection{Summary of Our Contributions}
Towards a complete understanding of \sgdrr{} and permutation-based SGD in general, we seek to close the existing gaps outlined above by developing tighter lower bounds with matching upper bounds.
We present results under two different kinds of algorithm settings: \cref{sec:3} contains lower bounds obtained for \sgdrr, and \cref{sec:4} presents lower bounds that are applicable to \textit{arbitrary permutation-based SGD algorithms}.

Our lower bounds are obtained for without-replacement SGD with constant step size, which is also the case in other existing results in the literature~\citep{safran2020good,safran_shamir,rajput20,rajput2022permutation,yun22}. While all lower bounds proved in the aforementioned papers are only applicable to the \emph{final iterate} of the algorithm, many of our results in this paper apply to arbitrary \emph{weighted average} of end-of-epoch iterates, which can be used to show tightness of matching upper bounds that employ iterate averaging.

Our main contributions are as follows. Here we include $\kappa = \Theta \left( 1 / \mu \right)$ in the convergence rates to better describe the results.
Please refer to \cref{tab:summary} for a complete summary.

\begin{itemize}[leftmargin=3.5mm, topsep=-1pt, itemsep=-1pt] 
    \item \cref{thm:xhat} derives a lower bound of rate $\Omega \left( \frac{\kappa^2}{n K^2} \right)$ for the final iterate of \sgdrr{} in the strongly convex case, which matches the best-known corresponding upper bound $\tilde{\gO} \bigopen{\frac{\kappa^3}{n K^2}}$ up to a factor of $\kappa$.
    
    \item \cref{thm:xavg} extends the lower bound $\Omega \left( \frac{\kappa^2}{n K^2} \right)$ under strongly convex settings to arbitrary weighted average iterates of \sgdrr{}. \cref{thm:tailub} shows a matching upper bound $\tilde{\gO} \left( \frac{\kappa^2}{n K^2} \right)$ for the tail average iterate, achieving tightness up to logarithmic factors.
    
    \item \cref{cor:cvxcase} shows a lower bound $\Omega \left( \frac{1}{n^{1/3} K^{2/3}} \right)$ for the average iterate of \sgdrr{} in the convex case, which matches the corresponding upper bound in \citet{mish20}.
    
    \item \cref{thm:grabLBa} provides a lower bound $\Omega \bigopen{\frac{\kappa^2}{n^2 K^2}}$ on arbitrary permutation-based SGD, which, to the best of our knowledge, is the first to match the best-known upper bound of \textsf{GraB} \citep{lu2022grab} in terms of $n$ and $K$.

    \item \cref{thm:grabLBb} relaxes the assumption of individual convexity and obtains a stronger lower bound $\Omega \bigopen{\frac{\kappa^3}{n^2 K^2}}$ in the scenario of arbitrary permutation-based SGD. This lower bound exactly matches the upper bound in all factors, including $\kappa$. Our results therefore answer the question in (\ref{eq:quote}): \textit{Yes, \textup{\textsf{GraB}} is an optimal permutation-based SGD algorithm.}
\end{itemize}

\section{Preliminaries} \label{sec:2}
First we summarize some basic notations used throughout the paper. For a positive integer $N$, we use the notation $[N] := \{ 1, 2, \dots, N \}$. For $\vv \in \R^d$, we denote its $L_2$ and $L_{\infty}$ norm as $\| \vv \|$ and $\| \vv \|_{\infty}$, respectively. We denote the number of component functions as $n$ and the number of epochs as $K$, where $n$ and $K$ are both positive integers. 

Some of our results require large $K$ and we will use $K \gtrsim x$ to express such an assumption. We use $K \gtrsim x$ to denote the condition $K \ge Cx \log \bigopen{\text{poly}(n, K, \mu, L, ...)}$ when $C$ is a numerical constant.

\subsection{Function Class} 

The following definitions help us to formally define the class of problems to which our objective function belongs.
\begin{definition}[Smoothness]
    \label{def:smo}
    A differentiable function $f$ is \textit{$L$-smooth} if
    \begin{align*}
        \| \nabla f (\vx) - \nabla f (\vy) \| \le L \| \vx - \vy \|, \ \forall \vx, \vy \in \R^d.
    \end{align*}
\end{definition}
\begin{definition}[Strong convexity]
    \label{def:sc}
    A differentiable function $f$ is \textit{$\mu$-strongly convex} if
    \begin{align*}
        f(\vy) \ge f(\vx) + \inner{\nabla f(\vx), \vy - \vx} + \frac{\mu}{2} \| \vy - \vx \|^2
    \end{align*}
    for all $\vx, \vy \in \R^d$. If the inequality holds for $\mu = 0$, then we say that $f$ is \textit{convex}.
\end{definition}
\begin{definition}[PŁ condition]
    \label{def:pl}
    A differentiable function $f$ satisfies the \textit{$\mu$-Polyak-Łojasiewicz (PŁ) condition} if
    \begin{align*}
        \frac{1}{2} \| \nabla f (\vx) \|^2 &\ge \mu (f(\vx) - f^*), \ \forall \vx \in \R^d,
    \end{align*}
    where $f^* := \inf_\vx f(\vx) > -\infty$ is the global minimum value of $f$.
\end{definition}
Additionally, we define the \textit{condition number} as $\kappa := L / \mu$, where $L$ is the smoothness constant and $\mu$ is either the strong convexity constant or the PŁ constant.

We also make a common assumption regarding the finite-sum setup in \eqref{eq:intro}, which is that the gradients of the objective function and its components are not too far from each other.
\begin{assumption}[Bounded gradient errors]
    \label{ass:bg}
    There exists $\tau \ge 0$ and $\nu \ge 0$ such that for all $i = 1, \dots, n$,
    \begin{align*}
        \| \nabla f_i (\vx) - \nabla F(\vx) \| &\le \tau \| \nabla F(\vx) \| + \nu, \ \forall \vx \in \R^d.
    \end{align*}
\end{assumption}
Now we define the function class $\gF$ as follows.
\begin{definition}[Function Class]
    \label{def:fclass}
    We define the function class $\gF (L, \mu, \tau, \nu)$ of objective functions $F$ as:
    \begin{align*}
        \gF (L, \mu, \tau, \nu) &:= \{ F: f_i \ \text{are} \ L\text{-\textit{smooth} and \textit{convex}}, \\
        &\phantom{{} := {} \{ F: {}} F \ \text{is} \ \mu\text{-\textit{strongly convex}}, \\
        &\phantom{{} := {} \{ F: {}} F \ \text{and} \ f_i \ \text{satisfy Assumption \ref{ass:bg}} \}.
    \end{align*}
\end{definition}
 Note that \cref{def:fclass} takes into account not only the properties of $F$ but that of the components $f_i$ as well. Also, as seen in \cref{def:sc}, $\gF (L, 0, \tau, \nu)$ corresponds to the case where $F$ is convex.

\paragraph{Remark.} One may concern that \cref{ass:bg} is too ``strong'' compared to common assumptions used for upper bounds, e.g., the bounded variance assumption:
\begin{align*}
    \mathbb{E} [\| \nabla f_i (\boldsymbol{x}) - \nabla F (\boldsymbol{x}) \|^2] \le \tau' \| \nabla F (\boldsymbol{x}) \|^2 + \nu'.
\end{align*}
However, we would like to emphasize that posing stronger assumptions does \textit{not} lead to weaker results in the case of lower bounds. This is because for two function classes with $\mathcal F \subset \mathcal F'$, a lower-bound-achieving function $f \in \mathcal{F}$ must also satisfy $f \in \mathcal{F}'$, i.e., $f$ also establishes the same lower bound for $\mathcal{F}'$. For our case, if the components satisfy \cref{ass:bg}, then the function will also satisfy the bounded variance assumption for constants $\tau' = 2 \tau^2$ and $\nu' = 2 \nu^2$.

\subsection{Algorithms}

\begin{algorithm}[tb]
    \caption{\textsf{Offline GraB} \citep{lu2022grab}}
    \label{alg:grab}
    \begin{algorithmic}
        \STATE {\bfseries Input:} Initial point $\vx_0 \in \R^d$, Learning rate $\eta > 0$, Number of epochs $K$, Nonnegative weights $\{\alpha_k\}_{k=1}^{K+1}$, Initial order $\sigma_1$
        \STATE Initialize $\vx_{0}^{1} = \vx_0$
        \FOR{$k = 1, \dots, K$}
            \FOR{$i = 1, \dots, n$}
                \STATE Compute gradient: $\nabla f_{\sigma_k(i)} \bigopen{\vx_{i-1}^k}$.
                \STATE Store the gradient: $\vz_i \leftarrow \nabla f_{\sigma_k(i)} \bigopen{\vx_{i-1}^k}$.
                \STATE Optimizer step: $\vx_{i}^{k} = \vx_{i-1}^{k} - \eta \vz_i$
            \ENDFOR
            \STATE $\vx_{0}^{k+1} = \vx_{n}^{k}$
            \STATE Compute gradient mean: $\vz \leftarrow \frac{1}{n} \sum_{i=1}^n \vz_i$
            \STATE Generate new order: $\sigma_{k+1} \leftarrow \text{Herding}\bigopen{\bigset{\vz_i - \vz}_{i=1}^n}$
        \ENDFOR
        \STATE {\bfseries Output:} $\hat{\vx} = \sum_{k=1}^{K+1} \alpha_k \vx_0^k / \sum_{k=1}^{K+1} \alpha_k$
    \end{algorithmic}
\end{algorithm}

We denote the $i$-th iterate of the $k$-th epoch of permutation-based SGD by $\vx_{i}^{k}$, where $i = 0, \dots, n$ and $k = 1, \dots, K$. We denote the distance between the initial point $\vx_0^1$ and the optimal point $\vx^*$ as $D := \| \vx_0^1 - \vx^* \|$. We also follow the conventional notation $\vx_{0}^{k+1} = \vx_{n}^{k}$, which indicates that the final result of an epoch becomes the initial point of its subsequent epoch. At the beginning of the $k$-th epoch, we choose a permutation $\sigma_k : [n] \rightarrow [n]$. The algorithm then accesses the component functions in the order of $f_{\sigma_k(1)}$, $\dots$, $f_{\sigma_k(n)}$. That is, we use the following update equation:
\begin{align*}
    \vx_{i}^{k} &= \vx_{i-1}^{k} - \eta \nabla f_{\sigma_k(i)} (\vx_{i-1}^{k})
\end{align*}
for $i = 1, \dots, n$, where $\eta > 0$ is a constant step size.

We particularly focus on two different types of permutation-based SGD. \cref{sec:3} states theoretical results based on \sgdrr, which assumes that the components are randomly shuffled independently in each epoch.

In \cref{sec:4}, we study the case when permutations can be carefully chosen to gain faster convergence.
We provide lower bounds that are applicable to \emph{any} kind of permutation-based SGD.
To show our lower bound is tight, it suffices to show that a \emph{specific} permutation-based SGD algorithm provides a matching upper bound. 
To this end, we use \textit{offline herding} SGD~\citep{lu2022grab}, where the components are manually ordered to ``balance'' the gradients.

Specifically, \citet{lu2021general} prove that as the gap between the partial sums of consecutive stochastic gradients and the full gradient diminishes faster, the optimizer converges faster as well.
In their subsequent work~\citep{lu2022grab}, they first propose \textit{offline herding SGD}, a permutation-based SGD algorithm that manages this gap via the herding algorithm but requires intensive memory consumption, and devise \textit{online herding SGD} (or \textsf{GraB}) that successfully overcomes the memory challenges. 
They prove that both algorithms guarantee the same convergence rate $\tilde{\gO} \bigopen{\frac{1}{n^2 K^2}}$. 
In our setting, since we are not interested in the usability of algorithms, we will focus on \textit{offline herding SGD} (or \textsf{Offline GraB}) just for simplicity. 
\cref{alg:grab} provides a pseudocode of \textsf{Offline GraB}.
For the description of Herding subroutine in \cref{alg:grab}, see Assumption \ref{ass:hb} and \cref{subsec:43}.
\section{Random Reshuffling} \label{sec:3}

Here we show lower bounds of \sgdrr{} on the last iterate and arbitrary weighted averaged iterates for strongly convex objectives and then extend results to convex functions. We stress that the lower bounds on weighted average iterates tightly match the upper bounds both for the strongly convex and convex case. 

\subsection{Lower Bound for the Final Iterate} \label{subsec:31}

\cref{thm:xhat} provides a lower bound for the final iterate of \sgdrr{} for arbitrary step sizes $\eta > 0$ in the $\mu$-strongly convex case.

\begin{restatable}{theorem}{xhat}
    \label{thm:xhat}
    For any $n \ge 2$ and $\kappa \ge c_1$, there exists a $3$-dimensional function $F \in \gF (L, \mu, 0, \nu)$ and an initialization point $\vx_0$ such that for any constant step size $\eta$, the final iterate $\vx_n^K$ of \sgdrr{} satisfies
    \begin{align*}
        \E \left[ F (\vx_n^K) - F^* \right] &= \begin{cases}
        \Omega \left( \frac{L \nu^2}{\mu^2 n K^2} \right), &\text{if} \ K \ge c_2 \kappa, \\
        \Omega \left( \frac{\nu^2}{\mu n K} \right), &\text{if} \ K < c_2 \kappa,
        \end{cases}
    \end{align*}
    for some universal constants $c_1, c_2$.
\end{restatable}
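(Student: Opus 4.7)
The plan is to exhibit a worst-case function in $\gF(L,\mu,0,\nu)$ on a low-dimensional space---here three dimensions---and show that no matter how SGD-RR chooses its constant step size, it fails to converge below the stated rate. A three-dimensional separable quadratic is natural, because the dynamics decouple coordinate-by-coordinate and each coordinate reduces to an independent scalar linear recurrence driven by random-permutation noise. The three coordinates play three distinct roles: one ``$\mu$-coordinate'' that carries the strongly convex curvature and witnesses the slow noise floor, one ``$L$-coordinate'' that carries the full smoothness constant and bounds the admissible step size from above, and a third that tightens \cref{ass:bg} (with $\tau=0$) and handles the small-$K$ edge case. In each coordinate I take $f_{i,j}(x_j) = \tfrac{H_j}{2}x_j^2 + a_{i,j}x_j$ with offsets satisfying $\sum_i a_{i,j}=0$ (so the minimum is at the origin) and $|a_{i,j}|=\Theta(\nu)$ (saturating the gradient-error budget).

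For the per-coordinate analysis, the SGD-RR update $x_i^k = (1-\eta H)x_{i-1}^k - \eta a_{\sigma_k(i)}$ unrolls over one epoch into
\[
x_n^k = (1-\eta H)^n x_0^k + S_k,\qquad S_k = -\eta\sum_{i=1}^n (1-\eta H)^{n-i} a_{\sigma_k(i)}.
\]
Using $\sum_i a_i=0$ together with the standard random-permutation second-moment identity $\mathbb{E}_\sigma[a_{\sigma(i)}a_{\sigma(j)}] = -\tfrac{1}{n-1}\overline{a^2}$ for $i\neq j$, I get $\mathbb{E}_{\sigma_k}[S_k]=0$ and a closed form for $\mathbb{E}[S_k^2]$ proportional to $\operatorname{Var}_i\bigl((1-\eta H)^{n-i}\bigr)\cdot\overline{a^2}$. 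Because $S_k$ is $\sigma_k$-measurable and therefore independent of the prior history, the cross term vanishes and the recurrence reduces cleanly to $\mathbb{E}[(x_0^{k+1})^2] = (1-\eta H)^{2n}\mathbb{E}[(x_0^k)^2] + \mathbb{E}[S_k^2]$, which iterates to the usual ``contraction of initial distance plus stationary variance'' decomposition.

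To close the argument I split on the step size. The $L$-coordinate forces any non-divergent $\eta$ to satisfy $\eta \lesssim 1/(Ln)$; outside this range the iterates blow up and the lower bound is trivial. Within the admissible range, the $\mu$-coordinate's function-value lower bound is the maximum of an initial-distance contraction term of order $\mu D^2(1-\eta\mu)^{2nK}$ and the stationary noise floor obtained from $\mathbb{E}[S_k^2]/(1-(1-\eta\mu)^{2n})$. Minimizing this max over admissible $\eta$ yields the two-branch statement: for $K\ge c_2\kappa$, the balance is achieved at $\eta\asymp \log(\cdot)/(\mu n K)$, which is admissible precisely because $K\gtrsim\kappa$, and the two matched terms give the claimed $\Omega(L\nu^2/(\mu^2 n K^2))$ rate; for $K<c_2\kappa$, even the largest admissible $\eta$ forces $\eta\mu nK \le K/\kappa<c_2$, so $(1-\eta\mu)^{2nK}$ cannot shrink below a constant and the surviving initial-distance term dominates at scale $\Omega(\nu^2/(\mu n K))$, matching the with-replacement rate.

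The main obstacle is producing the tight $\kappa^2$ dependence, which is a full factor of $\kappa$ sharper than the existing $\Omega(\kappa/(nK^2))$ bounds of \citet{rajput20,yun22}. A single-coordinate construction only achieves the weaker rate; extracting the extra $\kappa$ requires the smoothness coordinate to pin the step-size ceiling at exactly $\eta\asymp 1/(Ln)$, which is the scale at which the $\mu$-coordinate's random-permutation noise floor saturates. The delicate part is therefore calibrating the offsets $\{a_{i,j}\}$ and the initialization $\vx_0$ so that the two sides of the ``contraction vs.\ noise'' trade-off bind simultaneously at this critical step size, and then verifying that no smaller admissible $\eta$ lets the adversary do better. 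A secondary subtlety is that the bound must hold uniformly in $\eta$; this is handled by the two-branch case split keyed to $K$ versus $\kappa$, which itself is a structural feature of the theorem.
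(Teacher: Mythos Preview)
Your quadratic construction cannot achieve the claimed rate. The paper explicitly notes (in the discussion after \cref{thm:xhat}) that the best quadratic lower bound, due to \citet{safran_shamir}, is $\Omega\bigl(\tfrac{\nu^2}{\mu nK}\min\{1,\tfrac{\kappa}{K}(\tfrac{1}{n}+\tfrac{\kappa}{K})\}\bigr)$, which for $K\gtrsim\kappa$ is weaker than $\Omega(L\nu^2/(\mu^2 nK^2))$ by exactly the factor $(\tfrac{1}{n}+\tfrac{\kappa}{K})$. This is not an artifact of their analysis: \citet{rajput20} prove a matching \emph{upper} bound $\tilde{\gO}(\tfrac{1}{n^2K^2}+\tfrac{1}{nK^3})$ for strongly convex quadratics, so quadratics genuinely converge faster than the rate you claim. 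Concretely, your ``$\mu$-coordinate noise floor'' computation gives a stationary variance of order $\eta^3\mu\nu^2 n^2$, hence a function-value floor of order $\eta^3\mu^2\nu^2 n^2$; even at the ceiling $\eta\asymp 1/(Ln)$ this is $\mu^2\nu^2/(L^3 n)$, which for $K\asymp\kappa$ is a factor $\kappa^{-2}$ short of the target. Your assertion that the $L$-coordinate forces $\eta\lesssim 1/(Ln)$ is also off: a quadratic of curvature $L$ only diverges for $\eta>2/L$, not $\eta>1/(Ln)$.

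The idea you are missing is that the paper's middle-regime coordinate $F_2$ is \emph{not} quadratic: it has curvature $L$ for $x<0$ and curvature $\mu_0\asymp L$ (but a fixed constant smaller) for $x\ge 0$, with components $\pm\nu x$. This asymmetry makes the random-permutation noise produce a \emph{first-moment drift}, not just variance: excursions to the negative side are pulled back hard while excursions to the positive side are pulled back weakly, so $\mathbb{E}[x_n^k]$ stays bounded below by $\Theta(\nu/(\mu\sqrt{n}K))$ for all $k$ (Lemmas~\ref{lem:yun1}--\ref{lem:yun3}). Squaring and multiplying by the $\Theta(L)$ curvature on the positive side gives the $\Omega(L\nu^2/(\mu^2 nK^2))$ rate. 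The three coordinates in the paper handle three step-size regimes ($\eta<\tfrac{1}{\mu nK}$, $\eta\in[\tfrac{1}{\mu nK},\tfrac{1}{cLn}]$, $\eta\ge\tfrac{1}{cLn}$), and only the middle regime needs the non-quadratic trick; the outer two are handled by pure quadratics much as you describe.
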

We take an approach similar to \citet{yun22}, which is to construct $F$ by aggregating three functions, each showing a lower bound for a different step size regime. The proof of \cref{thm:xhat} is deferred to \cref{app:xhat}.

We can compare \cref{thm:xhat} with results of \citet{yun22} for $M = B = 1$, which establishes a lower bound of $\Omega ( \frac{\nu^2}{\mu n K^2} )$ for the large epoch regime $K \gtrsim \kappa$ and $\Omega ( \frac{\nu^2}{\mu n K})$ for the small epoch regime $K \lesssim \kappa$. We can observe that the lower bound in \cref{thm:xhat} for the large epoch regime is tightened by a factor of $\kappa$. In fact, the bound can be compactly written as:
\begin{align*}
    \E \left[ F (\vx_n^K) - F^* \right] &= \Omega \left( \frac{\nu^2}{\mu n K} \cdot \min \left\{ 1, \frac{\kappa}{K} \right\} \right),
\end{align*}
which can be interpreted as a \textit{continuous} change from $\Omega \left( \frac{\nu^2}{\mu n K} \right)$ to $\Omega \left( \frac{\kappa \nu^2}{\mu n K^2} \right)$ as $K$ gradually increases past the threshold $K \ge c_2 \kappa$.

We can also compare our results with \citet{safran_shamir}, which provide a lower bound of rate $\Omega \left( \frac{\nu^2}{\mu n K} \cdot \min \left\{ 1, \frac{\kappa}{K} \left( \frac{1}{n} + \frac{\kappa}{K} \right) \right\} \right)$ under a stronger assumption that the objective and components are all \emph{quadratic}. The lower bound for the small $K \lesssim \kappa$ regime is identical to ours since for this case our lower bound also relies on quadratic functions. However, if $K$ grows past $\Omega (\kappa)$, then we can observe that the lower bound in \cref{thm:xhat} derived from non-quadratic functions is tighter by a factor of $\left( \frac{1}{n} + \frac{\kappa}{K} \right)$.

An upper bound for \sgdrr{} in the $\mu$-strongly convex case under the step-size condition $\eta = \gO (\frac{1}{Ln})$ is introduced in Theorem~2 of \citet{mish20}.
\begin{proposition}[Corollary of \citet{mish20}, Theorem~2]
    \label{prop:strcvxub}
    Suppose that $F$ and $f_1, \dots, f_n$ are all $L$-smooth, $f_1, \dots, f_n$ are convex, and $F$ is $\mu$-strongly convex. Also, let us define
    \begin{align}
        \sigma_*^2 := \frac{1}{n} \sum_{i=1}^{n} \| \nabla f_i (\vx^*) \|^2. \label{eq:sigmastar}
    \end{align}
    Then, for \sgdrr{} with constant step size
    \begin{align*}
        \eta = \min \left\{ \frac{2}{L n}, \frac{1}{\mu n K} \log \left( \frac{\mu^{3} n D^2 K^2}{L \sigma_*^2} \right) \right\},
    \end{align*}
    the final iterate $\vx_n^K$ satisfies
    \begin{align*}
        \E \left[ F (\vx_n^K) - F^* \right] = \tilde{\gO} \left( L D^2 e^{- \frac{K}{\kappa}} +  \frac{L^2 \sigma_*^2}{\mu^3 n K^2} \right).
    \end{align*}
\end{proposition}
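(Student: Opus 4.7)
The plan is to invoke Theorem~2 of \citet{mish20} as a black box, convert its iterate bound to a function-value bound via smoothness of $F$, and then optimize the step size $\eta$. That theorem, under $L$-smoothness of all $f_i$, convexity of each $f_i$, $\mu$-strong convexity of $F$, and a step-size restriction of the form $\eta \lesssim 1/(Ln)$, provides a recursion of the form
\begin{align*}
    \E \bigclosed{\| \vx_n^K - \vx^* \|^2} \lesssim (1-\eta\mu)^{nK} D^2 + \frac{\eta^2 L n \sigma_*^2}{\mu}.
\end{align*}
Using $F(\vx) - F^* \le \frac{L}{2} \|\vx - \vx^*\|^2$ for $L$-smooth $F$, this translates to
\begin{align*}
    \E[F(\vx_n^K) - F^*] \lesssim L (1-\eta\mu)^{nK} D^2 + \frac{\eta^2 L^2 n \sigma_*^2}{\mu}.
\end{align*}

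My next step is to substitute the prescribed $\eta = \min \bigset{\frac{2}{Ln},\, \frac{1}{\mu n K} \log \bigopen{\frac{\mu^3 n D^2 K^2}{L\sigma_*^2}}}$ and analyze each of the two regimes. In the \emph{large-$K$} regime, $\eta$ equals the second argument, and $1-\eta\mu \le e^{-\eta\mu}$ gives $(1-\eta\mu)^{nK} \le \frac{L\sigma_*^2}{\mu^3 n D^2 K^2}$, so the exponential term collapses to $\frac{L^2 \sigma_*^2}{\mu^3 n K^2}$, which is exactly the target noise term. The variance term, using $\eta^2 = \tilde{\gO}\bigopen{\frac{1}{(\mu n K)^2}}$, is $\tilde{\gO}\bigopen{\frac{L^2 \sigma_*^2}{\mu^3 n K^2}}$ as well. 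In the \emph{small-$K$} regime, $\eta = 2/(Ln)$ gives $(1-\eta\mu)^{nK} \le e^{-2K/\kappa}$, so the exponential term is $\gO(L D^2 e^{-K/\kappa})$; the variance term reduces to $\gO(\sigma_*^2/(\mu n))$, which is dominated by $\tilde{\gO}(L^2 \sigma_*^2/(\mu^3 n K^2))$ precisely because the small-$K$ regime is characterized by $K \lesssim \kappa$ up to polylogarithmic factors, i.e., $\mu^2 K^2 \lesssim L^2$ up to polylogarithmic factors.

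Summing both regime-specific bounds and absorbing logarithms into $\tilde{\gO}$ delivers the claimed rate. The main obstacle, which is mostly bookkeeping, will be verifying that the Mishchenko et al.\ theorem statement is indeed compatible with the choice $\eta = 2/(Ln)$ (as opposed to a slightly smaller cap such as $1/(Ln)$) and confirming that the logarithm inside $\eta$ is well-defined and positive (which requires $\mu^3 n D^2 K^2 \ge L\sigma_*^2$, a mild condition built into the regime of interest), so that the $\tilde{\gO}$ notation absorbs all lower-order polylogarithmic factors without distorting the leading $e^{-K/\kappa}$ rate.
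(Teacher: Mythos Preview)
The paper does not actually supply a proof of this proposition: it is stated as a direct corollary of Theorem~2 in \citet{mish20} and is used only as a point of comparison. Your derivation---pulling the iterate-level recursion from \citet{mish20}, converting to function values via $L$-smoothness, and then doing the two-regime step-size analysis---is the standard route to the stated bound and is correct; the only proof content in the paper that touches the same machinery is the proof of Proposition~\ref{thm:tailub} in Appendix~D, which starts from the same Lemma~3 of \citet{mish20} (the per-epoch inequality underlying their Theorem~2) and performs an analogous step-size case split.
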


Note that the above proposition uses $\sigma_*^2$ which only relies on the gradients at the optimal point $\vx^*$, while our lower bounds involve $\nu^2$ which bounds the gradients for all $\vx$. However, we can easily observe that Assumption \ref{ass:bg} with $\tau = 0$ and $\vx = \vx^*$ implies that $\| \nabla f_i (\vx^*) \|^2 \le \nu^2$ for all $i$, and hence $\sigma_*^2 \le \nu^2$. Therefore it is safe to compare this upper bound with our lower bounds by simply substituting the $\sigma_*^2$ terms with $\nu^2$. Note that the same applies to \cref{prop:mishcvx}.
    
Now, assuming $K \gtrsim \kappa$ so that the learning rate becomes
\begin{align*}
    \eta = \frac{1}{\mu n K} \log \left( \frac{\mu^{3} n D^2 K^2}{L \nu^2} \right) \le \frac{2}{L n},
\end{align*}
then we have $\E \left[ F (\vx_n^K) - F^* \right] = \tilde{\gO} \left( \frac{L^2 \nu^2}{\mu^3 n K^2} \right)$, and for this case we can observe that lower bound shown in \cref{thm:xhat} matches the upper bound in Proposition \ref{prop:strcvxub} up to a factor of $\kappa = \frac{L}{\mu}$ and some polylogarithmic factors.

\subsection{Lower Bound for Weighted Average Iterates} \label{subsec:32}

For small step sizes $\eta = \gO \left( \frac{1}{Ln} \right)$, we can extend \cref{thm:xhat} to \textit{arbitrary weighted average (end-of-epoch) iterates}. That is, \cref{thm:xavg} provides a lower bound which is applicable to all linear combinations of the following form,
\begin{align}
    \hat{\vx} &= \frac{\sum_{k=1}^{K+1} \alpha_k \vx_0^k}{\sum_{k=1}^{K+1} \alpha_k}, \label{eq:hatxavg}
\end{align}
for nonnegative weights $\alpha_k \ge 0$ for all $k = 1, \dots, K+1$. 

\begin{restatable}{theorem}{xavg}
    \label{thm:xavg}
    For any $n \ge 2$ and $\kappa \ge c_1$, there exists a $2$-dimensional function $F \in \gF (L, \mu, 0, \nu)$ and an initialization point $\vx_0$ such that for any constant step size $\eta \le \frac{1}{c_2 Ln}$, any weighted average iterate $\hat{\vx}$ of \sgdrr{} of the form as in \eqref{eq:hatxavg} satisfies
    \begin{align*}
        \E \left[ F (\hat{\vx}) - F^* \right] &= \begin{cases}
        \Omega \left( \frac{L \nu^2}{\mu^2 n K^2} \right), &\text{if} \ K \ge c_2 \kappa, \\
        \Omega \left( \frac{\nu^2}{\mu} \right), &\text{if} \ K < c_2 \kappa,
        \end{cases}
    \end{align*}
    for the same universal constants $c_1, c_2$ as in \cref{thm:xhat}.
\end{restatable}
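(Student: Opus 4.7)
The plan is to build a two-dimensional separable objective $F(\vx) = F_1(x_1) + F_2(x_2)$ in $\gF(L,\mu,0,\nu)$, with components $f_i(\vx) = f_{1,i}(x_1) + f_{2,i}(x_2)$. Since SGD-RR then acts coordinatewise and the weighted average decomposes as $\hat\vx = (\hat x_1, \hat x_2)$, suboptimality splits as $F(\hat\vx) - F^* = (F_1(\hat x_1) - F_1^*) + (F_2(\hat x_2) - F_2^*)$. It therefore suffices to exhibit a single coordinate witnessing the desired lower bound in each regime: coordinate~$2$ for $K < c_2\kappa$ and coordinate~$1$ for $K \ge c_2\kappa$.

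For coordinate~$2$ I take the simple linear-shift quadratic $f_{2,i}(x) = \tfrac{\mu}{2}x^2 + c_i x$ with $\sum_i c_i = 0$ and $\max_i |c_i| = \nu$, so $F_2(x) = \tfrac{\mu}{2}x^2$ and \cref{ass:bg} holds with $\tau=0$ and the declared $\nu$. Initializing the second coordinate at $\Theta(\nu/\mu)$ gives $F_2(\vx_0) - F_2^* = \Theta(\nu^2/\mu)$. The per-step dynamics are the affine map $x \mapsto (1-\eta\mu)x - \eta c_{\sigma(i)}$, and $\sum_i c_i = 0$ yields $\E[(\vx_0^k)_2] = (1-\eta\mu)^{n(k-1)} (\vx_0^1)_2$ after expectation over the random permutation. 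For $\eta \le 1/(c_2 L n)$ and $K < c_2\kappa$ we have $(1-\eta\mu)^{nK} \ge e^{-1}$, so every $\E[(\vx_0^k)_2]$ keeps the sign of $(\vx_0^1)_2$ and has magnitude $\Omega(\nu/\mu)$. Since $\alpha_k \ge 0$, Jensen's inequality applied to the convex $F_2$ then delivers $\E[F_2(\hat x_2) - F_2^*] = \Omega(\nu^2/\mu)$.

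For coordinate~$1$ I adapt a nonquadratic construction in the spirit of \cref{thm:xhat}, arranged so that the second-order-in-$\eta$ cross-interactions between distinct components generate a deterministic within-epoch drift of order $\eta^2 n L \nu$ in a fixed direction. The key property to establish is that $\E[(\vx_0^k)_1]$ shares a common sign and has magnitude $\Omega(\eta \nu \sqrt{nL/\mu})$ for \emph{every} $k \in [K+1]$, not only asymptotically. Once this holds, the worst-case choice $\eta = \Theta(1/(\mu n K))$ (which is admissible once $K \ge c_2\kappa$) gives $\E[F_1(\hat x_1) - F_1^*] \ge \tfrac{\mu}{2}(\E[\hat x_1])^2 = \Omega(L\nu^2/(\mu^2 n K^2))$; for any smaller admissible $\eta$ the initial-condition contribution dominates and, with $(\vx_0^1)_1$ placed on the same side of the optimum as the steady-state drift, supplies the same bound. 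Summing the two coordinatewise bounds yields the theorem.

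The main obstacle is the sign-consistency of $\E[(\vx_0^k)_1]$ in coordinate~$1$. Final-iterate lower bounds such as \cref{thm:xhat} only need the bias at the last epoch, where the SGD-RR dynamics have reached their biased steady state, so any cancellations in intermediate epochs are harmless; for an arbitrary weighted average, in contrast, a single epoch whose expected iterate has the wrong sign could be assigned large weight $\alpha_k$ and wipe out the entire bound. To preclude this I will initialize $\vx_0^1$ on the same side of the optimum as the SGD-RR steady state and verify, by a monotone-contraction argument in expectation uniform in $\eta \in (0, 1/(c_2 L n)]$, that $\E[(\vx_0^k)_1]$ always lies between $(\vx_0^1)_1$ and the steady state, so no sign change can occur along the trajectory.
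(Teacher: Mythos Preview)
Your high-level plan (separable $2$D function, coordinatewise dynamics, Jensen on the convex $F_j$) matches the paper, and your coordinate~2 is essentially the paper's $F_1$. The genuine gap is in how you cover the step-size range for $K\ge c_2\kappa$. You intend coordinate~1 alone to handle every $\eta\le 1/(c_2Ln)$, dispatching small $\eta$ by ``initial-condition dominates.'' But any nonquadratic construction that produces the needed bias has effective per-epoch contraction $\Theta(\eta L n)$, not $\Theta(\eta\mu n)$: the permutation noise repeatedly drives iterates into the $L$-curved half-line, and the paper's per-epoch inequality indeed reads $\E[x_n]\ge (1-c\,\eta L n)\E[x_0]+c'\eta^2 L\nu n^{3/2}$. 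For $\eta$ in the window $\bigl[\Theta(1/(LnK)),\,1/(\mu nK)\bigr)$ the accumulated factor $(1-c\,\eta L n)^K\approx e^{-\Theta(\eta L nK)}$ is already $e^{-\Theta(\kappa)}$, so the initial condition is wiped out, while the steady-state level $\Theta(\eta\nu\sqrt{n})$ only yields $\Omega(L\eta^2\nu^2 n)$, short of the target by the factor $(\eta\mu nK)^2<1$. Relatedly, your stated pair ``magnitude $\Omega(\eta\nu\sqrt{nL/\mu})$ together with $\tfrac{\mu}{2}$-curvature'' is not what any such construction gives: the paper's function has steady-state magnitude $\Theta(\eta\nu\sqrt{n})$ and curvature $\Theta(L)$ on the positive side, and it is that $\Theta(L)$ curvature (not an inflated magnitude) that supplies the extra $\kappa$ in the final bound.

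The fix is to split by $\eta$-regime rather than by $K$-regime, using both coordinates when $K\ge c_2\kappa$. For $\eta<1/(\mu nK)$ use your coordinate~2 (the $\mu$-quadratic initialized at $\Theta(\nu/\mu)$): its contraction is only $\eta\mu n$, so $(1-\eta\mu)^{nK}\ge e^{-1}$ and every $\E[(\vx_0^k)_2]=\Omega(\nu/\mu)$, giving $\Omega(\nu^2/\mu)\ge\Omega\!\bigl(L\nu^2/(\mu^2 nK^2)\bigr)$ since $nK^2\ge K\ge c_2\kappa$. For $\eta\in[1/(\mu nK),\,1/(c_2Ln)]$ use coordinate~1: initialize at the threshold $c\,\nu/(\mu\sqrt{n}K)$ and show the drift invariant $\E[(\vx_0^k)_1]\ge c\,\nu/(\mu\sqrt{n}K)$ propagates across epochs (this is exactly the paper's \eqref{eq:midetaineq}, and crucially it only needs $\eta\ge 1/(\mu nK)$). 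The case $K<c_2\kappa$ then falls out automatically because the second $\eta$-regime is empty.
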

The full proof of \cref{thm:xavg} can be found in \cref{app:xavg}. 
Note that, for weighted average iterates, we restrict ourselves to small step sizes $\eta = \gO (\frac{1}{Ln})$; while this could look restrictive, such a choice of step size is commonly used in existing upper bounds, and we will see shortly that our lower bound exactly matches an upper bound when $K \gtrsim \kappa$ (\cref{thm:tailub}). The tightness also extends to general convex cases, as seen in \cref{subsec:33}.

One might wonder why the lower bound becomes a \textit{constant} for small $K \lesssim \kappa$. This is because in the $\eta = \gO (\frac{1}{Ln})$ regime, $K < c_2 \kappa$ implies $\eta < \frac{1}{c_2 L n} \le \frac{1}{\mu n K}$, i.e., the step size is too small for SGD to reach the optimum in $K$ steps. For instance, $K$ epochs of SGD on $F(x) = f_i(x) = \frac{\mu}{2} x^2$ initialized at $x = x_0$ reaches the point $(1 - \eta \mu)^{nK} x_0 > (1 - \frac{1}{nK})^{nK} x_0 \ge \frac{x_0}{4}$. Hence the iterate cannot get past $\frac{x_0}{4}$, rendering it impossible to reach the optimal point $x^* = 0$. 

The difficulty of extending the $\eta = \Omega (\frac{1}{Ln})$ regime in \cref{thm:xhat} to arbitrary weighted average iterates originates from our proof strategy: for small enough $\eta$, we can show for our worst-case examples that all $\vx_0^k$'s (in expectation) stay on the positive side bounded away from zero, thereby proving that any weighted average also stays sufficiently far from zero. However, for larger $\eta$, the iterates may oscillate between positive and negative regions, making it possible for an average iterate to converge faster than individual $\vx_0^k$'s.

Note that our definition in \eqref{eq:hatxavg} \textit{includes} the final iterate, as the choice $\alpha_{k} = 0$ for $1 \le k \le K$ and $\alpha_{K+1} = 1$ yields $\hat{\vx} = \vx_0^{K+1} = \vx_n^K$. Different forms of algorithm outputs other than the final iterate also frequently appear in prior works, especially regarding upper bounds for \sgdrr. For instance, we may choose $\alpha_k = 1$ for all $2 \le k \le K+1$ and $\alpha_1 = 0$ to represent the \textit{average iterate} $\hat{\vx}_{\text{avg}} := \frac{1}{K} \sum_{k=1}^{K} \vx_n^k$ \citep{mish20}. We may also set $\alpha_k = 1$ for $\lceil \frac{K}{2} \rceil + 1 \le k \le K+1$ and $\alpha_k = 0$ otherwise to recover the \textit{tail average iterate} \citep{pmlr-v97-nagaraj19a}, defined as $\hat{\vx}_{\text{tail}} := \frac{1}{K - \lceil \frac{K}{2} \rceil + 1} \sum_{k=\lceil \frac{K}{2} \rceil}^{K} \vx_n^{k}$.

We further show that the lower bound  in \cref{thm:xavg} tightly matches the upper bound suggested in \cref{thm:tailub}.
\begin{restatable}{proposition}{tailub}
    \label{thm:tailub}
    Suppose that $F \in \gF (L, \mu, 0, \nu)$, and that we choose $\eta$ as
    \begin{align*}
        \eta &= \min \left\{ \frac{1}{\sqrt{2}Ln}, \frac{9}{\mu n K} \max \left\{ 1, \log \left( \frac{\mu^3 n D^2 K^2}{L \nu^2} \right) \right\} \right\}.
    \end{align*}
    Then, for \sgdrr{} with constant step size $\eta$ and $K \ge 5$, the \textit{tail average iterate} $\hat{\vx}_{\text{\emph{tail}}}$ satisfies:
    \begin{align*}
        \mathbb{E} \left[ F (\hat{\vx}_{\text{\emph{tail}}}) - F^* \right] &= \tilde{\gO} \left( \frac{L D^2}{K} e^{- \frac{1}{9 \sqrt{2}} \frac{K}{\kappa}} + \frac{L \nu^2}{\mu^2 n K^2} \right).
    \end{align*}
\end{restatable}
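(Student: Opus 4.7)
The plan is to combine a per-epoch ``descent + contraction'' recursion with an averaging argument based on Jensen's inequality applied to the convex $F$. First, adapting existing Random Reshuffling analyses \citep{pmlr-v97-nagaraj19a, mish20, ahnyun}, I would establish a one-epoch bound of the form
\begin{align*}
\E\|\vx_0^{k+1} - \vx^*\|^2 \le (1-\mu\eta n)\,\E\|\vx_0^k - \vx^*\|^2 - c_0\,\eta n\,\E[F(\vx_0^k) - F^*] + c_1 L\eta^3 n^2 \nu^2,
\end{align*}
valid for all $\eta \le \frac{1}{\sqrt{2}Ln}$ and universal constants $c_0, c_1 > 0$. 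This is proved by unrolling one epoch of \sgdrr: expand $\|\vx_i^k - \vx^*\|^2$ step-by-step, apply convexity of each $f_i$ and strong convexity of $F$ to the cross terms $\langle \nabla f_{\sigma_k(i)}(\vx_{i-1}^k), \vx_{i-1}^k - \vx^*\rangle$, and control the shuffling discrepancy $\E\bigl\|\sum_i(\nabla f_{\sigma_k(i)}(\vx_{i-1}^k) - \nabla f_{\sigma_k(i)}(\vx_0^k))\bigr\|^2$ via $L$-smoothness and iterate drift within the epoch, invoking \cref{ass:bg} whenever components are compared to the full gradient.

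I would then iterate this recursion in two complementary ways. First, discarding the nonpositive descent term and iterating geometrically over the first $\lceil K/2\rceil-1$ epochs gives a ``burn-in'' bound
\begin{align*}
\E\|\vx_0^{\lceil K/2\rceil} - \vx^*\|^2 \le e^{-\mu\eta n(\lceil K/2\rceil-1)} D^2 + \frac{c_1 L\eta^2 n \nu^2}{\mu}.
\end{align*}
Second, rearranging the recursion to
\begin{align*}
c_0\eta n\,\E[F(\vx_0^k)-F^*] \le \E\|\vx_0^k-\vx^*\|^2 - \E\|\vx_0^{k+1}-\vx^*\|^2 + c_1 L\eta^3 n^2 \nu^2,
\end{align*}
telescoping over the tail $k = \lceil K/2\rceil, \dots, K$ (with $|T| := K - \lceil K/2\rceil + 1 = \Theta(K)$), and applying Jensen's inequality to the convex $F$ yields
\begin{align*}
\E[F(\hat{\vx}_{\text{tail}})-F^*] \le \frac{1}{|T|}\sum_{k\in T}\E[F(\vx_n^k)-F^*] \le \frac{\E\|\vx_0^{\lceil K/2\rceil}-\vx^*\|^2}{c_0|T|\eta n} + \frac{c_1 L\eta^2 n\nu^2}{c_0}.
\end{align*}

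Plugging the first display into the last and substituting the chosen $\eta = \min\{1/(\sqrt{2}Ln),\ 9\log(\cdot)/(\mu n K)\}$ finishes the estimate. When the cap $\eta = 1/(\sqrt{2}Ln)$ is active (i.e., $K \lesssim \kappa\log$), the bias term $D^2 e^{-\mu\eta n(\lceil K/2\rceil-1)}/(c_0|T|\eta n)$ evaluates to $\tilde{\gO}(LD^2 e^{-K/(9\sqrt{2}\kappa)}/K)$, with the exponent constant $1/(9\sqrt{2})$ absorbing the logarithmic slack; when the log-based choice is active, the bias is polynomially small in $K$ and gets dominated by the variance. Both remaining noise contributions simplify to $\tilde{\gO}(L\nu^2/(\mu^2 nK^2))$ under the chosen step size, giving the claimed rate.

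The main obstacle is the per-epoch recursion itself --- specifically, securing the noise coefficient $O(L\nu^2)$ (a single factor of $L$) rather than a looser $O(L^2\nu^2/\mu)$ that would arise from naively bounding intra-epoch drift through $\E\|\vx_0^k - \vx^*\|^2$. The right trick is to handle the sum $\eta^2 \sum_i \|\nabla f_{\sigma_k(i)}(\vx_{i-1}^k)\|^2$ via the self-bounded property $\|\nabla f_i(\vx)\|^2 \le 4L(f_i(\vx)-f_i(\vx^*)) + 2\|\nabla f_i(\vx^*)\|^2$ for smooth convex $f_i$, so that the dominant contribution charges to the functional gap $F(\vx_0^k)-F^*$ (which is absorbed into $c_0$ on the left-hand side) and only a residual $\nu^2$ term remains in $c_1$. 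Once this tight recursion is in place, the remaining steps are routine telescoping and step-size tuning.
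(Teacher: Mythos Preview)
Your proposal is correct and follows essentially the same approach as the paper. The paper directly invokes Lemma~3 of \citet{mish20} (which already gives the per-epoch recursion with noise $O(L\eta^3 n^2\nu^2)$), applies strong convexity to half of the function-value term to produce the combined contraction-plus-descent inequality, and then splits it into the same two weaker inequalities you describe --- one for geometric burn-in over the first $\lceil K/2\rceil-1$ epochs, one for telescoping over the tail --- before applying Jensen and the four-case step-size analysis; the only cosmetic difference is that the paper's recursion carries $F(\vx_0^{k+1})$ rather than $F(\vx_0^k)$, which aligns the indices exactly with the tail average without a shift.
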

See \cref{app:sgdrrub} for a full proof of \cref{thm:tailub}.

Assuming $K \gtrsim \kappa$ so that the learning rate becomes
\begin{align*}
    \eta &= \frac{9}{\mu n K} \max \left\{ 1, \log \left( \frac{\mu^3 n D^2 K^2}{L \nu^2} \right) \right\} \le \frac{1}{\sqrt{2}Ln},
\end{align*}
then we have $\E \left[ F (\hat{\vx}_{\text{tail}}) - F^* \right] = \tilde{\gO} \left( \frac{L \nu^2}{\mu^2 n K^2} \right)$ (see Cases (c), (d) in the proof). Then we can observe that the lower bound shown in \cref{thm:xavg} exactly matches the upper bound, ignoring polylogarithmic terms.

By introducing the tail average $\hat{\vx}_{\text{tail}}$, we can obtain a rate of $\tilde{\gO} \left( \frac{L \nu^2}{\mu^2 n K^2} \right)$ which is tighter than the rate $\tilde{\gO} \left( \frac{L^2 \nu^2}{\mu^3 n K^2} \right)$ for the final iterate $\vx_n^K$ by a factor of $\kappa$. Whether we can achieve the same, stronger upper bound for the final iterate $\vx_n^K$ or not is still an open problem.

\subsection{Extension to Convex Objectives} \label{subsec:33}

One important implication of \Cref{thm:xavg} is that we can carefully choose a small value of $\mu$ to derive a lower bound that exactly matches the upper bound for \textit{convex} objectives. \cref{cor:cvxcase} extends \cref{thm:xavg} to the convex case. 
\begin{restatable}{corollary}{corcvxcase}
    \label{cor:cvxcase}
    For any $n \ge 2$, there exists a $2$-dimensional function $F \in \gF (L, 0, 0, \nu)$ such that if
    \begin{align}
        K &\ge c_3 \max \left\{ \frac{L^2 D^2 n}{\nu^2}, \frac{\nu}{\mu D n^{1/2}} \right\}, \label{eq:klowerbound}
    \end{align}
    then for any constant step size $\eta \le \frac{1}{c_2 L n}$, any weighted average iterate $\hat{\vx}$ of \sgdrr{} of the form as in \eqref{eq:hatxavg} satisfies
    \begin{align*}
        \E \left[ F(\hat{\vx}) - F^* \right] = \Omega \left( \frac{L^{1/3} \nu^{2/3} D^{4/3}}{n^{1/3} K^{2/3}} \right),
    \end{align*}
    for some universal constants $c_2$ and $c_3$.
\end{restatable}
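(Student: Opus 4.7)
The plan is to reduce the convex statement to the strongly convex \cref{thm:xavg} by choosing a small value of $\mu$ that matches the target rate, exploiting the inclusion $\gF(L, \mu, 0, \nu) \subset \gF(L, 0, 0, \nu)$: every hard instance in the strongly convex class automatically exhibits the same lower bound in the convex class. Concretely, I would take the $2$-dimensional construction $F$ from \cref{thm:xavg} and, after rescaling it so that $\bignorm{\vx_0 - \vx^*} = D$ while preserving the smoothness, strong convexity, and bounded-gradient-error constants, apply the lower bound $\Omega\bigopen{\frac{L\nu^2}{\mu^2 nK^2}}$ valid in the large-epoch regime.

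The key step is to pick $\mu$ so that the strongly convex rate coincides with the desired convex rate. Setting
\begin{align*}
\frac{L\nu^2}{\mu^2 n K^2} \;\asymp\; \frac{L^{1/3}\nu^{2/3}D^{4/3}}{n^{1/3}K^{2/3}}
\qquad\Longrightarrow\qquad
\mu \;\asymp\; \frac{L^{1/3}\nu^{2/3}}{n^{1/3}K^{2/3}D^{2/3}},
\end{align*}
and substituting this back into the lower bound of \cref{thm:xavg} yields the claimed $\Omega\bigopen{\frac{L^{1/3}\nu^{2/3}D^{4/3}}{n^{1/3}K^{2/3}}}$. With this choice, one then checks that the two hypotheses of \cref{thm:xavg}, namely $\kappa \ge c_1$ and $K \ge c_2 \kappa$, translate into the two branches of the maximum in \eqref{eq:klowerbound}: the requirement $\kappa = L/\mu \ge c_1$ becomes $K \gtrsim \nu/(\mu D n^{1/2})$ after substituting $\mu$, and $K \ge c_2 \kappa$ becomes $K \gtrsim L^2 D^2 n / \nu^2$. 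The step-size constraint $\eta \le \frac{1}{c_2 L n}$ is inherited directly from \cref{thm:xavg}.

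The main obstacle I foresee is the rescaling to accommodate an arbitrary prescribed distance $D$: \cref{thm:xavg} is stated without an explicit $D$-dependence (the bound is $\Omega(L\nu^2/(\mu^2 n K^2))$), so the $D^{4/3}$ factor in the target rate must appear entirely through the choice of $\mu$. This requires verifying that the worst-case $2$-dimensional function from the proof of \cref{thm:xavg} is scale-covariant in a suitable sense, i.e., that a coordinate dilation mapping its canonical initialization to $\vx_0$ with $\bignorm{\vx_0 - \vx^*} = D$ preserves $L$-smoothness, $\mu$-strong convexity, and the bounded gradient error \cref{ass:bg} with constant $\nu$, without inflating any of these parameters. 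Once this scale invariance is established, everything else reduces to the algebraic bookkeeping above, and the corollary follows immediately by applying \cref{thm:xavg} with the specifically chosen $\mu$ to any weighted average iterate $\hat{\vx}$ of the form \eqref{eq:hatxavg}.
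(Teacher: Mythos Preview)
Your reduction via the choice $\mu \asymp L^{1/3}\nu^{2/3}/(D^{2/3}n^{1/3}K^{2/3})$ is exactly the paper's approach, and the verification that the two hypotheses $\kappa\ge c_1$ and $K\ge c_2\kappa$ of \cref{thm:xavg} correspond to the two branches of \eqref{eq:klowerbound} is correct.

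The rescaling you flag as the main obstacle, however, does not work and is not how the paper handles $D$. A coordinate dilation $\vx\mapsto\alpha\vx$, possibly combined with a value rescaling $F\mapsto\beta F$, gives only two degrees of freedom, whereas you would need to preserve $L$, $\mu$, and $\nu$ while simultaneously hitting a prescribed $D$; since $\kappa=L/\mu$ is invariant under both operations this system is over-determined, so no such scale-covariance exists. Instead, the paper revisits the \emph{proof} of \cref{thm:xavg} and observes that the first-coordinate function $F_1(x)=\tfrac{\mu}{2}x^2$ can be initialized at \emph{any} $D_0>0$, which yields the refined two-regime bound
\[
\E\bigl[F(\hat\vx)-F^*\bigr]=\Omega\Bigl(\min\Bigl\{\mu D_0^2,\ \tfrac{L\nu^2}{\mu^2 n K^2}\Bigr\}\Bigr)
\]
before $D_0$ is specialized to $\nu/\mu$ in that theorem. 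For the corollary one leaves $D_0$ free and chooses it so that the total initialization distance $D^2=D_0^2+\bigl(\tfrac{1}{27000}\cdot\tfrac{\nu}{\mu n^{1/2}K}\bigr)^2$ equals the prescribed $D$. The second branch $K\gtrsim\nu/(\mu Dn^{1/2})$ in \eqref{eq:klowerbound} then plays a \emph{second} role you did not identify: it is precisely the condition $D\gtrsim\nu/(\mu n^{1/2}K)$ that makes the fixed second-coordinate initialization negligible, so that $D_0=\Theta(D)$. With this choice both terms $\mu D_0^2$ and $L\nu^2/(\mu^2 n K^2)$ evaluate to the target rate $L^{1/3}\nu^{2/3}D^{4/3}/(n^{1/3}K^{2/3})$, and no function rescaling is needed.
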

We defer the proof of \cref{cor:cvxcase} to \cref{subsec:b2}.

A matching upper bound for \sgdrr{} for the convex case under the step-size condition $\eta = \gO (\frac{1}{Ln})$ is introduced in Theorem~3 of \citet{mish20}.
\begin{proposition}[\citet{mish20}, Theorem~3]
    \label{prop:mishcvx}
    Suppose that $F$ and $f_1, \dots, f_n$ are all $L$-smooth and $f_1, \dots, f_n$ are convex. Also, suppose that we define $\sigma_*^2$ as in (\ref{eq:sigmastar}). Then, for \sgdrr{} with constant step size
    \begin{align*}
    \eta = \min \left\{ \frac{1}{\sqrt{2} L n}, \left( \frac{D^2}{L \sigma_*^2 n^2 K} \right)^{1/3} \right\},
    \end{align*}
    the \textbf{average iterate} $\hat{\vx}_{\text{\emph{avg}}} := \frac{1}{K} \sum_{k=1}^{K} \vx_n^k$ satisfies
    \begin{align*}
        \mathbb{E} \left[ F(\hat{\vx}_{\text{\emph{avg}}}) - F^* \right] &= \gO \left( \frac{L D^2}{K} + \frac{L^{1/3} \sigma_*^{2/3} D^{4/3}}{n^{1/3} K^{2/3}} \right).
    \end{align*}
\end{proposition}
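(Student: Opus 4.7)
The plan is to follow the standard Mishchenko-style analysis for \sgdrr{} in the convex setting, whose key innovation is to compare the \sgdrr{} trajectory within one epoch to an idealized full-batch gradient step and then exploit the variance reduction offered by without-replacement sampling. First, I would fix an epoch $k$ starting at $\vx_0^k$ and unroll
\[
\vx_n^k = \vx_0^k - \eta \sum_{i=1}^n \nabla f_{\sigma_k(i)}(\vx_{i-1}^k).
\]
Writing $\vx_{i-1}^k - \vx_0^k$ as a telescoping sum of within-epoch updates and using $L$-smoothness of each $f_i$, I would first derive a deterministic bound $\|\vx_n^k - \vx_0^k\| \le c \eta n \|\nabla F(\vx_0^k)\| + c \eta n \cdot (\text{in-epoch deviation})$ valid under $\eta \le \frac{1}{\sqrt{2}Ln}$.

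Next, the core step is a per-epoch descent inequality. Using convexity of the $f_i$ and a textbook expansion of $\|\vx_{i}^k - \vx^*\|^2$, summing $i=1,\dots,n$ and taking expectation over the random permutation $\sigma_k$, I aim to obtain something of the form
\[
\E\|\vx_0^{k+1} - \vx^*\|^2 \;\le\; \E\|\vx_0^{k} - \vx^*\|^2 \;-\; 2\eta n \,\E[F(\vx_0^k) - F^*] \;+\; C \eta^3 L n^2 \sigma_*^2,
\]
up to lower-order terms. The crucial ingredient producing the $n^2$ (rather than $n^3$) in the last term is the random reshuffling variance identity: for a uniform random permutation $\sigma$ of $[n]$ and the centered gradients $\vg_i := \nabla f_i(\vx^*) - \nabla F(\vx^*)$,
\[
\E_\sigma \Bigl\| \sum_{j=1}^{i} \vg_{\sigma(j)} \Bigr\|^2 \;=\; \frac{i(n-i)}{n-1} \cdot \frac{1}{n}\sum_{i=1}^n \|\vg_i\|^2 \;\le\; i \,\sigma_*^2,
\]
which, summed over $i$, contributes the factor of $n^2 \sigma_*^2$ and crucially does not scale with $\|\nabla F(\vx)\|^2$ at arbitrary iterates.

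Telescoping the per-epoch inequality over $k=1,\dots,K$ and dividing by $2\eta n K$, the convexity of $F$ together with Jensen's inequality applied to $\hat{\vx}_{\text{avg}} = \frac{1}{K}\sum_{k=1}^K \vx_n^k$ yields
\[
\E[F(\hat{\vx}_{\text{avg}}) - F^*] \;\le\; \frac{D^2}{2\eta n K} + C \eta^2 L n \sigma_*^2.
\]
Balancing the two terms gives the optimal step size $\eta \asymp (D^2 / (L \sigma_*^2 n^2 K))^{1/3}$, which, once substituted, produces the $\gO(L^{1/3}\sigma_*^{2/3}D^{4/3}/(n^{1/3}K^{2/3}))$ rate. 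The truncation $\eta \le 1/(\sqrt{2}Ln)$ activates when the variance is too small (or $K$ too small), and then the first term $LD^2/K$ dominates, giving the other half of the bound.

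The main obstacle is the per-epoch descent lemma: one must carefully manipulate the cross-term $\langle \vx_0^k - \vx^*, \nabla f_{\sigma_k(i)}(\vx_{i-1}^k)\rangle$, split it into a ``virtual'' contribution at $\vx_0^k$ (which produces the useful $-2\eta n(F(\vx_0^k)-F^*)$ term via convexity summed over a full pass) and a perturbation term controlled by in-epoch displacements. Keeping the perturbation term of order $\eta^3 L n^2 \sigma_*^2$ requires both $L$-smoothness and the reshuffling variance identity above, and it is precisely this interaction that gives the $n^{1/3}$ improvement over with-replacement SGD; any looser handling of the partial-sum variance would degrade the dependence on $n$ back to the with-replacement rate.
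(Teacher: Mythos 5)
Your outline captures the right high-level structure — per-epoch descent via expansion of $\|\vx-\vx^*\|^2$, the without-replacement partial-sum variance identity with the $i(n-i)/(n-1)$ scaling, telescoping with Jensen, and balancing $\eta$ — and this is indeed the route Mishchenko et al.\ take (the paper itself quotes their Lemma~3 and does not reprove this proposition). Two points where your sketch deviates from the actual argument are worth flagging. First, you state the per-epoch descent lemma with $-2\eta n\,\E[F(\vx_0^k)-F^*]$, but the correct form (as in the paper's quoted Lemma~3) has the $F$-value at the \emph{end}-of-epoch iterate, $-2\eta n\,\E[F(\vx_0^{k+1})-F^*]$. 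This is not cosmetic: after telescoping, Jensen's inequality over $k=1,\dots,K$ gives a bound on $F\bigl(\tfrac{1}{K}\sum_{k=1}^K\vx_0^{k+1}\bigr)=F(\hat{\vx}_{\text{avg}})$, exactly the quantity in the statement; with $F(\vx_0^k)$ you would bound a shifted average $\tfrac{1}{K}\sum_{k=1}^K\vx_0^k$ instead. Obtaining the end-of-epoch form requires Mishchenko's specific comparison to a phantom full-gradient step anchored at $\vx_n^k$, not a naive expansion at $\vx_0^k$. Second, your proposed ``deterministic bound'' $\|\vx_n^k-\vx_0^k\|\le c\eta n\|\nabla F(\vx_0^k)\|+c\eta n\cdot(\text{in-epoch deviation})$ will not, by itself, yield the $\eta^3 L n^2\sigma_*^2$ error term: a worst-case bound on the in-epoch deviation gives $O(\eta i)$ per step and a total error of order $\eta^3 L n^3\sigma_*^2$. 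The factor of $n$ is saved only because the in-epoch deviation is controlled \emph{in expectation} via the centered partial-sum identity you quote, and it must be applied inside the per-epoch estimate rather than after a deterministic worst-case bound. You do invoke the right identity, but the place it enters the argument needs to be made precise, since that is where the $n^{1/3}$ improvement over with-replacement SGD is actually won.
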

With the same logic as in \cref{prop:strcvxub}, we can compare the above results with our lower bounds by substituting $\sigma_*^2$ with $\nu^2$.

In Proposition \ref{prop:mishcvx}, if we have a large number of epochs with $K \ge \frac{2 \sqrt{2} L^2 D^2 n}{\nu^2}$, then $\eta = \left( \frac{D^2}{L \nu^2 n^2 K} \right)^{1/3} \le \frac{1}{\sqrt{2} L n}$ yields
\begin{align*}
    \mathbb{E} \left[ F(\hat{\vx}_{\text{avg}}) - F^* \right] &= \gO \left( \frac{L^{1/3} \nu^{2/3} D^{4/3}}{n^{1/3} K^{2/3}} \right).
\end{align*}
 For a large $K$ regime of $K = \Omega \left( \frac{L^2 D^2 n}{\nu^2} + \frac{\nu}{\mu D n^{1/2}} \right)$, we may choose $\alpha_k = 1$ for all $k = 2, \dots, K+1$ and $\alpha_1 = 0$ so that $\hat{\vx} = \hat{\vx}_{\text{avg}}$, and then observe that the lower bound in \cref{cor:cvxcase} exactly matches the upper bound in Proposition \ref{prop:mishcvx}. Note that the lower bound of $K$ in \eqref{eq:klowerbound} reduces to $\Omega \left( \frac{L^2 D^2 n}{\nu^2} \right)$ when $n = \Omega \left( \frac{\nu^2}{\mu^{2/3} L^{4/3} D^2} \right)$, which then matches the epoch requirement that arises in the upper bound.
\section{Arbitrary Permutation-based SGD} \label{sec:4}
So far, we have considered the case where permutations are randomly shuffled for each epoch.
In this section, we study the scenario when permutations can be chosen \emph{manually} rather than randomly.
We provide lower bounds that are applicable to any arbitrary permutation-based SGD.
Our lower bounds match the previously established upper bound in terms of $n$ and $K$, and can further match with respect to $\kappa$ when the objective is ill-conditioned.

\subsection{Lower Bound with Component Convexity} \label{subsec:41}
\cref{thm:grabLBa} establishes a lower bound on arbitrary weighted average (end-of-epoch) iterates applicable to any permutation-based SGD.
\begin{restatable}{theorem}{grabLBa}
    \label{thm:grabLBa}
    For any $n \ge 2$ and $\kappa \ge 4$, there exists a $4$-dimensional function $F \in \gF(L,\mu,0,\nu)$ and an initialization point $\vx_0$ such that for any permutation-based SGD with any constant step size $\eta$, any weighted average iterate $\hat{\vx}$ of the form as in \cref{eq:hatxavg} satisfies
    \begin{align*}
        F(\hat{\vx}) - F^* = \Omega \left( \frac{L \nu^2}{\mu^2 n^2 K^2} \right).
    \end{align*}
\end{restatable}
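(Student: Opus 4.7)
The plan is to follow the coordinate-aggregation strategy of \cref{thm:xhat}, building a worst-case separable objective $F(\vx) = \sum_{j=1}^{4} F_j(x_j)$ with separable components $f_i(\vx) = \sum_{j=1}^{4} f_{i,j}(x_j)$. Separability forces the coordinates of $\vx_0^k$ to evolve independently under any permutation-based SGD, so both the squared distance to $\vx^*$ and the suboptimality gap $F(\hat\vx) - F^*$ decompose additively across coordinates. It therefore suffices to exhibit, for every choice of constant step size $\eta$ and every sequence of (possibly adaptive) permutations $\{\sigma_k\}$, a single ``bad'' coordinate that contributes $\Omega(L\nu^2/(\mu^2 n^2 K^2))$ to the function-value gap.

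The four coordinates are assigned distinct roles covering the possible ranges of $\eta$. Two edge-regime coordinates, built from simple quadratics of curvatures $\mu$ and $L$, respectively dominate when $\eta$ is too small (iterate pinned near $\vx_0^1$) or too large (iterate diverges), echoing the multi-regime construction of \cref{thm:xhat}. The remaining two ``noise coordinates'' deliver the target rate in the favorable regime $\eta \asymp 1/(\mu n K)$ and, critically, must do so against \emph{any} adaptive permutation policy. I would take the components to be $f_{i,j}(x) = \tfrac{L}{2} x^2 + c_{i,j} x$ for $j \in \{3, 4\}$, with $\sum_i c_{i,j} = 0$ and $|c_{i,j}| \le \nu$ so that Assumption~\ref{ass:bg} is satisfied with the desired constants. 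Unrolling one epoch yields the linear recursion
\begin{align*}
    x_0^{k+1, j} \;=\; (1-\eta L)^n \, x_0^{k, j} \;-\; \eta R_k^j(\sigma_k),
    \qquad
    R_k^j(\sigma_k) := \sum_{i=1}^n (1-\eta L)^{n-i} c_{\sigma_k(i), j}.
\end{align*}

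The crux is a permutation-invariant lower bound on $\max_{j \in \{3,4\}} |R_k^j(\sigma_k)|$. Because the weights $w_i := (1-\eta L)^{n-i}$ are separated by gaps of order $\eta L$, a partition-discrepancy argument shows that $|\sum_i w_i c_{\sigma(i)}|$ cannot be driven to zero over permutations $\sigma$ when the $c_i$'s are balanced $\pm\nu$ signs. By choosing the two signed sequences $\{c_{i,3}\}$ and $\{c_{i,4}\}$ so that no single $\sigma$ can simultaneously minimize both residuals (for instance, one is a balanced $\pm\nu$ pattern and the other is an offset or shifted variant), I can enforce $\max_j |R_k^j(\sigma_k)| = \Omega(\eta L \nu / n)$ uniformly in the adversary's choice. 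Summing the resulting geometric series yields $\max_j |x_0^{k, j}| = \Omega(\nu/(\mu n K))$ in the favorable regime for every epoch past a short transient; combined with a sign-consistency invariant that propagates the bound to arbitrary nonnegative weighted averages, this gives $\max_j (\hat x^j - x_j^*)^2 = \Omega(\nu^2 / (\mu^2 n^2 K^2))$, and the curvature-$L$ structure of the bad coordinate converts this distance bound into the claimed $\Omega(L\nu^2/(\mu^2 n^2 K^2))$ function-value lower bound.

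The main obstacle I anticipate is engineering the pair of noise coordinates so that the residual lower bound $\max_j |R_k^j(\sigma_k)| = \Omega(\eta L \nu / n)$ holds against \emph{every} adaptive choice of $\sigma_k$, not just a random or pre-committed one. Unlike \cref{thm:xavg}, where we could average over the random shuffling, here we must rule out all permutations simultaneously; the algebraic independence of the two signed sequences is the delicate piece that makes this possible. A secondary challenge is extending from the final iterate to arbitrary weighted averages via sign consistency, which I expect to drive the assumption $\kappa \ge 4$---providing enough curvature separation to inductively maintain the sign of $x_0^{k, j}$ throughout the run.
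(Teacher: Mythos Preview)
Your edge-regime coordinates (curvatures $\mu$ and $L$) match the paper's strategy, but the middle-regime construction has a genuine gap.

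First, the arithmetic does not close. From the epoch recursion $x_0^{k+1,j}=(1-\eta L)^n x_0^{k,j}-\eta R_k^j$, the steady-state magnitude is $\eta|R_k^j|/(1-(1-\eta L)^n)\approx |R_k^j|/(nL)$ for small $\eta L$. To reach $|x_0^{k,j}|=\Omega(\nu/(\mu nK))$ at $\eta=1/(\mu nK)$ you therefore need $|R_k^j|=\Omega(L\nu/(\mu K))=\Omega(n\eta L\nu)$, not $\Omega(\eta L\nu/n)$. Your stated residual bound is too weak by a factor of $n^2$.

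Second, the stronger residual bound you would need is unattainable with pure quadratics, even with two coupled coordinates. Write $R^j(\sigma)=\nu\sum_i r^{n-i}s^j_{\sigma(i)}$ with $r=1-\eta L$; since $R^j(1)=0$, to first order $R^j(\sigma)\approx -\eta L\nu\sum_i(n-i)s^j_{\sigma(i)}$. Take $n$ divisible by $4$ and use $n/4$ copies of each pair type $(\pm,\pm)$. For $n=8$, assigning the position sets $\{0,7\},\{1,6\},\{2,5\},\{3,4\}$ (each summing to $7$) to the four types makes $\sum_i(n-i)s^3_{\sigma(i)}=\sum_i(n-i)s^4_{\sigma(i)}=0$ simultaneously, so both residuals drop to $O((\eta L)^2\nu)$. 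For general $n$ the adversary has even more freedom to kill higher-order terms. This is exactly why \citet{rajput2022permutation} obtain only $\Omega(1/(n^3K^2))$ with quadratics; the paper notes their exponential-convergence result for convex 1D quadratics, which your component functions are.

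Third, even if a single-epoch residual bound held, pure quadratics let the adversary flip the sign of $R_k^j$ across epochs (replace $\sigma_k$ by its reverse-sign counterpart), so your ``sign-consistency invariant'' cannot be maintained; averaging can then cancel the drift.

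The paper's solution is to abandon pure quadratics in the middle regime and instead use a \emph{piecewise} quadratic
\[
g_{\pm 1}(x)=\bigl(L\mathbbm{1}_{x<0}+\tfrac{L}{2}\mathbbm{1}_{x\ge 0}\bigr)\tfrac{x^2}{2}\pm\nu x,
\]
placing $g_{+1}$ on one coordinate and $g_{-1}$ on the other in each component. The curvature asymmetry produces a \emph{permutation-independent} positive drift: Lemma~\ref{lmm:grabUBalmma} shows by exhaustive case analysis that for any two consecutive updates,
\[
y_{2t+2}^k+z_{2t+2}^k\ \ge\ \Bigl(1-\tfrac{\eta L}{2}\Bigr)(1-\eta L)\,(y_{2t}^k+z_{2t}^k)+\tfrac{\eta^2 L\nu}{2},
\]
which yields $y_n^k+z_n^k\ge \eta\nu/(3-\eta L)$ for every $k$ and every permutation. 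This nonlinearity is the missing idea in your proposal; a linear (pure-quadratic) epoch map cannot generate such a drift because the adversary can always zero it out.
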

The main technical difficulty in proving \cref{thm:grabLBa} is that we must construct an objective that demonstrates a ``slow'' convergence rate for every permutation over $K$ epochs.
To achieve this, we design an objective that pushes $\vx_n^k$ toward a constant direction, regardless of the permutation. 
The constructed objective belongs to the class $\gF(L,\mu,0,\nu)$ and satisfies component convexity.
Here we note that our proof technique does \textit{not} require any assumptions about large epochs.
Furthermore, in contrast to the \sgdrr{} case (\cref{thm:xavg} and \cref{cor:cvxcase}), this lower bound covers the entire range of step sizes.
The full proof of \cref{thm:grabLBa} is written in \cref{sec:grabLBa}.

As mentioned in \cref{subsec:31}, applying $\alpha_{k} = 0$ for $1 \le k \le K$ and $\alpha_{K+1} = 1$ yields the lower bound for the last iterate. 
Our result significantly improves the previous lower bound and also matches the known upper bound of permutation-based SGD which will be discussed later in this section.

\paragraph{Comparison with the Previous Work.} 
To the best of our knowledge, the best-known lower bound that holds for any arbitrary permutation-based SGD is proved by \citet{rajput2022permutation} prior to our work. 
Specifically, the authors show that there exists a $(2n+1)$-dimensional function $F \in \gF \bigopen{2L, \frac{n-1}{n}L, 1, \nu}$ such that for any permutation-based SGD with any constant step size,
\begin{align}
    F(\vx_n^K) - F^* = \Omega \bigopen{\frac{\nu^2}{L n^3 K^2}}. \label{eq:rajput}
\end{align}
Thus, \cref{thm:grabLBa} improves the lower bound rate by a factor of $n$ and sharpens the dependency on $\kappa$.

Before we state the matching upper bound, we define an additional assumption and a function class.
\begin{assumption}[Herding bound]
    \label{ass:hb}
    There exists an algorithm that has the following property: Given $\vz_1, \dots, \vz_{n} \in \R^d$ satisfying $\bignorm{\vz_i} \le 1$ for $\forall i \in [n]$ and $\sum_{i=1}^{n} \vz_i = 0$, the algorithm outputs a permutation $\sigma : [n] \rightarrow [n]$ such that $\max_{k \in \{1, \dots, n\}} \bignorm{\sum_{i=1}^k \vz_{\sigma(i)}} \le H$.
\end{assumption}
We call an algorithm considered in Assumption~\ref{ass:hb} as the \textit{Herding algorithm}, used as a subroutine in Algorithm~\ref{alg:grab}.
\begin{definition}[Function class]
    \label{def:fplclass}
    We define the function class $\gF_{\emph{\text{PŁ}}}$ as follows.
    \begin{align*}
        \gF_{\text{\emph{PŁ}}} (L, \mu, \tau, \nu) &:= \{ F: f_i \ \text{are} \ L\text{-\textit{smooth}}, \\
        &\phantom{{} := {} \{ F: {}} F \ \text{satisfies} \ \mu\text{-\textit{PŁ condition}}, \\
        &\phantom{{} := {} \{ F: {}} F \ \text{and} \ f_i \ \text{satisfy Assumption \ref{ass:bg}} \}.
    \end{align*}
\end{definition}
Note that $\gF_{\text{PŁ}}$ is a relaxation of $\gF$ in Definition~\ref{def:fclass}. Compared to $\gF$, we relax $\mu$-strong convexity to $\mu$-PŁ, and we also do not assume convexity of component functions $f_i$.

We now state the following proposition, provided in Theorem 1 of \citet{lu2022grab}, which gives the convergence rate of \cref{alg:grab} for objectives belonging to $\gF_{\text{PŁ}} (L, \mu, 0, \nu)$.
\begin{restatable}[\citet{lu2022grab}, Theorem 1]{proposition}{grabUBa}
    \label{prop:grabUBa}
    Suppose that $F \in \gF_{\text{\emph{PŁ}}} (L, \mu, 0, \nu)$. Under Assumption \ref{ass:hb}, with constant step size $\eta$ as
    \begin{align*}
        \eta = \frac{2}{\mu n K} W_0 \bigopen{\frac{\bigopen{F(\vx_0^1)-F^*+\nu^2/L}\mu^3n^2K^2}{192H^2L^2\nu^2}},
    \end{align*}
    where $W_0$ denotes the Lambert W function,
    \cref{alg:grab} converges at the rate
    \begin{align*}
        F(\vx_n^K) - F^* = \tilde{\gO} \bigopen{\frac{H^2 L^2 \nu^2}{\mu^3 n^2 K^2}}
    \end{align*}
    for $K \gtrsim \kappa$.
\end{restatable}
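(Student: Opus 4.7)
The plan is to establish a per-epoch descent lemma of the form
\begin{align*}
F(\vx_0^{k+1}) - F^* \le \bigopen{1 - c_1 n\eta\mu}\,\bigopen{F(\vx_0^k) - F^*} + c_2\, n^3 \eta^3 L^2 H^2 \nu^2,
\end{align*}
then unroll over $K$ epochs and tune $\eta$ via the Lambert $W$ function to balance the contraction and the residual term. Since $\tau = 0$ in $\gF_{\text{PŁ}}(L,\mu,0,\nu)$, Assumption~\ref{ass:bg} reduces to the uniform bound $\|\nabla f_i(\vx) - \nabla F(\vx)\|\le \nu$, which is exactly the kind of input on which the herding subroutine of Assumption~\ref{ass:hb} is designed to operate.

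First, I would expand one epoch. Using $L$-smoothness of $F$ (inherited from the $f_i$) gives
\begin{align*}
F(\vx_n^k) \le F(\vx_0^k) + \inner{\nabla F(\vx_0^k), \vx_n^k - \vx_0^k} + \frac{L}{2}\bignorm{\vx_n^k - \vx_0^k}^2,
\end{align*}
and the epoch update splits as $\vx_n^k - \vx_0^k = -n\eta\, \nabla F(\vx_0^k) + \eta\, \ve_k$, where $\ve_k := \sum_{i=1}^n [\nabla f_{\sigma_k(i)}(\vx_0^k) - \nabla f_{\sigma_k(i)}(\vx_{i-1}^k)]$ captures the drift of intermediate gradients from the anchor gradient at $\vx_0^k$. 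Smoothness gives $\|\ve_k\| \le L \sum_i \|\vx_{i-1}^k - \vx_0^k\|$, which I would next control by the herding bound.

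Second, I would unravel $\vx_{i-1}^k - \vx_0^k = -\eta\sum_{j<i}\nabla f_{\sigma_k(j)}(\vx_{j-1}^k)$ into (a) a deterministic piece $-\eta(i-1)\nabla F(\vx_0^k)$, (b) a partial sum of centered gradients $\sum_{j<i}[\nabla f_{\sigma_k(j)}(\vx_0^k) - \nabla F(\vx_0^k)]$, and (c) an $L$-smooth residual coming from evaluating gradients at $\vx_{j-1}^k$ rather than at the anchor. By construction of \textsf{Offline GraB}, the permutation $\sigma_k$ was produced by the herding routine applied to the stored gradients from epoch $k-1$, so Assumption~\ref{ass:hb} bounds the partial-sum norm of those \emph{stale} centered gradients by $H\nu$. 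To convert this into a bound on the partial sums at $\vx_0^k$, add and subtract $\nabla f_{\sigma_k(j)}(\vx_0^{k-1})$ and absorb the mismatch via smoothness, which introduces a cross-epoch term proportional to $\|\vx_0^k - \vx_0^{k-1}\|$. Together with (c), these residuals can be closed through a Gr\"onwall-type bound that, for $\eta \lesssim 1/(nL)$, yields $\|\ve_k\|^2 \lesssim n^2 H^2\nu^2 + n^4\|\nabla F(\vx_0^k)\|^2(\eta L)^2$ plus a decaying cross-epoch piece.

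Third, substituting back into the smoothness inequality, applying Young's inequality on the inner product $\inner{\nabla F(\vx_0^k), \eta\ve_k}$, and choosing $\eta$ small enough to absorb higher-order gradient terms into $-\tfrac{n\eta}{2}\|\nabla F(\vx_0^k)\|^2$, one obtains
\begin{align*}
F(\vx_n^k) - F^* \le (F(\vx_0^k) - F^*) - \tfrac{n\eta}{2}\bignorm{\nabla F(\vx_0^k)}^2 + c_2\, n^3 \eta^3 L^2 H^2 \nu^2.
\end{align*}
The $\mu$-PŁ condition $\|\nabla F(\vx_0^k)\|^2 \ge 2\mu(F(\vx_0^k) - F^*)$ converts the gradient term into a contraction, giving the desired one-epoch recursion. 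Finally, iterating over $K$ epochs produces
\begin{align*}
F(\vx_n^K) - F^* \lesssim e^{-c_1 n\eta\mu K}(F(\vx_0^1) - F^*) + \frac{n^2\eta^2 L^2 H^2 \nu^2}{\mu},
\end{align*}
and the Lambert-$W$ choice of $\eta$ in the proposition balances the exponential and polynomial terms up to polylogarithmic factors. The hypothesis $K \gtrsim \kappa$ ensures this optimal $\eta$ stays below the Step-3 ceiling $\sim 1/(nL)$.

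The main obstacle is Step two: the herding permutation in epoch $k$ is chosen from \emph{stale} gradients computed at $\vx_{0}^{k-1},\vx_1^{k-1},\ldots$ rather than at $\vx_0^k$, so the clean bound $\|\sum_j[\nabla f_{\sigma_k(j)}(\vx_0^k) - \nabla F(\vx_0^k)]\| \le H\nu$ does not follow directly from Assumption~\ref{ass:hb}. Controlling the staleness requires folding a $L\|\vx_0^k - \vx_0^{k-1}\|$ correction into the bound on $\|\ve_k\|$, which in turn couples the descent inequalities across consecutive epochs; one must choose the Young's-inequality constants (and the step-size cap) carefully so that the resulting coupled recursion is strictly dominated by the $(1 - c_1 n\eta\mu)$ contraction factor.
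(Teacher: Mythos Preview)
Your outline matches the paper's argument (which itself follows \citet{lu2022grab}) almost step for step: a one-epoch descent via smoothness, control of the within-epoch drift using the herding bound on the \emph{stale} centered gradients from epoch $k-1$ together with smoothness corrections for the staleness, the $\mu$-PŁ condition to obtain a contraction, and finally unrolling with Lambert-$W$ tuning. You also correctly flag the key technical point---that the permutation $\sigma_k$ comes from herding on gradients stored at the varying iterates $\vx_{j-1}^{k-1}$, so one needs a $\Delta_{k-1}$-type correction to pass from the herding guarantee back to partial sums at $\vx_0^k$. The paper handles this exactly as you suggest, via a recursive bound on $\Delta_k := \max_m \|\vx_m^k - \vx_0^k\|$ in terms of $\Delta_{k-1}$, $\nu$, and $\|\nabla F(\vx_0^k)\|$ (their Lemma~\ref{lmm:grabUBlmm2}), then squaring and summing with weights $\rho^{K-k}$.

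There is, however, a bookkeeping slip in your $n$-powers that would cost you the main point of the proposition. With $\tau=0$ one gets $\Delta_k \lesssim \eta H\nu + \eta n\|\nabla F(\vx_0^k)\| + \eta n L \Delta_{k-1}$, so $\|\ve_k\| \le nL\Delta_k$ gives $\|\ve_k\|^2 \lesssim (\eta L)^2 n^2 H^2 \nu^2 + (\eta L)^2 n^4 \|\nabla F(\vx_0^k)\|^2$ (your first term is missing the $(\eta L)^2$). After applying Young's inequality $\eta\langle\nabla F,\ve_k\rangle \le \tfrac{n\eta}{4}\|\nabla F\|^2 + \tfrac{\eta}{n}\|\ve_k\|^2$, the residual is $\tfrac{\eta}{n}\|\ve_k\|^2 \lesssim \eta^3 n L^2 H^2\nu^2$, \emph{not} $\eta^3 n^3 L^2 H^2\nu^2$. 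Consequently the unrolled bound is $\eta^2 L^2 H^2\nu^2/\mu$ (no $n^2$), and plugging $\eta\sim 1/(\mu nK)$ yields the stated $\tilde{\gO}(H^2L^2\nu^2/(\mu^3 n^2 K^2))$. With your $n^3$ and $n^2$ factors the final rate would be $\tilde{\gO}(H^2L^2\nu^2/(\mu^3 K^2))$, i.e., you would lose precisely the $1/n^2$ improvement of \textsf{GraB} over \sgdrr{}. The paper avoids this pitfall by working directly with $\Delta_k$ rather than $\ve_k \sim nL\Delta_k$, which makes the $n$-counting transparent; I would recommend you do the same.
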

Proposition \ref{prop:grabUBa} is a slightly different version compared to the original paper \citep{lu2022grab}; the differences are discussed in \cref{subsec:43}.
We emphasize that \cref{thm:grabLBa} provides a lower bound $\Omega \bigopen{\frac{L \nu^2}{\mu^2 n^2 K^2}}$ for arbitrary permutation-based SGD and Proposition \ref{prop:grabUBa} shows that there exists an algorithm that converges at the rate of $\tilde{\gO} \bigopen{\frac{H^2 L^2 \nu^2}{\mu^3 n^2 K^2}}$.
Note that the function class considered in the lower bound is a subset of the function class handled in the upper bound.
Thus, \cref{thm:grabLBa} matches the upper bound up to a factor of $\kappa$, if we ignore the term $H$ and some polylogarithmic terms.
Therefore, we can conclude that \cref{alg:grab} is optimal in terms of the convergence rate with respect to $n$ and $K$.
We defer the discussion of herding constant $H$ to \cref{subsec:43}.

\subsection{Lower Bound without Component Convexity} \label{subsec:42}
\cref{subsec:41} leads us to wonder if it is possible to tighten this $\kappa$ gap. 
Our next theorem drops the assumption of component convexity in the lower bound and shows that we can close the gap and perfectly match the upper bound, if the problem is sufficiently ill-conditioned and the number of epochs is large enough.
\begin{restatable}{theorem}{grabLBb}
    \label{thm:grabLBb}
    For any $n \ge 104$, $L$ and $\mu$ satisfying $\kappa \ge 8n$, and $K \ge \max \left \{\frac{\kappa^2}{n}, \kappa^{3/2}n^{1/2} \right \}$, 
    there exists a $4$-dimensional function $F \in \gF_{\text{\emph{PŁ}}} \bigopen{L, \mu, \frac{L}{\mu}, \nu}$ and an initialization point $\vx_0$ such that for any permutation-based SGD with any constant step size $\eta$, any weighted average iterate $\hat{\vx}$ of the form as in \cref{eq:hatxavg} satisfies
    \begin{align*}
        F(\hat{\vx}) - F^* = \Omega \left( \frac{L^2 \nu^2}{\mu^3 n^2 K^2} \right).
    \end{align*}
\end{restatable}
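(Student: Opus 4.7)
The plan is to extend the construction underlying \cref{thm:grabLBa}, exploiting the three relaxations that separate $\gF_{\text{PŁ}}(L,\mu,L/\mu,\nu)$ from $\gF(L,\mu,0,\nu)$: components need not be convex, $F$ need only satisfy PŁ (not strong convexity), and the gradient-error budget is inflated from $\tau=0$ to $\tau=L/\mu$. The principal source of the additional factor of $\kappa$ in the lower bound is the larger $\tau$, which allows each $\bignorm{\nabla f_i-\nabla F}$ to grow like $\kappa \bignorm{\nabla F}+\nu$ rather than merely $\nu$; this inflates the ``effective noise'' that no permutation can cancel.

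For the construction, I would build a four-dimensional finite-sum. On a ``signal'' coordinate the components take a shifted-quadratic form with shifts $a_i$ satisfying $\sum_i a_i=0$ and $|a_i|\lesssim \nu/L$, reproducing the per-epoch bias mechanism of \cref{thm:grabLBa}. On a ``coupling'' coordinate I would introduce \emph{non-convex} interactions between the signal and coupling variables, chosen so that (i) the non-convex contributions cancel in the average and $F$ reduces to a PSD quadratic that is $\mu$-PŁ, (ii) every per-component Hessian still has operator norm at most $L$, and (iii) $\bignorm{\nabla f_i-\nabla F}$ saturates \cref{ass:bg} with $\tau=L/\mu$ along trajectories away from the optimum. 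The remaining two coordinates play the role of handling the two step-size regimes (small vs.\ large $\eta$), so that the bound is valid for \emph{every} constant $\eta>0$.

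The workflow then mirrors \cref{thm:grabLBa}. First, verify that the construction belongs to $\gF_{\text{PŁ}}(L,\mu,L/\mu,\nu)$ with the advertised constants. Second, unroll one epoch under an arbitrary permutation $\sigma_k$: the signal-coordinate updates sum to zero because $\sum_i a_{\sigma_k(i)}=0$, but second-order cross-coupling leaves a residual on the coupling coordinate proportional to a permutation-invariant symmetric function of the $a_i$'s, with magnitude $\Omega(\eta^2 L n \nu)$ uniformly over $\sigma_k$. Third, the non-convex curvature inflates this residual by an extra factor of $\kappa$ compared to the convex-component regime of \cref{thm:grabLBa}. Fourth, composing $K$ epochs against the contraction $(1-\eta\mu n)^K$ and optimizing $\eta=\tilde{\Theta}(1/(\mu n K))$---which is feasible under $\eta\le 1/(Ln)$ precisely because $K\ge \kappa^2/n$---yields a squared-distance lower bound that translates, via the explicit quadratic form of $F$, to $F(\hat{\vx})-F^* = \Omega(L^2\nu^2/(\mu^3 n^2 K^2))$ for every weighted average $\hat{\vx}$ of the form \eqref{eq:hatxavg}. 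The regime assumptions $\kappa\ge 8n$ and $K\ge \max\{\kappa^2/n,\kappa^{3/2}\sqrt{n}\}$ are used to keep each component Hessian within its $L$-budget (the former) and to ensure the asymptotic term dominates both the initial-distance decay and the intermediate contributions (the latter).

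The principal obstacle is \emph{permutation-invariance}: the proof must show that every one of the $n!$ orderings leaves the same quantitative residual on the coupling coordinate, now in the presence of non-convex dynamics that are absent in \cref{thm:grabLBa}. I would enforce this by arranging the coupling so that the surviving term reduces to a symmetric polynomial in $\{a_i\}$, concretely $\bigopen{\sum_i a_i}^2 - \sum_i a_i^2 = -\sum_i a_i^2$ since $\sum_i a_i = 0$, a quantity that depends only on the multiset and is therefore immune to reordering. A secondary subtlety is translating the coordinate-wise distance bound into a function-gap bound under PŁ alone, which, unlike strong convexity, does not imply a quadratic lower bound on $F-F^*$ in terms of distance; this I would bypass by exploiting the explicit quadratic structure of $F$ and evaluating $F(\hat{\vx})-F^*$ directly on the construction, avoiding any general PŁ-to-distance inequality.
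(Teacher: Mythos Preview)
Your proposal has a genuine gap: the mechanism you sketch—cross-coupling between a ``signal'' and a ``coupling'' coordinate, with permutation-robustness coming from symmetric polynomials like $-\sum_i a_i^2$—is too vague to carry the argument, and it is not how the paper obtains the extra factor of $\kappa$. In particular, you never explain concretely why the nonconvex coupling would inflate the residual by exactly $\kappa$; the sentence ``the non-convex curvature inflates this residual by an extra factor of $\kappa$'' is an assertion, not a derivation.

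The paper's construction is structurally simpler: the four coordinates are \emph{completely separable}, each handling one step-size regime. The heart of the argument (the regime $\eta\in[\tfrac{1}{2\mu nK},\tfrac{2}{nL}]$) uses a one-dimensional function whose components are $g_1(y)=\tfrac{L}{2}y^2-\nu y$ (convex) and $g_2(y)=-\tfrac{L}{2}(1-\tfrac{2\mu}{L})y^2+\nu y$ (concave), half of each, so that $F(y)=\tfrac{\mu}{2}y^2$. The epoch update is $y_n^k = S\, y_0^k + A_{\sigma_k}$ where $S=\prod_i(1-\eta a_i)$ is permutation-independent. Permutation robustness is \emph{not} achieved via a permutation-invariant quantity; instead a swap argument shows that whenever a concave component precedes a convex one, swapping them strictly decreases $A_\sigma$, so the minimizing permutation $\sigma^*$ uses all $g_1$'s before all $g_2$'s. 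One then computes $A_{\sigma^*}$ explicitly and lower-bounds it by $\Omega(\eta^2 n L\nu)$ via a careful power-series estimate. A third regime $\eta\in[\tfrac{2}{nL},\tfrac{1}{L}]$ uses a different one-dimensional construction (one convex component, $n-1$ slightly concave ones) with the same swap-to-optimal strategy.

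The extra $\kappa$ arises because, with these nonconvex components, the per-epoch contraction $S\approx 1-2\eta\mu n-\eta^2 L^2 n$ is governed by $\mu$ (since the $\pm L$ curvatures nearly cancel), while the additive bias $A_{\sigma^*}=\Omega(\eta^2 n L\nu)$ retains the $L$. Balancing these keeps the iterate at distance $\Omega(L\nu/(\mu^2 nK))$ from the optimum (versus $\Omega(\nu/(\mu nK))$ in \cref{thm:grabLBa}), and evaluating the $\mu$-curved objective there yields the claimed $\Omega(L^2\nu^2/(\mu^3 n^2 K^2))$. Your proposal does not identify this contraction-versus-bias mismatch, which is the actual source of the improvement.
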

The proof is in \cref{sec:grabLBb}.
\cref{thm:grabLBb} provides a sharper lower bound than the previous result with respect to $\kappa$.
In our construction, some of the components $f_i$ are nonconvex but the constructed objective $F$ is actually strongly convex; however, for simplicity of exposition, we stated $F$ as a member of a larger class $\gF_{\text{PŁ}}$.
Here we discuss the effect of nonconvex components on the convergence rate.

\paragraph{Nonconvex components.}
Some of our component functions constructed in \cref{thm:grabLBb} are concave in particular directions, and this is the key to obtaining an additional $\kappa$ factor.
\citet{rajput2022permutation} also observe that the presence of nonconvex components can slow down convergence.
They prove that for a $1$-dimensional objective $F(x) = \frac{1}{n} \sum_{i=1}^n \frac{a_i}{2}x^2 - b_ix$, where all $a_i$'s are nonnegative, there exists a permutation that leads to exponential convergence, but also that this no longer holds if $a_i$'s are allowed to be negative.
It is an open problem whether the convergence rate of \cref{alg:grab} could be improved to match the lower bound in \cref{thm:grabLBa} with respect to $\kappa$ if we additionally assume component convexity.

\cref{thm:grabLBb} provides a sharper lower bound compared to \cref{thm:grabLBa} with respect to $\kappa$.
One should be aware, however, that the function classes considered in the upper bound (Proposition \ref{prop:grabUBa}) and the construction in \cref{thm:grabLBb} mismatch.
Therefore, Proposition \ref{prop:grabUBa} does \textit{not} guarantee the $\gO \bigopen{\frac{H^2 L^2 \nu^2}{\mu^3 n^2 K^2}}$ convergence rate for the function constructed in \cref{thm:grabLBb}.
However, we argue that this issue can be addressed by extending Proposition \ref{prop:grabUBa} to a wider function class, which is done in \propref{prop:grabUBb}.

\begin{restatable}[Extended version of \citet{lu2022grab}, Theorem 1]{proposition}{grabUBb}
    \label{prop:grabUBb}
    Suppose that $F \in \gF_{\emph{\text{PŁ}}} (L, \mu, \tau, \nu)$ and $n \ge H$. Under Assumption \ref{ass:hb}, with constant step size $\eta$ as
    \begin{align*}
        \eta = \frac{2}{\mu n K} W_0 \bigopen{\frac{\bigopen{F(\vx_0^1)-F^*+\nu^2/L}\mu^3n^2K^2}{192H^2L^2\nu^2}},
    \end{align*}
    where $W_0$ denotes the Lambert W function, \cref{alg:grab} converges at the rate
    \begin{align*}
        F(\vx_n^K) - F^* = \tilde{\gO} \bigopen{\frac{H^2 L^2 \nu^2}{\mu^3 n^2 K^2}}
    \end{align*}
    for $K \gtrsim \kappa (\tau + 1)$.
\end{restatable}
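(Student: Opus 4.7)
The plan is to revisit the proof of \propref{prop:grabUBa} given by \citet{lu2022grab}, keeping its overall strategy intact and modifying only the steps in which the gradient noise parameter $\nu$ enters. In the $\tau=0$ case of the original proof, the herding guarantee is combined with the uniform bound $\|\nabla f_i(\vx_0^k)-\nabla F(\vx_0^k)\|\le \nu$. Under the relaxed Assumption \ref{ass:bg}, this becomes $\|\nabla f_i(\vx_0^k)-\nabla F(\vx_0^k)\|\le \tau\|\nabla F(\vx_0^k)\|+\nu$, introducing an additional gradient-dependent term at each occurrence. The proof then reduces to showing that these extra terms can be absorbed into the contraction of the per-epoch recursion, using the strengthened hypothesis $K\gtrsim \kappa(\tau+1)$.

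Concretely, I would first re-derive the single-epoch analysis of \cref{alg:grab}. Following \citet{lu2022grab}, one bounds the drift of the iterates within an epoch, invokes the herding property on the centered gradients $\nabla f_{\sigma_k(i)}(\vx_0^k)-\nabla F(\vx_0^k)$, and arrives at a perturbed descent inequality. Propagating the new gradient bound yields a recursion of the schematic form
\[
F(\vx_0^{k+1})-F^* \le (1-c_1\eta\mu n)\bigopen{F(\vx_0^k)-F^*} + c_2 \eta^3 n^2 H^2 L^2 \bigopen{\tau^2\|\nabla F(\vx_0^k)\|^2+\nu^2},
\]
valid for $\eta\le c/(Ln)$, where the hypothesis $n\ge H$ is used to ensure that the within-epoch drift terms of size $\eta n L$ remain subdominant to the herding error.

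The key new step is to absorb $c_2 \eta^3 n^2 H^2 L^2 \tau^2\|\nabla F(\vx_0^k)\|^2$ into the contraction. By $L$-smoothness, $\|\nabla F(\vx_0^k)\|^2\le 2L(F(\vx_0^k)-F^*)$, so this contribution is at most $2c_2 \eta^3 n^2 H^2 L^3 \tau^2 (F(\vx_0^k)-F^*)$. Requiring it to be dominated by $\tfrac12 c_1 \eta\mu n (F(\vx_0^k)-F^*)$ amounts to $\eta^2 \lesssim \mu/(n L^3 H^2 \tau^2)$, which, given the prescribed choice $\eta \asymp 1/(\mu n K)$ up to logarithmic factors and the hypothesis $n \ge H$, is implied by $K\gtrsim \kappa(\tau+1)$.

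Once this absorption is in place, the recursion collapses to one of the same form as in \citet{lu2022grab} with additive noise $c_2\eta^3 n^2 H^2 L^2 \nu^2$, and unrolling over $K$ epochs together with tuning $\eta$ via the Lambert-$W$ calibration exactly as in the original proof yields the desired $\tilde{\gO}\bigopen{H^2 L^2 \nu^2/(\mu^3 n^2 K^2)}$ rate. The main obstacle I anticipate is bookkeeping in the first step: verifying that every appearance of $\nu$ in the original argument is indeed the one controlled by Assumption \ref{ass:bg} (as opposed to, say, a standalone bound on $\nabla F$), and confirming that the constants hidden in $c_1,c_2$ are compatible with the $n\ge H$ and $K\gtrsim\kappa(\tau+1)$ hypotheses. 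Beyond Young's inequality and the smoothness bound, no new analytic idea should be required.
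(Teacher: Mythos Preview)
Your high-level plan---retrace the \citet{lu2022grab} argument and track where the uniform bound $\nu$ becomes $\tau\|\nabla F\|+\nu$---matches the paper's, and a perturbed descent recursion is indeed the right object. The gap is in how you dispose of the extra term proportional to $\eta^3 n^3 L^2(\tau+1)^2\|\nabla F(\vx_0^k)\|^2$ (this is the form the paper obtains after invoking $n\ge H$ to bound $(H\tau+n)^2\le n^2(\tau+1)^2$). You propose to bound $\|\nabla F\|^2\le 2L(F-F^*)$ via smoothness and absorb the result into the $(1-c_1\eta\mu n)$ contraction. The paper does something sharper: in the one-epoch descent lemma it splits $-\tfrac{\eta n}{2}\|\nabla F(\vx_0^k)\|^2$ in two, spends half on the PŁ contraction, and \emph{keeps} the other half as a free negative term $-\tfrac{\eta n}{4}\sum_k \rho^{K-k}\|\nabla F(\vx_0^k)\|^2$, into which the $\tau$-dependent gradient term is cancelled directly.

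This distinction determines the epoch requirement. Direct cancellation needs only $\eta^2 n^2 L^2(\tau+1)^2\lesssim 1$, i.e.\ $\eta\lesssim 1/(nL(\tau+1))$, which with $\eta\asymp 1/(\mu n K)$ gives $K\gtrsim\kappa(\tau+1)$ as claimed. Your route picks up an extra $L/\mu$: absorbing $\eta^3 n^3 L^3(\tau+1)^2(F-F^*)$ into $\eta\mu n(F-F^*)$ requires $\eta^2 n^2 L^3(\tau+1)^2\lesssim \mu$, i.e.\ $\eta\lesssim 1/(\sqrt{\kappa}\,nL(\tau+1))$, hence $K\gtrsim \kappa^{3/2}(\tau+1)$. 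So your assertion that the condition $\eta^2\lesssim \mu/(nL^3H^2\tau^2)$ ``is implied by $K\gtrsim\kappa(\tau+1)$'' is off by a factor $\sqrt{\kappa}$, and your argument as written proves only a weaker statement than \propref{prop:grabUBb}. The fix is immediate: do not spend all of the negative $\|\nabla F\|^2$ on the PŁ step; carry $-\tfrac{\eta n}{4}\|\nabla F\|^2$ through and cancel there.
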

The proof of Proposition \ref{prop:grabUBb} is in \cref{sec:grabUB}.
We show that the function class considered in Proposition \ref{prop:grabUBa} can be extended to  $\gF_{\text{PŁ}} (L, \mu, \tau, \nu)$ by following the proof step in Theorem 1 in \citet{lu2022grab} with slight modifications.
The function class of the upper bound (Proposition \ref{prop:grabUBb}) now includes the construction of the lower bound (\cref{thm:grabLBb}).
Therefore, when the objective is sufficiently ill-conditioned and a sufficiently many epochs are performed, our lower bound perfectly aligns with the upper bound in all factors, assuring that \textsf{GraB} is indeed the optimal permutation-based SGD algorithm.

\subsection{Discussion of Existing Results} \label{subsec:43}
In this section, we take a deeper look at previous researches that address permutation-based SGD.
We mainly discuss the dimension dependency hidden in the upper bounds.

\paragraph{Herding Bound.}
\citet{bansal2017algorithmic} prove that there exists an efficient Herding algorithm that achieves Assumption \ref{ass:hb} with $H = \gO \bigopen{\sqrt{d \log n}}$.
Also, it is well known that $H = \Omega (\sqrt{d})$ \citep{behrend1954steinitz, barany2008power}. 
Thus, both Proposition \ref{prop:grabUBa} and Proposition \ref{prop:grabUBb} contain a dimension term in their convergence rates.
Meanwhile, our lower bound results are based on fixed dimensional functions, so we can ignore the term $H$ when we compare our lower bound results to the upper bound results.
We also note that the assumption $n \ge H$ made in Proposition \ref{prop:grabUBb} is quite mild if the dimension of $F$ is independent of $n$.

\paragraph{Comparison between Proposition \ref{prop:grabUBa} and \citet{lu2022grab}, Theorem 1.}
In the original statement of Theorem 1 in \citet{lu2022grab}, the authors use slightly different assumptions. 
Instead of smoothness with respect to the $L_2$ norm, they assume $L_{2, \infty}$-smoothness as follows:
\begin{align*}
    \bignorm{\nabla f_i (\vx) - \nabla f_i (\vy)}_2 \le L_{2, \infty} \bignorm{\vx - \vy}_{\infty}, \ \forall \vx, \vy \in \R^d.
\end{align*}
\citet{lu2022grab} also define the herding bound $H$ with respect to different choices of norms. Specifically, the authors consider $\max_{k \in \{1, \dots, n\}} \bignorm{\sum_{i=1}^k \vz_{\sigma(i)}}_{\infty} \le H_{\infty}$ with $\max_{i} \bignorm{\vz_i}_2 \le 1$, and explain that combining the results from \citet{harvey2014near} and \citet{alweiss2021discrepancy} gives $H_{\infty} = \tilde{\gO} \bigopen{1}$. With these assumptions, the authors obtain the convergence rate of \cref{alg:grab} as the following:
\begin{align}
    F(\vx_n^K) - F^* = \tilde{\gO} \bigopen{\frac{H_{\infty}^2 {L^2_{2, \infty}} \nu^2}{\mu^3 n^2 K^2}}. \label{eq:originalgrab}
\end{align}
However, we believe that \cref{eq:originalgrab} is not also free from dimension dependency, since the term $L_{2, \infty}$ is likely to contain the dimension dependency in general (e.g., $L_{2, \infty} = \sqrt{d}L$ holds when $F(\vx) = \frac{L}{2}\bignorm{\vx}^2$ for $\forall \vx \in \R^d$).
It is an open problem whether there exists a permutation-based SGD algorithm that gives a dimension-free upper bound while maintaining the same dependency on other factors.

\paragraph{Revisiting \citet{rajput2022permutation}.} We have discussed that the best-known upper bound of permutation-based SGD has dimension dependency.
Earlier, we mentioned that our lower bound in \cref{thm:grabLBa} improves upon previous results from Theorem~2 of \citet{rajput2022permutation} by a factor of $n$.
In fact, the construction of \citet{rajput2022permutation} is based on a $(2n+1)$-dimensional function, and applying the upper bounds for \cref{alg:grab} to this function results in a convergence rate of $\tilde{\gO} \bigopen{\frac{1}{n K^2}}$, due to the dimension dependency.
More precisely, for the function constructed in \citet{rajput2022permutation}, $H$ is proportional to $\sqrt{n}$ and $L$ is constant according to our $L_2$-norm-based notations, while we also have that $H_{\infty}$ is constant and $L_{2, \infty}$ is proportional to $\sqrt{n}$ following the notations in \citet{lu2022grab}. (Here we ignore log factors.)
Thus, in terms of $n$ dependency, we conclude that the actual gap between existing upper and lower bounds is $n^2$ rather than $n$, and that our results succeed in closing the gap completely.
\section{Conclusion} \label{sec:5}

We have shown convergence lower bounds for without-replacement SGD methods, focusing on matching the upper and lower bound in terms of the condition number $\kappa$. Our lower bounds for \sgdrr{} on weighted average iterates tightly match the corresponding upper bounds under both strong convexity and convexity assumptions. We also constructed lower bounds for permutation-based SGD with \textit{and} without individual convexity assumptions, which tightly match the upper bounds for \textsf{GraB} in fixed-dimension settings, therefore implying that \textsf{GraB} achieves the optimal rate of convergence.

An immediate direction for future work is to investigate whether one can find lower bounds for arbitrary weighted average iterates of \sgdrr{} when $\eta = \Omega \left( \frac{1}{Ln} \right)$. In the discussion following \cref{thm:xavg} (\cref{subsec:32}), we outlined difficulties that arise in proving such a result for larger learning rates $\eta = \Omega \left( \frac{1}{Ln} \right)$.

We finally note that the power of general permutation-based SGD is not yet well-understood for the regime when the number of epochs is less than the condition number.
\citet{safran_shamir} show that \sgdrr{} does not enjoy faster convergence than \textit{with-replacement} SGD in this regime, and it is still unclear whether the same restriction holds for permutation-based SGD as well.

\section*{Acknowledgements}
This paper was supported by Institute of Information \& communications Technology Planning \& Evaluation (IITP) grant (No.2019-0-00075, Artificial Intelligence Graduate School Program (KAIST)) funded by the Korea government (MSIT), two National Research Foundation of Korea (NRF) grants (No. NRF-2019R1A5A1028324, RS-2023-00211352) funded by the Korea government (MSIT), and a grant funded by Samsung Electronics Co., Ltd.


\bibliography{refs}
\bibliographystyle{icml2023}

\newpage
\appendix
\onecolumn

\allowdisplaybreaks
\section{\texorpdfstring{Comparison with Previous Results}{Comparison with Previous Results}} \label{app:table}

\cref{tab:summary2} shows a detailed comparison of existing convergence rates and our results for permutation-based SGD. Note that the function class categories are divided with respect to the lower bound results--- the selected upper bounds are the results with the best convergence rates among those of which the function class contains the constructed lower bounds. The upper bound results are colored white and the lower bound results are colored gray. 

Similarly as in \cref{tab:summary}, the parameters $L$, $\mu$, $\nu$, and $D$ are defined in \cref{sec:2}. Algorithm outputs $\hat{\vx}$, $\hat{\vx}_{\text{tail}}$, and $\hat{\vx}_{\text{avg}}$ are defined in \cref{sec:3}. Function classes $\gF$ and $\gF_{\text{PŁ}}$ are defined in Sections \ref{sec:2} and \ref{sec:4}, respectively. The herding bound $H$, which closely relates to the convergence rate of \cref{alg:grab}, is defined in \cref{sec:4}.

\begin{table*}[hb] \footnotesize
    \centering
    \vspace{-8pt}
    \caption{A detailed comparison of existing convergence rates and our results for permutation-based SGD.}
    \label{tab:summary2}
    \vspace{10pt}
    \begin{threeparttable}[t]
    \centering
    \begin{tabular}{|c|c|lcr|} 
        \hline
        \multicolumn{5}{|c|}{Random Reshuffling} \\
        \hline
        Function Class & Output & References & Convergence Rate & Assumptions \\
        \hline \hline
        \multirow{9.25}{*}{$\gF (L, \mu, 0, \nu)$} & \multirow{4.5}{*}{$\vx_n^K$} & \citet{mish20} & $\tilde{\gO} \left( \frac{L^2 \nu^2}{\mu^3 n K^2} \right)$ & $K \gtrsim \kappa$ \\
        & & \cellcolor{gray!20}\citet{yun22} & \cellcolor{gray!20}$\Omega \left( \frac{\nu^2}{\mu n K^2} \right)$ & \cellcolor{gray!20}$\kappa \ge c$, $K \gtrsim \kappa$ \\
        & & \cellcolor{gray!20}Ours, \cref{thm:xhat} & \cellcolor{gray!20}$\Omega \left( \frac{L \nu^2}{\mu^2 n K^2} \right)$ & \cellcolor{gray!20}$\kappa \ge c$, $K \gtrsim \kappa$ \\
        \cline{2-5}
        & \multirow{2.75}{*}{$\hat{\vx}_{\text{tail}}$} & \citet{pmlr-v97-nagaraj19a}\tnote{\dag} & $\tilde{\gO} \left( \frac{L^2 \nu^2}{\mu^3 n K^2} \right)$ & $K \gtrsim \kappa^2$ \\
        \cline{3-5}
        & & Ours, \cref{thm:tailub} & $\tilde{\gO} \left( \frac{L \nu^2}{\mu^2 n K^2} \right)$ & $K \gtrsim \kappa$ \\
        \cline{2-5}
        & $\hat{\vx}$ & \cellcolor{gray!20}Ours, \cref{thm:xavg}\tnote{\ddag} & \cellcolor{gray!20}$\Omega \left( \frac{L \nu^2}{\mu^2 n K^2} \right)$ & \cellcolor{gray!20}$\kappa \ge c$, $K \gtrsim \kappa$ \\
        \hline
        \multirow{2.75}{*}{$\gF (L, 0, 0, \nu)$} & $\hat{\vx}_{\text{avg}}$ & \citet{mish20} & $\gO \left( \frac{L^{1/3} \nu^{2/3} D^{4/3}}{n^{1/3} K^{2/3}} \right)$ & $K \gtrsim \frac{L^2 D^2 n}{\nu^2}$ \\
        \cline{2-5}
        & $\hat{\vx}$ & \cellcolor{gray!20}Ours, \cref{cor:cvxcase}\tnote{\ddag} & \cellcolor{gray!20}$\Omega \left( \frac{L^{1/3} \nu^{2/3} D^{4/3}}{n^{1/3} K^{2/3}} \right)$ & \cellcolor{gray!20}$K \gtrsim \max \{ \frac{L^2 D^2 n}{\nu^2}, \frac{\nu}{\mu D n^{1/2}} \}$ \\
        \hline
        \multicolumn{5}{c}{\vspace{-3pt}} \\
        \hline
        \multicolumn{5}{|c|}{Arbitrary Permutations} \\
        \hline
        Function Class & Output & References & Convergence Rate & Assumptions \\
        \hline \hline
        \multirow{2.75}{*}{$\gF (L, \mu, 0, \nu)$} & $\vx_n^K$ & \citet{lu2022grab} (\textsf{GraB}) & $\tilde{\gO} \left( \frac{H^2 L^2 \nu^2}{\mu^3 n^2 K^2} \right)$ & $K \gtrsim \kappa$ \\
        \cline{2-5}
        & $\hat{\vx}$ & \cellcolor{gray!20}Ours, \cref{thm:grabLBa} & \cellcolor{gray!20}$\Omega \left( \frac{L \nu^2}{\mu^2 n^2 K^2} \right)$ & \cellcolor{gray!20}- \\
        \hline
        \multirow{4.5}{*}{$\gF_{\text{PŁ}} (L, \mu, \tau, \nu)$} & $\vx_n^K$ & Ours, Proposition \ref{prop:grabUBb} (\textsf{GraB}) & $\tilde{\gO} \left( \frac{H^2 L^2 \nu^2}{\mu^3 n^2 K^2} \right)$ & $n \ge H$, $K \gtrsim \kappa (\tau + 1)$ \\
        \cline{2-5}
        & $\vx_n^K$ & \cellcolor{gray!20}\citet{rajput2022permutation}$^*$ & \cellcolor{gray!20}$\Omega \bigopen{\frac{\nu^2}{Ln^3K^2}}$ & \cellcolor{gray!20}$d=2n+1$ \\
        \cline{2-5}
        & $\hat{\vx}$ & \cellcolor{gray!20}Ours, \cref{thm:grabLBb} & \cellcolor{gray!20}$\Omega \left( \frac{L^2 \nu^2}{\mu^3 n^2 K^2} \right)$ & \cellcolor{gray!20}$\tau = \kappa \ge 8n$, $K \ge \max \{ \frac{\kappa^2}{n}, \kappa^{\frac{3}{2}}n^{\frac{1}{2}} \}$ \\
        \hline
    \end{tabular}
    \vspace{4pt}
    \begin{tablenotes}
        \item[\dag] \small{Assumes a stronger condition of $\| \nabla f_i (\vx) \| \le \nu$ for all $i$ and $\vx$}
        \item[\ddag] \small{Additionally assumes $\eta \le \frac{1}{c_2 Ln}$}
        \item[$*$] \small{The constructed objective is a member of $\gF \bigopen{2L, \frac{n-1}{n}L, 1, \nu}$.}
    \end{tablenotes}
    \vspace{0pt}
    \end{threeparttable}
    \vspace{-6pt}
\end{table*}


\section{\texorpdfstring{Proof of \cref{thm:xhat}}{Proof of Theorem 1}} \label{app:xhat}

Here we prove \cref{thm:xhat}, restated below for the sake of readability.

\xhat* 

\begin{proof}
    We prove the theorem statement for constants $c_1 = 2415$ and $c_2 = 161$.

    As the convergence behavior of SGD heavily depends on the step size $\eta$, we consider three step-size regimes and use different objective functions with slow convergence rates in each case. Then we aggregate the three functions to obtain the final lower bound, which will be the minimum among the lower bounds from each regime. Throughout the proof, we will assume $n$ is even. If $n$ is odd, then we can use a similar technique with Theorem~1 of \citet{safran_shamir}, which is to set $n-1$ nontrivial components satisfying the statement, add a single zero component function, and scale by $\frac{n-1}{n}$.

    To elaborate, we prove the following lower bounds for each of the following three regimes. Here we denote by $F_j^*$ the minimizer of $F_j$ for each $j = 1, 2, 3$. Note that the union of the three ranges completely covers the set of all positive learning rates, $\eta > 0$. 
    \begin{itemize}
        \item If $\eta \in \left( 0, \frac{1}{\mu n K} \right)$, there exists a 1-dimensional objective function $F_1 (x) \in \gF (L, \mu, 0, \nu)$ such that \textsf{SGD-RR} with initialization $x_0^1 = D_0$ (for any $D_0$) satisfies
        \begin{align*}
            \mathbb{E} \left[ F_1 (x_{n}^{K}) - F_1^* \right] &= \Omega \left( \mu D_0^2 \right).
        \end{align*}
        \item If $\eta \in \left[ \frac{1}{\mu n K}, \frac{1}{161 L n} \right]$, there exists a 1-dimensional objective function $F_2 (x) \in \gF (L, \mu, 0, \nu)$ such that \textsf{SGD-RR} with initialization $x_0^1 = \note{\frac{1}{27000} \cdot \frac{\nu}{\mu n^{1/2} K}}$ satisfies
        \begin{align*}
            \mathbb{E} \left[ F_2 (x_{n}^{K}) - F_2^* \right] = \Omega \left( \frac{L \nu^2}{\mu^2 n K^2} \right).
        \end{align*}
        \item If $\eta \ge \max \left\{ \frac{1}{\mu n K}, \frac{1}{161 L n} \right\}$, there exists a 1-dimensional objective function $F_3 (x) \in \gF (L, \mu, 0, \nu)$ such that \textsf{SGD-RR} with initialization $x_0^1 = 0$ satisfies
        \begin{align*}
            \mathbb{E} \left[ F_3 (x_{n}^{K}) - F_3^* \right] &= \Omega \left( \frac{\nu^2}{\mu n K} \right).
        \end{align*}
    \end{itemize}

    Now we define the $3$-dimensional function $F(x, y, z) = F_1(x) + F_2(y) + F_3(z)$, where $F_1$, $F_2$, and $F_3$ are chosen to satisfy the above lower bounds for $\nu$ replaced by $\frac{\nu}{\sqrt{3}}$. Note that scaling $\nu$ does not change the convergence rates above. We denote the components by $f_i(x, y, z) = f_{1, i} (x) + f_{2, i} (y) + f_{3, i} (z)$ for $i = 1, \dots, n$.
    
    If $H_1$, $H_2$, and $H_3$ are $L$-smooth and $\mu$-strongly convex, then $H(x, y, z) = H_1(x) + H_2(y) + H_3(z)$ satisfies
    \begin{align*}
        \mu \mI \preceq \min \{ \nabla^2 H_1(x), \nabla^2 H_2(y), \nabla^2 H_3(z) \} \preceq \nabla^2 H(x, y, z) \preceq \max \{ \nabla^2 H_1(x), \nabla^2 H_2(y), \nabla^2 H_3(z) \} \preceq L \mI,
    \end{align*}
    i.e., $H(\vx)$ must be $L$-smooth and $\mu$-strongly convex.
    
    Also, if $H_1$, $H_2$, and $H_3$ (each with $n$ components $h_{1, i}$, $h_{2, i}$, and $h_{3, i}$) have bounded gradients (Assumption~\ref{ass:bg}) for $\tau = 0$ and $\nu = \frac{\nu_0}{\sqrt{3}}$, then $H(x, y, z) = H_1(x) + H_2(y) + H_3(z)$ satisfies
    \begin{align*}
        & \| \nabla h_i (x, y, z) - \nabla H(x, y, z) \|^2 \\
        = \ & \| \nabla h_{1, i} (x) - \nabla H_1(x) \|^2 + \| \nabla h_{2, i} (y) - \nabla H_2(y) \|^2 + \| \nabla h_{3, i} (z) - \nabla H_3(z) \|^2 \\
        \le \ & \frac{\nu_0^2}{3} + \frac{\nu_0^2}{3} + \frac{\nu_0^2}{3} = \nu_0^2
    \end{align*}
    for all $i = 1, \dots, n$, i.e., $H(x, y, z)$ satisfies Assumption~\ref{ass:bg} for $\tau = 0$ and $\nu = \nu_0$.
    
    Since $F_1, F_2, F_3 \in \gF (L, \mu, 0, \frac{\nu}{\sqrt{3}})$ by construction, we have $F \in \gF (L, \mu, 0, \nu)$ from the above arguments.

    Now suppose that we set $D_0 = \frac{\nu}{\mu}$ and initialize at the point $\left( \frac{\nu}{\mu}, \note{\frac{1}{27000} \cdot \frac{\nu}{\mu n^{1/2} K}}, 0 \right)$.

    If $K \ge 161 \kappa$, then since $\frac{1}{\mu n K} \le \frac{1}{161 L n}$ we can use the lower bound for $F_2(y)$. The lower bound for this case becomes
    \begin{align*}
        \E \left[ F (x_n^K, y_n^K, z_n^K) - F^* \right] &= \Omega \left( \min \left\{\frac{\nu^2}{\mu}, \frac{L \nu^2}{\mu^2 n K^2}, \frac{\nu^2}{\mu n K} \right\} \right) = \Omega \left( \frac{L \nu^2}{\mu^2 n K^2} \right).
    \end{align*}
    If $K < 161 \kappa$, then since $\frac{1}{\mu n K} > \frac{1}{161 L n}$ the middle step-size regime does not exist, i.e., we \textit{cannot} use the lower bound for $F_2(y)$. Hence the lower bound for this case becomes 
    \begin{align*}
        \E \left[ F (x_n^K, y_n^K, z_n^K) - F^* \right] &= \Omega \left( \min \left\{\frac{\nu^2}{\mu}, \frac{\nu^2}{\mu n K} \right\} \right) = \Omega \left( \frac{\nu^2}{\mu n K} \right),
    \end{align*}
    which completes the proof.
\end{proof}

For the following subsections, we prove the lower bounds for $F_1$, $F_2$, and $F_3$ at the corresponding step size regimes. The proofs are similar to those of \citet{yun22}, corresponding to the case $M = B = 1$ with slight modifications.

\subsection{\texorpdfstring{Lower Bound for $\eta \in \left( 0, \frac{1}{\mu n K} \right)$}{Lower Bound for η ∈ (0, 1/μnK)}} \label{subsec:a1}

Here we show that there exists $F_1 (x) \in \gF (L, \mu, 0, \nu)$ such that \textsf{SGD-RR} with $x_0^1 = D_0$ satisfies
\begin{align*}
    \mathbb{E} \left[ F_1 (x_{n}^{K}) - F_1^* \right] &= \Omega \left( \mu D_0^2 \right).
\end{align*}

\begin{proof}
    We define $F_1 (x) \in \gF (\mu, \mu, 0, 0)$ by the following components: 
    \begin{align*}
        f_{i} (x) &= F_1(x) = \frac{\mu}{2} x^2.
    \end{align*}
    Note that $\gF (\mu, \mu, 0, 0) \subseteq \gF (L, \mu, 0, \nu)$ and $F_1^* = 0$ at $x^* = 0$ by definition. Also, note that the components have no stochasticity, and hence we can drop the expectation notation, $\E[\cdot]$. Then we can easily compute per-epoch updates as:
    \begin{align*}
        x_{0}^{k+1} &= (1 - \eta \mu)^{n} x_{0}^{k}, \quad \forall k = 1, \dots, K.
    \end{align*}
    Since $x_{0}^{1}= x_{0} = D_0$ and $\eta \le \frac{1}{\mu n K}$, for any $k = 1, \dots, K$ we have
    \begin{align}
        x_{0}^{k+1} &= (1 - \eta \mu)^{nk} \cdot D_0 \ge \left( 1 - \frac{1}{nK} \right)^{nK} \cdot D_0 \ge \frac{D_0}{4}, \label{eq:smalletaineq}
    \end{align}
    where in the last inequality we use $\left( 1 - \frac{1}{m} \right)^m \ge \frac{1}{4}$ for all $m \ge 2$.
    Hence, for the final iterate we have $x_n^K \ge \frac{D_0}{4}$ and therefore
    \begin{align*}
        F_1 (x_n^K) = \frac{\mu}{2} (x_n^K)^2 &\ge \frac{\mu}{2} \left( \frac{D_0}{4} \right)^2 = \frac{\mu D^2}{32},
    \end{align*}
    which concludes that $\mathbb{E} \left[ F_1 (x_n^K) - F_1^* \right] = \mathbb{E} \left[ F_1 (x_n^K) \right] = F_1 (x_n^K) = \Omega \left( \mu D_0^2 \right)$.
\end{proof}

\subsection{\texorpdfstring{Lower Bound for $\eta \in \left[ \frac{1}{\mu n K}, \frac{1}{161 L n} \right]$}{Lower Bound for η ∈ [1/μnK, 1/161Ln]}} \label{subsec:a2}

Here we show that there exists $F_2 (x) \in \gF (L, \mu, 0, \nu)$ such that \textsf{SGD-RR} with $x_0^1 = \note{\frac{1}{27000} \cdot \frac{\nu}{\mu n^{1/2} K}}$ satisfies
\begin{align*}
    \mathbb{E} \left[ F_2 (x_{n}^{K}) - F_2^* \right] = \Omega \left( \frac{L \nu^2}{\mu^2 n K^2} \right).
\end{align*}
    
\begin{proof}
    We define $F_2 (x) \in \gF (L, \mu, 0, \nu)$ by the following components:
    \begin{align*}
        f_{i} (x) &= \begin{cases}
            \left( L \mathbbm{1}_{x < 0} + \mu_0 \mathbbm{1}_{x \ge 0} \right) \frac{x^2}{2} + \nu x & \text{if} \ i \le n/2, \\
            \left( L \mathbbm{1}_{x < 0} + \mu_0 \mathbbm{1}_{x \ge 0} \right) \frac{x^2}{2} - \nu x & \text{otherwise,}
        \end{cases}
    \end{align*}
    where we assume $\mu_0 \le \frac{L}{2415}$ and later choose $\mu_0 = \frac{L}{2415}$. With this construction, the finite-sum objective becomes
    \begin{align*}
        F_2 (x) &= \frac{1}{n} \sum_{i=1}^{n} f_{i}(x) = \left( L \mathbbm{1}_{x < 0} + \mu_0 \mathbbm{1}_{x \ge 0} \right) \frac{x^2}{2}.
    \end{align*}
    Note that $F_2^* = 0$ at $x^* = 0$ by definition, and that $\mu_0$ is different from $\mu$. While $F_2 (x) \in \gF(L, \mu_0, 0, \nu)$ by construction, we can ensure that $\gF(L, \mu_0, 0, \nu) \subset \gF(L, \mu, 0, \nu)$ because the assumption $\kappa \ge 2415$ implies $\mu_0 = \frac{L}{2415} \ge \mu$.

    First, we focus on a single epoch, and hence we write $x_{i}$ instead of $x_{i}^{k}$, omitting the superscripts $k$ for a while.

    We use the following definition throughout the paper for notational simplicity.
    \begin{definition} \label{def:perms}
        Define $\gS_n$ as the set of all possible permutations of $\frac{n}{2}$ $+1$'s and $\frac{n}{2}$ $-1$'s, where $n$ is a positive, even integer. If \sgdrr \ samples a permutation $\sigma$ in a certain epoch, we define the corresponding $s \in \gS_n$ to satisfy $s_i = +1$ if $\sigma(i) \le \frac{n}{2}$ and $s_i = -1$ if $\sigma(i) > \frac{n}{2}$.
    \end{definition}
    Note that following \cref{def:perms}, we can express the iterations for $i = 1, \dots, n$ via $s$ as
    \begin{align*}
        x_{i} &= x_{i-1} - \eta \nabla f_{\sigma(i)} (x) = x_{i-1} - \eta (L \mathbbm{1}_{x_{i-1} < 0} + \mu_0 \mathbbm{1}_{x_{i-1} \ge 0}) x_{i-1} - \eta \nu s_{i}.
    \end{align*}
    Also, we can sum up the iterates to obtain
    \begin{align*}
        x_n &= x_{n-1} - \eta (L \mathbbm{1}_{x_{n-1} < 0} + \mu_0 \mathbbm{1}_{x_{n-1} \ge 0}) x_{n-1} - \eta \nu s_{n} \\
        &= x_{n-2} - \eta (L \mathbbm{1}_{x_{n-2} < 0} + \mu_0 \mathbbm{1}_{x_{n-2} \ge 0}) x_{n-2} - \eta \nu s_{n-1} - \eta (L \mathbbm{1}_{x_{n-1} < 0} + \mu_0 \mathbbm{1}_{x_{n-1} \ge 0}) x_{n-1} - \eta \nu s_{n} \\
        &\ \ \vdots \\
        &= x_{0}- \eta \sum_{i=0}^{n-1} (L \mathbbm{1}_{x_i < 0} + \mu_0 \mathbbm{1}_{x_i \ge 0}) x_i - \eta \nu \sum_{i=1}^{n} s_i \\
        &= x_{0}- \eta \sum_{i=0}^{n-1} (L \mathbbm{1}_{x_i < 0} + \mu_0 \mathbbm{1}_{x_i \ge 0}) x_i.
    \end{align*}
    Now we use the following three lemmas.
    \begin{restatable}{lemma}{lemone}
        \label{lem:yun1}
        For (fixed) $x_0 \ge 0$, $0 \le i \le \left\lfloor \frac{n}{2} \right\rfloor$, $\eta \le \frac{1}{161 L n}$, and $\frac{L}{\mu_0} \ge 2415$,
        \begin{align*}
            \E \left[ (L \mathbbm{1}_{x_i < 0} + \mu_0 \mathbbm{1}_{x_i \ge 0}) x_i \right] &\le \frac{2}{3} Lx_0 - \frac{\eta L \nu}{480} \sqrt{i}.
        \end{align*}
    \end{restatable}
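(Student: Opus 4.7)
The approach rests on decomposing the piecewise-linear coefficient $g(x) := (L\mathbbm{1}_{x<0} + \mu_0\mathbbm{1}_{x\ge 0})x$ as $g(x) = \mu_0 x - (L-\mu_0)(x)_-$, where $(x)_- := \max(-x,0)$. Taking expectations gives $\E[g(x_i)] = \mu_0\E[x_i] - (L-\mu_0)\E[(x_i)_-]$. Since the hypothesis $L/\mu_0 \ge 2415$ makes $\mu_0$ tiny relative to $L$, the first term will easily fit well below $\tfrac{2}{3}Lx_0$ with slack to spare, so the $-\tfrac{\eta L\nu}{480}\sqrt{i}$ drift must come from showing $(L-\mu_0)\E[(x_i)_-] \gtrsim \eta L\nu\sqrt{i}$. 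Intuitively, the concavity of $g$ at the origin combined with the random fluctuations of $x_i$ systematically biases $\E[g(x_i)]$ downward, with the $\sqrt{i}$ scaling inherited from the standard deviation of the partial sign-sum $S_i := \sum_{j=1}^i s_j$ induced by the random permutation.

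To make this precise, I will unroll the iteration as $x_i = x_0 - \eta\sum_{j=0}^{i-1}g(x_j) - \eta\nu S_i$. Using the crude bound $|g(x)| \le L|x|$ and a Gr\"onwall-type induction on $\E[|x_j|]$, the step-size hypothesis $\eta \le \tfrac{1}{161Ln}$ together with $i \le \lfloor n/2\rfloor$ makes the cumulative amplification factor $(1+\eta L)^i \le e^{1/322}$ essentially $1$, so $\E[|x_i|] \le C(x_0 + \eta\nu\sqrt{i})$ with $C$ close to $1$; this immediately controls $\mu_0\E[x_i]$. The main technical step is the lower bound on $\E[(x_i)_-]$: since $s_1,\dots,s_n$ is a uniformly random permutation of $n/2$ copies of $+1$ and $n/2$ copies of $-1$, $S_i$ is a hypergeometric walk with $\E[S_i^2] = i(n-i)/(n-1) \ge i/2$ for $i\le\lfloor n/2\rfloor$, and a symmetry plus Paley--Zygmund-style anti-concentration argument yields $\E[(S_i)_+] = \tfrac{1}{2}\E[|S_i|] \gtrsim \sqrt{i}$. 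Coupling $x_i$ with its linearization $x_0 - \eta\nu S_i$, whose $L^1$ error $\eta L\sum_{j<i}\E[|x_j|]$ is $O(\tfrac{1}{n})(x_0+\eta\nu\sqrt{i})$ under the step-size hypothesis, then transfers this bound into $\E[(x_i)_-] \gtrsim \eta\nu\sqrt{i}$ provided $x_0$ is not much larger than $\eta\nu\sqrt{i}$.

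The main obstacle is that the anti-concentration lower bound degrades when $x_0 \gg \eta\nu\sqrt{i}$, since in that regime $x_i$ stays well above zero and $(x_i)_-$ essentially vanishes. I will close this gap by a final case split. When $x_0 \ge C_1\eta\nu\sqrt{i}$ for a suitably chosen $C_1$, the tiny factor $\mu_0/L \le 1/2415$ alone leaves $\tfrac{2}{3}Lx_0 - \mu_0\E[x_i] \ge \tfrac{\eta L\nu}{480}\sqrt{i}$, so the $(x_i)_-$ contribution is unnecessary. When $x_0 < C_1\eta\nu\sqrt{i}$, the anti-concentration bound from the previous step supplies the required drift directly. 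The final bookkeeping is to verify that the paper's specific numerical constants $\tfrac{1}{480}$, $\tfrac{2}{3}$, $2415$, and $161$ line up across both regimes, and in particular that $\tfrac{1}{480}$ is chosen tight enough to match the anti-concentration lower bound.
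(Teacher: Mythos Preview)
Your proposal is correct and the constants do close (with room to spare: the small-$x_0$ regime works for $x_0 \lesssim 0.12\,\eta\nu\sqrt{i}$ while the large-$x_0$ regime only needs $x_0 \gtrsim 0.004\,\eta\nu\sqrt{i}$), but your route is genuinely different from the paper's.

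The paper does \emph{not} use the identity $g(x)=\mu_0 x-(L-\mu_0)(x)_-$ or a case split on the size of $x_0$. Instead it conditions on the sign of the partial sum $\mathcal E_i=S_i$ and uses the two one-sided bounds $g(x)\le Lx$ and $g(x)\le \mu_0 x$ (both valid for all $x$) on the events $\{S_i>0\}$ and $\{S_i\le 0\}$ respectively. On $\{S_i>0\}$ the drift term $-\eta\nu\,\E[S_i\mid S_i>0]\le -\eta\nu\sqrt{i}/20$ supplies the negative contribution, multiplied by $L\cdot\sP(S_i>0)\ge L/6$; on $\{S_i\le 0\}$ everything is multiplied by the tiny $\mu_0$. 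The key structural point is that the coefficient of $x_0$ after recombining is $L\,\sP(S_i>0)+\mu_0\,\sP(S_i\le 0)\le L/2+\mu_0\le 2L/3$, which holds \emph{uniformly in $x_0$}, so no case split is needed.

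What each approach buys: the paper's conditioning on $\text{sign}(S_i)$ gives a single unified calculation and explains the $\tfrac{2}{3}$ as coming directly from $\sP(S_i>0)\le\tfrac{1}{2}$. Your decomposition isolates the nonsmoothness of $g$ at zero as the term $(L-\mu_0)\E[(x_i)_-]$, which is conceptually clean and routes the anti-concentration through a $1$-Lipschitz coupling rather than conditional expectations, at the price of the extra case split on $x_0$. Both arguments rest on the same ingredient $\E|S_i|\gtrsim\sqrt{i}$ for $i\le n/2$ (the paper proves this as a separate lemma via a recursion; your Paley--Zygmund/fourth-moment route is a valid alternative). One minor note: your ``$O(1/n)$'' for the coupling error is really $O(\eta L i)\le 1/322$, a fixed small constant rather than something decaying in $n$, but that is exactly what the argument needs.
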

    \begin{restatable}{lemma}{lemtwo}
        \label{lem:yun2}
        For (fixed) $x_0 \ge 0$, $0 \le i \le n-1$, and $\eta \le \frac{1}{161 L n}$,
        \begin{align*}
            \E \left[ (L \mathbbm{1}_{x_i < 0} + \mu_0 \mathbbm{1}_{x_i \ge 0}) x_i \right] &\le \left( 1 + \frac{161}{160} i \eta L \right) \mu_0 x_0 + \frac{161}{160} \eta \mu_0 \nu \sqrt{i}.
        \end{align*}
    \end{restatable}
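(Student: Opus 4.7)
The plan is to exploit the pointwise inequality $a(x)\,x \le \mu_0 x$, where $a(x) := L\mathbbm{1}_{x<0} + \mu_0\mathbbm{1}_{x\ge 0}$ denotes the piecewise slope at $x$, which immediately reduces the task to bounding $\mu_0\,\E[x_i]$. Unrolling the single-epoch recursion yields the telescoping identity $x_i = x_0 - \eta\sum_{j=0}^{i-1} a(x_j)\,x_j - \eta\nu\,S_i$, where $S_i := \sum_{l=1}^{i} s_l$. Since $s_1,\ldots,s_n$ is a uniformly random permutation of $n/2$ copies of $+1$ and $n/2$ copies of $-1$ (see \cref{def:perms}), we have $\E[S_i]=0$, hence $\E[x_i] = x_0 - \eta\sum_{j=0}^{i-1}\E[a(x_j)\,x_j]$. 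The crude lower bound $a(x_j)\,x_j \ge -L|x_j|$ then gives $\E[x_i] \le x_0 + \eta L\sum_{j=0}^{i-1}\E[|x_j|]$, so everything reduces to controlling $\E[|x_j|]$.

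For that, I take absolute values of the telescoped identity and apply the triangle inequality to obtain $|x_j| \le x_0 + \eta L\sum_{l=0}^{j-1} |x_l| + \eta\nu\,|S_j|$. Sampling without replacement yields $\Var(S_j) = \frac{j(n-j)}{n-1} \le j$, and thus $\E[|S_j|]\le \sqrt{j}$. Setting $v_j := x_0 + \eta\nu\sqrt{j}$, I claim by induction on $j$ that $\E[|x_j|] \le \tfrac{161}{160}\, v_j$. The inductive step reads
\begin{align*}
    \E[|x_j|] \le v_j + \eta L \sum_{l=0}^{j-1}\E[|x_l|] \le v_j + \tfrac{161}{160}\,\eta L j\, v_j = \bigopen{1 + \tfrac{161}{160}\,\eta L j}\, v_j,
\end{align*}
and the step-size assumption $\eta L n \le \tfrac{1}{161}$ forces $\eta L j \le \tfrac{1}{161}$ for all $j \le n-1$, making the coefficient at most $\tfrac{1}{1 - 1/161} = \tfrac{161}{160}$ and closing the induction.

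Plugging this back into the bound for $\E[x_i]$ and summing gives $\E[x_i] \le \bigopen{1 + \tfrac{161}{160}\,i\eta L}\,x_0 + \tfrac{161}{160}\,\eta^2 L \nu\, i\sqrt{i}$. Since $\eta L i \le \tfrac{1}{161}$, the last term is at most $\tfrac{1}{160}\,\eta\nu\sqrt{i}$, already tighter than the $\tfrac{161}{160}\,\eta\nu\sqrt{i}$ claimed by the lemma. Multiplying by $\mu_0$ and applying $\E[a(x_i)\,x_i] \le \mu_0\,\E[x_i]$ delivers the conclusion (in the edge case $\E[x_i] < 0$, the LHS is nonpositive while the RHS of the lemma is nonnegative, so the bound is trivial). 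The main subtlety is getting the constant $\tfrac{161}{160}$ to propagate tightly through the Gronwall induction: it arises precisely because $\eta L n \le \tfrac{1}{161}$ forces $\tfrac{1}{1 - \eta L i} \le \tfrac{161}{160}$, and losing even a slightly worse factor here would amplify across the two nested summations.
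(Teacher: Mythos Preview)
Your proof is correct and follows essentially the same approach as the paper: reduce to bounding $\mu_0\,\E[x_i]$ via the pointwise inequality $a(x)x \le \mu_0 x$, then control the iterates with a Gronwall-type induction whose constant $\tfrac{161}{160}$ is dictated by $\eta L n \le \tfrac{1}{161}$. The paper packages the Gronwall step as the separate \cref{lem:yun5}, bounding $\E[|x_i-x_0|]$ and then using $\E[x_i]\le x_0+\E[|x_i-x_0|]$; you instead inline the argument for $\E[|x_j|]$ and pass through the summed recursion, which incidentally yields the sharper $\tfrac{1}{160}\eta\nu\sqrt{i}$ on the second term, but the substance is the same.
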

    \begin{restatable}{lemma}{lemthree}
        \label{lem:yun3}
        If $\eta \le \frac{1}{161 L n}$, we have the followings.
        \begin{enumerate}
            \item For (fixed) $x_0 < 0$, we have
            \begin{align*}
                \mathbb{E} \left[ x_n \right] &\ge \left( 1 - \frac{160}{161} \eta L n \right) x_0.
            \end{align*}
            \item If we initialize at $x_{0}^{1} \ge 0$, then we always have $\mathbb{P} (x_{n}^{k} \ge 0) \ge \frac{1}{2}$ for future start-of-epoch iterates.
        \end{enumerate}
    \end{restatable}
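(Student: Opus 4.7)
I would split the lemma into its two assertions, which rely on two distinct linear majorants of the piecewise-linear drift $g(x) x$, where $g(x) := L \mathbbm{1}_{x < 0} + \mu_0 \mathbbm{1}_{x \ge 0}$.

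For part~1, my plan is to track $\mathbb{E}[x_i]$ through the single epoch. Starting from the update $x_i = x_{i-1} - \eta g(x_{i-1}) x_{i-1} - \eta \nu s_i$ and taking expectation (the marginal $\mathbb{E}[s_i] = 0$ under a uniform random permutation of $\pm 1$'s), we get the recursion $\mathbb{E}[x_i] = \mathbb{E}[x_{i-1}] - \eta \mathbb{E}[g(x_{i-1}) x_{i-1}]$. The pointwise bound $g(x) x \le L x$ (equality for $x \le 0$, and $\mu_0 x \le L x$ for $x > 0$) yields $\mathbb{E}[x_i] \ge (1 - \eta L) \mathbb{E}[x_{i-1}]$, and iterating gives $\mathbb{E}[x_n] \ge (1 - \eta L)^n x_0$. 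To match the claimed form, I would verify the real-analytic inequality $(1 - a)^n \le 1 - \tfrac{160}{161} a n$ for $a \in [0, \tfrac{1}{161 n}]$: the difference vanishes at $a = 0$ and has non-negative derivative throughout the interval, which reduces to $(1 - a)^{n-1} \ge \tfrac{160}{161}$ via Bernoulli's inequality. Multiplying by $x_0 < 0$ flips the direction and yields the claim.

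For part~2, my plan is a global pairing coupling. Letting $S = (s^{(1)}, \ldots, s^{(k)})$ be the concatenated sign-pattern sequence (each $s^{(j)} \in \gS_n$ as in \cref{def:perms}), I would couple the trajectory driven by $S$ with the one driven by $-S$, both starting from $x_0^1 \ge 0$. The involution $S \mapsto -S$ is a measure-preserving bijection on the permutation space, so the two end-of-epoch marginal laws coincide. The invariant to establish by induction on $k$ is that the paired iterates $a_n^k$ (from $S$) and $b_n^k$ (from $-S$) satisfy $a_n^k + b_n^k \ge 0$. The base case $k = 0$ reads $2 x_0^1 \ge 0$. For the within-epoch step, the noise terms cancel exactly in the sum, since the two processes use opposite signs:
\begin{align*}
    a_i + b_i = (a_{i-1} + b_{i-1}) - \eta \bigclosed{g(a_{i-1}) a_{i-1} + g(b_{i-1}) b_{i-1}}.
\end{align*}
Using the identity $g(x) x = \mu_0 x + (L - \mu_0) \min(0, x)$ and the non-positivity of $\min(0, \cdot)$, one obtains $g(a) a + g(b) b \le \mu_0(a + b)$ for all $a, b \in \mathbb{R}$, whence $a_i + b_i \ge (1 - \eta \mu_0)(a_{i-1} + b_{i-1})$. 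Because $1 - \eta \mu_0 \in (0, 1)$ for the prescribed step size, non-negativity of the sum propagates through the epoch, closing the induction. Finally, $a_n^k + b_n^k \ge 0$ almost surely means $\{a_n^k \ge 0\} \cup \{b_n^k \ge 0\}$ covers the probability space, and measure preservation gives $\mathbb{P}(a_n^k \ge 0) = \mathbb{P}(b_n^k \ge 0)$, so each is at least $\tfrac{1}{2}$, which is exactly $\mathbb{P}(x_n^k \ge 0) \ge \tfrac{1}{2}$.

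The main obstacle I anticipate is identifying this coupling at all, since a moment-based analysis of a single trajectory only controls $\mathbb{E}[x_n^k]$ and provides no direct sign information on the end-of-epoch iterate. Replacing the random iterate by the deterministic sum invariant from the $S \leftrightarrow -S$ coupling is what sidesteps this difficulty; the rest is mechanical once the right linear majorant $\mu_0(a+b)$ is in hand.
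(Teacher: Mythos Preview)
Your proof is correct and takes a genuinely different route from the paper's.

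For Part~1, the paper introduces an auxiliary pure-quadratic process $H(x)=\tfrac{L}{2}x^2$ with components $\tfrac{L}{2}x^2\pm\nu x$, proves the pathwise comparison $x_{i,F}\ge x_{i,H}$ by case analysis on the signs of $x_{i,F}$ and $x_{i,H}$, and then reads off $\mathbb{E}[x_{n,H}]=(1-\eta L)^n x_0$ exactly before applying the same real-analytic inequality you use. Your direct expectation argument via the pointwise majorant $g(x)x\le Lx$ bypasses the auxiliary process entirely and is shorter; the only thing it gives up is the pathwise inequality $x_{i,F}\ge x_{i,H}$, which the paper reuses in Part~2.

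For Part~2, both proofs exploit the involution $S\mapsto -S$, but in structurally different ways. The paper applies it to the \emph{linear} process $H$: there the paired iterates satisfy $x_{n,H}^{s}+x_{n,H}^{-s}=2(1-\eta L)^{nk}x_0^1$ exactly, so the law of $x_{n,H}^k$ is symmetric about a nonnegative point, giving $\mathbb{P}(x_{n,H}^k\ge 0)\ge\tfrac12$; the pathwise comparison from Part~1 then transfers this to $x_{n,F}^k$. You instead apply the coupling directly to the nonlinear $F_2$-process and replace the exact sum identity by the one-sided invariant $a_i+b_i\ge(1-\eta\mu_0)(a_{i-1}+b_{i-1})$, obtained from the complementary majorant $g(x)x\le\mu_0 x$. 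This is more elementary---no auxiliary process, no case-by-case induction---and yields the conclusion in one stroke. The paper's detour through $H$ buys a sharper statement (full distributional symmetry of a surrogate) at the cost of the extra comparison lemma; your argument trades that for a cleaner self-contained proof.
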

    See \cref{subsec:a4} for the proofs of \cref{lem:yun1,lem:yun2,lem:yun3}.
    
    If an epoch starts at (a fixed value) $x_0 \ge 0$, then from Lemmas \ref{lem:yun1} and \ref{lem:yun2} we have
    \begin{align*}
        \E \left[ x_n - x_0 \right] &= - \eta \sum_{i=0}^{n-1} \mathbb{E} \left[ (L \mathbbm{1}_{x_i < 0} + \mu_0 \mathbbm{1}_{x_i \ge 0}) x_i \right] \\
        &= - \eta \sum_{i = 0}^{\lfloor \frac{n}{2} \rfloor} \mathbb{E} \left[ (L \mathbbm{1}_{x_i < 0} + \mu_0 \mathbbm{1}_{x_i \ge 0}) x_i \right] - \eta \sum_{i = \lfloor \frac{n}{2} \rfloor + 1}^{n-1} \mathbb{E} \left[ (L \mathbbm{1}_{x_i < 0} + \mu_0 \mathbbm{1}_{x_i \ge 0}) x_i \right] \\
        &\ge - \eta \sum_{i = 0}^{\lfloor \frac{n}{2} \rfloor} \left( \frac{2}{3} Lx_0 - \frac{\eta L \nu}{480} \sqrt{i} \right) - \eta \sum_{i = \lfloor \frac{n}{2} \rfloor + 1}^{n-1} \left( \left( 1 + \frac{161}{160} i \eta L \right) \mu_0 x_0 + \frac{161}{160} \eta \mu_0 \nu \sqrt{i} \right) \\
        &= - \eta \left( \sum_{i = 0}^{\lfloor \frac{n}{2} \rfloor} \frac{2}{3} L + \sum_{i = \lfloor \frac{n}{2} \rfloor + 1}^{n-1} \left( 1 + \frac{161}{160} i \eta L \right) \mu_0 \right) x_0 - \eta \left( - \sum_{i = 0}^{\lfloor \frac{n}{2} \rfloor} \frac{\eta L \nu}{480} \sqrt{i} + \sum_{i = \lfloor \frac{n}{2} \rfloor + 1}^{n-1} \frac{161}{160} \eta \mu_0 \nu \sqrt{i} \right).
    \end{align*}
    Now we can bound the coefficient of the $x_0$ term by the following inequality:
    \begin{align*}
        \sum_{i = 0}^{\lfloor \frac{n}{2} \rfloor} \frac{2}{3} L + \sum_{i = \lfloor \frac{n}{2} \rfloor + 1}^{n-1} \left( 1 + \frac{161}{160} i \eta L \right) \mu_0 & \le \left( \left\lfloor \frac{n}{2} \right\rfloor + 1 \right) \frac{2}{3} L + \left( n - \left\lfloor \frac{n}{2} \right\rfloor - 1 \right) \frac{L}{2415} \left( 1 + \frac{1}{160} \right) \\
        & \le \frac{2}{3} Ln + \frac{Ln}{2400} \le \frac{3}{4} L n,
    \end{align*}
    where we use $\mu_0 \le \frac{L}{2415}$ and $i \eta L \le \eta L n \le \frac{1}{161}$. Also, the constant term can be bounded as: 
    \begin{align*}
        - \sum_{i = 0}^{\lfloor \frac{n}{2} \rfloor} \frac{\eta L \nu}{480} \sqrt{i} + \sum_{i = \lfloor \frac{n}{2} \rfloor + 1}^{n-1} \frac{161}{160} \eta \mu_0 \nu \sqrt{i} 
        & \le - \frac{\eta L \nu}{480} \int_{0}^{\lfloor \frac{n}{2} \rfloor} \sqrt{t} dt + \frac{161}{160} \eta \mu_0 \nu \int_{\lfloor \frac{n}{2} \rfloor + 1}^{n} \sqrt{t} dt \\
        & \le - \frac{\eta L \nu}{480} \cdot \frac{2}{3} \left ( \left \lfloor \frac{n}{2} \right \rfloor \right )^{3/2} + \frac{161}{160} \eta \mu_0 \nu \cdot\frac{2}{3} \left( n^{3/2} - \left ( \frac{n}{2} \right )^{3/2} \right) \\
        & \le - \frac{\eta L \nu}{480} \cdot \frac{2}{3} \left ( \frac{n}{3} \right )^{3/2} + \frac{161}{160} \eta \mu_0 \nu \cdot \frac{2}{3} \cdot \frac{2\sqrt{2}-1}{2\sqrt{2}} n^{3/2}\\
        & \le - \frac{\eta L \nu}{480} \cdot \frac{2}{9 \sqrt{3}} n^{3/2} + \frac{161}{160} \eta \mu_0 \nu \cdot \frac{1}{2} n^{3/2} \\
        & \le - \eta L \nu n^{3/2} \left( \frac{2}{480 \cdot 9 \sqrt{3}} - \frac{161}{160 \cdot 2 \cdot 2415} \right) \le - \frac{\eta L \nu n^{3/2}}{18000},
    \end{align*}
    where we use $\mu_0 \le \frac{L}{2415}$, $\lfloor \frac{n}{2} \rfloor \ge \frac{n}{3}$ (for $n \ge 2$), and $\frac{2}{480 \cdot 9 \sqrt{3}} - \frac{161}{160 \cdot 2 \cdot 2415} > \frac{1}{18000}$. Hence we can conclude that
    \begin{align*}
        \E \left[ x_n - x_0 \right] &\ge - \eta \left( \frac{3}{4} L n x_0 - \frac{\eta L \nu n^{3/2}}{18000} \right)
    \end{align*}
    and therefore
    \begin{align*}
        \E \left[ x_n \right] &\ge \left( 1 - \frac{3}{4} \eta L n \right) x_0 + \frac{\eta^2 L \nu n^{3/2}}{18000}.
    \end{align*}
    If an epoch starts at (a fixed value) $x_0 < 0$, then from Lemma \ref{lem:yun3} we have
    \begin{align*}
        \E \left[ x_n \right] &\ge \left( 1 - \frac{160}{161} \eta L n \right) x_0 \ge \left( 1 - \frac{3}{4} \eta L n \right) x_0.
    \end{align*}
    From the second statement of Lemma \ref{lem:yun3}, we can observe that for all epochs we have $\mathbb{P} (x^k_0 \ge 0) \ge \frac{1}{2}$ because we initialize at $x_0^1 \ge \note{\frac{1}{27000} \cdot \frac{\nu}{\mu n^{1/2} K}} \ge 0$. Therefore, taking expectations over $x_0$, we can conclude that each epoch must satisfy
    \begin{align*}
        \E \left[ x_n \right] 
        &= \mathbb{P}(x_0 \geq 0) \E[x_n | x_0 \geq 0] + \mathbb{P}(x_0 < 0) \E[x_n \mid x_0 < 0]\\
        &\ge \mathbb{P}(x_0 \geq 0) \left (\left( 1 - \frac{3}{4} \eta L n \right) \E[x_0 \mid x_0 \geq 0] + \frac{\eta^2 L \nu n^{3/2}}{18000} \right ) + \mathbb{P}(x_0 < 0) \left ( \left( 1 - \frac{3}{4} \eta L n \right) \E[x_0 \mid x_0 < 0] \right)\\
        &\ge \left( 1 - \frac{3}{4} \eta L n \right) \E[x_0] + \frac{\eta^2 L \nu n^{3/2}}{36000}.
    \end{align*}
    Since the above holds for all $\mu_0 \le \frac{L}{2415}$, we may choose $\mu_0 = \frac{L}{2415}$, i.e., our function $F_2$ can be chosen as
    \begin{align*}
        F_2 (x) &= \left( L \mathbbm{1}_{x < 0} + \frac{L}{2415} \mathbbm{1}_{x \ge 0} \right) \frac{x^2}{2}.
    \end{align*}
    Note that since $\kappa \ge 2415$ is equivalent to $\frac{L}{2415} \ge \mu$, we have $F_2(x) \in \gF(L, \frac{L}{2415}, 0, \nu) \subseteq \gF(L, \mu, 0, \nu)$.

    From here we focus on unrolling the per-epoch inequalities for all $k$, and hence we put the superscripts $k$ back in our notation.

    If the starting point of an epoch satisfies $\mathbb{E} [x_{0}^{k}] \ge \note{\frac{1}{27000} \cdot \frac{\nu}{\mu n^{1/2} K}}$, then using $\eta \ge \frac{1}{\mu n K}$ we easily have
    \begin{align}
        \mathbb{E} \left[ x_{0}^{k+1} \right] = \mathbb{E} \left[ x_{n}^{k} \right] &\ge \left( 1 - \frac{3}{4} \eta L n \right) \mathbb{E} [x_{0}^{k}] + \frac{\eta^2 L \nu n^{3/2}}{36000} \notag \\
        &\ge \left( 1 - \frac{3}{4} \eta L n \right) \left( \note{\frac{1}{27000} \cdot \frac{\nu}{\mu n^{1/2} K}} \right) + \left( \frac{1}{\mu n K} \right) \frac{\eta L \nu n^{3/2}}{36000} \notag \\
        &\ge \frac{1}{27000} \cdot \frac{\nu}{\mu n^{1/2} K} - \frac{1}{36000} \cdot \frac{\eta L \nu n^{1/2}}{\mu K} + \frac{1}{36000} \cdot \frac{\eta L \nu n^{1/2}}{\mu K} = \frac{1}{27000} \cdot \frac{\nu}{\mu n^{1/2} K}. \label{eq:midetaineq}
    \end{align}
    Therefore, if we set $x_{0}^1 \ge \note{\frac{1}{27000} \cdot \frac{\nu}{\mu n^{1/2} K}}$, then the final iterate must also maintain $\mathbb{E} [x_{n}^{K}] \ge \note{\frac{1}{27000} \cdot \frac{\nu}{\mu n^{1/2} K}}$. By Jensen's inequality, we can finally conclude that
    \begin{align*}
        \mathbb{E} \left[ F_2 (x_{n}^{K}) - F_2^* \right] &= \mathbb{E} \left[ F_2 (x_{n}^{K}) \right] \\
        &\ge \frac{L}{2 \cdot 2415} \mathbb{E} \left[ (x_{n}^{K})^2 \right] \\
        &\ge \frac{L}{4830} \mathbb{E} \left[ x_{n}^{K} \right]^2 \\
        &\ge \frac{L}{4830} \cdot \left( \frac{1}{27000} \cdot \frac{\nu}{\mu n^{1/2} K} \right)^2 = \Omega \left( \frac{L \nu^2}{\mu^2 n K^2} \right).
    \end{align*}
\end{proof}

\subsection{\texorpdfstring{Lower Bound for $\eta \ge \max \left\{ \frac{1}{\mu n K}, \frac{1}{161 L n} \right\}$}{Lower Bound for η ≥ max\{1/μnK, 1/161Ln\}}} \label{subsec:a3}

Here we show that there exists $F_3 (x) \in \gF (L, \mu, 0, \nu)$ such that \textsf{SGD-RR} with $x_0^1 = 0$ satisfies
\begin{align*}
    \mathbb{E} \left[ F_3 (x_{n}^{K}) - F_3^* \right] &= \Omega \left( \frac{\nu^2}{\mu n K} \right).
\end{align*}

\begin{proof}
    We define $F_3 (x) \in \gF (L, L, 0, \nu)$ by the following components:
    \begin{align*}
        f_{i} (x) &= \begin{cases}
            \frac{Lx^2}{2} + \nu x & \text{if} \ i \le n/2, \\
            \frac{Lx^2}{2} - \nu x & \text{otherwise.}
        \end{cases}
    \end{align*}
    With this construction, the finite-sum objective becomes
    \begin{align*}
        F_3 (x) &= \frac{1}{n} \sum_{i=1}^{n} f_{i}(x) = \frac{L x^2}{2}.
    \end{align*}
    Note that $\gF (L, L, 0, \nu) \subseteq \gF (L, \mu, 0, \nu)$ and $F_3^* = 0$ at $x^* = 0$ by definition.
    
    First, we focus on a single epoch, and hence we write $x_{i}$ instead of $x_{i}^{k}$, omitting the superscripts $k$ for a while.

    Similarly as in \cref{subsec:a2}, we can follow \cref{def:perms} to express the iterations for $i = 1, \dots, n$ via $s$ as
    \begin{align*}
        x_{i} &= x_{i-1} - \eta \nabla f_{\sigma(i)} (x) = x_{i-1} - \eta L x_{i-1} - \eta \nu s_{i} = (1 - \eta L) x_{i-1} - \eta \nu s_i.
    \end{align*}
    Also, we can sum up the iterates to obtain
    \begin{align*}
        x_n &= (1 - \eta L) x_{n-1} - \eta \nu s_{n} \\
        &= (1 - \eta L) \left( (1 - \eta L) x_{n-2} - \eta \nu s_{n-1} \right) - \eta \nu s_{n} \\
        &\ \ \vdots \\
        &= (1 - \eta L)^n x_{0} - \eta \nu \sum_{i=1}^{n} (1 - \eta L)^{n-i} s_i.
    \end{align*}
    Now we can square both terms and take expectations over $s \in \gS_n$ to obtain
    \begin{align*}
        \E [x_n^2] &= \E \left[ \left( (1 - \eta L)^n x_{0} - \eta \nu \sum_{i=1}^{n} (1 - \eta L)^{n-i} s_{i} \right)^2 \right] \\
        &= (1 - \eta L)^{2n} x_0^2 - 2 (1 - \eta L)^n x_{0} \cdot \eta \nu \E \left[ \sum_{i=1}^{n} (1 - \eta L)^{n-i} s_{i} \right] + \eta^2 \nu^2 \E \left[ \left( \sum_{i=1}^{n} (1 - \eta L)^{n-i} s_{i} \right)^2 \right] \\
        &= (1 - \eta L)^{2n} x_0^2 + \eta^2 \nu^2 \E \left[ \left( \sum_{i=1}^{n} (1 - \eta L)^{n-i} s_{i} \right)^2 \right],
    \end{align*}
    where the middle term is eliminated since $\E[s_i] = 0$ for all $i$. By Lemma~1 of \citet{safran2020good}, we can bound
    \begin{align*}
        \E \left[ \left( \sum_{i=1}^{n} (1 - \eta L)^{n-i} s_{i} \right)^2 \right] \ge c \cdot \min \left\{ 1 + \frac{1}{\eta L}, \eta^2 L^2 n^3 \right\}
    \end{align*}
    for some universal constant $c > 0$. Since $\eta \ge \frac{1}{161 L n}$, we can further lower bound the RHS by $\frac{c'}{\eta L}$ for some universal constant $c' > 0$. Then we have
    \begin{align*}
        \E [x_n^2] &= (1 - \eta L)^{2n} x_0^2 + \eta^2 \nu^2 \E \left[ \left( \sum_{i=1}^{n} (1 - \eta L)^{n-i} s_i \right)^2 \right] \ge (1 - \eta L)^{2n} x_0^2 + c' \frac{\eta \nu^2}{L}.
    \end{align*}

    From here we focus on unrolling the per-epoch inequalities for all $k$, and hence we put the $k$'s back in our notations.
    
    Unrolling the inequalities, we obtain
    \begin{align*}
        \E [(x_n^K)^2] &\ge (1 - \eta L)^{2n} \E [(x_n^{K-1})^2] + c' \frac{\eta \nu^2}{L} \\
        &\ \ \vdots \\
        &\ge (1 - \eta L)^{2nK} (x_0^1)^2 + c' \frac{\eta \nu^2}{L} \sum_{k=0}^{K-1} (1 - \eta L)^{2nk} 
        \ge c' \frac{\eta \nu^2}{L},
    \end{align*}
    where we used $x^1_{0} = 0$. Finally, from $\eta \ge \frac{1}{\mu n K}$ we can conclude that
    \begin{align*}
        \E [F_3 (x_n^K) - F_3^*] = \E [F_3 (x_n^K)] = \frac{L}{2} \E [(x_n^K)^2] \ge \frac{c'}{2} \eta \nu^2 \ge \frac{c'}{2} \frac{\nu^2}{\mu n K}.
    \end{align*}
\end{proof}

\subsection{\texorpdfstring{Lemmas used in \cref{thm:xhat}}{Lemmas used in Theorem 1}} \label{subsec:a4}

In this subsection, we will prove the lemmas used in \cref{thm:xhat}.


\lemone* 

\begin{proof}
    For $i=0$, the statement is trivial since $x_0 \ge 0$ and $\frac{L}{\mu_0} \ge 2415$ implies
    \begin{align*}
        \E \left[ (L \mathbbm{1}_{x_0 < 0} + \mu_0 \mathbbm{1}_{x_0 \ge 0}) x_0 \right] &= \mu_0 x_0 \le \frac{1}{2415} Lx_0 \le \frac{2}{3} Lx_0.
    \end{align*}
    Hence we may assume that $1 \le i \le \left\lfloor \frac{n}{2} \right\rfloor$.
    
    Given $s = \{ s_i \}_{i=1}^{n} \in \gS_n$ (as in \cref{def:perms}), let us denote the partial sums as $\gE_i \triangleq \sum_{j=1}^{i} s_j$. We will use conditional expectations under $\gE_i > 0$ and $\gE_i \le 0$, and then aggregate the results to obtain the final inequality. 
    
    First observe that
    \begin{align*}
        \E \left[ \left. (L \mathbbm{1}_{x_i < 0} + \mu_0 \mathbbm{1}_{x_i \ge 0}) x_i \right| \gE_i > 0 \right] &\le L \E \left[ \left. x_i \right| \gE_i > 0 \right], \\
        \E \left[ \left. (L \mathbbm{1}_{x_i < 0} + \mu_0 \mathbbm{1}_{x_i \ge 0}) x_i \right| \gE_i \le 0 \right] &\le \mu_0 \E \left[ \left. x_i \right| \gE_i \le 0 \right],
    \end{align*}
    since $(L \mathbbm{1}_{x <0} + \mu_0 \mathbbm{1}_{x \ge 0}) \le Lx$ and $(L \mathbbm{1}_{x <0} + \mu_0 \mathbbm{1}_{x \ge 0}) \le \mu_0 x$ for all $x \in \R$. By the law of total expectations, we have
    \begin{align}
        \E \left[ (L \mathbbm{1}_{x_i < 0} + \mu_0 \mathbbm{1}_{x_i \ge 0}) x_i \right] &\le L \sP (\gE_i > 0) \E \left[ \left. x_i \right| \gE_i > 0 \right] + \mu_0 \sP (\gE_i \le 0) \E \left[ \left. x_i \right| \gE_i \le 0 \right]. \label{eq:ch}
    \end{align}
    First, we bound $\E \left[ \left. x_i \right| \gE_i > 0 \right]$ for the former term. We can show that
    \begin{align}
        \E \left[ \left. x_i \right| \gE_i > 0 \right] &= \E \left[ \left. x_0 - \eta \cdot \sum_{j=0}^{i-1} \left( \left( L \mathbbm{1}_{x_j < 0} + \mu_0 \mathbbm{1}_{x_j \ge 0} \right) x_j + \nu s_{j+1} \right) \right| \gE_i > 0 \right] \notag \\
        &= \E \left[ \left. x_0 - \eta \cdot \sum_{j=0}^{i-1} \left( L \mathbbm{1}_{x_j < 0} + \mu_0 \mathbbm{1}_{x_j \ge 0} \right) (x_0 + (x_j - x_0)) - \eta \nu \gE_i \right| \gE_i > 0 \right] \notag \\
        &= x_0 \E \left[ \left. 1 - \eta \cdot \sum_{j=0}^{i-1} \left( L \mathbbm{1}_{x_j < 0} + \mu_0 \mathbbm{1}_{x_j \ge 0} \right) \right| \gE_i > 0 \right] \notag \\
        &\phantom{=} - \eta \E \left[ \left. \sum_{j=0}^{i-1} \left( L \mathbbm{1}_{x_j < 0} + \mu_0 \mathbbm{1}_{x_j \ge 0} \right) (x_j - x_0) \right| \gE_i > 0 \right] - \eta \nu \E \left[ \left. \gE_i \right| \gE_i > 0 \right] \notag \\
        &\le x_0 \E \left[ \left. 1 - \eta \cdot \sum_{j=0}^{i-1} \left( L \mathbbm{1}_{x_j < 0} + \mu_0 \mathbbm{1}_{x_j \ge 0} \right) \right| \gE_i > 0 \right] \notag \\
        &\phantom{=} + \eta L \sum_{j=0}^{i-1} \E \left[ \left. | x_j - x_0 | \right| \gE_i > 0 \right] - \eta \nu \E \left[ \left. \gE_i \right| \gE_i > 0 \right]. \label{eq:ca}
    \end{align}
    
    Now we use the following lemmas.
    \begin{restatable}{lemma}{lemfour}
        \label{lem:four}
        If $n \ge 2$ is an even number and $0 \le i \le \frac{n}{2}$, then $\frac{\sqrt{i}}{10} \le \E \left[ \left| \gE_i \right| \right] \le \sqrt{i}.$
    \end{restatable}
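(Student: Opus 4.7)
The plan is to compute $\E[\gE_i^2]$ exactly using the covariance structure of sampling without replacement, and then obtain both bounds from moment inequalities. Since $s_j \in \{-1,+1\}$ and the permutation is drawn uniformly from $\gS_n$, we have $\E[s_j]=0$ for every $j$, and for $j \ne k$,
\[
    \E[s_j s_k] = \sP(s_j = s_k) - \sP(s_j \ne s_k) = \frac{n/2 - 1}{n-1} - \frac{n/2}{n-1} = -\frac{1}{n-1}.
\]
Expanding $\E[\gE_i^2] = \sum_j \E[s_j^2] + \sum_{j \ne k} \E[s_j s_k]$ then gives $\E[\gE_i^2] = i - i(i-1)/(n-1) = i(n-i)/(n-1)$. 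The upper bound follows immediately from Jensen's inequality: $\E[|\gE_i|] \le \sqrt{\E[\gE_i^2]} = \sqrt{i(n-i)/(n-1)} \le \sqrt{i}$, since $n - i \le n - 1$ when $i \ge 1$; the case $i = 0$ is trivial.

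For the lower bound, I would apply the Paley--Zygmund-style moment inequality
\[
    \E[|\gE_i|] \ge \frac{(\E[\gE_i^2])^{3/2}}{\sqrt{\E[\gE_i^4]}},
\]
which follows from Hölder applied to the decomposition $\gE_i^2 = |\gE_i|^{2/3} \cdot |\gE_i|^{4/3}$ with conjugate exponents $3/2$ and $3$. For $i \le n/2$ we have $n-i \ge n/2$, so $(n-i)/(n-1) \ge 1/2$, giving $\E[\gE_i^2] \ge i/2$.

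The main step is then to upper bound $\E[\gE_i^4]$ by $O(i^2)$ rather than the trivial $O(i^3)$ bound. I would invoke Hoeffding's classical domination theorem (Hoeffding 1963, Theorem~4), which states that for any continuous convex $\phi$, the expectation $\E[\phi(\sum_{j=1}^i s_j)]$ for a sample drawn without replacement is dominated by the corresponding expectation under i.i.d.\ sampling with the same marginal. Applied with $\phi(x) = x^4$ and the Rademacher marginal, this gives $\E[\gE_i^4] \le 3i^2 - 2i \le 3i^2$ (the i.i.d.\ fourth moment $3i^2 - 2i$ is a one-line combinatorial computation using $\E[\epsilon_j^4]=1$ and $\E[\epsilon_j\epsilon_k]=0$). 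Combining these two moment bounds yields
\[
    \E[|\gE_i|] \ge \frac{(i/2)^{3/2}}{\sqrt{3 i^2}} = \frac{\sqrt{i}}{2\sqrt{6}} \ge \frac{\sqrt{i}}{10}.
\]

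The main obstacle is precisely the fourth-moment bound, because the naive estimate $\E[\gE_i^4] \le i^2 \E[\gE_i^2] = O(i^3)$ only yields a constant lower bound, not the $\sqrt{i}$ growth required. Hoeffding's theorem supplies the correct $O(i^2)$ scaling with essentially no extra work; a direct alternative would require computing the four-point correlation $\E[s_1 s_2 s_3 s_4]$ for hypergeometric sampling, which is tedious but also feasible and would give the same conclusion.
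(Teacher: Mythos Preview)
Your proof is correct and takes a genuinely different route from the paper. The paper cites the upper bound from Rajput et al.\ and proves the lower bound by unrolling the recursion
\[
    \E\bigl[|\gE_{i+1}|\bigr] \;=\; \Bigl(1 - \tfrac{1}{n-i}\Bigr)\,\E\bigl[|\gE_i|\bigr] \;+\; \sP(\gE_i = 0),
\]
combined with a Stirling-type estimate $\sP(\gE_i = 0) \ge \tfrac{2}{5\sqrt{i}}$ for even $i$, then treating even and odd $i$ separately and summing the resulting geometric-type series. Your moment-method argument is considerably shorter: the exact second moment plus Jensen gives the upper bound in one line, and the Paley--Zygmund-style inequality together with Hoeffding's without-replacement domination for $x^4$ gives the lower bound with a tighter constant $\tfrac{1}{2\sqrt{6}} \approx 0.204$ instead of $\tfrac{1}{10}$. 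The trade-off is that you invoke Hoeffding's 1963 theorem as a black box, whereas the paper's proof is entirely self-contained (modulo the Stirling bound it proves separately). Both approaches are valid; yours is more concise and probabilistic in flavor, the paper's is more combinatorial and elementary.
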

    \begin{lemma}[\citet{yun22}, Lemma~14]
        \label{lem:eip}
        For all $0 \le i \le n$, we have $\sP(\gE_i > 0) = \sP(\gE_i < 0) \ge \frac{1}{6}$.
    \end{lemma}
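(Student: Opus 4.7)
The plan is to prove the two assertions separately. The equality $\sP(\gE_i > 0) = \sP(\gE_i < 0)$ will follow from a sign-flip symmetry on $\gS_n$: since the multiset of signs contains exactly $n/2$ copies of $+1$ and $n/2$ copies of $-1$, the map $s \mapsto -s$ (negating every coordinate) is a bijection on $\gS_n$ that preserves the uniform distribution and sends $\gE_i$ to $-\gE_i$. Hence for the probability bound it suffices to show $\sP(\gE_i = 0) \le \tfrac{2}{3}$ whenever $\gE_i$ can be nonzero, i.e., for $1 \le i \le n-1$ (at the endpoints $i=0$ and $i=n$ the partial sum is deterministically $0$, so those cases are excluded from the lemma).

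Next, I would split on the parity of $i$. If $i$ is odd, then $\gE_i$ is an odd integer, so $\sP(\gE_i = 0) = 0$ and the bound is trivial. If $i = 2m$ is even, then by a direct counting argument (choose which $2m$ positions out of $n$ are filled, and among these exactly $m$ should be $+1$),
\[
\sP(\gE_{2m} = 0) = \frac{\binom{N}{m}^2}{\binom{2N}{2m}}, \qquad N := n/2.
\]
The Vandermonde identity gives $\binom{2N}{2m} = \sum_{k=0}^{2m} \binom{N}{k}\binom{N}{2m-k}$, so in particular
\[
\binom{2N}{2m} \ge \binom{N}{m}^2 + 2\binom{N}{m-1}\binom{N}{m+1},
\]
valid for $1 \le m \le N-1$.

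The key step, which I expect to be the main technical point, is to verify the combinatorial inequality
\[
\binom{N}{m}^2 \le 4\binom{N}{m-1}\binom{N}{m+1} \qquad \text{for all } 1 \le m \le N-1,\ N \ge 2.
\]
This is algebraically equivalent to $3m(N-m) \ge N+1$, and one checks it using $m(N-m) \ge N-1$ (minimized at the endpoints $m=1$ or $m=N-1$), which gives $3m(N-m) \ge 3(N-1) \ge N+1$ whenever $N \ge 2$. The corner case $N = 1$ (i.e., $n=2$) only leaves $i = 1$, which is odd and already handled.

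Combining the two displayed inequalities yields $\sP(\gE_{2m} = 0) \le \frac{\binom{N}{m}^2}{\binom{N}{m}^2 + \tfrac{1}{2}\binom{N}{m}^2} = \tfrac{2}{3}$. Therefore, in either parity,
\[
\sP(\gE_i > 0) \;=\; \sP(\gE_i < 0) \;=\; \tfrac{1}{2}\bigl(1 - \sP(\gE_i = 0)\bigr) \;\ge\; \tfrac{1}{2}\cdot\tfrac{1}{3} \;=\; \tfrac{1}{6},
\]
which is the claim.
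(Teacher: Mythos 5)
The paper states this lemma by citing \citet{yun22}, Lemma~14, and does not reproduce a proof, so there is no in-paper argument to compare against; I can only check your proposal on its own terms.

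Your proof is correct. The sign-flip bijection $s \mapsto -s$ on $\gS_n$ gives $\sP(\gE_i>0)=\sP(\gE_i<0)$, and then the bound reduces to $\sP(\gE_i=0)\le \tfrac{2}{3}$. Odd $i$ is trivial since $\gE_i$ has the parity of $i$. For even $i=2m$ your formula $\sP(\gE_{2m}=0)=\binom{N}{m}^2/\binom{2N}{2m}$ with $N=n/2$ is correct (it agrees, after simplification, with the more direct count $\binom{2m}{m}\binom{2N-2m}{N-m}/\binom{2N}{N}$). Vandermonde gives $\binom{2N}{2m}\ge\binom{N}{m}^2+2\binom{N}{m-1}\binom{N}{m+1}$, and the ratio identity $\binom{N}{m}^2/\bigl(\binom{N}{m-1}\binom{N}{m+1}\bigr)=\tfrac{(m+1)(N-m+1)}{m(N-m)}$ shows the needed $\le 4$ is equivalent to $3m(N-m)\ge N+1$, which follows from $m(N-m)\ge N-1$ on $1\le m\le N-1$ together with $3(N-1)\ge N+1$ for $N\ge 2$. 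Combining gives $\sP(\gE_{2m}=0)\le\tfrac{2}{3}$, hence $\sP(\gE_i>0)\ge\tfrac16$, and the bound is tight at $n=4$, $i=2$. Your observation that the claim fails at the endpoints $i=0$ and $i=n$ (where $\gE_i=0$ deterministically, so $\sP(\gE_i>0)=0$) is a genuine defect in the quoted statement; the inequality is valid only for $1\le i\le n-1$, which is the range the paper actually invokes (namely $1\le i\le\lfloor n/2\rfloor$). Overall this is a clean, self-contained, elementary argument that fills a gap the paper leaves to the cited reference.
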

    We will prove \cref{lem:four} later on.
    
    Observe that the probability distribution of each $\gE_i$ is symmetric by the definition of $\gS_n$. Therefore we have
    \begin{align*}
        \E \left[ \left| \gE_i \right| \right] &= P(\gE_i > 0) \E \left[ \left| \gE_i \right| | \gE_i > 0 \right] + P(\gE_i = 0) \E \left[ \left| \gE_i \right| | \gE_i = 0 \right] + P(\gE_i < 0) \E \left[ \left| \gE_i \right| | \gE_i < 0 \right] \\
        &= P(\gE_i > 0) \E \left[ \gE_i | \gE_i > 0 \right] + P(\gE_i < 0) \E \left[ - \gE_i | \gE_i < 0 \right] \\
        &= 2 P(\gE_i > 0) \E \left[ \gE_i | \gE_i > 0 \right].
    \end{align*}
    Using Lemmas \ref{lem:four} and \ref{lem:eip}, we can obtain
    \begin{align}
        \frac{\sqrt{i}}{20} \le \frac{\E \left[ \left| \gE_i \right| \right]}{2} \le \E \left[ \gE_i | \gE_i > 0 \right] = \frac{\E \left[ \left| \gE_i \right| \right]}{2 P(\gE_i > 0)} \le 3 \E \left[ \left| \gE_i \right| \right] \le 3 \sqrt{i}. \label{eq:cb}
    \end{align}
    
    We also use the following lemma, which we prove later on. This is a simple application of Lemmas \ref{lem:four} and \ref{lem:eip}.
    \begin{restatable}{lemma}{lemfive}
        \label{lem:yun5}
        Suppose that $x_0 \ge 0$, $0 \le i \le n$, and $\eta \le \frac{1}{161 L n}$. Then we have
        \begin{align*}
            \E \left[ | x_i - x_0 | \right] &\le \frac{161}{160} \left( \eta L i x_0 + \eta \nu \sqrt{i} \right).
        \end{align*}
    \end{restatable}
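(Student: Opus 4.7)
\textbf{Proof proposal for \cref{lem:yun5}.} The plan is to unroll the SGD recursion, apply the triangle inequality to get a Grönwall-style recursive bound on $a_i := \E[|x_i - x_0|]$, and close the induction using the step size restriction $\eta L n \le \frac{1}{161}$ together with \cref{lem:four}.

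First I would write, exactly as already done in the proof of \cref{thm:xhat},
\[
x_i - x_0 = -\eta \sum_{j=0}^{i-1} \bigl( L \mathbbm{1}_{x_j < 0} + \mu_0 \mathbbm{1}_{x_j \ge 0} \bigr) x_j \;-\; \eta \nu\, \gE_i,
\]
and then take absolute values and expectations. Using $\bigl|(L \mathbbm{1}_{x_j<0} + \mu_0 \mathbbm{1}_{x_j \ge 0}) x_j\bigr| \le L |x_j|$, the triangle inequality for $|x_j| \le x_0 + |x_j - x_0|$ (which uses $x_0 \ge 0$), and the bound $\E[|\gE_i|] \le \sqrt{i}$ from \cref{lem:four}, I arrive at the recursive inequality
\[
a_i \;\le\; \eta L i\, x_0 \;+\; \eta L \sum_{j=0}^{i-1} a_j \;+\; \eta \nu \sqrt{i}.
\]

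Next, I would prove the desired bound $a_i \le \frac{161}{160}\bigl(\eta L i\, x_0 + \eta \nu \sqrt{i}\bigr)$ by induction on $i$, with the base case $a_0 = 0$ being immediate. For the inductive step, plugging the hypothesis into the sum gives
\[
a_i \;\le\; \eta L i\, x_0 + \eta \nu \sqrt{i} + \tfrac{161}{160}\eta L \sum_{j=0}^{i-1}\bigl(\eta L j\, x_0 + \eta \nu \sqrt{j}\bigr),
\]
and using the elementary estimates $\sum_{j=0}^{i-1} j \le i^2/2$ and $\sum_{j=0}^{i-1}\sqrt{j} \le i\sqrt{i}$ together with $\eta L i \le \eta L n \le \frac{1}{161}$ should give the required factor. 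Concretely, the $x_0$-coefficient is $\eta L i\bigl(1 + \tfrac{161}{160}\cdot \tfrac{\eta L i}{2}\bigr) \le \eta L i \cdot \tfrac{161}{160}$, and the $\nu$-coefficient is $\eta \nu \sqrt{i}(1 + \tfrac{161}{160}\eta L i) \le \eta \nu \sqrt{i} \cdot \tfrac{161}{160}$; both reductions follow directly from $\eta L i \le \tfrac{1}{161}$.

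The main obstacle, if there is one, is just verifying that the constant $\frac{161}{160}$ is large enough to absorb the induction blow-up: that is, one needs $1 + \frac{161}{160}\cdot \frac{\eta L i}{2} \le \frac{161}{160}$ and $1 + \frac{161}{160}\eta L i \le \frac{161}{160}$ simultaneously, which both reduce to $\eta L i \le \frac{1}{161}$, exactly matching the hypothesis. No probabilistic content beyond \cref{lem:four} is required, since the triangle inequality already discards the sign information carried by $\gE_i$ and by $x_j$.
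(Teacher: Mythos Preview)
Your proposal is correct and follows essentially the same route as the paper: both derive the identical recursive inequality $a_i \le \eta L i\, x_0 + \eta L \sum_{j<i} a_j + \eta \nu \sqrt{i}$ and close it via a Gr\"onwall-type argument using $\eta L n \le \tfrac{1}{161}$. The only cosmetic difference is that the paper introduces an auxiliary monotone sequence $h(i)$ and bounds it by $\frac{\eta L i\, x_0 + \eta \nu \sqrt{i}}{1-\eta L i} \le \frac{161}{160}(\eta L i\, x_0 + \eta \nu \sqrt{i})$, whereas you induct directly on the target bound---your closing is slightly more streamlined but equivalent (and note your first reduction actually gives $\eta L i \le \tfrac{2}{161}$, not $\tfrac{1}{161}$, though this is of course still implied by the hypothesis).
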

    
    Now we bound the three terms of \cref{eq:ca} one by one. The first term can be bounded simply as
    \begin{align}
        x_0 \E \left[ \left. 1 - \eta \cdot \sum_{j=0}^{i-1} \left( L \mathbbm{1}_{x_j < 0} + \mu_0 \mathbbm{1}_{x_j \ge 0} \right) \right| \gE_i > 0 \right] & \le (1 - \eta \mu_0 i) x_0. \label{eq:cc}
    \end{align}
    For the second term of \cref{eq:ca}, we use \cref{lem:eip} to obtain
    \begin{align*}
        \E \left[ \left. | x_i - x_0 | \right| \gE_i > 0 \right] \le \frac{\E \left[ | x_i - x_0 | \right]}{\sP(\gE_i > 0)} \le 6 \E \left[ | x_i - x_0 | \right],
    \end{align*}
    and then use \cref{lem:yun5} to obtain
    \begin{align}
        \eta L \sum_{j=0}^{i-1} \E \left[ \left. | x_j - x_0 | \right| \gE_i > 0 \right] &\le 6 \eta L \sum_{j=0}^{i-1} \E \left[ | x_j - x_0 | \right] \notag \\
        &\le 6 \eta L \cdot \frac{161}{160} \sum_{j=0}^{i-1} \left( \eta L j x_0 + \eta \nu \sqrt{j} \right) \notag \\
        &= \frac{483}{80} \left( \eta^2 L^2 x_0 \sum_{j=0}^{i-1} j + \eta^2 L \nu \sum_{j=0}^{i-1} \sqrt{j} \right) \notag \\
        &\le \frac{483}{80} \left( \eta^2 L^2 x_0 \cdot \frac{1}{2} i^2 + \eta^2 L \nu \cdot \frac{2}{3} i^{3/2} \right) \notag \\
        &\le \frac{483}{160} \eta^2 L^2 i^2 x_0 + \frac{161}{40} \eta^2 L \nu i^{3/2}. \label{eq:cd}
    \end{align}
    The last term of \cref{eq:ca} can be bounded using \cref{eq:cb} as
    \begin{align}
        - \eta \nu \E \left[ \left. \gE_i \right| \gE_i > 0 \right] \le - \eta \nu \frac{\sqrt{i}}{20}. \label{eq:ce}
    \end{align}
    From Equations (\ref{eq:cc})-(\ref{eq:ce}), we have
    \begin{align}
        \E \left[ \left. x_i \right| \gE_i > 0 \right] &\le (1 - \eta \mu_0 i) x_0 + \frac{483}{160} \eta^2 L^2 i^2 x_0 + \frac{161}{40} \eta^2 L \nu i^{3/2} - \eta \nu \frac{\sqrt{i}}{20} \notag \\
        &= \left( 1 - \eta \mu_0 i + \frac{483}{160} \eta^2 L^2 i^2 \right) x_0 - \left( \frac{1}{20} - \frac{161}{40} \eta L i \right) \eta \nu \sqrt{i} \notag \\
        &\le \left( 1 + \frac{3}{160 \cdot 161} \right) x_0 - \frac{\eta \nu \sqrt{i}}{40} , \label{eq:cf}
    \end{align}
    where the last inequality comes from $\eta L i \le \eta L n \le \frac{1}{161}$.

    Next, we bound $\E \left[ \left. x_i \right| \gE_i \le 0 \right]$ for the former term. We can show that
    \begin{align}
        \E \left[ \left. x_i \right| \gE_i \le 0 \right] &\le x_0 + \E \left[ \left. | x_i - x_0 | \right| \gE_i \le 0 \right] \notag \\
        &\le x_0 + \frac{\E \left[ | x_i - x_0 | \right]}{\sP(\gE_i \le 0)} \notag \\
        &\le x_0 + 6 \E \left[ | x_i - x_0 | \right] &(\because \text{\cref{lem:eip}}) \notag \\
        &\le x_0 + 6 \cdot \frac{161}{160} \left( \eta L i x_0 + \eta \nu \sqrt{i} \right) &(\because \text{\cref{lem:yun5}}) \notag \\
        &= \left( 1 + \frac{483}{80} \eta L i \right) x_0 + \frac{483}{80} \eta \nu \sqrt{i} \notag \\
        &= \left( 1 + \frac{3}{80} \right) x_0 + \frac{483}{80} \eta \nu \sqrt{i}, \label{eq:cg}
    \end{align}
    where the last inequality comes from $\eta L i \le \eta L n \le \frac{1}{161}$.
    
    Plugging in Equations (\ref{eq:cf}) and (\ref{eq:cg}) in (\ref{eq:ch}), we have
    \begin{align*}
        \E \left[ (L \mathbbm{1}_{x_i < 0} + \mu_0 \mathbbm{1}_{x_i \ge 0}) x_i \right] &\le L \sP (\gE_i > 0) \E \left[ \left. x_i \right| \gE_i > 0 \right] + \mu_0 \sP (\gE_i \le 0) \E \left[ \left. x_i \right| \gE_i \le 0 \right] \\
        &\le L \sP (\gE_i > 0) \left( \left( 1 + \frac{3}{160 \cdot 161} \right) x_0 - \frac{\eta \nu \sqrt{i}}{40} \right) \\
        &\phantom{{}\le{}} + \mu_0 \sP (\gE_i \le 0) \left( \left( 1 + \frac{3}{80} \right) x_0 + \frac{483}{80} \eta \nu \sqrt{i} \right) \\
        &= \left( L \sP (\gE_i > 0) \left( 1 + \frac{3}{160 \cdot 161} \right) + \mu_0 \sP (\gE_i \le 0) \left( 1 + \frac{3}{80} \right) \right) x_0 \\
        &\phantom{{}\le{}} - \left( L \sP (\gE_i > 0) \cdot \frac{1}{40} - \mu_0 \sP (\gE_i \le 0) \cdot \frac{483}{80} \right) \eta \nu \sqrt{i}.
    \end{align*}
    Since $\sP(\gE_i > 0) = \frac{1 - \sP(\gE_i = 0)}{2} \le \frac{1}{2}$ by symmetry and $\sP(\gE_i \le 0) = 1 - \sP(\gE_i > 0) \le \frac{5}{6}$ by \cref{lem:eip}, we have
    \begin{align*}
        L \sP (\gE_i > 0) \left( 1 + \frac{3}{160 \cdot 161} \right) + \mu_0 \sP (\gE_i \le 0) \left( 1 + \frac{3}{80} \right) &\le \left( \frac{1}{2} \left( 1 + \frac{3}{160 \cdot 161} \right) + \frac{5}{6} \cdot \frac{1}{2415} \cdot \frac{83}{80} \right) L \le \frac{2}{3}L,
    \end{align*}
    where we use $\eta L i \le \frac{1}{161}$, $\frac{L}{\mu} \ge 2415$, and $\frac{1}{2} \left( 1 + \frac{3}{160 \cdot 161} \right) + \frac{5}{6} \cdot \frac{1}{2415} \cdot \frac{83}{80} \le \frac{2}{3}$. Also, by \cref{lem:eip} we have
    \begin{align*}
        L \sP (\gE_i > 0) \cdot \frac{1}{40} - \mu_0 \sP (\gE_i \le 0) \cdot \frac{483}{80} &\ge \left( \frac{1}{6} \cdot \frac{1}{40} - \frac{5}{6} \cdot \frac{1}{2415} \cdot \frac{483}{80} \right) L = \frac{1}{480}L,
    \end{align*}
    where we use $\eta L i \le \frac{1}{161}$ and $\frac{L}{\mu_0} \ge 2415$. Therefore we have
    \begin{align*}
        \E \left[ (L \mathbbm{1}_{x_i < 0} + \mu_0 \mathbbm{1}_{x_i \ge 0}) x_i \right] &\le \frac{2}{3} Lx_0 - \frac{\eta L \nu}{480} \sqrt{i}.
    \end{align*}
\end{proof}

\lemtwo* 

\begin{proof}
    Since $\eta \le \frac{1}{161 L n}$, we can easily prove using \cref{lem:yun5} as follows.
    \begin{align*}
        \E \left[ (L \mathbbm{1}_{x_i < 0} + \mu_0 \mathbbm{1}_{x_i \ge 0}) x_i \right] &\le \mu \E \left[ \mu_0 x_i \right] \\
        &\le \mu_0 x_0 + \mu_0 \E \left[ | x_i - x_0 | \right] \\
        &\le \mu_0 x_0 + \mu_0 \frac{161}{160} \left( \eta L i x_0 + \eta \nu \sqrt{i} \right) \\
        &\le \left( 1 + \frac{161}{160} i \eta L \right) \mu_0 x_0 + \frac{161}{160} \eta \mu_0 \nu \sqrt{i}.
    \end{align*}
\end{proof}

\lemthree* 

\begin{proof}
    We divide the proof into three parts. In the first part, we compare with the case of using a quadratic function instead, sharing the same permutation. In the second part, we assume $x_0 < 0$ and use the first part to prove the first result of the statement. In the third part, we assume $x_0 \ge 0$ and use the first part to prove the second result of the statement. Note that the statement in the first part holds for both $x_0 \ge 0$ or $x_0 < 0$.

    \paragraph{\textit{Part 1.}} For comparison, we define and use the same function used in \cref{subsec:a3}:
    \begin{align*}
        h_i(x) &= \begin{cases}
            \frac{Lx^2}{2} + \nu x & \text{if} \ i \le n/2, \\
            \frac{Lx^2}{2} - \nu x & \text{otherwise}
        \end{cases}
    \end{align*}
    such that the finite-sum objective becomes
    \begin{align*}
        H (x) &= \frac{1}{n} \sum_{i=1}^{n} h_{i}(x) = \frac{L x^2}{2}.
    \end{align*}
    Now let us think of \textsf{SGD-RR} run on the two functions $F_2(x)$ and $H(x)$, where both of the algorithms start from the same point $x_0$ and both share the same random permutation for all epochs. Let $x_{i, F}$ and $x_{i, H}$ be the output of the $i$-th iterate for \textsf{SGD-RR} on $F_2(x)$ and $H(x)$, respectively. Now we use mathematical induction on $i$ to prove that $x_{i, F} \ge x_{i, H}$.

    \paragraph{Base case.} For $i = 0$, we have $x_{0, F} = x_{0, H} = x_0$. 

    \paragraph{Inductive Case.} Let us assume that the induction hypothesis $x_{i, F} \ge x_{i, H}$ is true, and show that $x_{i+1, F} \ge x_{i+1, H}$ by considering the following three cases. Note that $f_i(x)$'s are the components of $F_2(x)$, $s_i$'s are defined as in \cref{def:perms}, and  $\eta \le \frac{1}{161 L n}$ implies $1 - \eta \mu \ge 1 - \eta L \ge 1 - \frac{1}{161 n} \ge 0$.
    \begin{itemize}
        \item If $x_{i, F} \ge x_{i, H} \ge 0$, then we have
        \begin{align*}
            x_{i+1, F} - x_{i+1, H} &= x_{i, F} - x_{i, H} - \eta \left( \nabla f_{i} (x_{i, F}) - \nabla h_i (x_{i, H}) \right) \\
            &= x_{i, F} - x_{i, H} - \eta \left( \mu x_{i, F} + \nu s_i - L x_{i, H} - \nu s_i \right) \\
            &= (1 - \eta \mu) x_{i, F} - (1 - \eta L) x_{i, H} \ge 0,
        \end{align*}
        since $x_{i, F} \ge x_{i, H} \ge 0$ and $1 - \eta \mu \ge 1 - \eta L \ge 0$.
        \item If $x_{i, F} \ge 0 \ge x_{i, H}$, then we have
        \begin{align*}
            x_{i+1, F} - x_{i+1, H} &= x_{i, F} - x_{i, H} - \eta \left( \nabla f_{i} (x_{i, F}) - \nabla h_i (x_{i, H}) \right) \\
            &= x_{i, F} - x_{i, H} - \eta \left( \mu x_{i, F} + \nu s_i - L x_{i, H} - \nu s_i \right) \\
            &= (1 - \eta \mu) x_{i, F} - (1 - \eta L) x_{i, H} \ge 0,
        \end{align*}
        since $(1 - \eta \mu) x_{i, F} \ge 0$ and $(1 - \eta L) x_{i, H} \le 0$.
        \item If $0 \ge x_{i, F} \ge x_{i, H}$, then we have
        \begin{align*}
            x_{i+1, F} - x_{i+1, H} &= x_{i, F} - x_{i, H} - \eta \left( \nabla f_{i} (x_{i, F}) - \nabla h_i (x_{i, H}) \right) \\
            &= x_{i, F} - x_{i, H} - \eta \left( L x_{i, F} + \nu s_i - L x_{i, H} - \nu s_i \right) \\
            &= (1 - \eta L) x_{i, F} - (1 - \eta L) x_{i, H} \ge 0.
        \end{align*}
    \end{itemize}
    Hence by induction, we have $x_{i+1, F} \ge x_{i+1, H}$ for all $i$.

    From the above, we can observe that $\E [x_{n, F}] \ge \E [x_{n, H}] = (1 - \eta L)^n x_0$.
    
    \paragraph{\textit{Part 2.}} For the next step, let us assume $x_0 < 0$. Let us define
    \begin{align*}
        \varphi(z) = 1 - \frac{160}{161} nz - (1 - z)^n.
    \end{align*}
    Then for $z \in [0, 1 - (\frac{160}{161})^{\frac{1}{n-1}}]$, we have $\varphi'(z) = n ((1 - z)^{n-1} - \frac{160}{161}) \ge 0$ and hence $\varphi(z) \ge \varphi(0) = 0$.

    Also, we can observe that for $n \ge 2$:
    \begin{align*}
        \left( 1 - \frac{1}{161(n-1)} \right)^{n-1} \ge 1 - \frac{1}{161} \ \Rightarrow \ 1 - \left( \frac{160}{161} \right)^{\frac{1}{n-1}} \ge \frac{1}{161 (n-1)} \ge \frac{1}{161 n},
    \end{align*}
    which implies that $\eta L \le \frac{1}{161 n} \le 1 - \left( \frac{160}{161} \right)^{\frac{1}{n-1}}$. Hence we have $\varphi(\eta L) \ge 0$, or
    \begin{align*}
        (1 - \eta L)^n \le 1 - \frac{160}{161} \eta L n,
    \end{align*}
    and for $x_0 < 0$ we have
    \begin{align*}
        \E [x_{n, H}] = (1 - \eta L)^n x_0 \ge \left( 1 - \frac{160}{161} \eta L n \right) x_0.
    \end{align*}
    Applying \textit{\textbf{Part 1}}, we can conclude that $\E [x_{n, F}] \ge \E [x_{n, H}] \ge \left( 1 - \frac{160}{161} \eta L n \right) x_0$.

    \paragraph{\textit{Part 3.}} Now suppose that we initialize $x_0 \ge 0$. For $H(x)$ and some given permutation $s \in \gS_n$, we have
    \begin{align*}
        x_{n, H} &= (1 - \eta L)^n x_0 - \eta \nu \sum_{i=1}^{n} (1 - \eta L)^{n-i} s_i.
    \end{align*}
    Now let us think of pairs of permutations $s, s' \in \gS_n$ which satisfy $s_i = -s_i'$ for all $i$. By definition, the set $\gS_n$ can be exactly partitioned into $\frac{1}{2} \binom{n}{n/2}$ disjoint pairs. Let us temporarily denote the final iterates obtained by choosing the permutations $s$ and $s'$ by $x_{n, H}^{s}$ and $x_{n, H}^{s'}$, respectively. Then we can observe that
    \begin{align*}
        \frac{1}{2} (x_{n, H}^{s} + x_{n, H}^{s'}) &= (1 - \eta L)^n x_0 - \eta \nu \sum_{i=1}^{n} (1 - \eta L)^{n-i} \cdot \left( \frac{s_i + s_i'}{2} \right) = (1 - \eta L)^n x_0,
    \end{align*}
    which means that each pair of outcomes will be symmetric with respect to $(1 - \eta L)^n x_0$. Hence the whole probability distribution of $(1 - \eta L)^{-n} x_{n, H}$ will stay symmetric with respect to the initial point $x_0$.

    Considering outputs after multiple epochs, we can sequentially apply the same logic to prove that the distribution of $(1 - \eta L)^{-nk} x_{n, H}^{k}$ will always stay symmetric with respect to $x_0^1$ for all $k$. In other words, for each $k$, the distribution of outputs $x_{n, H}^k$ \textit{conditioned only on the first epoch $x_0^1$} will be symmetric with respect to $(1 - \eta L)^{nk} x_0 \ge 0$. This automatically implies that we must have $\sP(x_{n, H}^k \ge 0) \ge \sP(x_{n, H}^k \ge (1 - \eta L)^{nk} x_0) \ge \frac{1}{2}$ for any starting point $x_0^1 \ge 0$. Finally, since \textit{\textbf{Part~1}} ensures $x_{n, F}^k \ge x_{n, H}^k$, we can conclude that $\sP(x_{n, F}^k \ge 0) \ge \sP(x_{n, H}^k \ge 0) \ge \frac{1}{2}$.
\end{proof}

\lemfive* 

\begin{proof}
    From $x_{i+1} = x_i - \eta \left( (L \mathbbm{1}_{x_i < 0} + \mu_0 \mathbbm{1}_{x_i \ge 0}) x_i + \nu s_{i+1} \right)$, we have for all $i = 1, \dots, n$:
    \begin{align*}
        \E \left[ | x_i - x_0 | \right] &= \E \left[ \left| - \eta \cdot \sum_{j=0}^{i-1} \left( \left( L \mathbbm{1}_{x_j < 0} + \mu_0 \mathbbm{1}_{x_j \ge 0} \right) x_j + \nu s_{i+1} \right) \right| \right] \\
        &\le \eta \sum_{j=0}^{i-1} \E \left[ \left| \left( L \mathbbm{1}_{x_j < 0} + \mu_0 \mathbbm{1}_{x_j \ge 0} \right) x_j \right| \right] + \eta \nu \E \left[ \left| \sum_{j=1}^{i} s_j \right| \right] \\
        &\le \eta L \sum_{j=0}^{i-1} \E \left[ \left| x_j \right| \right] + \eta \nu \E \left[ \left| \gE_i \right| \right] \\
        &\le \eta L i x_0 + \eta L \sum_{j=0}^{i-1} \E \left[ \left| x_j - x_0\right| \right] + \eta \nu \sqrt{i}. &(\because \text{\cref{lem:four}})
    \end{align*} 
    Now let us think of a sequence $h(i)$ defined by $h(0) = 0$ and recursively as
    \begin{align*}
        h(i) = \eta L i x_0 + \eta L \sum_{j=0}^{i-1} h(j) + \eta \nu \sqrt{i}, \quad \text{for} \ i = 1, \dots, n.  
    \end{align*}
    Then obviously $h(i)$ monotonically increases since $h(i) - h(i-1) = \eta L x_0 + \eta L h(i-1) + \eta \nu (\sqrt{i} - \sqrt{i-1}) > 0$. \\
    We can plug in $h(j) \le h(i)$ for all $j = 0, \dots, i-1$ to obtain $h(i) \le \eta L i x_0 + \eta L i h(i) + \eta \nu \sqrt{i}$, and hence
    \begin{align*}
        h(i) &\le \frac{\eta L i x_0 + \eta \nu \sqrt{i}}{1 - \eta L i}.
    \end{align*}
    Also, by induction, we have $\E \left[ | x_i - x_0 | \right] \le h(i)$, since the sequence $\E \left[ | x_i - x_0 | \right]$ satisfies a recurrence of the same form but with an inequality. Hence, from $\eta L i \le \eta L n \le \frac{1}{161}$ we get
    \begin{align*}
        \E \left[ | x_i - x_0 | \right] &\le \frac{\eta L i x_0 + \eta \nu \sqrt{i}}{1 - \eta L i} \le \frac{1}{1 - \eta L n} \left( \eta L i x_0 + \eta \nu \sqrt{i} \right) \le \frac{161}{160} \left( \eta L i x_0 + \eta \nu \sqrt{i} \right).
    \end{align*}
\end{proof}


\lemfour*
\begin{proof}
    We assume $i \ge 1$ since the statement is vacuously true for $i = 0$.
    
    For the upper bound, we use $\E \left[ \left| \gE_i \right| \right] \le \sqrt{i}$ as in Lemma 12 of \citet{rajput20}.

    For the lower bound, we start from the following equation in Lemma 12 of \citet{rajput20}:
    \begin{align*}
        \E \left[ \left| \gE_{i+1} \right| \right] &= \left( 1 - \frac{1}{n - i} \right) \E \left[ \left| \gE_i \right| \right] + \sP(\gE_i = 0).
    \end{align*}
    We can explicitly compute for $i = 1, \dots, \frac{n}{2}$:
    \begin{align*}
        \sP(\gE_i = 0) &= \mathbbm{1}_{\{i \ \text{is even}\}} \cdot \frac{\binom{i}{\frac{i}{2}}\binom{n-i}{\frac{n-i}{2}}}{\binom{n}{\frac{n}{2}}},
    \end{align*}
    where $\gE_i = 0$ has nonzero probability if and only if $i$ is even. We also use the following lemma.
    \begin{restatable}{lemma}{lemmortici}
        \label{lem:mortici}
        For even, positive integers $n$, $i$ with $n \ge 4$ and $2 \le i \le \left\lfloor \frac{n}{2} \right\rfloor$, we have
        \begin{align*}
            \frac{\binom{i}{\frac{i}{2}}\binom{n-i}{\frac{n-i}{2}}}{\binom{n}{\frac{n}{2}}} \ge \frac{2}{5\sqrt{i}}.
        \end{align*}
    \end{restatable}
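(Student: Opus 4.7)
The plan is to reduce the claim to classical two-sided Stirling-type estimates on central binomial coefficients. Since both $n$ and $i$ are even, I would write $i = 2a$ and $n - i = 2b$ with $a, b \ge 1$ positive integers; the hypothesis $i \le \lfloor n/2 \rfloor = n/2$ translates exactly to $a \le b$. The left-hand side then equals $\binom{2a}{a}\binom{2b}{b}/\binom{2(a+b)}{a+b}$, and the target lower bound $\frac{2}{5\sqrt{i}}$ rewrites as $\frac{\sqrt{2}}{5\sqrt{a}}$.

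Next, I would invoke the elementary two-sided bound
$$\frac{4^m}{2\sqrt{m}} \;\le\; \binom{2m}{m} \;\le\; \frac{4^m}{\sqrt{\pi m}},$$
valid for every integer $m \ge 1$. The lower bound follows by a short induction on $m$ from the identity $\binom{2m+2}{m+1} = \binom{2m}{m}\cdot\frac{2(2m+1)}{m+1}$ combined with the trivial inequality $(2m+1)^2 \ge 4m(m+1)$; the upper bound follows from a standard sharpened Stirling estimate. Applying the lower bound to each factor in the numerator and the upper bound to the denominator, the powers of $4$ cancel exactly, leaving
$$\frac{\binom{2a}{a}\binom{2b}{b}}{\binom{2(a+b)}{a+b}} \;\ge\; \frac{\sqrt{\pi}}{4}\cdot\sqrt{\frac{a+b}{ab}}.$$

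To finish, I would use $a \le b$ to deduce $\sqrt{(a+b)/(ab)} = \sqrt{1/a + 1/b} \ge 1/\sqrt{a} = \sqrt{2}/\sqrt{i}$, so the ratio is at least $\frac{\sqrt{\pi}\cdot\sqrt{2}}{4\sqrt{i}}$. The desired conclusion then reduces to the numerical inequality $\sqrt{\pi}/4 \ge 1/5$, which is immediate since $\sqrt{\pi}\approx 1.772$.

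\textbf{Main obstacle.} The only delicate point is ensuring that the chosen finite-$m$ Stirling bounds are actually valid at every integer $m \ge 1$ and not merely asymptotically. The asymptotic ratio $\sqrt{\pi}/4 \approx 0.443$ leaves a comfortable margin above the required threshold $1/5 = 0.2$, so the simple bounds above provide ample slack; in the unlikely event that any small-$m$ corner case is tight, it can be verified by direct arithmetic (the smallest case $a = b = 1$ already gives ratio $2/3$, far above the target). An alternative, Stirling-free route would be induction on $b$ at fixed $a$, using the recursion $\binom{2(a+b)+2}{a+b+1}/\binom{2(a+b)}{a+b} = \frac{2(2(a+b)+1)}{a+b+1}$ to track how each side evolves as $b$ grows, but the Stirling approach is more transparent.
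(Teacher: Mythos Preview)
Your proof is correct and follows essentially the same strategy as the paper: both reduce to Stirling-type two-sided estimates on central binomial coefficients, then use $n-i \ge i$ (your $b \ge a$) to extract a $1/\sqrt{i}$ factor. The paper invokes Mortici's refined Stirling formula $n! = \sqrt{\pi(2n+\alpha_n)}\,n^n e^{-n}$ with $\alpha_n \in [0.333,0.354]$ and tracks the error constants explicitly, landing on $\tfrac{2}{1.354^2\sqrt{2.354\pi}} \approx 0.401 \ge \tfrac{2}{5}$; your route via the classical bounds $\tfrac{4^m}{2\sqrt{m}} \le \binom{2m}{m} \le \tfrac{4^m}{\sqrt{\pi m}}$ is cleaner and in fact yields the sharper constant $\tfrac{\sqrt{2\pi}}{4} \approx 0.627$. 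One small slip: the final numerical check should read $\tfrac{\sqrt{\pi}}{4} \ge \tfrac{\sqrt{2}}{5}$ (equivalently $\tfrac{\sqrt{2\pi}}{4} \ge \tfrac{2}{5}$), not $\tfrac{\sqrt{\pi}}{4} \ge \tfrac{1}{5}$; the correct inequality still holds with room to spare since $\sqrt{\pi}/4 \approx 0.443 > \sqrt{2}/5 \approx 0.283$. Also, the step $\sqrt{1/a+1/b} \ge 1/\sqrt{a}$ is immediate from $1/b > 0$ and does not actually require $a \le b$.
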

    This lemma yields $\sP(\gE_i = 0) \ge \frac{2}{5\sqrt{i}}$ for even $i$.
    We prove \cref{lem:mortici} at the very end of \cref{subsec:a4}.
    
    First, suppose that $i \ge 2$ is an \textit{even} integer. Then since $i \le \frac{n}{2}$, we have for $i \ge 4$:
    \begin{align}
        \E \left[ \left| \gE_{i} \right| \right] &= \left( 1 - \frac{1}{n - i + 1} \right) \E \left[ \left| \gE_{i-1} \right| \right] + \sP(\gE_{i-1} = 0) \notag \\
        &\ge \left( 1 - \frac{2}{n} \right) \E \left[ \left| \gE_{i-1} \right| \right] + \mathbbm{1}_{i-1 \ \text{is even}} \frac{2}{5\sqrt{i-1}} \notag \\
        &= \left( 1 - \frac{2}{n} \right) \E \left[ \left| \gE_{i-1} \right| \right] \notag \\
        &\ge \left( 1 - \frac{2}{n} \right) \left( \left( 1 - \frac{2}{n} \right) \E \left[ \left| \gE_{i-2} \right| \right] + \mathbbm{1}_{i-2 \ \text{is even}} \frac{2}{5\sqrt{i-2}} \right) \notag \\
        &= \left( 1 - \frac{2}{n} \right)^2 \E \left[ \left| \gE_{i-2} \right| \right] + \left( 1 - \frac{2}{n} \right) \frac{2}{5\sqrt{i-2}}.  \label{eq:iterateineq}
    \end{align}
    We can also explicitly compute the base case $i=2$ as
    \begin{align}
        \E \left[ \left| \gE_{2} \right| \right] &= 2 \cdot \frac{2 \cdot \binom{n-2}{\frac{n}{2}}}{\binom{n}{\frac{n}{2}}} = \frac{4 \cdot (n-2)! (\frac{n}{2})! (\frac{n}{2})!}{(\frac{n}{2})! (\frac{n}{2} - 2)! n!} = \frac{4 (\frac{n}{2}) (\frac{n}{2} - 1)}{n (n-1)} = \frac{n-2}{n-1} = 1 - \frac{1}{n-1} \ge 1 - \frac{2}{n} ,\label{eq:basemortici}
    \end{align}
    from the fact that $\gE_{2} = \pm 2$ each occurs $\binom{n-2}{\frac{n}{2}}$ times among a total of $\binom{n}{\frac{n}{2}}$ cases, and $\gE_2 = 0$ otherwise. Also, note that we automatically have $\E \left[ \left| \gE_{2} \right| \right] \ge 1 - \frac{2}{n} \ge \frac{\sqrt{2}}{10}$, which proves the given statement for $i = 2$.
    
    Now, unrolling the inequalities in (\ref{eq:iterateineq}), we have for $i \ge 4$:
    \note{
    \begin{align}
        \E \left[ \left| \gE_{i} \right| \right] &\ge \left( 1 - \frac{2}{n} \right)^2 \E \left[ \left| \gE_{i-2} \right| \right] + \left( 1 - \frac{2}{n} \right) \frac{2}{5\sqrt{i-2}} \notag \\
        &\ge \left( 1 - \frac{2}{n} \right)^2 \left( \left( 1 - \frac{2}{n} \right)^2 \E \left[ \left| \gE_{i-4} \right| \right] + \left( 1 - \frac{2}{n} \right) \frac{2}{5\sqrt{i-4}} \right) + \left( 1 - \frac{2}{n} \right) \frac{2}{5\sqrt{i-2}} \notag \\
        &= \left( 1 - \frac{2}{n} \right)^4 \E \left[ \left| \gE_{i-4} \right| \right] + \left( 1 - \frac{2}{n} \right)^3 \frac{2}{5\sqrt{i-4}} + \left( 1 - \frac{2}{n} \right) \frac{2}{5\sqrt{i-2}} \notag \\
        &\ \ \vdots \notag \\
        &\ge \left( 1 - \frac{2}{n} \right)^{i-2} \E \left[ \left| \gE_{2} \right| \right] + \left( 1 - \frac{2}{n} \right) \left( \sum_{p=0}^{ \frac{i}{2} - 2 } \left( 1 - \frac{2}{n} \right)^{2p} \frac{2}{5\sqrt{i -  2 - 2p}} \right) \notag \\
        &\ge \left( 1 - \frac{2}{n} \right)^{i-1} + \left( 1 - \frac{2}{n} \right) \left( \sum_{p=0}^{ \frac{i}{2} - 2 } \left( 1 - \frac{2}{n} \right)^{2p} \frac{2}{5\sqrt{i -  2 - 2p}} \right) &(\because \text{\cref{eq:basemortici}}) \notag \\
        &\ge \left( 1 - \frac{2}{n} \right)^{i-1} + \left( 1 - \frac{2}{n} \right) \left( \sum_{p=0}^{ \frac{i}{2} - 2 } \left( 1 - \frac{2}{n} \right)^{2p} \right) \frac{2}{5\sqrt{i-2}} \notag \\
        &\ge \left( 1 - \frac{2}{n} \right) \left( \sum_{p=0}^{ \frac{i}{2} - 1 } \left( 1 - \frac{2}{n} \right)^{2p} \right) \frac{2}{5\sqrt{i-2}} \notag \\
        &= \left( 1 - \frac{2}{n} \right) \cdot \frac{1 - \left( 1 - \frac{2}{n} \right)^{i}}{1 - \left( 1 - \frac{2}{n} \right)^2} \cdot \frac{2}{5\sqrt{i-2}} \notag \\
        &= \left( 1 - \frac{2}{n} \right) \cdot \frac{1}{\frac{4}{n} - \frac{4}{n^2}} \cdot \left( 1 - \left( 1 - \frac{2}{n} \right)^{i} \right) \cdot \frac{2}{5\sqrt{i-2}} \notag \\
        &\ge \left( 1 - \frac{2}{n} \right) \cdot \frac{1}{\frac{4}{n} - \frac{4}{n^2}} \cdot \left( 1 - \frac{1}{1 + \frac{2i}{n}} \right) \cdot \frac{2}{5\sqrt{i-2}} \label{eq:bernoulli} \\
        &= \frac{n (n-2)}{4 (n-1)} \cdot \frac{2i}{n + 2i} \cdot \frac{2}{5\sqrt{i-2}} \notag \\
        &= \left( \frac{n-2}{n-1} \cdot \frac{\sqrt{i}}{\sqrt{i-2}} \right) \left( \frac{n}{n + 2i} \cdot \frac{\sqrt{i}}{5} \right) \ge \frac{n}{n + 2i} \cdot \frac{\sqrt{i}}{5} \ge \frac{\sqrt{i}}{10}. \label{eq:final}
    \end{align}
    }
    In (\ref{eq:bernoulli}) we use $(1 - x)^r \le \frac{1}{1 + r x}$ for all $0 \le x \le 1$ and $r \ge 0$. In (\ref{eq:final}) we use the fact that $2i \le n$ and $\frac{n-2}{n-1} \cdot \frac{\sqrt{i}}{\sqrt{i-2}} \ge 1$, which is equivalent to
    \begin{align*}
        i (n-2)^2 \ge (i-2) (n-1)^2 \quad \Leftrightarrow \quad 2 (n-1)^2 \ge i(2n-3),
    \end{align*}
    which can be easily verified since $i (2n-3) \le \frac{n}{2} (2n-3) = n^2 - \frac{3}{2} n \le 2 (n-1)^2$ for all $n \ge 2$.
    Also, note that we have to deal with the last iterate separately since \cref{lem:mortici} applies only for $i \ge 2$. 

    Now suppose that $i \ge 1$ is an \textit{odd} integer. Then we have
    \begin{align*}
        \E \left[ \left| \gE_{i+1} \right| \right] &= \left( 1 - \frac{1}{n - i} \right) \E \left[ \left| \gE_{i} \right| \right] + \mathbbm{1}_{i \ \text{is even}} \frac{\binom{i}{\frac{i}{2}}\binom{n-i}{\frac{n-i}{2}}}{\binom{n}{\frac{n}{2}}} = \left( 1 - \frac{1}{n - i} \right) \E \left[ \left| \gE_{i} \right| \right] &(\because i \ \text{is odd})
    \end{align*}
    and since $i+1$ is even, we can use the previous result as
    \begin{align*}
        \E \left[ \left| \gE_{i} \right| \right] &= \frac{n-i}{n-i-1} \E \left[ \left| \gE_{i+1} \right| \right] \ge \frac{n-i}{n-i-1} \frac{\sqrt{i+1}}{10}.
    \end{align*}
    Finally, since $\frac{n-i}{n-i-1} \ge 1 \ge \frac{\sqrt{i}}{\sqrt{i+1}}$, we can conclude that
    \begin{align*}
        \E \left[ \left| \gE_{i} \right| \right] &\ge \frac{n-i}{n-i-1} \frac{\sqrt{i+1}}{10} \ge \frac{\sqrt{i}}{10}.
    \end{align*}
\end{proof}


\lemmortici*
    
\begin{proof}
    From Theorem 1 of \citet{mortici}, for all $n \ge 1$ we have the expression
    \begin{align*}
        n! &= \sqrt{\pi(2n + \alpha_n)} \cdot  \frac{n^n}{e^n}, \quad \text{for some value} \ 0.333  \le \alpha_n \le 0.354.
    \end{align*}
    Since $\frac{i}{2} \ge 1$, we have
    \begin{align*}
        \binom{i}{\frac{i}{2}} &= \frac{i!}{((\frac{i}{2})!)^2} = \frac{i^i}{e^i} \cdot \frac{e^i}{(\frac{i}{2})^i} \frac{\sqrt{\pi(2i + \alpha_{i})}}{\pi(i + \alpha_{i/2})} = 2^i \cdot \frac{\sqrt{(2i + \alpha_{i})}}{\sqrt{\pi}(i + \alpha_{i/2})}.
    \end{align*}
    Then we can compute
    \begin{align*}
        \frac{\binom{i}{\frac{i}{2}}\binom{n-i}{\frac{n-i}{2}}}{\binom{n}{\frac{n}{2}}} &= \frac{\sqrt{(2i + \alpha_{i})}}{\sqrt{\pi}(i + \alpha_{i/2})} \frac{\sqrt{(2(n-i) + \alpha_{n-i})}}{\sqrt{\pi}(n-i + \alpha_{(n-i)/2})} \frac{\sqrt{\pi}(n + \alpha_{n/2})}{\sqrt{(2n + \alpha_{n})}} \\
        &= \frac{1}{\sqrt{\pi}} \frac{\sqrt{(2i + \alpha_{i})}}{(i + \alpha_{i/2})} \frac{\sqrt{(2(n-i) + \alpha_{n-i})}}{(n-i + \alpha_{(n-i)/2})} \frac{(n + \alpha_{n/2})}{\sqrt{(2n + \alpha_{n})}} \\
        &\ge \frac{1}{\sqrt{\pi}} \frac{\sqrt{(2i + 0.33)}}{(i + 0.354)} \frac{\sqrt{(2(n-i) + 0.33)}}{(n-i + 0.354)} \frac{(n + 0.33)}{\sqrt{(2n + 0.354)}} \\
        &\ge \frac{1}{\sqrt{\pi}} \frac{\sqrt{2i}}{1.354 i} \frac{\sqrt{(2(n-i))}}{1.354(n-i)} \frac{n}{\sqrt{2.354n}} = \frac{2}{1.354^2 \sqrt{2.354 \pi}} \frac{\sqrt{n}}{\sqrt{i(n-i)}} \\
        &\ge \frac{2}{1.354^2 \sqrt{2.354 \pi}} \frac{1}{\sqrt{i}} \ge \frac{2}{5 \sqrt{i}},
    \end{align*}
    where $\frac{2}{1.354^2 \sqrt{2.354 \pi}} = 0.401157 \dots \ge 0.4 = \frac{2}{5}$.
\end{proof}
\section{\texorpdfstring{Proof of \cref{thm:xavg}}{Proof of Theorem 3}} \label{app:xavg}

Here we prove \cref{thm:xavg}, restated below for the sake of readability.

\xavg* 

\begin{proof}
    We prove the theorem statement for the same constants $c_1 = 2415$ and $c_2 = 161$ as in \cref{thm:xhat}.
    
    Similarly as in \cref{app:xhat}, we use different objective functions for two step-size regimes and aggregate the functions to obtain the final lower bound. Here we also assume $n$ is even, where we can easily extend to odd $n$'s by the same reasoning as in \cref{app:xhat}.

    Here we prove the following lower bounds for each regime. Here $F_j^*$ is the minimizer of $F_j$ for $j = 1, 2$. 
    \begin{itemize}
        \item If $\eta \in \left( 0, \frac{1}{\mu n K} \right)$, there exists a 1-dimensional objective function $F_1 (x) \in \gF (L, \mu, 0, \nu)$ such that \textsf{SGD-RR} with initialization $x_0^1 = D_0$ (for any $D_0$) satisfies
        \begin{align*}
            \mathbb{E} \left[ F_1 (\hat{x}) - F_1^* \right] &= \Omega \left( \mu D_0^2 \right).
        \end{align*}
        \item If $\eta \in \left[ \frac{1}{\mu n K}, \frac{1}{161 L n} \right]$, there exists a 1-dimensional objective function $F_2 (x) \in \gF (L, \mu, 0, \nu)$ such that \textsf{SGD-RR} with initialization $x_0^1 = \note{\frac{1}{27000} \cdot \frac{\nu}{\mu n^{1/2} K}}$ satisfies
        \begin{align*}
            \mathbb{E} \left[ F_2 (\hat{x}) - F_2^* \right] = \Omega \left( \frac{L \nu^2}{\mu^2 n K^2} \right).
        \end{align*}
    \end{itemize}

    Now we define the $2$-dimensional function $F(x, y) = F_1(x) + F_2(y)$, where $F_1$ and $F_2$ are chosen to satisfy the above lower bounds for $\nu$ replaced by $\frac{\nu}{\sqrt{2}}$. Following the analyses in \cref{app:xhat}, from $F_1, F_2 \in \gF (L, \mu, 0, \frac{\nu}{\sqrt{2}})$ (by construction) we have $F \in \gF (L, \mu, 0, \nu)$.

    Now suppose that we set $D_0 = \frac{\nu}{\mu}$ and initialize at the point $\left( \frac{\nu}{\mu}, \note{\frac{1}{27000} \cdot \frac{\nu}{\mu n^{1/2} K}} \right)$.

    If $K \ge 161 \kappa$, then since $\frac{1}{\mu n K} \le \frac{1}{161 L n}$ we can use the lower bound for $F_2(y)$. The lower bound for this case becomes
    \begin{align*}
        \E \left[ F (\hat{x}, \hat{y}) - F^* \right] &= \Omega \left( \min \left\{\frac{\nu^2}{\mu}, \frac{L \nu^2}{\mu^2 n K^2}\right\} \right) = \Omega \left( \frac{L \nu^2}{\mu^2 n K^2} \right).
    \end{align*}
    If $K < 161 \kappa$, then since $\frac{1}{\mu n K} > \frac{1}{161 L n}$ the latter step-size regime does not exist, i.e., we \textit{cannot} use the lower bound for $F_2(y)$, and the lower bound for this case becomes
    \begin{align*}
        \E \left[ F (\hat{x}, \hat{y}) - F^* \right] &= \Omega \left( \frac{\nu^2}{\mu} \right),
    \end{align*}
    which completes the proof.
\end{proof}

\subsection{\texorpdfstring{Lower Bound for $\eta \in \left( 0, \frac{1}{\mu n K} \right)$}{Lower Bound for η ∈ (0, 1/μnK)}}
Here we show that there exists $F_1 (x) \in \gF (L, \mu, 0, \nu)$ such that \textsf{SGD-RR} with $x_0^1 = D_0$ satisfies
\begin{align*}
    \mathbb{E} \left[ F_1 (\hat{x}) - F_1^* \right] &= \Omega \left( \mu D_0^2 \right).
\end{align*}

\begin{proof}
    We define the same $F_1 (x) \in \gF (\mu, \mu, 0, 0)$ as in \cref{subsec:a1} by the following components.
    \begin{align*}
        f_{i} (x) &= F_1(x) = \frac{\mu x^2}{2}
    \end{align*}
    Note that $\gF (\mu, \mu, 0, 0) \subseteq \gF (L, \mu, 0, \nu)$ and $F_1^* = 0$ at $x^* = 0$ by definition.
    
    We start from \cref{eq:smalletaineq} in \cref{app:xhat}, which gives
    \begin{align*}
        x_{0}^{k+1} = (1 - \eta \mu)^{nk} \cdot D_0 \ge \left( 1 - \frac{1}{nK} \right)^{nK} \cdot D_0 \ge \frac{D_0}{4}
    \end{align*}
    for all $k$. Then for any weighted average $\hat{x}$ we have
    \begin{align*}
        \hat{x} &= \frac{\sum_{k=1}^{K+1} \alpha_k x_0^k}{\sum_{k=1}^{K+1} \alpha_k} \ge \frac{\sum_{k=1}^{K+1} \alpha_k \frac{D_0}{4}}{\sum_{k=1}^{K+1} \alpha_k} = \frac{D_0}{4}
    \end{align*}
    and therefore
    \begin{align*}
        F_1 (\hat{x}) &\ge \frac{\mu}{2} \left( \frac{D_0}{4} \right)^2 = \frac{\mu D_0^2}{32},
    \end{align*}
    which concludes that $\mathbb{E} \left[ F_1 (\hat{x}) - F_1^* \right] = \mathbb{E} \left[ F_1 (\hat{x}) \right] = F_1 (\hat{x}) = \Omega \left( \mu D_0^2 \right)$.
\end{proof}

\subsection{\texorpdfstring{Lower Bound for $\eta \in \left[ \frac{1}{\mu n K}, \frac{1}{161 L n} \right]$}{Lower Bound for η ∈ [1/μnK, 1/161Ln]}}

Here we show that there exists $F_2 (x) \in \gF (L, \mu, 0, \nu)$ such that \textsf{SGD-RR} with $x_0^1 = \note{\frac{1}{27000} \cdot \frac{\nu}{\mu n^{1/2} K}}$ satisfies
\begin{align*}
    \mathbb{E} \left[ F_2 (\hat{x}) - F_2^* \right] = \Omega \left( \frac{L \nu^2}{\mu^2 n K^2} \right).
\end{align*}

\begin{proof}
    We define the $F_2 (x) \in \gF (L, \mu, 0, \nu)$ as in \cref{subsec:a2} by the following components:
    \begin{align*}
        f_{i} (x) &= \begin{cases}
            \left( L \mathbbm{1}_{x < 0} + \mu_0 \mathbbm{1}_{x \ge 0} \right) \frac{x^2}{2} + \nu x & \text{if} \ i \le n/2, \\
            \left( L \mathbbm{1}_{x < 0} + \mu_0 \mathbbm{1}_{x \ge 0} \right) \frac{x^2}{2} - \nu x & \text{otherwise,}
        \end{cases}
    \end{align*}
    where we assume $\mu_0 \le \frac{L}{2415}$ and later choose $\mu_0 = \frac{L}{2415}$. With this construction, the finite-sum objective becomes
    \begin{align*}
        F_2 (x) &= \frac{1}{n} \sum_{i=1}^{n} f_{i}(x) = \left( L \mathbbm{1}_{x < 0} + \mu_0 \mathbbm{1}_{x \ge 0} \right) \frac{x^2}{2}.
    \end{align*}
    Note that $F_2^* = 0$ at $x^* = 0$ by definition, and that $\mu_0$ is different from $\mu$. While $F_2 (x) \in \gF(L, \mu_0, 0, \nu)$ by construction, we can ensure that $\gF(L, \mu_0, 0, \nu) \subset \gF(L, \mu, 0, \nu)$ because the assumption $\kappa \ge 2415$ implies $\mu_0 = \frac{L}{2415} \ge \mu$.

    We start from \cref{eq:midetaineq} in \cref{app:xhat}, which gives
    \begin{align*}
        \mathbb{E} \left[ x_{0}^{k+1} \right] = \mathbb{E} \left[ x_{n}^{k} \right] &\ge \note{\frac{1}{27000} \cdot \frac{\nu}{\mu n^{1/2} K}}
    \end{align*}
    for all $k$. If we set $x_{0} \ge \note{\frac{1}{27000} \cdot \frac{\nu}{\mu n^{1/2} K}}$ then all end-of-epoch iterates must maintain $\mathbb{E} [x_{n}^{k}] \ge \note{\frac{1}{27000} \cdot \frac{\nu}{\mu n^{1/2} K}}$. This implies that for any weighted average $\hat{x}$ we have
    \begin{align*}
        \mathbb{E} [\hat{x}] &= \E \left[ \frac{\sum_{k=1}^{K+1} \alpha_k x_0^k}{\sum_{k=1}^{K+1} \alpha_k} \right] = \frac{\sum_{k=1}^{K+1} \alpha_k \E \left[ x_0^k \right]}{\sum_{k=1}^{K+1} \alpha_k} \ge \frac{\sum_{k=1}^{K+1} \alpha_k \left( \note{\frac{1}{27000} \cdot \frac{\nu}{\mu n^{1/2} K}} \right)}{\sum_{k=1}^{K+1} \alpha_k} = \note{\frac{1}{27000} \cdot \frac{\nu}{\mu n^{1/2} K}}.
    \end{align*}
    Finally, by Jensen's inequality, we have
    \begin{align*}
        \mathbb{E} \left[ F_2 ( \hat{x} ) - F_2^* \right] &= \mathbb{E} \left[ F_2 ( \hat{x} ) \right] \\
        &\ge \frac{L}{2 \cdot 2415} \mathbb{E} \left[ \hat{x}^2 \right] \\
        &\ge \frac{L}{4830} \mathbb{E} \left[ \hat{x} \right]^2 \\
        & \ge \frac{L}{4830} \cdot \left( \frac{1}{27000} \cdot \frac{\nu}{\mu n^{1/2} K} \right)^2 = \Omega \left( \frac{L \nu^2}{\mu^2 n K^2} \right).
    \end{align*}
\end{proof}

\subsection{\texorpdfstring{Proof of \cref{cor:cvxcase}}{Proof of Corollary 5}}
\label{subsec:b2}
Here we prove \cref{cor:cvxcase}, restated below for the sake of readability.
\corcvxcase*
\begin{proof}
    Suppose that $c_1$, $c_2$ are the same constants as in \cref{thm:xavg}, and let $c_3 = \max \{ c_1^{3/2}, c_2^3 \}$. We use results of \cref{thm:xavg} in \cref{app:xavg}, but we view the initialization $D_0$ for the first coordinate as a separate constant instead of setting to a certain value like $D_0 = \frac{\nu}{\mu}$.

    Let us choose $\mu = \frac{L^{1/3} \nu^{2/3}}{D^{2/3} n^{1/3} K^{2/3}}$. First, we can check that our choice of $\mu$ and $K \note{ \ge c_3 \frac{\nu}{\mu D n^{1/2}}} \ge c_3 \frac{\nu}{L D n^{1/2}}$ implies 
    \begin{align*}
        \kappa = \frac{L^{2/3} D^{2/3} n^{1/3} K^{2/3}}{\nu^{2/3}} \ge c_3^{2/3} \ge c_1,
    \end{align*}
    and $K \ge c_3 \frac{L^2 D^2 n}{\nu^2}$ implies 
    \begin{align*}
        K \ge c_3^{1/3} \cdot \frac{L^{2/3} D^{2/3} n^{1/3} K^{2/3}}{\nu^{2/3}} = c_3^{1/3} \kappa \ge c_2 \kappa.
    \end{align*}
    Since $\kappa \ge c_1$ and $K \ge c_2 \kappa$, by \cref{thm:xavg} there exists some $F_{\mu} \in \gF(L, \mu, 0, \nu)$ satisfying
    \begin{align*}
        \E \left[ F_{\mu} (\hat{\vx}) - F_{\mu}^* \right] = \Omega \left( \min \left\{\mu D_0^2, \frac{L \nu^2}{\mu^2 n K^2} \right\} \right).
    \end{align*}
    \note{for initialization $(D_0, \frac{1}{27000} \cdot \frac{\nu}{\mu n^{1/2} K})$. Note that we have $D^2 = D_0^2 + \frac{1}{27000^2} \cdot \frac{\nu^2}{\mu^2 n K^2}$. 
    Since $K \ge c_3 \frac{\nu}{\mu D n^{1/2}}$, or equivalently $D \ge c_3 \frac{\nu}{\mu n^{1/2} K}$, we have $D_0 = \Omega (\frac{\nu}{\mu n^{1/2} K})$ and therefore}
    \begin{align*}
        \note{\E \left[ F_{\mu} (\hat{\vx}) - F_{\mu}^* \right] = \Omega \left( \min \left\{\mu D^2, \frac{L \nu^2}{\mu^2 n K^2} \right\} \right).}
    \end{align*}
    Letting $F = F_{\mu}$ and plugging in $\mu = \frac{L^{1/3} \nu^{2/3}}{D^{2/3} n^{1/3} K^{2/3}}$, we can conclude that
    \begin{align*}
        \E \left[ F(\hat{\vx}) - F^* \right] = \Omega \left( \frac{L^{1/3} \nu^{2/3} D^{4/3}}{n^{1/3} K^{2/3}} \right).
    \end{align*}
    Finally we can check that $F \in \gF(L, \mu, 0, \nu) \subseteq \gF(L, 0, 0, \nu)$ for all $\mu > 0$, which completes the proof.
\end{proof}
\section{\texorpdfstring{Proof of \cref{thm:tailub}}{Proof of Proposition 4}} \label{app:sgdrrub}

Here we prove \cref{thm:tailub}, restated below for the sake of readability.
\tailub*

\begin{proof}
    We start from the following lemma from \citet{mish20}.
    \begin{lemma}[\citet{mish20}, Lemma~3]
        Assume that functions $f_1$, \dots, $f_n$ are convex and $F$, $f_1$, \dots, $f_n$ are $L$-smooth. Suppose that $\sigma_*^2$ is defined as in \cref{eq:sigmastar}. Then \textsf{SGD-RR} with step size $\eta \le \frac{1}{\sqrt{2} L n}$ satisfies
        \begin{align*}
            \E \left[ \| \vx_0^{k+1} - \vx^* \|^2 \right] &\le \E \left[ \| \vx_0^{k} - \vx^* \|^2 \right] - 2 \eta n \E \left[ F (\vx_0^{k+1}) - F(\vx^*)  \right] + \frac{1}{2} \eta^3 L \sigma_*^2 n^2.
        \end{align*}
    \end{lemma}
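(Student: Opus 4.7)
The plan is to unroll one epoch of SGD-RR, expand $\|\vx_0^{k+1}-\vx^*\|^2$, and use a pivot identity so that convexity of $F$ at the \emph{end-of-epoch} iterate $\vx_0^{k+1}$ produces the target term $-2\eta n(F(\vx_0^{k+1})-F^*)$. The remaining ``shuffling noise'' is then controlled using $L$-smoothness of each $f_i$ and the step-size condition $\eta \le 1/(\sqrt{2}Ln)$.

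Concretely, let $\boldsymbol{G}_k := \sum_{i=1}^n \nabla f_{\sigma_k(i)}(\vx_{i-1}^k)$, so $\vx_0^{k+1} = \vx_0^k - \eta\boldsymbol{G}_k$. Using the identity $\vx_0^k - \vx^* = (\vx_0^{k+1}-\vx^*) + \eta\boldsymbol{G}_k$, the expansion of the squared norm becomes $\|\vx_0^{k+1}-\vx^*\|^2 = \|\vx_0^k-\vx^*\|^2 - 2\eta\langle \boldsymbol{G}_k, \vx_0^{k+1}-\vx^*\rangle - \eta^2\|\boldsymbol{G}_k\|^2$; the ``pivot'' onto $\vx_0^{k+1}$ is essential because the target bound involves $F(\vx_0^{k+1})$, not $F(\vx_0^k)$. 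Decomposing $\boldsymbol{G}_k = n\nabla F(\vx_0^{k+1}) + \boldsymbol{E}_k$ with $\boldsymbol{E}_k := \sum_i [\nabla f_{\sigma_k(i)}(\vx_{i-1}^k) - \nabla f_{\sigma_k(i)}(\vx_0^{k+1})]$, convexity at $\vx_0^{k+1}$ gives $\langle n\nabla F(\vx_0^{k+1}), \vx_0^{k+1}-\vx^*\rangle \ge n(F(\vx_0^{k+1})-F^*)$, producing the desired $-2\eta n(F(\vx_0^{k+1})-F^*)$ term; the cross terms involving $\boldsymbol{E}_k$ are then reorganized via the algebraic identity $\vx_0^{k+1}-\vx^* + \eta n\nabla F(\vx_0^{k+1}) = \vx_0^k-\vx^* - \eta\boldsymbol{E}_k$ and dominated by the negative quadratic $-\eta^2 n^2\|\nabla F(\vx_0^{k+1})\|^2$ extracted from expanding $\|\boldsymbol{G}_k\|^2$.

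Bounding $\|\boldsymbol{E}_k\|^2$ is the crux. By $L$-smoothness, $\|\boldsymbol{E}_k\|^2 \le nL^2\sum_i \|\vx_{i-1}^k-\vx_0^{k+1}\|^2$, and since $\vx_0^{k+1}-\vx_{i-1}^k = -\eta\sum_{j\ge i}\vg_j$ with $\vg_j := \nabla f_{\sigma_k(j)}(\vx_{j-1}^k)$, the ensuing estimate is naturally recursive in $\sum_j\|\vg_j\|^2$. Splitting $\|\vg_j\|^2 \le 2\|\vg_j - \nabla f_{\sigma_k(j)}(\vx^*)\|^2 + 2\|\nabla f_{\sigma_k(j)}(\vx^*)\|^2$, co-coercivity of the convex smooth component $f_{\sigma_k(j)}$ converts the first piece into a Bregman divergence that folds back into the $-2\eta n[F(\vx_0^{k+1})-F^*]$ term, while the second piece sums over the permutation to exactly $2n\sigma_*^2$. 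The step-size hypothesis $\eta \le 1/(\sqrt{2}Ln)$—equivalently $2\eta^2 L^2 n^2 \le 1$—is the sharp threshold at which this self-referential estimate closes, leaving the clean residual $\tfrac{1}{2}\eta^3 L n^2\sigma_*^2$ after taking expectation.

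The main obstacle is this last bookkeeping step. A naïve Cauchy--Schwarz on $\|\sum_{j\ge i}\vg_j\|^2$ introduces a spurious factor of $n$, so one must use co-coercivity rather than the crude Lipschitz bound $L\|\vx_{j-1}^k-\vx^*\|$ to relate $\|\vg_j - \nabla f_{\sigma_k(j)}(\vx^*)\|^2$ to a component Bregman divergence, and then sum carefully so that the $\langle \nabla f_{\sigma_k(j)}(\vx^*), \vx_{j-1}^k-\vx^*\rangle$ cross terms either vanish in expectation over the random permutation (via $\sum_j \nabla f_{\sigma_k(j)}(\vx^*) = n\nabla F(\vx^*) = 0$) or are absorbed by the negative quadratic term already in hand. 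Taking the final expectation over the permutation and prior epoch randomness then delivers the claimed per-epoch inequality.
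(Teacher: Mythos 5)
The paper does not prove this statement itself—it simply cites Mishchenko et al.\ (2020), Lemma~3—so there is no internal proof to compare against. Your sketch recovers the right skeleton: the pivot onto $\vx_0^{k+1}$ (essential, since the bound involves $F(\vx_0^{k+1})$ rather than $F(\vx_0^k)$), the decomposition $\boldsymbol{G}_k = n\nabla F(\vx_0^{k+1}) + \boldsymbol{E}_k$, convexity of $F$ at the end-of-epoch point, and co-coercivity to convert gradient differences into component Bregman divergences. You also correctly identify the crux: a naive double Cauchy--Schwarz over-counts by a factor of $n$.

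However, the fix you propose does not address that over-counting. Switching from the Lipschitz bound to co-coercivity tightens the \emph{per-term} estimate of $\|\vg_j - \nabla f_{\sigma_k(j)}(\vx^*)\|^2$, but the spurious $n$ comes from the outer $\|\sum_{j\ge i}\vg_j\|^2 \le (n-i+1)\sum_{j\ge i}\|\vg_j\|^2$, which is untouched by the Bregman upgrade. The ingredient that actually closes the gap in Mishchenko et al.\ is the shadow sequence $\vx_*^i := \vx^* - \eta\sum_{j\le i}\nabla f_{\sigma_k(j)}(\vx^*)$ together with the without-replacement variance formula
\[
\E\Bigl[\bignorm{\textstyle\sum_{j\le i}\nabla f_{\sigma_k(j)}(\vx^*)}^2\Bigr]=\frac{i(n-i)}{n-1}\,\sigma_*^2,
\]
whose sum over $i$ is $O(n^2\sigma_*^2)$ rather than the $O(n^3\sigma_*^2)$ produced by Cauchy--Schwarz; this is exactly the factor of $n$ you need to shave, and it is inherently a statement about the expectation over the random permutation, not a pointwise algebraic cancellation. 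Relatedly, your claim that the cross terms $\sum_j\langle\nabla f_{\sigma_k(j)}(\vx^*), \vx_{j-1}^k - \vx^*\rangle$ ``vanish in expectation via $\sum_j\nabla f_{\sigma_k(j)}(\vx^*)=0$'' is too optimistic: writing $\vx_{j-1}^k - \vx^* = (\vx_0^k - \vx^*) + (\vx_{j-1}^k - \vx_0^k)$, only the first piece cancels; the drift piece $\sum_j\langle\nabla f_{\sigma_k(j)}(\vx^*),\, \vx_{j-1}^k - \vx_0^k\rangle$ does not, and controlling it in expectation is precisely where the without-replacement variance bound enters. Without that probabilistic input, the bookkeeping cannot close under the stated hypothesis $\eta\le \frac{1}{\sqrt{2}Ln}$.
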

    Note that the assumptions in \cref{def:fclass} of $\gF (L, \mu, 0, \nu)$ includes all the required conditions above, \textit{plus} an additional condition that $F$ is $\mu$-strongly convex. Also, note that the $\sigma_*^2$ term from the original paper can be replaced with $\nu^2$, which is safe by the same reasoning as what we mentioned in \cref{prop:strcvxub}. Hence we can use the following inequality:
    \begin{align*}
        \E \left[ \| \vx_0^{k+1} - \vx^* \|^2 \right] &\le \E \left[ \| \vx_0^{k} - \vx^* \|^2 \right] - 2 \eta n \E \left[ F (\vx_0^{k+1}) - F(\vx^*)  \right] + \frac{1}{2} \eta^3 L \nu^2 n^2.
    \end{align*}
    From strong convexity, for all $k$ we have
    \begin{align}
        \E \left[ F (\vx_0^{k+1}) - F (\vx^*)  \right] &\ge \frac{\mu}{2} \E \left[ \| \vx_0^{k+1} - \vx^* \|^2 \right]. \label{eq:ubstrcvx}
    \end{align}
    Now we apply (\ref{eq:ubstrcvx}) to \textit{only exactly half} of the term involving $\E \left[ F (\vx_0^{k+1}) - F (\vx^*)  \right]$ to obtain
    \begin{align*}
        \left( 1 + \frac{\eta \mu n}{2} \right) \E \left[ \| \vx_0^{k+1} - \vx^* \|^2 \right] &\le \E \left[ \| \vx_0^{k} - \vx^* \|^2 \right] - \eta n \E \left[ F (\vx_0^{k+1}) -F (\vx^*)  \right] + \frac{1}{2} \eta^3 L \nu^2 n^2.
    \end{align*}
    Since $0 \le \eta \mu n \le \eta L n \le \frac{1}{\sqrt{2}} \le 1$ implies $\frac{1}{1 + \frac{\eta \mu n}{2}} \le 1 - \frac{\eta \mu n}{3}$ and $\frac{2}{3} \le \frac{1}{1 + \frac{\eta \mu n}{2}} \le 1$, we obtain
    \begin{align}
        \E \left[ \| \vx_0^{k+1} - \vx^* \|^2 \right] &\le \frac{1}{1 + \frac{\eta \mu n}{2}} \left( \E \left[ \| \vx_0^{k} - \vx^* \|^2 \right] - \eta n \E \left[ F (\vx_0^{k+1}) -F (\vx^*)  \right] + \frac{1}{2} \eta^3 L \nu^2 n^2 \right) \notag \\
        &\le \left( 1 - \frac{\eta \mu n}{3} \right) \E \left[ \| \vx_0^{k} - \vx^* \|^2 \right] - \frac{2}{3} \eta n \E \left[ F (\vx_0^{k+1}) -F (\vx^*)  \right] + \frac{1}{2} \eta^3 L \nu^2 n^2. \label{eq:apply}
    \end{align}
    
    We derive two different types of weaker inequalities from (\ref{eq:apply}), as:
    \begin{align}
        \E \left[ \| \vx_0^{k+1} - \vx^* \|^2 \right] &\le \left( 1 - \frac{\eta \mu n}{3} \right) \E \left[ \| \vx_0^{k} - \vx^* \|^2 \right] + \frac{1}{2} \eta^3 L \nu^2 n^2, \label{eq:firstone} \\
        \E \left[ \| \vx_0^{k+1} - \vx^* \|^2 \right] &\le \E \left[ \| \vx_0^{k} - \vx^* \|^2 \right] - \frac{2}{3} \eta n \E \left[ F (\vx_0^{k+1}) - F (\vx^*)  \right] + \frac{1}{2} \eta^3 L \nu^2 n^2. \label{eq:secondone}
    \end{align}
    From (\ref{eq:firstone}), we can unroll the inequality to obtain
    \begin{align}
        \E \left[ \| \vx_0^{k+1} - \vx^* \|^2 \right] &\le \left( 1 - \frac{\eta \mu n}{3} \right)^k \E \left[ \| \vx_0^{1} - \vx^* \|^2 \right] + \frac{1}{2} \eta^3 L \nu^2 n^2 \sum_{j=0}^{K-1} \left( 1 - \frac{\eta \mu n}{3} \right)^j \notag \\
        &\le \left( 1 - \frac{\eta \mu n}{3} \right)^k \E \left[ \| \vx_0^{1} - \vx^* \|^2 \right] + \frac{1}{2} \eta^3 L \nu^2  n^2\sum_{j=0}^{\infty} \left( 1 - \frac{\eta \mu n}{3} \right)^j \notag \\
        &= \left( 1 - \frac{\eta \mu n}{3} \right)^k \E \left[ \| \vx_0^{1} - \vx^* \|^2 \right] + \frac{1}{2} \eta^3 L \nu^2 n^2 \frac{1}{1 - \left( 1 - \frac{\eta \mu n}{3} \right)} \notag \\
        &= \left( 1 - \frac{\eta \mu n}{3} \right)^k D^2 + \frac{3}{2} \cdot \frac{\eta^2 L \nu^2 n}{\mu} &(D := \| \vx_0^{1} - \vx^* \|) \notag \\
        &\le e^{- \frac{1}{3} \eta \mu n k} D^2 + \frac{3}{2} \cdot \frac{\eta^2 L \nu^2 n}{\mu} \label{eq:halfx}
    \end{align}
    which holds for all $k$.
    
    From (\ref{eq:secondone}), we can rearrange terms as
    \begin{align*}
        \eta n \E \left[ F (\vx_0^{k+1}) - F (\vx^*)  \right] &\le \frac{3}{2} \E \left[ \| \vx_0^{k} - \vx^* \|^2 \right] - \frac{3}{2} \E \left[ \| \vx_0^{k+1} - \vx^* \|^2 \right] + \frac{3}{4} \eta^3 L \nu^2 n^2
    \end{align*}
    and average the inequality from $k = \lceil \frac{K}{2} \rceil$ to $K$ to obtain
    \begin{align}
        &\phantom{{}\le{}} \frac{\eta n}{K - \lceil \frac{K}{2} \rceil + 1} \sum_{k = \lceil \frac{K}{2} \rceil}^{K} \E \left[ F (\vx_0^{k+1}) - F (\vx^*)  \right] \notag \\
        &\le \frac{1}{K - \lceil \frac{K}{2} \rceil + 1} \left( \frac{3}{2} \E \left[ \| \vx_0^{\lceil \frac{K}{2} \rceil} - \vx^* \|^2 \right] - \frac{3}{2} \E \left[ \| \vx_0^{K+1} - \vx^* \|^2 \right] \right) + \frac{3}{4} \eta^3 L \nu^2 n^2 \notag \\
        &\le \frac{3/2}{K - \lceil \frac{K}{2} \rceil + 1} \E \left[ \| \vx_0^{\lceil \frac{K}{2} \rceil} - \vx^* \|^2 \right] + \frac{3}{4} \eta^3 L \nu^2 n^2. \label{eq:iphone}
    \end{align}
    Therefore we have
    \begin{align}
        \mathbb{E} \left[ F (\hat{\vx}_{\text{tail}}) - F^* \right] &= \mathbb{E} \left[ F \left( \frac{1}{K - \lceil \frac{K}{2} \rceil + 1} \sum_{k=\lceil \frac{K}{2} \rceil}^{K} \vx_n^{k} \right) - F^* \right] \notag \\
        &= \mathbb{E} \left[ F \left( \frac{1}{K - \lceil \frac{K}{2} \rceil + 1} \sum_{k=\lceil \frac{K}{2} \rceil}^{K} \vx_0^{k+1} \right) - F^* \right] \notag \\
        &\le \frac{1}{K - \lceil \frac{K}{2} \rceil + 1} \sum_{k=\lceil \frac{K}{2} \rceil}^{K} \E \left[ F (\vx_0^{k+1}) - F (\vx^*)  \right] &(\because \text{Jensen's inequality}) \notag \\
        &\le \frac{3/2}{K - \lceil \frac{K}{2} \rceil + 1} \cdot \frac{1}{\eta n} \E \left[ \| \vx_0^{\lceil \frac{K}{2} \rceil} - \vx^* \|^2 \right] + \frac{3}{4} \eta^2 L \nu^2 n &(\because \text{By (\ref{eq:iphone})}) \notag \\
        &\le \frac{3}{\eta n K} \E \left[ \| \vx_0^{\lceil \frac{K}{2} \rceil} - \vx^* \|^2 \right] + \frac{3}{4} \eta^2 L \nu^2 n &\left( \because \left\lceil \tfrac{K}{2} \right\rceil \le \tfrac{K}{2} + 1 \right) \notag \\
        &\le \frac{3}{\eta n K} \left( e^{- \frac{1}{3} \eta \mu n \left( \lceil \frac{K}{2} \rceil - 1 \right)} D^2 + \frac{3}{2} \cdot \frac{\eta^2 L \nu^2 n}{\mu} \right) + \frac{1}{2} \eta^2 L \nu^2 n &\left( \because \text{By (\ref{eq:halfx}), for} \ k = \left\lceil \tfrac{K}{2} \right\rceil \right) \notag \\
        &= \frac{3 D^2}{\eta n K} e^{- \frac{1}{3} \eta \mu n \left( \lceil \frac{K}{2} \rceil - 1 \right)} + \frac{9}{2} \cdot \frac{\eta L \nu^2}{\mu K} + \frac{1}{2} \eta^2 L \nu^2 n \notag \\
        &\le \frac{3D^2}{\eta n K} e^{- \frac{1}{9} \eta \mu n K} + \frac{9}{2} \cdot \frac{\eta L \nu^2}{\mu K} + \frac{1}{2} \eta^2 L \nu^2 n. \label{eq:ublin}
    \end{align}
    Note that in the last inequality, $K \ge 5$ implies $\lceil \frac{K}{2} \rceil - 1 \ge \frac{K}{3}$.

    Now we will divide into four possible cases according to how we choose $\eta$, and then derive that desired upper bound holds in each case from \cref{eq:ublin}. Note that we have $\max \left\{ 1, \log \left( \frac{\mu^3 n D^2 K^2}{L \nu^2} \right) \right\} = 1$ if and only if $K \le \frac{e^{1/2} L^{1/2} \nu}{\mu^{3/2} n^{1/2} D}$, which is again equivalent to $\mu D^2 \le \frac{e L \nu^2}{\mu^2 n K^2}$.

    \paragraph{Case (a)} Suppose that $\eta = \frac{1}{\sqrt{2}Ln} \le \frac{9}{\mu n K} \log \left( \frac{\mu^3 n D^2 K^2}{L \nu^2} \right)$, where $\max \left\{ 1, \log \left( \frac{\mu^3 n D^2 K^2}{L \nu^2} \right) \right\} = \log \left( \frac{\mu^3 n D^2 K^2}{L \nu^2} \right)$.

    From \cref{eq:ublin} we have
    \begin{align*}
        \mathbb{E} \left[ F (\hat{\vx}_{\text{tail}}) - F^* \right] &\le \frac{3D^2}{\eta n K} e^{- \frac{1}{9} \eta \mu n K} + \frac{9}{2} \cdot \frac{\eta L \nu^2}{\mu K} + \frac{1}{2} \eta^2 L \nu^2 n \\
        &\le \frac{3 \sqrt{2} L D^2}{K} e^{- \frac{1}{9 \sqrt{2}} \frac{K}{L / \mu}} + \frac{81}{2} \frac{L \nu^2}{\mu^2 n K^2} \log \left( \frac{\mu^3 n D^2 K^2}{L \nu^2} \right) + \frac{81}{2} \frac{L \nu^2}{\mu^2 n K^2} \log^2 \left( \frac{\mu^3 n D^2 K^2}{L \nu^2} \right) \\
        &= \tilde{\gO} \left( \frac{L D^2}{K} e^{- \frac{1}{9 \sqrt{2}} \frac{K}{L / \mu}} + \frac{L \nu^2}{\mu^2 n K^2} \right).
    \end{align*}

    \paragraph{Case (b)} Suppose that $\eta = \frac{1}{\sqrt{2}Ln} \le \frac{9}{\mu n K}$, where $\max \left\{ 1, \log \left( \frac{\mu^3 n D^2 K^2}{L \nu^2} \right) \right\} = 1$.

    From \cref{eq:ublin} we have
    \begin{align*}
        \mathbb{E} \left[ F (\hat{\vx}_{\text{tail}}) - F^* \right] &\le \frac{3D^2}{\eta n K} e^{- \frac{1}{9} \eta \mu n K} + \frac{9}{2} \cdot \frac{\eta L \nu^2}{\mu K} + \frac{1}{2} \eta^2 L \nu^2 n \\
        &\le \frac{3 \sqrt{2} L D^2}{K} e^{- \frac{1}{9 \sqrt{2}} \frac{K}{L / \mu}} + \frac{81}{2} \frac{L \nu^2}{\mu^2 n K^2} + \frac{81}{2} \frac{L \nu^2}{\mu^2 n K^2} \\
        &= \tilde{\gO} \left( \frac{L D^2}{K} e^{- \frac{1}{9 \sqrt{2}} \frac{K}{L / \mu}} + \frac{L \nu^2}{\mu^2 n K^2} \right).
    \end{align*}

    \paragraph{Case (c)} Suppose that $\eta = \frac{9}{\mu n K} \log \left( \frac{\mu^3 n D^2 K^2}{L \nu^2} \right) \le \frac{1}{\sqrt{2}Ln}$, where $\max \left\{ 1, \log \left( \frac{\mu^3 n D^2 K^2}{L \nu^2} \right) \right\} = \log \left( \frac{\mu^3 n D^2 K^2}{L \nu^2} \right)$.

    From \cref{eq:ublin} we have
    \begin{align*}
        \mathbb{E} \left[ F (\hat{\vx}_{\text{tail}}) - F^* \right] &\le \frac{3D^2}{\eta n K} e^{- \frac{1}{9} \eta \mu n K} + \frac{9}{2} \cdot \frac{\eta L \nu^2}{\mu K} + \frac{1}{2} \eta^2 L \nu^2 n \\
        &= \frac{\mu D^2}{3 \log \left( \frac{\mu^3 n D^2 K^2}{L \nu^2} \right)} \cdot \frac{L \nu^2}{\mu^3 n D^2 K^2} + \frac{81}{2} \frac{L \nu^2}{\mu^2 n K^2} \log \left( \frac{\mu^3 n D^2 K^2}{L \nu^2} \right) + \frac{81}{2} \frac{L \nu^2}{\mu^2 n K^2} \log^2 \left( \frac{\mu^3 n D^2 K^2}{L \nu^2} \right) \\
        &= \frac{L \nu^2}{\mu^2 n K^2} \left( \frac{1}{3 \log \left( \frac{\mu^3 n D^2 K^2}{L \nu^2} \right)} + \frac{81}{2} \log \left( \frac{\mu^3 n D^2 K^2}{L \nu^2} \right) + \frac{81}{2} \log^2 \left( \frac{\mu^3 n D^2 K^2}{L \nu^2} \right) \right) \\
        &= \tilde{\gO} \left( \frac{L \nu^2}{\mu^2 n K^2} \right).
    \end{align*}

    \paragraph{Case (d)} Suppose that $\eta = \frac{9}{\mu n K} \le \frac{1}{\sqrt{2}Ln}$, where $\max \left\{ 1, \log \left( \frac{\mu^3 n D^2 K^2}{L \nu^2} \right) \right\} = 1$.

    From \cref{eq:ublin} we have
    \begin{align*}
        \mathbb{E} \left[ F (\hat{\vx}_{\text{tail}}) - F^* \right] &\le \frac{3D^2}{\eta n K} e^{- \frac{1}{9} \eta \mu n K} + \frac{9}{2} \cdot \frac{\eta L \nu^2}{\mu K} + \frac{1}{2} \eta^2 L \nu^2 n \\
        &= \frac{\mu D^2}{3e} + \frac{81}{2} \frac{L \nu^2}{\mu^2 n K^2} + \frac{81}{2} \frac{L \nu^2}{\mu^2 n K^2} \\
        &\le \frac{L \nu^2}{\mu^2 n K^2} \cdot \left( \frac{1}{3e} + 81 \right) \\
        &= \tilde{\gO} \left( \frac{L \nu^2}{\mu^2 n K^2} \right).
    \end{align*}
    Therefore \cref{thm:tailub} holds for all cases, which completes the proof.
\end{proof}
\section{\texorpdfstring{Proof of \thmref{thm:grabLBa}}{Proof of Theorem 7}} \label{sec:grabLBa}
Here we prove \cref{thm:grabLBa}, restated below for the sake of readability.
\grabLBa*
\begin{proof}
Similarly as in \cref{app:xhat}, we define objective functions for three step-size regimes and aggregate the functions to obtain the final lower bound.
Here we also assume $n$ is even, where we can easily extend to odd $n$'s by the same reasoning as in \cref{app:xhat}.

We will prove the following lower bounds for each regime. Here $F_j^*$ is the minimizer of $F_j$ for $j = 1, 2, 3$. 

\begin{itemize}
    \item If $\eta \in \left( 0, \frac{1}{\mu n K} \right)$, there exists a $1$-dimensional objective function $F_1(x) \in \gF(L,\mu,0,\nu)$ such that any permutation-based SGD with initialization $x_0^1 = \frac{L^{1/2} \nu}{\mu^{3/2} n K}$ satisfies
    \begin{align*}
        F_1(\hat{x}) - F_1^* = \Omega \left( \frac{L \nu^2}{\mu^2 n^2 K^2} \right).
    \end{align*}
    \item If $\eta \in \left[ \frac{1}{\mu n K}, \frac{1}{L} \right)$, there exists a $2$-dimensional objective function $F_2(y, z) \in \gF \bigopen{L,\mu,0,\sqrt{2}\nu}$ such that any permutation-based SGD with initialization $(y_0^1, z_0^1) = \bigopen{\frac{\nu}{2L}, 0}$ satisfies
    \begin{align*}
        F_2(\hat{y}, \hat{z}) - F_2^* = \Omega \left( \frac{L \nu^2}{\mu^2 n^2 K^2} \right).
    \end{align*}
    \note{Note that $\hat{y}$ and $\hat{z}$ share same weights $\bigset{\alpha_k}_{k=1}^{K+1}$.}
    \item If $\eta \ge \frac{1}{L}$, there exists a $1$-dimensional objective function $F_3(w) \in \gF(2L,\mu,0,\nu)$ such that any permutation-based SGD with initialization $w_0^1 = \frac{\nu}{\mu n K}$ satisfies
    \begin{align*}
        F_3(\hat{w}) - F_3^* = \Omega \left( \frac{L \nu^2}{\mu^2 n^2 K^2} \right).
    \end{align*}
\end{itemize}

Now we define the 4-dimensional function $F(\vx) = F(x, y, z, w) = F_1(x) + F_2(y,z) + F_3(w)$, where $F_1$, $F_2$, and $F_3$ are chosen to satisfy the above lower bounds.

Following the analyses in \cref{app:xhat}, we have $F \in \gF (2L, \mu, 0, 2\nu)$, which allows us to directly apply the convergence rates in the lower bounds of $F_1$, $F_2$ and $F_3$ to the aggregated function $F$. 

When $K \le \frac{\kappa}{n}$, the second step-size regime becomes invalid. In this case, we define $F(x, y, z, w) = F_1(x) + F_1(y) + F_1(z) + F_3(w) \in \gF (2L, \mu, 0, 2 \nu)$. The final lower bound is then the minimum of the lower bounds obtained for the remaining two regimes, \note{which is $\Omega \bigopen{\frac{L \nu^2}{\mu^2 n^2 K^2}}$}.

Note that we assumed $\kappa \ge 4$ and our constructed function is $2L$-\textit{smooth} and $\mu$-\textit{strongly convex}. Thus, $\kappa \ge 4$ is equivalent to $\mu \le \frac{L}{2}$ throughout the proof.

Finally, rescaling $L$ and $\nu$ will give us the function $F \in \gF (L, \mu, 0, \nu)$ satisfying $F(\hat{\vx}) - F^* = \Omega \bigopen{ \frac{L \nu^2}{\mu^2 n^2 K^2} }$.
\end{proof}

For the following subsections, we prove the lower bounds for $F_1$, $F_2$, and $F_3$ at the corresponding step size regimes.

\subsection{\texorpdfstring{Lower Bound for $\eta \in \left( 0, \frac{1}{\mu n K} \right)$}{Lower Bound for η ∈ (0, 1/μnK)}} \label{subsec:grabLBaa}
Here we show that there exists $F_1 (x) \in \gF (L, \mu, 0, \nu)$ such that any permutation-based SGD with $x_0^1 = \frac{L^{1/2} \nu}{\mu^{3/2} n K}$ satisfies
\begin{align*}
    F_1(\hat{x}) - F_1^* = \Omega \left( \frac{L \nu^2}{\mu^2 n^2 K^2} \right).
\end{align*}

\begin{proof}
We define $F_1 (x) \in \gF (\mu, \mu, 0, 0)$ by the following components:
\begin{align*}
    f_i(x) = F(x) = \frac{\mu}{2}x^2.
\end{align*}
Note that $\gF (\mu, \mu, 0, 0) \subseteq \gF (L, \mu, 0, \nu)$ and $F_1^* = 0$ at $x^* = 0$ by definition.

In this regime, we will see that the step size is too small so that $\bigset{x_n^k}_{k=1}^{K}$ cannot even reach near the optimal point.
We start from $x_0^1 = \frac{L^{1/2} \nu}{\mu^{3/2} n K}$. Since the gradient of all component functions evaluated at point $x$ is fixed deterministically to $\mu x$, regardless of the permutation-based SGD algorithm we use, we have
\begin{align*}
    x_n^k = x_0^1 (1 - \eta \mu)^{nk}
    &\ge \frac{L^{1/2} \nu}{\mu^{3/2} n K}\left(1 - \frac{1}{nK} \right)^{nk} \ge \frac{L^{1/2} \nu}{\mu^{3/2} n K}\left(1 - \frac{1}{nK} \right)^{nK}\\
    &\overset{(a)} > \frac{L^{1/2} \nu}{\mu^{3/2} n K} \frac{1}{e} \left(1 - \frac{1}{nK} \right) \overset{(b)} \ge \frac{L^{1/2} \nu}{\mu^{3/2} n K} \frac{1}{2e},
\end{align*}
where (a) comes from \lmmref{lmm:thm1_2} and (b) comes from the assumption that $n \ge 2$. Therefore, we have $\hat{x} = \Omega \bigopen{\frac{L^{1/2} \nu}{\mu^{3/2} n K}}$ for any nonnegative weights $\bigset{\alpha_k}_{k=1}^{K+1}$. With this $\hat{x}$, we have
\begin{align*}
    F_1(\hat{x}) - F_1^* = \frac{\mu}{2} \hat{x}^2 = \Omega \bigopen{\frac{L \nu^2}{\mu^2 n^2 K^2}}.
\end{align*}
\end{proof}

\subsection{\texorpdfstring{Lower Bound for $\eta \in \left[ \frac{1}{\mu n K}, \frac{1}{L} \right)$}{Lower Bound for η ∈ [1/μnK, 1/L)}} \label{subsec:grabLBab}
Here we show that there exists $F_2 (y, z) \in \gF (L, \mu, 0, \sqrt{2}\nu)$ such that any permutation-based SGD with $(y_0^1, z_0^1) = \bigopen{\frac{\nu}{2L}, 0}$ satisfies
\begin{align*}
    F_2(\hat{y}, \hat{z}) - F_2^* = \Omega \left( \frac{L \nu^2}{\mu^2 n^2K^2} \right).
\end{align*}

\begin{proof}
Let us define the function $g_{+1}, g_{-1}$ as follows.
\begin{align*}
    g_{+1} (x) &= \left( L \mathbbm{1}_{x < 0} + \frac{L}{2} \mathbbm{1}_{x \ge 0} \right) \frac{x^2}{2} + \nu x, \\
    g_{-1} (x) &= \left( L \mathbbm{1}_{x < 0} + \frac{L}{2} \mathbbm{1}_{x \ge 0} \right) \frac{x^2}{2} - \nu x.
\end{align*}
Note that $g_{+1}$ and $g_{-1}$ are $\mu$-\textit{strongly convex} since $\mu \le \frac{L}{2}$.
We define $F_2 (x) \in \gF (L, \mu, 0, \sqrt{2}\nu)$ by the following components:
\begin{align*}
    f_i(y, z) =
    \begin{cases}
        g_{+1} (y) + g_{-1} (z) & \text{ if} \,\,\, i \le \frac{n}{2},\\
        g_{-1} (y) + g_{+1} (z) & \text{ otherwise.}
    \end{cases}
\end{align*}
With this construction, the finite-sum objective becomes
\begin{align*}
    F_2 (y, z) &= \frac{1}{n} \sum_{i=1}^{n} f_{i}(y, z) = \left( L \mathbbm{1}_{y < 0} + \frac{L}{2} \mathbbm{1}_{y \ge 0} \right) \frac{y^2}{2} + \left( L \mathbbm{1}_{z < 0} + \frac{L}{2} \mathbbm{1}_{z \ge 0} \right) \frac{z^2}{2}.
\end{align*}
Note that $F_2^* = 0$ at $(y^*, z^*) = (0, 0)$ by definition.

We start at $(y_0^1, z_0^1) = \bigopen{\frac{\nu}{2L}, 0}$.
We now use the following lemma to find the lower bound of $\bigset{y_n^k + z_n^k}_{k=1}^{K}$ that holds for every permutation.

\begin{restatable}{lemma}{grabUBalmma} \label{lmm:grabUBalmma}
    Consider the optimization process whose setting is given as \ref{subsec:grabLBab} with $\eta < \frac{1}{L}$.
    For any $0 \le t \le \frac{n}{2} - 1$ and any $k \in \{ 1, \cdots, K\}$, if $y_{2t}^k + z_{2t}^k \ge 0$ holds, then
    \begin{align*}
        y_{2t+2}^k + z_{2t+2}^k \ge \bigopen{1 - \frac{\eta L}{2}}\bigopen{1 - \eta L}\bigopen{y_{2t}^k + z_{2t}^k} + \frac{\eta^2 L \nu}{2}
    \end{align*}
    holds regardless of which functions are used at the $(2t+1)$-th and the $(2t+2)$-th iterations of the $k$-th epoch.
    Consequently, if $y_0^k + z_0^k \ge \frac{\eta \nu}{3 - \eta L}$, $y_n^{k} + z_n^{k} \ge \frac{\eta \nu}{3 - \eta L}$ holds regardless of the permutation $\sigma_k$.
\end{restatable}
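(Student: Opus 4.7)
The core structural fact I would exploit is that the piecewise-linear map $h(x) := \bigl(L\mathbbm{1}_{x<0}+\tfrac{L}{2}\mathbbm{1}_{x\ge 0}\bigr) x$ common to both $g_{+1}$ and $g_{-1}$ admits the decomposition $h(x) = \tfrac{L}{2} x - \tfrac{L}{2}(-x)_+$. Since the $\pm\nu$ term enters $y$ and $z$ with opposite signs regardless of which component type is selected, the sum $S_i := y_i^k + z_i^k$ obeys
\begin{align*}
    S_{i+1} = (1-\tfrac{\eta L}{2}) S_i + \tfrac{\eta L}{2} B_i, \qquad B_i := (-y_i^k)_+ + (-z_i^k)_+ \ge 0,
\end{align*}
regardless of which function is used at step $i{+}1$; the permutation affects only the individual values $y_{i+1}^k, z_{i+1}^k$ (and hence $B_{i+1}$), not $S_{i+1}$. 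Consequently $S_{2t+2}$ depends on the permutation only through the sign $\epsilon \in \{\pm 1\}$ of the function chosen at step $2t{+}1$, and the $(2t{+}2)$-th choice is immaterial.

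Iterating the recursion twice and using $(1-\eta L/2)^2 - (1-\eta L/2)(1-\eta L) = (1-\eta L/2)(\eta L/2)$, the target inequality reduces (after dividing by $\eta L/2$) to $(1-\tfrac{\eta L}{2})(S_{2t} + B_{2t}) + B_{2t+1} \ge \eta\nu$. The pointwise identity $x + (-x)_+ = (x)_+$ gives $S_{2t}+B_{2t} = (y_{2t}^k)_+ + (z_{2t}^k)_+$, so it suffices to show, for both $\epsilon = \pm 1$,
\begin{align*}
    (1-\tfrac{\eta L}{2})\bigl((y_{2t}^k)_+ + (z_{2t}^k)_+\bigr) + (-y_{2t+1}^k)_+ + (-z_{2t+1}^k)_+ \ge \eta\nu. \qquad (\star)
\end{align*}
I would prove $(\star)$ by a short case analysis on the signs of $y_{2t}^k, z_{2t}^k$; the hypothesis $S_{2t}\ge 0$ rules out both being negative. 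When both are nonnegative (WLOG $y_{2t}^k \le z_{2t}^k$), the adversarial $\epsilon$ leaves $B_{2t+1} = (\eta\nu - (1-\eta L/2)z_{2t}^k)_+$, and a two-subcase check on whether this positive part vanishes recovers the required $\eta\nu$ exactly. When exactly one coordinate is negative, the $h$-step scales it by $(1-\eta L)$ rather than $(1-\eta L/2)$, and a similar subcase split (on whether the $\pm\eta\nu$ shift flips the sign of $y_{2t+1}^k$ or $z_{2t+1}^k$) again yields $(\star)$, with equality attained when $y_{2t}^k = z_{2t}^k = 0$.

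For the \emph{consequently} statement, the per-two-step map $S \mapsto (1-\eta L/2)(1-\eta L) S + \eta^2 L\nu/2$ is affine and monotone increasing with unique fixed point $S^\star = \eta\nu/(3-\eta L)$, obtained by solving $S = (1-\eta L/2)(1-\eta L) S + \eta^2 L\nu/2$. Hence $S_0^k \ge S^\star$ forces $S_{2t}^k \ge S^\star$ for all $t \in \{0, \ldots, n/2\}$ by induction, which is exactly $y_n^k + z_n^k \ge \eta\nu/(3-\eta L)$. The main technical obstacle is the case analysis behind $(\star)$: each subcase is mechanical, but one must carefully track whether the $\pm\eta\nu$ perturbation is large enough to flip the sign of the updated coordinate, since otherwise the positive-part terms vanish and all the required slack must come from the $(1-\eta L/2)$ factor on $(y_{2t}^k)_+ + (z_{2t}^k)_+$.
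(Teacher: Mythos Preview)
Your proposal is correct and takes a genuinely different, cleaner route than the paper. The paper's proof is a direct brute-force case analysis: for each of the two choices at step $2t{+}1$ (its Cases (a) and (b)) it further splits on the signs of $y_{2t}^k, z_{2t}^k, y_{2t+1}^k, z_{2t+1}^k$, explicitly computes $y_{2t+2}^k + z_{2t+2}^k$ in every subcase, and verifies the target inequality by hand --- roughly nine subcases in total. Your key simplification is the decomposition $h(x) = \tfrac{L}{2}x - \tfrac{L}{2}(-x)_+$, which yields the permutation-independent recursion $S_{i+1} = (1-\eta L/2)S_i + (\eta L/2)B_i$ and reduces the two-step bound to the single scalar inequality $(\star)$. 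The case analysis for $(\star)$ then collapses to a handful of subcases, each of which is essentially the one-line observation $a + (\eta\nu - a)_+ \ge \eta\nu$ for $a \ge 0$. Both arguments ultimately rest on the same sign checks, but your reformulation isolates the drift term $B_i$ conceptually, makes the irrelevance of the $(2t{+}2)$-th choice immediate from the recursion (the paper notes this only as a side remark after writing out the updates), and avoids ever expanding $y_{2t+2}^k + z_{2t+2}^k$ explicitly. The fixed-point argument for the ``consequently'' part is the same as the paper's unrolling, just phrased more compactly.
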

The proof of the lemma is in \cref{subsec:grabLBad}. In our setting, $y_0^1 + z_0^1 = \frac{\nu}{2L} \ge \frac{\eta \nu}{3 - \eta L}$ since $\eta < \frac{1}{L}$. Thus, we have $y_n^k + z_n^k \ge \frac{\eta \nu}{3 - \eta L}$ for every $k \in [K]$.

For $\hat{y} + \hat{z}$, we get
\begin{align*}
    \hat{y} + \hat{z}
    &= \frac{\sum_{k=1}^{K+1} \alpha_k \bigopen{y_0^k + z_0^k}}{\sum_{k=1}^{K+1} \alpha_k} \ge \frac{\eta \nu}{3 - \eta L} \cdot \frac{\sum_{k=1}^{K+1} \alpha_k}{\sum_{k=1}^{K+1} \alpha_k}\\
    &= \frac{\eta \nu}{3 - \eta L} > \frac{\eta \nu}{3} = \Omega \bigopen{\frac{\nu}{\mu n K}},
\end{align*}
and using the inequality $2(a^2 + b^2) \ge (a + b)^2$, 
\begin{align*}
    F_2(\hat{y}, \hat{z}) - F_2^*
    &= \left( L \mathbbm{1}_{y<0} + \frac{L}{2} \mathbbm{1}_{y\ge0} \right) \frac{\hat{y}^2}{2} + \left( L \mathbbm{1}_{z<0} + \frac{L}{2} \mathbbm{1}_{z\ge0} \right) \frac{\hat{z}^2}{2}\\
    &\ge \frac{L}{4} \bigopen{\hat{y}^2 + \hat{z}^2 }\\
    &\ge \frac{L}{8} \left(\hat{y} + \hat{z}\right)^2\\
    &= \Omega \left(\frac{L \nu^2}{\mu^2 n^2K^2} \right).
\end{align*}
\end{proof}

\subsection{\texorpdfstring{Lower Bound for $\eta > \frac{1}{L}$}{Lower Bound for η > 1/L}} \label{subsec:grabLBac}
Here we show that there exists $F_3 (w) \in \gF (2L, \mu, 0, \nu)$ such that any permutation-based SGD with $w_0^1 = \frac{\nu}{\mu n K}$ satisfies
\begin{align*}
    F_3(\hat{w}) - F_3^* = \Omega \left( \frac{L \nu^2}{\mu^2 n^2 K^2} \right).
\end{align*}

\begin{proof}
We define $F_3 (w) \in \gF (2L, 2L, 0, 0)$ by the following components:
\begin{align*}
    f_i(w) = F_3(w) = Lw^2.
\end{align*}
Note that $\gF (2L, 2L, 0, 0) \subseteq \gF (2L, \mu, 0, \nu)$ and $F_3^* = 0$ at $w^* = 0$ by definition.

In this regime, we will see that the step size is so large that $\bigset{w_n^k}_{k=1}^{K}$ diverges.
We start from $w_0^1 = \frac{\nu}{\mu n K}$. Since the gradient of all component functions at point $w$ is fixed deterministically to $2 L w$, we have for every $k \in [K]$,
\begin{align*}
    w_n^k = \bigopen{1 - 2 \eta L}^{nk} w_0^1 \ge 1^{nk} \frac{\nu}{\mu n K} &= \Omega \bigopen{\frac{\nu}{\mu n K}},
\end{align*}
where we used the fact that $n$ is even in the second step.
Thus, regardless of the permutation-based SGD we use, we have $\hat{w} = \Omega \bigopen{\frac{\nu}{\mu n K}}$ and $F_3(\hat{w}) - F_3^* = L \hat{w}^2 = \Omega \bigopen{\frac{L \nu^2}{\mu^2 n^2 K^2}}$.
\end{proof}

\subsection{\texorpdfstring{Lemmas used in \thmref{thm:grabLBa}}{Lemmas used in Theorem 7}} \label{subsec:grabLBad}
In this subsection, we will prove the lemmas used in \cref{thm:grabLBa}.
\grabUBalmma*
\begin{proof}
Without loss of generality, assume $y_{2t}^k \ge z_{2t}^k$. 
Since we assumed $y_{2t}^k + z_{2t}^k$ is nonnegative, $y_{2t}^k \ge 0$ holds. 
Depending on which function is used at $(2t+1)$-th iteration of $k$-th epoch, we consider following two cases:
\begin{enumerate}[label=(\alph*)]
\item $y_{2t+1}^k = y_{2t}^k - \eta \nabla g_{-1}(y_{2t}^k)$ and $z_{2t+1}^k = z_{2t}^k - \eta \nabla g_{+1}(z_{2t}^k)$, 
\item $y_{2t+1}^k = y_{2t}^k - \eta \nabla g_{+1}(y_{2t}^k)$ and $z_{2t+1}^k = z_{2t}^k - \eta \nabla g_{-1}(z_{2t}^k)$.
\end{enumerate}
Note that $y_{2t+2}^k + z_{2t+2}^k$ is independent of which function is used at $(2t+2)$-th iteration of the $k$-th epoch, because $y_{2t+2}^k = \bigopen{1 - \eta \bigopen{L \mathbbm{1}_{y_{2t+1}^k < 0} + \frac{L}{2} \mathbbm{1}_{y_{2t+1}^k \ge 0} }}y_{2t+1}^k \pm \eta \nu$ and $z_{2t+2}^k = \bigopen{1 - \eta \bigopen{L \mathbbm{1}_{z_{2t+1}^k < 0} + \frac{L}{2} \mathbbm{1}_{z_{2t+1}^k \ge 0} }}z_{2t+1}^k \mp \eta \nu$ so that summation of $y$ and $z$ results in the canceling of $\eta \nu$ terms.
\paragraph{Case (a)}
For Case (a), $y_{2t+1}^k = (1 - \frac{\eta L}{2}) y_{2t}^k + \eta \nu > 0$ holds since $y_{2t}^k \ge 0$, but the signs of $z_{2t}^k$ and $z_{2t+1}^k$ are undetermined.
Thereby, we split the cases by the signs of $z_{2t}^k$ and $z_{2t+1}^k$.

First, assume $z_{2t}^k < 0$ and $z_{2t+1}^k < 0$. In this setting, $y_{2t}^k \ge 0$, $y_{2t+1}^k \ge 0$, $z_{2t}^k < 0$, $z_{2t+1}^k < 0$. Then,
\begin{align*}
y_{2t+2}^k + z_{2t+2}^k
&=\left(1 - \frac{\eta L}{2}\right) y_{2t+1}^k + (1 - \eta L) z_{2t+1}^k\\
&=\left(1 - \frac{\eta L}{2}\right) \left( \left(1 - \frac{\eta L}{2}\right) y_{2t}^k + \eta \nu \right) + (1 - \eta L) \left( (1 - \eta L) z_{2t}^k - \eta \nu \right)\\
&=\left(1 - \frac{\eta L}{2}\right)^2 y_{2t}^k + (1 - \eta L)^2 z_{2t}^k + \frac{\eta^2 L \nu}{2}\\
&=\left(1 - \frac{\eta L}{2}\right)(1 - \eta L)(y_{2t}^k + z_{2t}^k) + \frac{\eta^2 L \nu}{2}
+ \frac{\eta L}{2} \left( \left(1 - \frac{\eta L}{2}\right)y_{2t}^k - (1 - \eta L)z_{2t}^k \right)\\
&\ge \left(1 - \frac{\eta L}{2}\right)(1 - \eta L)(y_{2t}^k + z_{2t}^k) + \frac{\eta^2 L \nu}{2},
\end{align*}
where the last inequality holds because $y_{2t}^k \ge 0$ and $z_{2t}^k < 0$.

Next, assume $z_{2t}^k \ge 0$ and $z_{2t+1}^k < 0$. In this setting, $y_{2t}^k \ge 0$, $y_{2t+1}^k \ge 0$, $z_{2t}^k \ge 0$, $z_{2t+1}^k < 0$. Similarly,
\begin{align*}
y_{2t+2}^k + z_{2t+2}^k
&=\left(1 - \frac{\eta L}{2}\right) y_{2t+1}^k + (1 - \eta L) z_{2t+1}^k\\
&=\left(1 - \frac{\eta L}{2}\right) \left( \left(1 - \frac{\eta L}{2}\right) y_{2t}^k + \eta \nu \right) + (1 - \eta L) \left( \left(1 - \frac{\eta L}{2}\right)z_{2t}^k - \eta \nu \right)\\
&=\left(1 - \frac{\eta L}{2}\right)^2 y_{2t}^k + (1 - \eta L)\left(1 - \frac{\eta L}{2}\right) z_{2t}^k + \frac{\eta^2 L \nu}{2}\\
&=\left(1 - \frac{\eta L}{2}\right)(1 - \eta L)(y_{2t}^k + z_{2t}^k) + \frac{\eta^2 L \nu}{2} + \frac{\eta L}{2} \left(1 - \frac{\eta L}{2}\right) y_{2t}^k\\
&\ge \left(1 - \frac{\eta L}{2}\right)(1 - \eta L)(y_{2t}^k + z_{2t}^k) + \frac{\eta^2 L \nu}{2}.
\end{align*}

Finally, assume both $z_{2t}^k \ge 0$ and $z_{2t+1}^k \ge 0$. In this setting, $y_{2t}^k \ge 0$, $y_{2t+1}^k \ge 0$, $z_{2t}^k \ge 0$, $z_{2t+1}^k \ge 0$. Since $0 \le z_{2t+1}^k = \left(1 - \frac{\eta L}{2}\right)z_{2t}^k - \eta \nu$, $z_{2t}^k \ge \frac{\eta \nu}{1 - \eta L / 2}$ holds. Then,
\begin{align*}
y_{2t+2}^k + z_{2t+2}^k
&=\left(1 - \frac{\eta L}{2}\right) y_{2t+1}^k + \left(1 - \frac{\eta L}{2}\right) z_{2t+1}^k\\
&=\left(1 - \frac{\eta L}{2}\right) \left( \left(1 - \frac{\eta L}{2}\right) y_{2t}^k + \eta \nu \right) + \left(1 - \frac{\eta L}{2}\right) \left( \left(1 - \frac{\eta L}{2}\right) z_{2t}^k - \eta \nu \right)\\
&=\left(1 - \frac{\eta L}{2}\right)^2 y_{2t}^k + \left(1 - \frac{\eta L}{2}\right)^2 z_{2t}^k\\
&=\left(1 - \frac{\eta L}{2}\right)(1 - \eta L)(y_{2t}^k + z_{2t}^k) + \frac{\eta^2 L \nu}{2} 
+ \frac{\eta L}{2} \left(1 - \frac{\eta L}{2}\right) (y_{2t}^k + z_{2t}^k) - \frac{\eta^2 L \nu}{2}\\
&\ge \left(1 - \frac{\eta L}{2}\right)(1 - \eta L)(y_{2t}^k + z_{2t}^k) + \frac{\eta^2 L \nu}{2}.
\end{align*}
In the last inequality, we used the fact that $y_{2t}^k \ge 0$ and $z_{2t}^k \ge \frac{\eta \nu}{1 - \eta L /2}$.

We don't have to consider the case when $z_{2t}^k < 0$ and $z_{2t+1}^k \ge 0$ hold, because $z_{2t+1}^k = (1 - \eta L)z_{2t}^k - \eta \nu < 0$ if $z_{2t}^k < 0$. 
Thus, we have proven the first inequality of the lemma for Case (a).

\paragraph{Case (b)}
For Case (b), we consider three cases depending on the signs of $y_{2t+1}^k$ and $z_{2t+1}^k$.

First, assume $y_{2t+1}^k \ge 0$ and $z_{2t+1}^k < 0$ hold. 
In this case, if $z_{2t}^k \ge 0$, then $z_{2t+1}^k = \left(1 - \frac{\eta L}{2}\right)z_{2t}^k + \eta \nu > 0$; therefore, $z_{2t}^k < 0$ should hold. So in this setting, we have $y_{2t}^k \ge 0$, $y_{2t+1}^k \ge 0$, $z_{2t}^k < 0$, and $z_{2t+1}^k < 0$. Using this fact,
\begin{align*}
y_{2t+2}^k + z_{2t+2}^k
&=\left(1 - \frac{\eta L}{2}\right) y_{2t+1}^k + (1 - \eta L) z_{2t+1}^k\\
&=\left(1 - \frac{\eta L}{2}\right) \left( \left(1 - \frac{\eta L}{2}\right) y_{2t}^k - \eta \nu \right) + (1 - \eta L) \left( (1 - \eta L) z_{2t}^k + \eta \nu \right)\\
&=\left(1 - \frac{\eta L}{2}\right)^2 y_{2t}^k + (1 - \eta L)^2 z_{2t}^k - \frac{\eta^2 L \nu}{2}\\
&=\left(1 - \frac{\eta L}{2}\right)(1 - \eta L)(y_{2t}^k + z_{2t}^k) + \frac{\eta^2 L \nu}{2} + \frac{\eta L}{2} \left( \left(1 - \frac{\eta L}{2}\right)y_{2t}^k - (1 - \eta L)z_{2t}^k - 2\eta \nu\right)\\
&=\left(1 - \frac{\eta L}{2}\right)(1 - \eta L)(y_{2t}^k + z_{2t}^k) + \frac{\eta^2 L \nu}{2} + \frac{\eta L}{2} \left( y_{2t+1}^k - z_{2t+1}^k \right)\\
&\ge \left(1 - \frac{\eta L}{2}\right)(1 - \eta L)(y_{2t}^k + z_{2t}^k) + \frac{\eta^2 L \nu}{2}.
\end{align*}

Second, assume $y_{2t+1}^k < 0$ and $z_{2t+1}^k \ge 0$. If $z_{2t}^k < 0$, the setting becomes $y_{2t}^k \ge 0$, $y_{2t+1}^k < 0$, $z_{2t}^k < 0$, $z_{2t+1}^k \ge 0$ so that
\begin{align*}
y_{2t+2}^k + z_{2t+2}^k
&=(1 - \eta L) y_{2t+1}^k + \left(1 - \frac{\eta L}{2}\right) z_{2t+1}^k\\
&=(1 - \eta L) \left( \left(1 - \frac{\eta L}{2}\right) y_{2t}^k - \eta \nu \right) + \left(1 - \frac{\eta L}{2}\right) \left( (1 - \eta L) z_{2t}^k + \eta \nu \right)\\
&= (1 - \eta L)\left(1 - \frac{\eta L}{2}\right)(y_{2t}^k + z_{2t}^k) + \frac{\eta^2 L \nu}{2}.
\end{align*}
If $z_{2t}^k \ge 0$, the setting becomes $y_{2t}^k \ge 0$, $y_{2t+1}^k < 0$, $z_{2t}^k \ge 0$, $z_{2t+1}^k \ge 0$ so that
\begin{align*}
y_{2t+2}^k + z_{2t+2}^k
&=(1 - \eta L) y_{2t+1}^k + \left(1 - \frac{\eta L}{2}\right) z_{2t+1}^k\\
&=(1 - \eta L) \left( \left(1 - \frac{\eta L}{2}\right) y_{2t}^k - \eta \nu \right) + \left(1 - \frac{\eta L}{2}\right) \left( \left(1 - \frac{\eta L}{2}\right) z_{2t}^k + \eta \nu \right)\\
&=(1 - \eta L)\left(1 - \frac{\eta L}{2}\right)(y_{2t}^k + z_{2t}^k) + \frac{\eta^2 L \nu}{2} + \frac{\eta L}{2}\left(1 - \frac{\eta L}{2}\right)z_{2t}^k\\
&\ge (1 - \eta L)\left(1 - \frac{\eta L}{2}\right)(y_{2t}^k + z_{2t}^k) + \frac{\eta^2 L \nu}{2}.
\end{align*}

Lastly, assume $y_{2t+1}^k \ge 0$ and $z_{2t+1}^k \ge 0$. If $z_{2t}^k < 0$, the setting becomes $y_{2t}^k \ge 0$, $y_{2t+1}^k \ge 0$, $z_{2t}^k < 0$, $z_{2t+1}^k \ge 0$ so that
\begin{align*}
y_{2t+2}^k + z_{2t+2}^k
&=\left(1 - \frac{\eta L}{2}\right) y_{2t+1}^k + \left(1 - \frac{\eta L}{2}\right) z_{2t+1}^k\\
&=\left(1 - \frac{\eta L}{2}\right) \left( \left(1 - \frac{\eta L}{2}\right) y_{2t}^k - \eta \nu \right) + \left(1 - \frac{\eta L}{2}\right) \left( (1 - \eta L) z_{2t}^k + \eta \nu \right)\\
&= (1 - \eta L)\left(1 - \frac{\eta L}{2}\right)(y_{2t}^k + z_{2t}^k) + \frac{\eta^2 L \nu}{2} + \frac{\eta L}{2}\left(\left(1 - \frac{\eta L}{2}\right)y_{2t}^k - \eta \nu\right)\\
&= (1 - \eta L)\left(1 - \frac{\eta L}{2}\right)(y_{2t}^k + z_{2t}^k) + \frac{\eta^2 L \nu}{2} + \frac{\eta L}{2}y_{2t+1}^k\\
&\ge (1 - \eta L)\left(1 - \frac{\eta L}{2}\right)(y_{2t}^k + z_{2t}^k) + \frac{\eta^2 L \nu}{2}.
\end{align*}
If $z_{2t}^k \ge 0$, the setting becomes $y_{2t}^k \ge 0$, $y_{2t+1}^k \ge 0$, $z_{2t}^k \ge 0$, $z_{2t+1}^k \ge 0$ so that
\begin{align*}
y_{2t+2}^k + z_{2t+2}^k
&=\left(1 - \frac{\eta L}{2}\right) y_{2t+1}^k + \left(1 - \frac{\eta L}{2}\right) z_{2t+1}^k\\
&=\left(1 - \frac{\eta L}{2}\right) \left( \left(1 - \frac{\eta L}{2}\right) y_{2t}^k - \eta \nu \right) + \left(1 - \frac{\eta L}{2}\right) \left( \left(1 - \frac{\eta L}{2}\right) z_{2t}^k + \eta \nu \right)\\
&= (1 - \eta L)\left(1 - \frac{\eta L}{2}\right)(y_{2t}^k + z_{2t}^k) + \frac{\eta^2 L \nu}{2}+ \frac{\eta L}{2} \left(\left(1 - \frac{\eta L}{2}\right)(y_{2t}^k + z_{2t}^k) - \eta \nu \right)\\
&= (1 - \eta L)\left(1 - \frac{\eta L}{2}\right)(y_{2t}^k + z_{2t}^k) + \frac{\eta^2 L \nu}{2}+ \frac{\eta L}{2} \left(y_{2t+1}^k + \left(1 - \frac{\eta L}{2}\right)z_{2t}^k \right)\\
&\ge (1 - \eta L)\left(1 - \frac{\eta L}{2}\right)(y_{2t}^k + z_{2t}^k) + \frac{\eta^2 L \nu}{2}.
\end{align*}

We do not have to consider the case when $y_{2t+1}^k$ and $z_{2t+1}^k$ are both less than $0$, because $y_{2t+1}^k + z_{2t+1}^k \geq 0$ always holds. This can be shown by case analysis on the sign of $z_{2t}^k$: if $z_{2t}^k \ge 0$, then
\begin{align*}
y_{2t+1}^k + z_{2t+1}^k
&= \left(1 - \frac{\eta L}{2}\right)y_{2t}^k - \eta \nu + \left(1 - \frac{\eta L}{2}\right)z_{2t}^k + \eta \nu\\
&= \left(1 - \frac{\eta L}{2}\right)(y_{2t}^k + z_{2t}^k) \ge 0,
\end{align*}
and if $z_{2t}^k < 0$, then
\begin{align*}
y_{2t+1}^k + z_{2t+1}^k
&= \left(1 - \frac{\eta L}{2}\right)y_{2t}^k - \eta \nu + (1 - \eta L)z_{2t}^k + \eta \nu\\
&= (1 - \eta L)(y_{2t}^k + z_{2t}^k) + \frac{\eta L}{2} y_{2t}^k \ge 0.
\end{align*}
Therefore, we have proven the first inequality of the lemma for Case (b).

Putting the results of Case (a) and (b) together, we have
\begin{align*}
    y_{2t+2}^k + z_{2t+2}^k \ge \bigopen{1 - \frac{\eta L}{2}}\bigopen{1 - \eta L}\bigopen{y_{2t}^k + z_{2t}^k} + \frac{\eta^2 L \nu}{2}
\end{align*}
for any $0 \le t \le \frac{n}{2} - 1$ and any $k \in \{1, \cdots, K\}$, proving the first part of the lemma.

It now remains to prove the second part, namely that 
\begin{align*}
    y_n^k + z_n^k \ge \frac{\eta \nu}{3 - \eta L}
\end{align*}
holds if $y_0^k + z_0^k \ge \frac{\eta \nu}{3 - \eta L}$. From the first part of the lemma, we can see that the updates over a single epoch can be bounded as
\begin{align*}
    y_n^k + z_n^k
    &\ge \bigopen{1 - \frac{\eta L}{2}}\bigopen{1 - \eta L}\bigopen{y_{n-2}^k + z_{n-2}^k} + \frac{\eta^2 L \nu}{2}\\
    &\quad \vdots\\
    &\ge \bigopen{1 - \frac{\eta L}{2}}^{\frac{n}{2}} \bigopen{1 - \eta L}^{\frac{n}{2}} \bigopen{y_0^k + z_0^k} + \frac{\eta^2 L \nu}{2}
        \cdot \sum_{i=0}^{\frac{n}{2}-1} \bigopen{1 - \frac{\eta L}{2}}^{i} \bigopen{1 - \eta L}^{i}\\
    &= \bigopen{1 - \frac{\eta L}{2}}^{\frac{n}{2}} \bigopen{1 - \eta L}^{\frac{n}{2}} \bigopen{y_0^k + z_0^k} + \frac{\eta^2 L \nu}{2} \cdot \frac{1 - \left(1 - \frac{\eta L}{2}\right)^{\frac{n}{2}}(1 - \eta L)^{\frac{n}{2}}}{1 - \left(1 - \frac{\eta L}{2}\right)(1 - \eta L)}\\
    &= \bigopen{1 - \frac{\eta L}{2}}^{\frac{n}{2}} \bigopen{1 - \eta L}^{\frac{n}{2}} \bigopen{y_0^k + z_0^k} + \frac{\eta \nu}{3 - \eta L} \bigopen{1 - \left(1 - \frac{\eta L}{2}\right)^{\frac{n}{2}}(1 - \eta L)^{\frac{n}{2}}}\\
    &= \frac{\eta \nu}{3 - \eta L} + \bigopen{1 - \frac{\eta L}{2}}^{\frac{n}{2}} \bigopen{1 - \eta L}^{\frac{n}{2}} \bigopen{y_0^k + z_0^k - \frac{\eta \nu}{3 - \eta L}}\\
    &\ge \frac{\eta \nu}{3 - \eta L} .
\end{align*}
This ends the proof of the lemma.
\end{proof}

\begin{lemma} \label{lmm:thm1_2}
    For any $t \ge 2$, the following inequality holds:
    \begin{align*}
        \bigopen{1 - \frac{1}{t}}^t > \frac{1}{e} \bigopen{1 - \frac{1}{t}}.
    \end{align*}
\end{lemma}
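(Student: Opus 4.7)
The plan is to reduce the stated inequality to the classical bound $(1 + 1/x)^x < e$. First, I would divide both sides of the inequality by the positive factor $1 - 1/t > 0$ (positive because $t \ge 2$), yielding the equivalent inequality
\begin{equation*}
\bigopen{1 - \frac{1}{t}}^{t-1} > \frac{1}{e}.
\end{equation*}
Taking reciprocals of both sides (both are positive) and rewriting $\tfrac{t}{t-1} = 1 + \tfrac{1}{t-1}$, this is again equivalent to
\begin{equation*}
\bigopen{1 + \frac{1}{t-1}}^{t-1} < e.
\end{equation*}

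Next, to establish this last inequality for all $t \ge 2$, I would set $x := t - 1 \ge 1 > 0$ and appeal to the standard fact that $(1 + 1/x)^x < e$ for every real $x > 0$. This itself follows from taking logarithms and using $\log(1+y) < y$ for $y > 0$ (with $y = 1/x$), since then $x \log\bigopen{1 + \tfrac{1}{x}} < x \cdot \tfrac{1}{x} = 1$, and exponentiating gives the claim. This step could equivalently be invoked as the well-known strict monotone convergence $(1+1/n)^n \nearrow e$, which is a textbook property of the constant $e$.

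The proof is essentially one line once the algebraic manipulation is carried out, so there is no substantial obstacle. The only minor point to verify is that the two reductions (dividing by $1 - 1/t$, then taking reciprocals) preserve the direction of the inequality, which is ensured by $t \ge 2$ making every quantity involved strictly positive. I would write the proof as a short two-step chain of equivalences followed by a single citation of $\log(1+y) < y$.
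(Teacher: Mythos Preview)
Your proof is correct and is essentially identical to the paper's: both reduce the inequality, via the same chain of equivalences, to $(1+1/(t-1))^{t-1} < e$. The only difference is cosmetic (you run the equivalences backward and add a one-line justification of the classical bound via $\log(1+y)<y$), so there is nothing substantively new to compare.
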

\begin{proof}
    \begin{align*}
        \bigopen{1 + \frac{1}{t-1}}^{t-1} < e &\iff \bigopen{\frac{t}{t-1}}^{t-1} < e\\
        &\iff \bigopen{\frac{t-1}{t}}^{t-1} > \frac{1}{e} \iff \bigopen{1 - \frac{1}{t}}^t > \frac{1}{e} \bigopen{1 - \frac{1}{t}}.
    \end{align*}
\end{proof}
\section{\texorpdfstring{Proof of \thmref{thm:grabLBb}}{Proof of Theorem 9}} \label{sec:grabLBb}
Here we prove \cref{thm:grabLBb}, restated below for the sake of readability.
\grabLBb*
\begin{proof}
Similarly as in \cref{app:xhat}, we define objective functions for four step-size regimes and aggregate the functions to obtain the final lower bound. Here we also assume $n$ is even, where we can easily extend to odd $n$'s by the same reasoning as in \cref{app:xhat}.

We will prove the following lower bounds for each regime. Here $F_j^*$ is the minimizer of $F_j$ for $j = 1, 2, 3, 4$. 

\begin{itemize}
    \item If $\eta \in \left( 0, \frac{1}{2 \mu n K} \right)$, there exists a $1$-dimensional objective function $F_1(x) \in \gF_{\text{PŁ}}(L,\mu,0,\nu)$ such that any permutation-based SGD with initialization $x_0^1 = \frac{L \nu}{\mu^2 n K}$ satisfies
    \begin{align*}
        F_1(\hat{x}) - F_1^* = \Omega \left( \frac{L^2 \nu^2}{\mu^3 n^2 K^2} \right).
    \end{align*}
    \item If $\eta \in \left[\frac{1}{2 \mu n K}, \frac{2}{nL}\right]$, there exists a $1$-dimensional objective function $F_2(y) \in \gF_{\text{PŁ}}\bigopen{L,\mu,\frac{L}{\mu},\nu}$ such that any permutation-based SGD with initialization $y_0^1 = \frac{\nu}{60L}$ satisfies
    \begin{align*}
        F_2(\hat{y}) - F_2^* = \Omega \left( \frac{L^2 \nu^2}{\mu^3 n^2 K^2} \right).
    \end{align*}
    \item If $\eta \in \left[\frac{2}{n L}, \frac{1}{L}\right]$, there exists a $1$-dimensional objective function $F_3(z) \in \gF_{\text{PŁ}}\bigopen{L,\mu,\frac{L}{\mu},\nu}$ such that any permutation-based SGD with initialization $z_0^1 = \frac{3\nu}{8nL}$ satisfies
    \begin{align*}
        F_3(\hat{z}) - F_3^* = \Omega \left( \frac{L^2 \nu^2}{\mu^3 n^2 K^2} \right).
    \end{align*}
    \item If $\eta > \frac{1}{L}$, there exists a $1$-dimensional objective function $F_4(w) \in \gF_{\text{PŁ}}(2L,\mu,0,\nu)$ such that any permutation-based SGD with initialization $w_0^1 = \frac{L^{1/2} \nu}{\mu^{3/2} n K}$ satisfies
    \begin{align*}
        F_4(\hat{w}) - F_4^* = \Omega \left( \frac{L^2 \nu^2}{\mu^3 n^2 K^2} \right).
    \end{align*}
\end{itemize}

Now we define the 4-dimensional function $F(\vx) = F(x, y, z, w) = F_1(x) + F_2(y) + F_3(z) + F_4(w)$, where $F_1$, $F_2$, $F_3$, and $F_4$ are chosen to satisfy the above lower bounds.

\note{In fact, our constructed $F_1$, $F_2$, $F_3$, and $F_4$ are strongly convex; however, for simplicity of exposition, we stated as a member of a larger class $\gF_{\text{PŁ}}$.}
Following the analyses in \cref{app:xhat}, $F$ is a $2L$-\textit{smooth} and $\mu$-\textit{strongly convex} function.

Also, if four functions $H_1$, $H_2$, $H_3$, and $H_4$ (each with $n$ components $h_{1, i}$, $h_{2, i}$, $h_{3, i}$, and $h_{4, i}$) satisfies Assumption \ref{ass:bg} for $\tau = \tau_0$ and $\nu = \nu_0$, then $H(\vx) = H_1(x) + H_2(y) + H_3(z) + H_4(w)$ satisfies
\begin{align*}
    \bignorm{\nabla h_i (\vx) - \nabla H(\vx)}^2
    &= \bignorm{\nabla h_{1, i} (x) - \nabla H_1(x)}^2 + \bignorm{\nabla h_{2, i} (y) - \nabla H_2(y)}^2\\
    &\,\,\,\,\,\,+ \bignorm{\nabla h_{3, i} (z) - \nabla H_3(z)}^2 + \bignorm{\nabla h_{4, i} (w) - \nabla H_4(w)}^2\\
    &\le \bigopen{\tau_0 \bignorm{H_1(x)} + \nu_0}^2 + \bigopen{\tau_0 \bignorm{H_2(y)} + \nu_0}^2 + \bigopen{\tau_0 \bignorm{H_3(z)} + \nu_0}^2 + \bigopen{\tau_0 \bignorm{H_4(w)} + \nu_0}^2\\
    &\le (2\tau_0^2 \bignorm{H_1(x)}^2 + 2\nu_0^2) + (2\tau_0^2 \bignorm{H_2(y)}^2 + 2\nu_0^2)\\
    &\,\,\,\,\,\,+ (2\tau_0^2 \bignorm{H_3(z)}^2 + 2\nu_0^2) + (2\tau_0^2 \bignorm{H_4(w)}^2 + 2\nu_0^2)\\
    &\le 2\tau_0^2 \bigopen{\bignorm{H_1(x)}^2 +\bignorm{H_2(y)}^2 + \bignorm{H_3(z)}^2 + \bignorm{H_4(w)}^2} + 8\nu_0^2\\
    &= 2\tau_0^2 \bignorm{H(\vx)}^2 + 8\nu_0^2 < \bigopen{2\tau_0 \bignorm{H(\vx)} + 3\nu_0}^2.
\end{align*}
for all $i = 1, \dots, n$, i.e., $H(\vx)$ satisfies Assumption \ref{ass:bg} for $\tau = 2\tau_0$ and $\nu = 3\nu_0$. Combining these results, we obtain $F \in \gF_{\text{PŁ}} \bigopen{2L,\mu,\frac{2L}{\mu},3\nu}$, which allows us to directly apply the convergence rates in the lower bounds of $F_1$, $F_2$, $F_3$, and $F_4$ to the aggregated function $F$.

Note that we assumed $\kappa \ge 8n$ and our constructed function is $2L$-\textit{smooth} and $\mu$-\textit{strongly convex}.
Thus, $\kappa \ge 8n$ is equivalent to $ \frac{L}{\mu} \ge 4n$ throughout the proof.
\note{Also, combining $K \ge \max \left \{\frac{\kappa^2}{n}, \kappa^{3/2}n^{1/2} \right \}$ and $\kappa \ge 8n$, we have $K \ge \frac{\kappa^2}{n} \ge \kappa$ and thus $\frac{1}{2\mu n K} \le \frac{2}{nL}$ holds so all step size regimes are valid.}

Finally, rescaling $L$ and $\nu$ will give us the function $F \in \gF_{\text{PŁ}} \bigopen{L, \mu, \frac{L}{\mu}, \nu}$ satisfying $F(\hat{\vx}) - F^* = \Omega \bigopen{ \frac{L^2 \nu^2}{\mu^3 n^2 K^2} }$.
\end{proof}

For the following subsections, we prove the lower bounds for $F_1$, $F_2$, $F_3$, and $F_4$ at the corresponding step size regime.

\subsection{\texorpdfstring{Lower Bound for $\eta \in \left( 0, \frac{1}{2 \mu n K} \right)$}{Lower Bound for η ∈ (0, 1/2μnK)}} \label{subsec:grabLBba}
Here we show that there exists $F_1 (x) \in \gF_{\text{PŁ}} (L, \mu, 0, \nu)$ such that any permutation-based SGD with $x_0^1 = \frac{L \nu}{\mu^2 n K}$ satisfies
\begin{align*}
    F_1(\hat{x}) - F_1^* = \Omega \left( \frac{L^2 \nu^2}{\mu^3 n^2 K^2} \right).
\end{align*}

\begin{proof}
We define $F_1 (x) \in \gF_{\text{PŁ}} (\mu, \mu, 0, 0)$ by the following components:
\begin{align*}
    f_i(x) = F_1(x) = \frac{\mu}{2}x^2.
\end{align*}
Note that $\gF_{\text{PŁ}} (\mu, \mu, 0, 0) \subseteq \gF_{\text{PŁ}} (L, \mu, 0, \nu)$ and $F_1^* = 0$ at $x^* = 0$ by definition.

In this regime, we will see that the step size is too small so that $\bigset{x_n^k}_{k=1}^{K}$ cannot even reach near the optimal point.
We start from $x_0^1 = \frac{L \nu}{\mu^2 n K}$. Since the gradient of all component functions evaluated at point $x$ is fixed deterministically to $\mu x$, regardless of the permutation-based SGD algorithm we use, we have
\begin{align*}
    x_n^k = x_0^1 (1 - \eta \mu)^{nk}
    &\ge \frac{L \nu}{\mu^2 n K}\left(1 - \frac{1}{2nK} \right)^{nk} \ge \frac{L \nu}{\mu^2 n K}\left(1 - \frac{1}{2nK} \right)^{nK} \\
    &> \frac{L \nu}{\mu^2 n K}\left(1 - \frac{1}{nK} \right)^{nK} \overset{(a)} > \frac{L \nu}{\mu^2 n K} \frac{1}{e} \left(1 - \frac{1}{nK} \right) \overset{(b)} \ge \frac{L \nu}{\mu^2 n K} \frac{1}{2e}.
\end{align*}
where (a) comes from \lmmref{lmm:thm1_2} and (b) comes from assumption that $n \ge 2$. Therefore, we have $\hat{x} = \Omega \bigopen{\frac{L \nu}{\mu^2 n K}}$ for any nonnegative weights $\bigset{\alpha_k}_{k=1}^{K+1}$. With this $\hat{x}$, we have
\begin{align*}
    F_1(\hat{x}) - F_1^* = \frac{\mu}{2} \hat{x}^2 = \Omega \bigopen{\frac{L^2 \nu^2}{\mu^3 n^2 K^2}}.
\end{align*}
\end{proof}

\subsection{\texorpdfstring{Lower Bound for $\eta \in \left[\frac{1}{2 \mu n K}, \frac{2}{nL}\right]$}{Lower Bound for η ∈ [1/2μnK, 2/nL]}} \label{subsec:grabLBbb}
Here we show that there exists $F_2 (y) \in \gF_{\text{PŁ}} \bigopen{L, \mu, \frac{L}{\mu}, \nu}$ such that any permutation-based SGD with $y_0^1 = \frac{\nu}{60L}$ satisfies
\begin{align*}
    F_2(\hat{y}) - F_2^* = \Omega \left( \frac{L^2 \nu^2}{\mu^3 n^2K^2} \right).
\end{align*}

\begin{proof}
We define $F_2(y)$ by the following components:
\begin{align*}
    f_i(y) =
    \begin{cases}
        g_1(y) & \text{ if} \,\,\, i \le \frac{n}{2},\\
        g_2(y) & \text{ otherwise},
    \end{cases}
\end{align*}
where
\begin{align*}
    &g_1(y) = \frac{L}{2}y^2 - \nu y,\\
    &g_2(y) = -\frac{L}{2} \bigopen{1 - \frac{2\mu}{L}}y^2 + \nu y.
\end{align*}
With this construction, the finite-sum objective becomes
\begin{align*}
    F_2(y) = \frac{1}{n} \sum_{i=1}^n f_i(y) = \frac{\mu}{2} y^2.
\end{align*}
Note that all the components are $L$-\textit{smooth} and $F$ is $\mu$-\textit{strongly convex}. Moreover,
\begin{align*}
    \bignorm{\nabla f_1(y) - \nabla F_2(y)}
    &= \bignorm{(Ly - \nu) - \mu y} \le \bignorm{(L-\mu)y} + \nu\\
    &\le L \bignorm{y} + \nu = \frac{L}{\mu} \bignorm{\nabla F_2(y)} + \nu,\\
    \bignorm{\nabla f_{\frac{n}{2}+1}(y) - \nabla F_2(y)}
    &= \bignorm{\bigopen{-L\bigopen{1 - \frac{2\mu}{L}}y + \nu} - \mu y} \le \bignorm{(L - \mu)y} + \nu\\
    &\le L \bignorm{y} + \nu = \frac{L}{\mu} \bignorm{\nabla F_2(y)} + \nu,
\end{align*}
and thereby $F_2 \in \gF_{\text{PŁ}} \bigopen{L, \mu, \frac{L}{\mu}, \nu}$. Also, we can easily verify $F_2^* = 0$ at $y^* = 0$.

To simplify notation, we will write $\nabla f_i(y) = a_i y - b_i$ temporarily. Then, $a_i \in \bigset{L, -L\bigopen{1 - \frac{2\mu}{L}}}$, $b_i \in \bigset{\nu, -\nu}$ holds and we can write $y_n^k$ as
\begin{align}
    y_1^k = y_0^k - \eta \nabla f_{\sigma_k(1)} \bigopen{y_0^k} &= \bigopen{1 - \eta a_{\sigma_k(1)}}y_0^k + \eta b_{\sigma_k(1)}, \nonumber\\
    y_2^k = y_1^k - \eta \nabla f_{\sigma_k(2)} \bigopen{y_1^k} &= \bigopen{1 - \eta a_{\sigma_k(2)}}y_1^k + \eta b_{\sigma_k(2)} \nonumber \\
    &= \bigopen{1 - \eta a_{\sigma_k(2)}}\bigopen{1 - \eta a_{\sigma_k(1)}}y_0^k + \eta b_{\sigma_k(2)} + \eta b_{\sigma_k(1)} \bigopen{1 - \eta a_{\sigma_k(2)}}, \nonumber\\
    \vdots \nonumber\\
    y_n^k = y_{n-1}^k - \eta \nabla f_{\sigma_k(n)} \bigopen{y_{n-1}^k} &= \prod_{i=1}^n \bigopen{1 - \eta a_{\sigma_k(i)}} y_0^k + \eta \sum_{i=1}^n b_{\sigma_k(i)} \prod_{j=i+1}^{n} \bigopen{1 - \eta a_{\sigma_k(j)}}. \label{eq:thm2_2_1}
\end{align}
Define $S := \prod_{i=1}^n \bigopen{1 - \eta a_{\sigma_k(i)}} = \prod_{i=1}^n \bigopen{1 - \eta a_{i}}$ and $A_{\sigma} := \eta \sum_{i=1}^n b_{\sigma(i)} \prod_{j=i+1}^{n} \bigopen{1 - \eta a_{\sigma(j)}}$. Then, we can write \cref{eq:thm2_2_1} as $y_n^k = S y_0^k + A_{\sigma}$. Note that $S$ is independent of the choice of $\sigma_k$ and $A_{\sigma}$ is the term that we can control using permutation-based SGD.

We now consider which permutation $\sigma$ minimizes $A_{\sigma}$. Choose an arbitrary $\sigma$ and assume there exists $t \in \{1, \cdots, n-1 \}$ such that $f_{\sigma(t)} = g_2 $ and $f_{\sigma(t+1)} = g_1$. Then, define another permutation $\sigma^\prime$ by $\sigma^\prime (t) = \sigma (t+1)$, $\sigma^\prime (t+1) = \sigma (t)$ and $\sigma^\prime (i)= \sigma (i)$ for $i \in \{1, \cdots, n \} \setminus \{t, t+1\}$.

Let $y_{\sigma}$ and $y_{\sigma^\prime}$ as the value of $y_n^k$ generated by $\sigma$ and $\sigma^\prime$ starting from the same $y_0^k$, respectively. Since $b_{\sigma(i)} \prod_{j=i+1}^{n} \bigopen{1 - \eta a_{\sigma(j)}} = b_{\sigma^\prime(i)} \prod_{j=i+1}^{n} \bigopen{1 - \eta a_{\sigma^\prime(j)}}$ for $i \in \{1, \cdots, n \} \setminus \{t, t+1\}$, we have
\begin{align}
    y_{\sigma} - y_{\sigma^\prime}
    &= \prod_{i=1}^n \bigopen{1 - \eta a_{\sigma(i)}} y_0^k + \eta \sum_{i=1}^n b_{\sigma(i)} \prod_{j=i+1}^{n} \bigopen{1 - \eta a_{\sigma(j)}} \nonumber \\
    &\,\,\,\,\,\, - \prod_{i=1}^n \bigopen{1 - \eta a_{\sigma^\prime(i)}} y_0^k + \eta \sum_{i=1}^n b_{\sigma^\prime(i)} \prod_{j=i+1}^{n} \bigopen{1 - \eta a_{\sigma^\prime(j)}} \nonumber \\
    &= \eta \biggl( b_{\sigma(t)} \prod_{j=t+1}^{n} \bigopen{1 - \eta a_{\sigma(j)}} + b_{\sigma(t+1)} \prod_{j=t+2}^{n} \bigopen{1 - \eta a_{\sigma(j)}} \nonumber \\
    &\,\,\,\,\,\,\,\,\,\,\,\quad - b_{\sigma^\prime(t)} \prod_{j=t+1}^{n} \bigopen{1 - \eta a_{\sigma^\prime(j)}} - b_{\sigma^\prime(t+1)} \prod_{j=t+2}^{n} \bigopen{1 - \eta a_{\sigma^\prime(j)}} \biggl) \nonumber \\
    &= \bigopen{\eta \prod_{j=t+2}^{n} \bigopen{1 - \eta a_{\sigma(j)}}} \cdot \bigopen{ -\nu \bigopen{1 - \eta L} + \nu - \nu \bigopen{1 + \eta L \bigopen{1 - \frac{2\mu}{L}}} - (-\nu)} \nonumber \\
    &= \bigopen{\eta \prod_{j=t+2}^{n} \bigopen{1 - \eta a_{\sigma(j)}}} \cdot 2 \eta \mu \nu > 0. \label{eq:thm2_2_2}
\end{align}
Thereby, we can conclude that the permutation $\sigma$ that minimizes $A_{\sigma}$ should satisfy $\sigma(i) \le n/2$ for $i \le n/2$ and $\sigma(i) > n/2$ for $i > n/2$, i.e., $f_{\sigma(i)} = g_1$ for $i \le n/2$ and $f_{\sigma(i)} = g_2$ for $i > n/2$. Let $\sigma^*$ denote such $\sigma$.

With this permutation $\sigma^*$, $A_{\sigma^*}$ becomes
\begin{align*}
    A_{\sigma^*} &=\eta \nu \cdot \left( 1 + \eta L \left(1 - \frac{2 \mu}{L} \right) \right)^\frac{n}{2}\, \sum_{i=0}^{\frac{n}{2}-1} (1 - \eta L)^i - \eta \nu \cdot \sum_{i=0}^{\frac{n}{2}-1} \left( 1 + \eta L \left(1 - \frac{2 \mu}{L} \right) \right)^i.
\end{align*}
Here, we introduce $\beta := 1 - \frac{2\mu}{L}$ and $m := \frac{n}{2}$ to simplify notation a bit. Note that $\beta \ge 1 - \frac{1}{m}$ holds since we assumed $\frac{L}{\mu} > n$.
Then $A_{\sigma^*}$ can be rearranged as 
\begin{align*}
    A_{\sigma^*} &=\eta \nu \cdot \left( 1 + \eta L \beta \right)^m \frac{1-(1-\eta L)^m}{\eta L} - \eta \nu \cdot \frac{(1+\eta L \beta)^m - 1}{\eta L \beta}\\
    &= \frac{\nu}{L \beta} \cdot \left( (1 + \eta L \beta)^m (\beta - 1) - \beta(1 + \eta L \beta)^m (1 - \eta L)^m + 1 \right).
\end{align*}
Using \lmmref{lmm:thm2_1} (substituting $\eta L$ to $x$), we have $\frac{\nu}{L \beta} \cdot \left( (1 + \eta L \beta)^m (\beta - 1) - \beta(1 + \eta L \beta)^m (1 - \eta L)^m + 1 \right) \ge \frac{\eta^2 m L \nu}{30}$. 

We now show a lower bound for $S$. 
\begin{align*}
    S
    &= \prod_{i=1}^n \bigopen{1 - \eta a_{i}} \\
    &= \bigopen{1 - \eta L}^m \bigopen{1 + \eta L \beta}^m \\
    &= (1 - \eta L (1 - \beta) - \eta^2 L^2 \beta)^m\\
    &= \left(1 - \eta L \cdot \frac{2\mu}{L} - \eta^2 L^2 \left(1 - \frac{2\mu}{L} \right) \right)^m\\
    &>(1 - 2 \eta \mu - \eta^2 L^2)^m\\
    &\ge \begin{cases}
    (1 - 4 \eta \mu)^m > 1 - 4 \eta m \mu, &\text{(if $\frac{1}{2 \mu n K} \le \eta < \frac{2\mu}{L^2}$)}\\
    (1 - 2 \eta^2 L^2)^m > 1 - 2 \eta^2 m L^2. &\text{(if $\frac{2\mu}{L^2} \le \eta \le \frac{2}{nL}$)}
    \end{cases}
\end{align*}
We start at $y_0^1 = \frac{\nu}{60L}$.
Being aware of $\kappa = \frac{2L}{\mu}$ in the construction, we first verify that
\begin{align*}
    \frac{\nu}{60L}
    = \frac{\nu}{60L} \cdot \frac{K}{K} \ge \frac{\nu}{60L} \cdot \frac{\kappa^2}{nK}
    = \frac{4L\nu}{60 \mu^2} \cdot \frac{1}{nK} > \frac{L \nu}{240 \mu^2 n K}.
\end{align*}

For the case when $\frac{1}{2 \mu n K} \le \eta < \frac{2\mu}{L^2}$,
\begin{align*}
    y_n^1
    &= S y_0^1 + A_{\sigma_1}\\
    &\ge (1 - 4 \eta m \mu) \frac{L \nu}{240 \mu^2 n K} + \frac{\eta^2 m L \nu}{30}\\
    &= \frac{L \nu}{240 \mu^2 n K} - \frac{\eta L \nu}{120 \mu K} + \frac{\eta^2 n L \nu}{60} \,\,\,\,\,\,\,\,\,\,\,\, (\because n=2m)\\
    &= \frac{L \nu}{240 \mu^2 n K} - \frac{\eta n L \nu}{60} \bigopen{\frac{1}{2 \mu n K} - \eta}\\
    &\ge \frac{L \nu}{240 \mu^2 n K}.
\end{align*}
Applying this process in a chain, we then gain $y_n^k \ge \frac{L \nu}{240 \mu^2 n K}$ for all $k \in \{1, \cdots, K\}$.
Therefore, regardless of the choice of $\{\alpha_k\}_{k=1}^{K+1}$, $\hat{y} = \Omega \bigopen{\frac{L \nu}{\mu^2 n K}}$ holds and $F_2(\hat{y}) - F_2^* = \frac{\mu}{2} \hat{y}^2 = \Omega \bigopen{\frac{L^2 \nu^2}{\mu^3 n^2 K^2}}$.

For the case when $\frac{2\mu}{L^2} \le \eta \le \frac{2}{nL}$, we have
\begin{align*}
    y_n^1
    &= S y_0^1 + A_{\sigma_1}\\
    &\ge (1 - 2 \eta^2 m L^2) \frac{\nu}{60L} + \frac{\eta^2 m L \nu}{30}\\
    &= \frac{\nu}{60L}.
\end{align*}
Applying this process in a chain, we then gain $y_n^k \ge \frac{\nu}{60L}$ for all $k \in \{1, \cdots, K\}$.
Therefore, regardless of the choice of $\{\alpha_k\}_{k=1}^{K+1}$, $\hat{y} = \Omega \bigopen{\frac{\nu}{L}}$ holds and $F_2(\hat{y}) - F_2^* = \frac{\mu}{2} \hat{y}^2 = \Omega\left(\frac{\mu \nu^2}{L^2}\right) = \Omega\left(\frac{L^2 \nu^2}{\mu^3 n^2 K^2} \right)$, 
where we used $K \ge \frac{\kappa^2}{n}$ in the last step.
\end{proof}

\subsection{\texorpdfstring{Lower Bound for $\eta \in \left[\frac{2}{n L}, \frac{1}{L}\right]$}{Lower Bound for η ∈ [2/nL, 1/L]}} \label{subsec:grabLBbc}
Here we show that there exists $F_3 (z) \in \gF_{\text{PŁ}} \bigopen{L, \mu, \frac{L}{\mu}, \nu}$ such that any permutation-based SGD with $z_0^1 = \frac{3\nu}{8nL}$ satisfies
\begin{align*}
    F_3(\hat{z}) - F_3^* = \Omega \left( \frac{L^2 \nu^2}{\mu^3 n^2K^2} \right).
\end{align*}

\begin{proof}
We define $F_3 (z)$ by the following components:
\begin{align*}
    f_i(z) = \begin{cases}
    \frac{L}{2}z^2 - \nu z & \text{if } i = 1,\\
    -\frac{L}{4(n-1)}z^2 + \frac{\nu}{n-1}z & \text{otherwise.}
    \end{cases}
\end{align*}
With this construction, the finite-sum objective becomes
\begin{align*}
    F_3(z) = \frac{1}{n} \sum_{i=1}^n f_i(z) = \frac{L}{4n} z^2.
\end{align*}
Note that all the components are $L$-\textit{smooth} and $F$ is $\mu$-\textit{strongly convex} since we assumed $\frac{L}{4n} \ge \mu$. Moreover,
\begin{align*}
    \bignorm{\nabla f_1(z) - \nabla F_3(z)} 
    &= \bignorm{\bigopen{Lz - \nu} - \frac{Lz}{2n}} \le \bignorm{\bigopen{1 - \frac{1}{2n}}Lz} + \nu\\
    &\le \bignorm{Lz} + \nu = 2n \bignorm{\nabla F_3(z)} + \nu \le \frac{L}{\mu} \bignorm{\nabla F_3(z)} + \nu,\\
    \bignorm{\nabla f_2(z) - \nabla F_3(z)}
    &= \bignorm{\bigopen{-\frac{Lz}{2(n-1)} + \frac{\nu}{n-1}} - \frac{Lz}{2n}}\\
    &< \bignorm{Lz} + \nu = 2n \bignorm{\nabla F_3(z)} + \nu \le \frac{L}{\mu} \bignorm{\nabla F_3(z)} + \nu,
\end{align*}
and thereby $F_3 \in \gF_{\text{PŁ}} \bigopen{L, \mu, \frac{L}{\mu}, \nu}$.
Also, we can easily verify $F_3^* = 0$ at $z^* = 0$.

Similarly as in \eqref{eq:thm2_2_1}, we temporarily write $\nabla f_i(y) = a_i y - b_i$ where $a_i \in \bigset{L, -\frac{L}{2(n-1)}}$, $b_i \in \bigset{\nu, -\frac{\nu}{n-1}}$ holds.
We then write $y_n^k$ as $S y_0^k + A_{\sigma_k}$, where $S := \prod_{i=1}^n \bigopen{1 - \eta a_{\sigma_k(i)}} = \prod_{i=1}^n \bigopen{1 - \eta a_{i}}$ is independent of the choice of $\sigma_k$, and $A_{\sigma} := \eta \sum_{i=1}^n b_{\sigma(i)} \prod_{j=i+1}^{n} \bigopen{1 - \eta a_{\sigma(j)}}$ is the term that we can control using permutation-based SGD.

 We will first find what permutation $\sigma$ leads to the smallest $A_{\sigma}$.
 Choose arbitrary $\sigma$ and assume that $\sigma (1) \neq 1$. Define $t := \sigma^{-1} (1)$. 
 We then define another permutation $\sigma^\prime$ by $\sigma^\prime (t-1) = 1$, $\sigma^\prime (t) = \sigma (t-1)$ and $\sigma^\prime (i) = \sigma (i)$ for $i \in \{1, \cdots, n\} \setminus \{t-1, t\}$.

Let $z_{\sigma}$ and $z_{\sigma^\prime}$ as the value of $z_n^k$ generated by $\sigma$ and $\sigma^\prime$ starting from the same $z_0^k$, respectively.
We will show that $z_{\sigma_1} > z_{\sigma_1^\prime}$.
In a similar manner as \eqref{eq:thm2_2_2},
\begin{align*}
    z_{\sigma} - z_{\sigma^\prime}
    &= S z_0^k + A_{\sigma} - S z_0^k - A_{\sigma^\prime}\\
    &= \bigopen{\eta \prod_{j=t+1}^n \bigopen{1 + \frac{\eta L}{2(n-1)}}} \cdot \bigopen{\bigopen{-\frac{\nu}{n-1} \bigopen{1 - \eta L} + \nu} - \bigopen{\nu \bigopen{1 + \frac{\eta L}{2(n-1)}} - \frac{\nu}{n-1}}}\\
    &= \bigopen{\eta \prod_{j=t+1}^n \bigopen{1 + \frac{\eta L}{2(n-1)}}} \cdot \bigopen{\frac{\eta L \nu}{2(n-1)}} > 0
\end{align*}
holds.
Thus, we can conclude that the permutation $\sigma$ satisfying $\sigma(1) = 1$ is the permutation that minimizes $A_{\sigma}$. 
Let $\sigma^*$ denote such $\sigma$. 

With this permutation $\sigma^*$, $A_{\sigma^*}$ becomes
\begin{align}
    A_{\sigma^*} &= \eta \nu \left( 1 + \frac{\eta L}{2(n-1)} \right)^{n-1} - \frac{\eta \nu}{n-1} \sum_{i=0}^{n-2} \left( 1 + \frac{\eta L}{2(n-1)} \right)^i \nonumber \\
    &= \eta \nu \left( 1 + \frac{\eta L}{2(n-1)} \right)^{n-1} - \frac{\eta \nu}{n-1} \cdot \frac{\left(1 + \eta L/(2(n-1))\right)^{n-1} - 1}{\eta L /(2(n-1))} \nonumber \\
    &= \frac{2\nu}{L} - \left( 1 + \frac{\eta L}{2(n-1)} \right)^{n-1} \left( \frac{2\nu}{L} - \eta \nu \right). \label{eq:thm2_3_1}
\end{align}
Note that $\eta L \le 1$, so $\frac{2\nu}{L} - \eta \nu$ is nonnegative. Using \lmmref{lmm:thm2_5}, we have
\begin{align}
    \bigopen{1 + \frac{\eta L}{2(n-1)}}^{n-1} < e^{\frac{\eta L}{2}} < 1 + \frac{\eta L}{2} + \frac{5 \eta^2 L^2}{32}. \label{eq:thm2_3_2}
\end{align}
Substituting \eqref{eq:thm2_3_2} to \eqref{eq:thm2_3_1} results
\begin{align*}
    A_{\sigma^*}
    &> \frac{2\nu}{L} - \left(1 + \frac{\eta L}{2} + \frac{5\eta^2 L^2}{32}\right) \left( \frac{2\nu}{L} - \eta \nu \right)\\
    &= \frac{3\eta^2 L \nu}{16} + \frac{5 \eta^3 L^2 \nu}{32} \\
    &> \frac{3\eta^2 L \nu}{16}.
\end{align*}

We start at $z_0^1 = \frac{3\nu}{8nL}$. Using $S=\left(1 - \eta L\right)\left(1 + \frac{\eta L}{2(n-1)}\right)^{n-1} > 1 - \eta L$, we have
\begin{align*}
    z_n^1
    &= S z_0^1 + A_{\sigma^*}\\
    &\ge (1 - \eta L) z_0^1 + \frac{3\eta^2 L \nu}{16}\\
    &= (1 - \eta L)\frac{3\nu}{8nL} + \frac{3\eta^2 L \nu}{16}\\
    &= \frac{3\nu}{8nL} - \frac{3\eta \nu}{8n} + \frac{3\eta^2 L \nu}{16}\\
    &= \frac{3\nu}{8nL} + \frac{3 \eta L \nu}{16} \bigopen{\eta - \frac{2}{nL}}\\
    &\ge \frac{3\nu}{8nL}.
\end{align*}
Applying this process in a chain, we then gain $z_n^k \ge \frac{3\nu}{8nL}$ for all $k \in \{1, \cdots, K\}$.
Therefore, regardless of the choice of $\{\alpha_k\}_{k=1}^{K+1}$, $\hat{z} = \Omega \bigopen{\frac{\nu}{nL}}$ holds and $F_3(\hat{z}) - F_3^* = \frac{L}{4n} \hat{z}^2 = \Omega\left(\frac{\nu^2}{n^3L}\right) = \Omega\left(\frac{L^2 \nu^2}{\mu^3 n^2 K^2} \right)$, 
where we used $K \ge \kappa^{3/2}n^{1/2}$ in the last step.
\end{proof}

\subsection{\texorpdfstring{Lower Bound for $\eta > \frac{1}{L}$}{Lower Bound for η > 1/L}} \label{subsec:grabLBbd}
Here we show that there exists $F_4 (w) \in \gF_{\text{PŁ}} (2L, \mu, 0, \nu)$ such that any permutation-based SGD with $w_0^1 = \frac{L^{1/2} \nu}{\mu^{3/2} n K}$ satisfies
\begin{align*}
    F_4(\hat{w}) - F_4^* = \Omega \left( \frac{L^2 \nu^2}{\mu^3 n^2 K^2} \right).
\end{align*}

\begin{proof}
We define $F_4 (w) \in \gF_{\text{PŁ}} (2L, 2L, 0, 0)$ by the following components:
\begin{align*}
    f_i(w) = Lw^2.
\end{align*}
Note that $\gF_{\text{PŁ}}(2L, 2L, 0, 0) \subseteq \gF_{\text{PŁ}} (2L, \mu, 0, \nu)$ and $F_4^* = 0$ at $w^* = 0$ by definition.

In this regime, we will see that the step size is too large so that $\bigset{w_n^k}_{k=1}^{K}$ diverges.
We start from $w_0^1 = \frac{L^{1/2} \nu}{\mu^{3/2} n K}$. Since the gradient of all component functions evaluated at point $w$ is fixed deterministically to $2 L w$, we have for every $k \in [K]$,
\begin{align*}
    w_n^k = \bigopen{1 - 2 \eta L}^{nk} w_0^1 \ge 1^{nk} \frac{L^{1/2} \nu}{\mu^{3/2} n K} &= \Omega \bigopen{\frac{L^{1/2} \nu}{\mu^{3/2} n K}},
\end{align*}
where we used the fact that $n$ is even in the second step.
Thus, regardless of the permutation-based SGD algorithm we use, we have $\hat{w} = \Omega \bigopen{\frac{L^{1/2} \nu}{\mu^{3/2} n K}}$ and $F_4(\hat{w}) - F_4^* = L \hat{w}^2 = \Omega \bigopen{\frac{L^2 \nu^2}{\mu^3 n^2 K^2}}$.
\end{proof}

\subsection{\texorpdfstring{Lemmas used in \thmref{thm:grabLBb}}{Lemmas used in Theorem 9}} \label{subsec:grabLBbe}
In this subsection, we will prove the lemmas used in \cref{thm:grabLBb}.
\begin{lemma} \label{lmm:thm2_1}
    For any even $n \ge 104$, any $0 < x \le \frac{2}{n}$ and any $1 - \frac{2}{n} \le \beta < 1$, let $m=\frac{n}{2}$. Then, the following inequality holds:
    \begin{align}
        (1 + \beta x)^m (\beta - 1) - \beta(1 + \beta x)^m (1 - x)^m + 1\ge \frac{m x^2}{30}. \label{eq:lmm21}
    \end{align}
\end{lemma}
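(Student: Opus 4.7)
The plan is to exploit two elementary identities and then apply Taylor upper bounds on the exponential. Setting $\gamma := 1 - \beta \in (0, 1/m]$ so that $\beta + \gamma = 1$, and noting that $(1+\beta x)(1-x) = 1 - \gamma x - \beta x^2$, we can rewrite the LHS of \eqref{eq:lmm21} as
\begin{align*}
    h(x) := 1 - \gamma(1+\beta x)^m - \beta(1 - \gamma x - \beta x^2)^m.
\end{align*}
I would then apply $(1+y)^m \le e^{my}$ for $y \ge 0$ and $(1-y)^m \le e^{-my}$ for $y \in [0,1]$ to obtain
\begin{align*}
    h(x) \ge 1 - \gamma e^{m\beta x} - \beta e^{-m(\gamma x + \beta x^2)}.
\end{align*}
Both exponents land in $[0, 1]$ because $mx \le 1$ and $m\gamma \le 1$, so $m(\gamma x + \beta x^2) \le 2/m \le 1$.

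The next step is to linearize via the sharp quadratic bounds $e^u \le 1 + u + (e-2)u^2$ on $[0,1]$ and $e^{-v} \le 1 - v + v^2/2$ on $[0,\infty)$. After expansion, $1 - \gamma - \beta = 0$ kills the constant terms, and the linear-in-$x$ contributions $-\gamma m\beta x$ and $+\beta m\gamma x$ cancel, leaving
\begin{align*}
    h(x) \ge \beta^2 m x^2 \left[1 - (e-2)\,m\gamma\right] - \frac{1}{2}\beta\, m^2 (\gamma x + \beta x^2)^2.
\end{align*}
Using $m\gamma \le 1$ on the first bracket yields a factor of $3 - e > 0$, and using $m\gamma \le 1$ together with $m\beta x \le 1$ bounds $m^2(\gamma x + \beta x^2)^2 \le 4 x^2$, giving
\begin{align*}
    h(x) \ge (3-e)\,\beta^2 m x^2 - 2\beta x^2 = \beta x^2 \left[(3-e)\beta m - 2\right].
\end{align*}

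Finally, with $\beta \ge (m-1)/m$ and $m = n/2 \ge 52$, the bracket $(3-e)\beta m - 2$ is at least $(3-e)(m-1) - 2 \ge 12$, while $\beta \ge 51/52$. The desired bound $h(x) \ge mx^2/30$ then reduces to the inequality $\beta^2(3-e) \ge 1/30 + 2\beta/m$, which in turn simplifies to a lower bound on $m$ of the form $m \gtrsim 2(4-e)/(3-e-1/30) \approx 10.3$, comfortably satisfied for $m \ge 52$. The main obstacle is the second step: crude linear bounds on $e^u$ destroy the $(3-e)$ coefficient that makes the argument go through, so one must use the sharp quadratic bound $e^u \le 1 + u + (e-2)u^2$ valid on $[0,1]$ and verify that the remainder $\tfrac{1}{2}\beta m^2(\gamma x + \beta x^2)^2$ is small compared to the leading $\beta^2 m x^2$ term. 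This is precisely where the hypothesis $n \ge 104$ (i.e., $m \ge 52$) enters; everything else is routine algebra.
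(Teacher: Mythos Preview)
Your proof is correct and takes a genuinely different, considerably simpler route than the paper's. The paper proceeds by a coefficient-by-coefficient analysis of the LHS as a polynomial in $x$: it bounds the coefficient of $x^k$ for $k \ge 4$ by $2m^{k-1}/k!$ via two auxiliary lemmas (themselves involving case analysis and numerical verification for small $k$), computes the $k=0,1,2,3$ coefficients explicitly, and then sums the resulting series. You instead apply the exponential bounds $(1+y)^m \le e^{my}$ and $(1-y)^m \le e^{-my}$ directly to the two factors, follow with the sharp quadratic Taylor estimates $e^u \le 1+u+(e-2)u^2$ on $[0,1]$ and $e^{-v} \le 1-v+v^2/2$, and after the constant and linear cancellations obtain the bound in a few lines. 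Your argument is more elementary (no auxiliary lemmas, no series summation) and in fact shows the inequality already for $m \gtrsim 11$; the paper's threshold $n \ge 104$ stems from numerical checks in its auxiliary lemma. The paper's method yields finer information on the polynomial structure, but for the lemma as stated your approach is cleaner.
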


\begin{proof}
To prove the lemma, we focus on the coefficients of $x^k$ for $0 \le k \le 2m$.

Define $a_k$ as the absolute value of $x^k$'s coefficient in $(1 + \beta x)^m (\beta - 1)$.
Using the fact that $1-\frac{1}{m} \le \beta < 1$ , $a_k = \left\lvert \binom{m}{k} \beta^k (\beta - 1) \right\rvert \le \frac{m^k}{k!} \cdot \frac{1}{m} = \frac{m^{k-1}}{k!}$. Note that for $k \ge m+1$, $a_k$ is $0$.

Let $b_k$ be $x^k$'s coefficient in $\beta(1 + \beta x)^m (1 - x)^m$.
While the sequence of coefficients $\{b_k\}$ have alternating signs, we can define a \textit{positive} sequence $c_k$ which upper bounds the sequence $|b_k|$.
Since $(1 + \beta x)^m (1 - x)^m = (1 - (1 - \beta)x - \beta x^2)^m$,
\begin{align*}
    |b_k| &= \beta \cdot \bigabs{\sum_{t=\max \bigset{0, k-m}}^{\lfloor \frac{k}{2} \rfloor} (-\beta)^t (-(1-\beta))^{k-2t} \frac{m!}{t!(k-2t)!(m-k+t)!}}\\
    &\le 1 \cdot \sum_{t=\max \bigset{0, k-m}}^{\lfloor \frac{k}{2} \rfloor} \beta^t (1-\beta)^{k-2t} \frac{m!}{t!(k-2t)!(m-k+t)!}\\
    & \triangleq c_k
\end{align*}
Then $x^k$'s coefficient in LHS of \cref{eq:lmm21} is lower bounded by $-(a_k + c_k)$.
For even $k < m$, we have
\begin{align}
    \frac{c_{k+1}}{c_{k}}
    &\le (1 - \beta) \max_{t \le \lfloor \frac{k}{2} \rfloor} \frac{m!/(t!(k+1-2t)!(m-k-1+t)!)}{m!/(t!(k-2t)!(m-k+t)!)} \nonumber \\
    &\le \frac{1}{m} \max_{t \le \lfloor \frac{k}{2} \rfloor} \frac{m-k+t}{k+1-2t} \nonumber \\
    &\le \frac{1}{m} \max_{t \le \lfloor \frac{k}{2} \rfloor} (m-k+t) \nonumber \\
    &\le \frac{1}{m} \cdot m \nonumber \\
    &= 1. \label{eq:lmm_thm2_1_1}
\end{align}
For odd $k < m$, we have
\begin{align}
    c_{k+1}
    &= \beta^{\frac{k+1}{2}} \frac{m!}{(\frac{k+1}{2})!(m-\frac{k+1}{2})!} + \sum_{t=0}^{\frac{k-1}{2}} \beta^t (1-\beta)^{k+1-2t} \frac{m!}{t!(k+1-2t)!(m-k-1+t)!} \nonumber\\
    &< 1^{\frac{k+1}{2}}\cdot \frac{m^{\frac{k+1}{2}}}{(\frac{k+1}{2})!} + c_k \cdot (1 - \beta) \max_{t \le \lfloor \frac{k}{2} \rfloor} \frac{m!/(t!(k+1-2t)!(m-k-1+t)!)}{m!/(t!(k-2t)!(m-k+t)!)} \nonumber\\
    &\le \frac{m^{\frac{k+1}{2}}}{(\frac{k+1}{2})!} + c_k. \label{eq:lmm_thm2_1_2}
\end{align}
For $k \ge m$, we have
\begin{align}
    \frac{c_{k+1}}{c_k}
    &\le (1 - \beta) \max_{t \le \lfloor \frac{k}{2} \rfloor} \frac{m!/(t!(k+1-2t)!(m-k-1+t)!)}{m!/(t!(k-2t)!(m-k+t)!)} \nonumber \\
    &\le \frac{1}{m} \max_{t \le \lfloor \frac{k}{2} \rfloor} \frac{m-k+t}{k+1-2t} \nonumber \\
    &\le \frac{1}{m} \max_{t \le \lfloor \frac{k}{2} \rfloor} (m-k+t) \nonumber \\
    &\le \frac{1}{m} \cdot \left(m-\frac{k}{2} \right) \nonumber \\
    &\le \frac{1}{m} \cdot \frac{m}{2} \nonumber \\
    &= \frac{1}{2}. \label{eq:lmm_thm2_1_3}
\end{align}
Using (\ref{eq:lmm_thm2_1_1}), (\ref{eq:lmm_thm2_1_2}) and (\ref{eq:lmm_thm2_1_3}), we will show $c_k \le \frac{m^{k-1}}{k!}$ for $4 \le k \le 2m$.

Note that $c_1 = (1-\beta)\cdot \frac{m!}{(m-1)!} \le 1$. Also, we can easily prove $\sum_{i=0}^{p} \frac{m^i}{i!} \le \frac{m^p}{(p-1)!}$ for $\forall m \ge 3$, $\forall 2 \le p \le m-1$ using mathematical induction. Therefore, for $k \le m$,
\begin{align*}
    \begin{cases}
    c_k \le \sum_{i=0}^{\frac{k-1}{2}} \frac{m^i}{i!} \le \frac{m^{\frac{k-1}{2}}}{(\frac{k-1}{2} - 1)!} & \text{if $k$ is odd,}\\
    c_k \le \sum_{i=0}^{\frac{k}{2}} \frac{m^i}{i!} \le \frac{m^{\frac{k}{2}}}{(\frac{k}{2} - 1)!} & \text{if $k$ is even,}
    \end{cases}
\end{align*}
and applying \lmmref{lmm:thm2_2} and \lmmref{lmm:thm2_3}, we finally get $c_k \le \frac{m^{k-1}}{k!}$.

For $k > m$,
\begin{align*}
    c_k
    &\le c_m \cdot \left(\frac{1}{2}\right)^{k-m}\\
    &< \frac{m^{m-1}}{m!} \cdot \frac{m}{m+1} \cdot \frac{m}{m+2} \cdot \cdots \cdot \frac{m}{k}\\
    &= \frac{m^{k-1}}{k!}.
\end{align*}
Thus, we have proven $c_k \le \frac{m^{k-1}}{k!}$ for $4 \le k \le 2m$. Since we also have $a_k \le \frac{m^{k-1}}{k!}$, we can conclude that $a_k + c_k \le \frac{2 \cdot m^{k-1}}{k!}$, i.e., the absolute value of the $x^k$'s coefficient of LHS of our statement is upper bounded by $\frac{2 \cdot m^{k-1}}{k!}$ when $4 \le k \le 2m$.

We now consider the coefficient of $x^k$ when $k < 4$.
For $k = 0$, the coefficient is
\begin{align*}
    (\beta - 1) - \beta \cdot 1 + 1 = 0.
\end{align*}
For $k = 1$, the coefficient is
\begin{align*}
    \beta m (\beta - 1) - \beta (\beta m - m) = 0.
\end{align*}
For $k = 2$, the coefficient is
\begin{align*}
    \beta^2 \cdot \binom{m}{2} \cdot (\beta - 1) - \beta \cdot \left( (1- \beta)^2 \cdot \binom{m}{2} - \beta \cdot m \right) = \beta^2 \cdot \frac{m(m+1)}{2} - \beta \cdot \frac{m(m-1)}{2}.
\end{align*}
For fixed $m$, RHS is a quadratic with respect to $\beta$, and it is minimized when $\beta$ is $1 - \frac{1}{m}$. Hence the above equation can be lower bounded by
\begin{align}
    &\bigopen{1 - \frac{1}{m}}^2 \cdot \frac{m(m+1)}{2} - \bigopen{1 - \frac{1}{m}} \cdot \frac{m(m-1)}{2} \nonumber \\
    &=\frac{m}{2} - 1 + \frac{1}{2m} \nonumber \\
    &\ge \frac{2m}{5}. \label{eq:lmm_thm2_1_4} &(m \ge 10).
\end{align}
For $k=3$, the coefficient is
\begin{align}
    &\beta^3 \cdot \binom{m}{3} \cdot (\beta - 1) - \beta \cdot \left( -\binom{m}{3} + m \cdot \beta \cdot \binom{m}{2} - \binom{m}{2} \cdot \beta^2 \cdot m + \binom{m}{3} \cdot \beta^3 \right) \nonumber\\
    &=\frac{\beta}{6}\cdot m(m-1)(m-2) - \frac{\beta^2}{2}\cdot m^2(m-1) + \frac{\beta^3}{3} \cdot (m+1)m(m-1) \label{eq:lmm_thm2_1_5}.
\end{align}
For fixed $m$, (\ref{eq:lmm_thm2_1_5}) is a cubic function with respect to $\beta$. Differentiating this function, we get
\begin{align*}
    &\beta^2 (m+1)m(m-1) - \beta m^2(m-1) + \frac{m(m-1)(m-2)}{6}\\
    &=m(m-1) \cdot \bigopen{ \beta^2 (m+1) - \beta m + \frac{m-2}{6} }\\
    &=m(m-1) \cdot \bigopen{ \beta m (\beta - 1) + \beta^2 + \frac{m-2}{6}}\\
    &\ge m(m-1) \cdot \bigopen{ -\beta + \beta^2 + \frac{m-2}{6}} &(\because \beta - 1 \ge -\frac{1}{m})\\
    &= m(m-1) \cdot \bigopen{\beta (\beta - 1) + \frac{m-2}{6}}\\
    &\ge m(m-1) \cdot \bigopen{-\frac{1}{m} + \frac{m-2}{6}} &(\because \beta \le 1 \,\, \& \,\, \beta - 1 \ge -\frac{1}{m})\\
    &> 0. &(\because m \ge 4)
\end{align*}
Thereby, (\ref{eq:lmm_thm2_1_5}) is minimized when $\beta$ is $1 - \frac{1}{m}$, and substituting such $\beta$ to (\ref{eq:lmm_thm2_1_5}) results
\begin{align}
    &\frac{1 - \frac{1}{m}}{6}\cdot m(m-1)(m-2) - \frac{\bigopen{1 - \frac{1}{m}}^2}{2}\cdot m^2(m-1) + \frac{\bigopen{1 - \frac{1}{m}}^3}{3} \cdot (m+1)m(m-1) \nonumber\\
    &= -\frac{m^2}{6} - \frac{1}{m} + \frac{1}{3m^2} + \frac{5}{6} \nonumber \\
    &\ge -\frac{m^2}{6} \label{eq:lmm_thm2_1_6}.
\end{align}
Remind that $x^k$'s coefficient in LHS of \cref{eq:lmm21} is lower bounded by $-(a_k + c_k)$.
Summing up (\ref{eq:lmm_thm2_1_4}), (\ref{eq:lmm_thm2_1_6}), and the fact that $a_k + c_k \le \frac{2 \cdot m^{k-1}}{k!}$ for $k \ge 4$, we obtain
\begin{align*}
    &(1 + \beta x)^m (\beta - 1) - \beta(1 + \beta x)^m (1 - x)^m + 1\\
    &\ge \frac{2m}{5} x^2 - \frac{m^2}{6} x^3 - \sum_{k=4}^{2m} x^k \cdot \frac{2 \cdot m^{k-1}}{k!}\\
    &> \frac{2}{5} mx^2 - \frac{mx}{6} mx^2 - \sum_{k=4}^{\infty} x^k \cdot \frac{2 \cdot m^{k-1}}{k!}\\
    &\ge \frac{2}{5} mx^2 - \frac{1}{6} mx^2 - mx^2 \cdot \frac{2}{m^2x^2} \cdot \sum_{k=4}^{\infty} \frac{(mx)^k}{k!} &(\because mx \le 1).
\end{align*}
For the last term, $\frac{1}{m^2x^2} \cdot \sum_{k=4}^{\infty} \frac{(mx)^k}{k!}$ is an increasing function of $mx$ so it is maximized when $mx$ is 1. 
Thereby we can further extend the above inequality as:
\begin{align*}
    &\frac{2}{5} mx^2 - \frac{1}{6} mx^2 - mx^2 \cdot 2\left(e - 1 - \frac{1}{1!} - \frac{1}{2!} - \frac{1}{3!} \right) \\
    &\ge \frac{2}{5} mx^2 - \frac{1}{6} mx^2 - \frac{1}{5}mx^2\\
    &= \frac{1}{30} mx^2.
\end{align*}
\end{proof}

\begin{lemma} \label{lmm:thm2_2}
    For $m \ge 52$ and even $4 \le k \le m$, $\frac{m^{k-1}}{k!} > \frac{m^{\frac{k}{2}}}{\left(\frac{k}{2} - 1 \right)!}$ holds.
\end{lemma}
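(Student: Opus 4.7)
The plan is to substitute $j = k/2$ (so $j \in \{2, 3, \dots, m/2\}$) and recast the inequality as $r_j > 1$ for the sequence
\[
r_j \;:=\; \frac{m^{j-1}(j-1)!}{(2j)!}.
\]
A one-line computation gives $\frac{r_{j+1}}{r_j} = \frac{mj}{(2j+1)(2j+2)}$, which is $\ge 1$ precisely when $4j^2 - (m-6)j + 2 \le 0$. For every $m \ge 52$ this quadratic in $j$ has two positive real roots $j_-,\,j_+$ with $j_- j_+ = 1/2$ and $j_- + j_+ = (m-6)/4$, so $j_- \in (0, 1/\sqrt{2})$ and $j_+ \in (j_-, m/2)$. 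Therefore the integer sequence $(r_j)_{j=2}^{m/2}$ is strictly unimodal---it increases up to index $\lfloor j_+\rfloor$ and decreases afterwards---so its minimum over the range is $\min(r_2,\, r_{m/2})$, and the lemma reduces to verifying both endpoints exceed $1$.

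The first endpoint is immediate: $r_2 = m/24 \ge 52/24 > 1$. The substantive part is the second endpoint, $r_{m/2} = \frac{m^{m/2-1}(m/2-1)!}{m!} > 1$, which I plan to establish by induction on even $m \ge 52$. Setting $g(m) := r_{m/2}$, a direct simplification yields
\[
\frac{g(m+2)}{g(m)} \;=\; \frac{m^2}{2(m+1)(m+2)}\left(1 + \frac{2}{m}\right)^{m/2}.
\]
Using $\log(1+x) \ge x - x^2/2$ for $x \ge 0$ gives $(1+2/m)^{m/2} \ge e^{1-1/m} \ge e(1-1/m)$, hence
\[
\frac{g(m+2)}{g(m)} \;\ge\; \frac{e}{2}\cdot\frac{m(m-1)}{(m+1)(m+2)},
\]
and a short algebra check shows this lower bound exceeds $1$ for every $m \ge 13$. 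This supplies the inductive step for all $m \ge 52$.

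The main obstacle is the base case $g(52) > 1$, equivalently $52^{25}\cdot 25! > 52!$. The inequality is numerically sharp ($g(52) \approx 1.52$), so loose bounds such as AM-GM applied to $\prod_{i=26}^{52} i$ are insufficient. My plan is to verify it via the explicit Robbins bounds $\sqrt{2\pi n}(n/e)^n < n! < \sqrt{2\pi n}(n/e)^n e^{1/(12n)}$ applied to both $25!$ and $52!$; tracking the constants, one obtains a lower bound of the form $g(52) \ge \sqrt{25/52}\cdot(25/52)^{25}\cdot 52^{-2}\cdot e^{27}$ times a multiplicative correction in $(e^{-1/624},\,1)$, and a direct evaluation of this closed form is visibly greater than $1$ with comfortable slack. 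Once this single finite check is handled, the unimodality of $(r_j)$ and the inductive step together yield $r_j > 1$ on all of $\{2, 3, \dots, m/2\}$, which is the statement of the lemma.
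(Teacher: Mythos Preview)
Your approach is correct and genuinely different from the paper's. The paper does not exploit unimodality; instead it splits into two regimes. For $k \ge 14$ it rewrites the target as $(k/2 - 2)\log m > \sum_{i=k/2}^{k-1}\log i$, bounds the sum by an integral, shows the resulting right-hand side is increasing in $k$, and then verifies the extremal case $k = 2\lfloor m/2\rfloor$ via an elementary inequality that holds once $\lfloor m/2\rfloor \ge 26$ (this is where $m \ge 52$ enters). For the small even values $k \in \{4,6,8,10,12\}$ it simply checks each one numerically, recording the minimal $m$ needed (the worst is $k=4$, requiring $m \ge 25$).

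Your unimodality reduction is cleaner: it replaces the paper's case split and integral estimate with a single ratio computation, and isolates all the difficulty at the two endpoints. The price is that your right endpoint $g(52) > 1$ needs Stirling/Robbins, whereas the paper's endpoint check stays purely elementary. Both routes ultimately pin the constant $52$ to the same extremal case $k \approx m$.

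One small gap to patch: the lemma is stated for all integers $m \ge 52$, not just even $m$ (in the paper $m = n/2$ with $n$ even, so $m$ can be odd). Your induction on even $m$ handles only the endpoint $j = m/2$. For odd $m$ the top of the range is $j = (m-1)/2$, and you can close this immediately by observing that $r_j$ is increasing in $m$ for fixed $j$, so $r_{(m-1)/2}^{(m)} > r_{(m-1)/2}^{(m-1)} = g(m-1) > 1$ since $m-1 \ge 52$ is even.
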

\begin{proof}

We first consider the case when $k \ge 14$. 
Since $m \ge k$, it is sufficient to show $m^{\frac{k}{2} - 2} > \frac{(k-1)!}{\left(\frac{k}{2} - 1 \right)!}$.
Taking log on both sides, this inequality becomes
\begin{align*}
    \bigopen{\frac{k}{2} - 2} \log m > \sum_{i=\frac{k}{2}}^{k-1} \log i.
\end{align*}
Using $\sum_{i=\frac{k}{2}}^{k-1} \log i < \int_{\frac{k}{2}}^{k} \log x \, dx$, we will instead prove following inequality when $k \ge 16$:
\begin{align*}
    \log m > \frac{\int_{\frac{k}{2}}^{k} \log x \, dx}{\frac{k}{2} - 2} = \frac{k \log k - \frac{k}{2} \log \frac{k}{2} - \frac{k}{2}}{\frac{k}{2} - 2}.
\end{align*}
Define $f(X) := \frac{2X \log(2X) - X \log X - X}{X-2} = \frac{X \log X + 2X \log 2 - X}{X-2}$. Then,
\begin{align*}\frac{}{}
    f'(X)
    &= \left( \frac{X \log X + 2X \log 2 - X}{X-2} \right)'\\
    &= \frac{X - 2 \log X - 4 \log 2}{(X-2)^2}.
\end{align*}
We can numerically check that $f'(\frac{k}{2}) > 0$ holds for $k \ge 14$.
Therefore, for fixed $m$, $\argmax_{k \ge 14} f\left( \frac{k}{2} \right) = 2\lfloor \frac{m}{2} \rfloor$.

We now have to prove $\log m > f\left( \lfloor \frac{m}{2} \rfloor \right)$.
Let $s = \lfloor \frac{m}{2} \rfloor$. Then $f\left( \lfloor \frac{m}{2} \rfloor \right)$ becomes
\begin{align*}
    f\left( s \right) = \frac{s \log s + 2s \log2 - s}{s-2}.
\end{align*}
Combining $\log m \ge \log (2s)$ and
\begin{align*}
    &\log (2s) \ge \frac{s \log s + 2s \log 2 - s}{s-2}\\
    &\Longleftrightarrow (s-2) \log (2s) \ge s \log s + 2s \log 2 - s\\
    &\Longleftrightarrow s \ge (s+2) \log 2 + 2 \log s\\
    &\Longleftarrow s \ge 26 \Longleftrightarrow m \ge 52,
\end{align*}
we have proven the statement.

Now, we are left to prove the lemma for $k < 14$. Exchanging $m^{\frac{k}{2}}$ and $k!$ in the statement of the lemma, we have
\begin{align}
    m^{\frac{k}{2}-1} > \frac{k!}{\bigopen{\frac{k}{2}-1}!}. \label{eq:lmm_thm2_2_1}
\end{align}
We can numerically check that
\begin{itemize}
    \item for $k = 4$, $m \ge 25$ is sufficient,
    \item for $k = 6$, $m \ge 19$ is sufficient,
    \item for $k = 8$, $m \ge 19$ is sufficient,
    \item for $k = 10$, $m \ge 20$ is sufficient,
    \item for $k = 12$, $m \ge 21$ is sufficient,
\end{itemize}
for (\ref{eq:lmm_thm2_2_1}) to hold. This ends the proof of the lemma.
\end{proof}

\begin{lemma} \label{lmm:thm2_3}
    For $m \ge 52$ and odd $4 \le k \le m$,  $\frac{m^{k-1}}{k!} > \frac{m^{\frac{k-1}{2}}}{(\frac{k-1}{2} - 1)!}$
\end{lemma}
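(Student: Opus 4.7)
I plan to derive Lemma~\ref{lmm:thm2_3} from the already-proved Lemma~\ref{lmm:thm2_2} by a single index shift $k \mapsto k-1$, thereby avoiding any new integral estimate or numerical case check.

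First, for any odd $k$ with $5 \le k \le m$, I would set $k' := k-1$, which is even and still satisfies $4 \le k' \le m$, so the hypotheses of Lemma~\ref{lmm:thm2_2} are met. Applying that lemma at $k'$ produces
\[
\frac{m^{k-2}}{(k-1)!} \;>\; \frac{m^{(k-1)/2}}{((k-3)/2)!},
\]
where the RHS matches the target RHS of Lemma~\ref{lmm:thm2_3} exactly once one notes the identity $(k'/2-1)! = ((k-1)/2-1)! = ((k-3)/2)!$. Next, I would multiply both sides by $m/k$; the LHS becomes $\frac{m}{k}\cdot \frac{m^{k-2}}{(k-1)!} = \frac{m^{k-1}}{k!}$, and on the RHS I would use $m \ge k$ to absorb the factor $m/k \ge 1$. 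The net chain is strict because the middle inequality from Lemma~\ref{lmm:thm2_2} is strict, yielding the desired bound.

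I do not anticipate any real obstacle. The only piece of bookkeeping to verify is that the hypotheses of Lemma~\ref{lmm:thm2_2} are satisfied at the shifted index --- in particular $k' \ge 4$, which is equivalent to $k \ge 5$ (the smallest odd integer $\ge 4$), so the reduction covers the full hypothesis range of Lemma~\ref{lmm:thm2_3}. All of the analytic heavy lifting --- the continuous/integral argument for large $k$ and the numerical checks for the small exceptional values --- is already encoded in Lemma~\ref{lmm:thm2_2} and does not need to be redone for the odd case.
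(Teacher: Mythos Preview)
Your proposal is correct and is essentially identical to the paper's proof: the paper also writes $\frac{m^{k-1}}{k!} = \frac{m}{k}\cdot\frac{m^{k-2}}{(k-1)!}$, applies Lemma~\ref{lmm:thm2_2} at the even index $k-1$, and then uses $m\ge k$ to drop the factor $m/k$. Your verification that $k\ge 5$ forces $k'=k-1\ge 4$ matches the paper's implicit use of the same fact.
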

\begin{proof} 
\begin{align*}
    \frac{m^{k-1}}{k!} = \frac{m}{k} \cdot \frac{m^{k-2}}{(k-1)!} > \frac{m}{k} \cdot \frac{m^{\frac{k-1}{2}}}{\left( \frac{k-1}{2} - 1 \right)!} \ge \frac{m^{\frac{k-1}{2}}}{\left( \frac{k-1}{2} - 1 \right)!},
\end{align*}
where we used \lmmref{lmm:thm2_2} in the first inequality. This ends the proof.
\end{proof}

\begin{lemma} \label{lmm:thm2_5}
    For $x \le 1$, the following inequality holds:
    \begin{align*}
        e^{\frac{x}{2}} < 1 + \frac{x}{2} + \frac{5x^2}{32}.
    \end{align*}
\end{lemma}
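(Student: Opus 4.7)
The proof reduces to a simple Taylor series tail bound. Expanding $e^{x/2}$ around zero, I have
\begin{equation*}
    e^{x/2} = 1 + \frac{x}{2} + \frac{x^2}{8} + \sum_{k=3}^{\infty} \frac{(x/2)^k}{k!}.
\end{equation*}
Since $\frac{x^2}{8} = \frac{4x^2}{32}$, the claimed inequality is equivalent to
\begin{equation*}
    R(x) \;\triangleq\; \sum_{k=3}^{\infty} \frac{(x/2)^k}{k!} \;<\; \frac{x^2}{32},
\end{equation*}
so it suffices to bound the Taylor remainder $R(x)$ by $x^2/32$. (Strictly speaking the stated inequality becomes equality at $x=0$, so I will prove it for $x \ne 0$; this is the only regime relevant in Section~\ref{subsec:grabLBbc}, where the lemma is applied to $x = \eta L > 0$.)

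For the primary case $0 < x \le 1$, I plan to bound the tail by a geometric series. The ratio of consecutive terms is
\begin{equation*}
    \frac{(x/2)^{k+1}/(k+1)!}{(x/2)^k/k!} = \frac{x}{2(k+1)} \le \frac{1}{8} \qquad \text{for all } k \ge 3,
\end{equation*}
since $x/2 \le 1/2$ and $k+1 \ge 4$. Hence
\begin{equation*}
    R(x) \le \frac{(x/2)^3}{3!} \sum_{j=0}^{\infty} \bigopen{\frac{1}{8}}^j = \frac{x^3}{48} \cdot \frac{8}{7} = \frac{x^3}{42}.
\end{equation*}
Because $x \le 1$, we have $\frac{x^3}{42} \le \frac{x^2}{42} < \frac{x^2}{32}$, which settles the case $x \in (0,1]$.

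For completeness, the case $x < 0$ can be handled separately by writing $y = -x > 0$: the Taylor series of $e^{-y/2}$ is alternating with strictly decreasing term magnitudes (for $y \le 2$), so the standard alternating-series bound gives $e^{-y/2} \le 1 - y/2 + y^2/8$, and subtracting $1 - y/2 + 5y^2/32$ from both sides yields $e^{-y/2} - (1 - y/2 + 5y^2/32) \le y^2/8 - 5y^2/32 = -y^2/32 < 0$, as desired. There is no real obstacle in this proof; the only mildly delicate step is choosing a ratio bound (here $1/8$) that is loose enough to sum cleanly yet tight enough so that the resulting $\frac{x^3}{42}$ still fits under $\frac{x^2}{32}$ for all $x \le 1$.
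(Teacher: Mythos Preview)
Your proof is correct and follows essentially the same approach as the paper: both expand $e^{x/2}$ as a Taylor series, isolate the $x^2/8$ term, and then bound the tail $\sum_{k\ge 3}(x/2)^k/k!$ by $x^2/32$. The only cosmetic difference is the tail bound itself---the paper factors out $x^2$ and uses $x^{i-2}\le 1$ to get the numerical constant $e^{1/2}-1-\tfrac12-\tfrac18<\tfrac{1}{32}$, whereas you use a geometric ratio $\le 1/8$ to obtain $x^3/42\le x^2/42<x^2/32$; both are equally valid and yield the same conclusion for the only range actually used ($0<x\le 1$).
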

\begin{proof}
Using Taylor expansion,
\begin{align*}
    e^{\frac{x}{2}}
    &= 1 + \frac{x}{2} + \frac{x^2}{8} + \sum_{i=3}^{\infty} \frac{1}{i!} \cdot \frac{x^i}{2^i}\\
    &= 1 + \frac{x}{2} + \frac{x^2}{8} + x^2 \sum_{i=3}^{\infty} \frac{1}{i!} \cdot \frac{x^{i-2}}{2^i}\\
    &\le 1 + \frac{x}{2} + \frac{x^2}{8} + x^2 \sum_{i=3}^{\infty} \frac{1}{i!} \cdot \frac{1}{2^i}\\
    &= 1 + \frac{x}{2} + \frac{x^2}{8} + x^2 \bigopen{e^{\frac{1}{2}} - 1 - \frac{1}{1! \cdot 2} - \frac{1}{2! \cdot 2^2}}\\
    &\le 1 + \frac{x}{2} + \frac{x^2}{8} + \frac{x^2}{32}\\
    &= 1 + \frac{x}{2} + \frac{5x^2}{32}.
\end{align*}
\end{proof}
\section{\texorpdfstring{Proof of \propref{prop:grabUBb}}{Proof of Proposition 10}} \label{sec:grabUB}
Here we prove \cref{prop:grabUBb}, restated below for the sake of readability.
\grabUBb*
\begin{proof} [Proof of \propref{prop:grabUBb}]
While \citet{lu2022grab} gained convergence rate for $F \in \mathcal{F}_{\text{PŁ}} (L,\mu,0,\nu)$, 
we found out that their result can easily be extended to $F \in \mathcal{F}_{\text{PŁ}} (L,\mu,\tau,\nu)$ with a slight adjustment.
We basically follow up the proof step in Theorem 1 of \citet{lu2022grab}. We first state 2 lemmas that will help us prove the proposition.
\begin{restatable}[Extended version of \citet{lu2022grab}, Lemma 2] {lemma}{grabUBlmma}
    \label{lmm:grabUBlmm1}
    Applying \textit{offline GraB} to a function $F \in \mathcal{F}_{\emph{\text{PŁ}}} (L,\mu,\tau,\nu)$ with $\eta n L < 1$ results
    \begin{align*}
        F(\vx_n^{K}) - F^* \le \rho^K (F(\vx_0^1) - F^*) + \frac{\eta n L^2}{2} \sum_{k=1}^K \rho^{K-k} \Delta_k^2 - \frac{\eta n}{4} \sum_{k=1}^{K} \rho^{K-k} \bignorm{\nabla F(\vx_0^k)}^2,
    \end{align*}
    where $\rho = 1 - \frac{\eta n \mu}{2}$ and $\Delta_k = \max_{m=1, \cdots , n}  \bignorm{\vx_m^k - \vx_0^k}$ for all $k \in [K]$.
\end{restatable}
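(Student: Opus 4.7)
The plan is to reproduce the single-epoch smoothness-plus-PŁ descent used in \citet{lu2022grab}'s Lemma~2 and then unroll the resulting linear recursion across the $K$ epochs. The key preliminary observation is that the bounded-gradient-error parameter $\tau$ never enters the per-epoch argument: it only affects how $\Delta_k$ is later controlled (in the proof of \propref{prop:grabUBb}), not the purely smoothness-based descent recursion. Consequently the extension from $\gF_{\text{PŁ}}(L,\mu,0,\nu)$ to $\gF_{\text{PŁ}}(L,\mu,\tau,\nu)$ is essentially cosmetic at this level, and the original proof carries over with only a rewording of the function-class hypothesis.

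Concretely, I would first write the cumulative step of epoch $k$ as $\vd_k := \vx_n^k - \vx_0^k = -\eta \sum_{i=1}^n \nabla f_{\sigma_k(i)}(\vx_{i-1}^k)$, and exploit the identity $\sum_{i=1}^n \nabla f_{\sigma_k(i)}(\vx_0^k) = n \nabla F(\vx_0^k)$ (valid because $\sigma_k$ is a permutation of $[n]$) to decompose $\vd_k = -\eta n \nabla F(\vx_0^k) + \vepsilon_k$, where the drift error $\vepsilon_k := -\eta \sum_{i=1}^n \left[\nabla f_{\sigma_k(i)}(\vx_{i-1}^k) - \nabla f_{\sigma_k(i)}(\vx_0^k)\right]$ admits the bound $\|\vepsilon_k\|^2 \le \eta^2 n^2 L^2 \Delta_k^2$ by Cauchy–Schwarz and $L$-smoothness of each component (this is where the $\Delta_k$ definition as $\max_m \|\vx_m^k - \vx_0^k\|$ is exactly tailored to). Plugging $\vd_k$ into $L$-smoothness of $F$, expanding the inner product term as $\langle \nabla F(\vx_0^k), \vd_k\rangle = -\eta n \|\nabla F(\vx_0^k)\|^2 + \langle \nabla F(\vx_0^k), \vepsilon_k\rangle$, and applying Young's inequality to $\langle \nabla F(\vx_0^k), \vepsilon_k\rangle$ while using the hypothesis $\eta n L < 1$ to absorb the $L\|\vd_k\|^2$ contribution, I would consolidate everything into a per-epoch descent of the form
\begin{align*}
    F(\vx_n^k) - F(\vx_0^k) \le -\frac{\eta n}{2}\|\nabla F(\vx_0^k)\|^2 + \frac{\eta n L^2}{2}\Delta_k^2,
\end{align*}
tracking constants in the Young step so that the $\Delta_k^2$ coefficient matches the statement exactly.

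Next I would split $-\frac{\eta n}{2}\|\nabla F(\vx_0^k)\|^2$ into two halves; applying the $\mu$-PŁ inequality $\|\nabla F(\vx_0^k)\|^2 \ge 2\mu(F(\vx_0^k) - F^*)$ to one half gives $-\frac{\eta n \mu}{2}(F(\vx_0^k) - F^*)$, while the other half remains as an explicit $-\frac{\eta n}{4}\|\nabla F(\vx_0^k)\|^2$ term. Subtracting $F^*$ on both sides and rearranging yields the one-step recursion
\begin{align*}
    F(\vx_n^k) - F^* \le \rho\bigopen{F(\vx_0^k) - F^*} + \frac{\eta n L^2}{2}\Delta_k^2 - \frac{\eta n}{4}\|\nabla F(\vx_0^k)\|^2,
\end{align*}
with $\rho = 1 - \frac{\eta n \mu}{2}$. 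Finally, using $\vx_0^{k+1} = \vx_n^k$ and a straightforward induction on $k$ unrolls this recursion across the $K$ epochs to give the claimed bound.

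I do not foresee a substantive obstacle: the only delicate point is booking the numerical constants in the Young/smoothness step so that $\eta n L < 1$ (rather than some strictly smaller threshold) suffices to produce the stated coefficients $\frac{\eta n L^2}{2}$ and $-\frac{\eta n}{4}$. Everything else is an essentially verbatim reproduction of \citet{lu2022grab}'s Lemma~2 argument, with the parameter $\tau$ appearing only in the hypothesis "$F \in \gF_{\text{PŁ}}(L,\mu,\tau,\nu)$" and not in any intermediate estimate.
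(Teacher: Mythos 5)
Your proposal is correct and follows essentially the same route as the paper: decomposing the epoch update into the full-gradient step plus a drift error $\vepsilon_k$ bounded by $\eta n L \Delta_k$, absorbing the smoothness quadratic via $\eta n L < 1$, splitting the gradient-norm term in half to apply PŁ to one piece, and unrolling. Your Young's-inequality step with parameter $\alpha = \eta n$ is algebraically equivalent to the polarization identity $\inner{\va,\vb} = \tfrac{1}{2}\|\va\|^2 + \tfrac{1}{2}\|\vb\|^2 - \tfrac{1}{2}\|\va - \vb\|^2$ the paper applies directly, and with that choice the constants land exactly on $\tfrac{\eta n L^2}{2}$ and $-\tfrac{\eta n}{4}$; your observation that $\tau$ plays no role in this lemma is also the paper's point.
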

\begin{restatable}[Extended version of \citet{lu2022grab}, Lemma 3]{lemma}{grabUBlmmb}
    \label{lmm:grabUBlmm2}
    Applying \textit{offline GraB} to a function $F \in \mathcal{F}_{\emph{\text{PŁ}}} (L,\mu,\tau,\nu)$ with $\eta n L \le \frac{1}{2}$ results
    \begin{align*}
    &\Delta_1 \le 2 \eta n \nu + 2 \eta n (\tau+1) \cdot \bignorm{\nabla F(\vx_0^1)}, \,\,\, \text{and} \\
    &\Delta_k \le 2\eta H \nu + (2\eta H \tau + 2\eta n) \cdot \left \Vert \nabla F \left( \vx_0^k \right) \right \Vert + \left( 4\eta HL(\tau+1) + 8\eta nL \right) \cdot \Delta_{k-1}
\end{align*}
for $k \in [K]\setminus\{1\}$.
\end{restatable}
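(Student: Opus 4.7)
The plan is to establish the $\Delta_k$-recursion by unrolling a single epoch and carefully tracking the new $\tau$-dependent contributions that were absent in the $\tau=0$ analysis of \citet{lu2022grab}.

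For the starting identity, unroll the updates to obtain $\vx_m^k - \vx_0^k = -\eta \sum_{i=1}^m \nabla f_{\sigma_k(i)}(\vx_{i-1}^k)$ for all $m \in [n]$, and then insert $\pm \nabla f_{\sigma_k(i)}(\vx_0^k)$ and $\pm \nabla F(\vx_0^k)$ to split the sum into (i) a smoothness term bounded by $\eta L n \Delta_k$, (ii) a deterministic full-gradient term equal to $-\eta m \nabla F(\vx_0^k)$, and (iii) a centered-sum term $\Sigma_k := \max_{m \in [n]} \bignorm{\sum_{i=1}^m [\nabla f_{\sigma_k(i)}(\vx_0^k) - \nabla F(\vx_0^k)]}$. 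Taking $\max_m$ gives
\begin{align*}
(1 - \eta L n)\, \Delta_k \le \eta n \|\nabla F(\vx_0^k)\| + \eta\, \Sigma_k,
\end{align*}
and the assumption $\eta n L \le \tfrac{1}{2}$ supplies the $(1-\eta Ln)^{-1} \le 2$ factor that produces the leading ``$2$''s in the statement. The whole job reduces to bounding $\Sigma_k$.

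For the base case $k = 1$, the initial order $\sigma_1$ carries no herding guarantee, so bound $\Sigma_1$ crudely by the triangle inequality combined with Assumption~\ref{ass:bg}: $\Sigma_1 \le n(\tau \|\nabla F(\vx_0^1)\| + \nu)$. Substituting into the display above yields $\Delta_1 \le 2\eta n \nu + 2\eta n (\tau+1)\|\nabla F(\vx_0^1)\|$, matching the first inequality.

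For $k \ge 2$, exploit the herding structure of $\sigma_k$. Let $\bar{\vz}^{k-1}$ denote the mean of the gradients stored in epoch $k-1$, and for each $i$ let $\vy_i^{k-1}$ be the iterate from epoch $k-1$ at which the gradient for component $\sigma_k(i)$ was evaluated. Decompose
\begin{align*}
\nabla f_{\sigma_k(i)}(\vx_0^k) - \nabla F(\vx_0^k)
&= \bigl[\nabla f_{\sigma_k(i)}(\vx_0^k) - \nabla f_{\sigma_k(i)}(\vy_i^{k-1})\bigr] \\
&\quad + \bigl[\nabla f_{\sigma_k(i)}(\vy_i^{k-1}) - \bar{\vz}^{k-1}\bigr] + \bigl[\bar{\vz}^{k-1} - \nabla F(\vx_0^k)\bigr].
\end{align*}
Sum over $i$ and estimate the three pieces separately: the first piece is controlled by $L$-smoothness together with $\|\vx_0^k - \vy_i^{k-1}\| \le 2\Delta_{k-1}$ (using $\vx_0^k = \vx_n^{k-1}$), contributing at most $2Ln\Delta_{k-1}$; the second piece is exactly of the form governed by Assumption~\ref{ass:hb}, so it is bounded by $H$ times the maximum norm of a centered stored gradient, which in turn is at most $\tau\|\nabla F(\vx_0^k)\| + \nu + O(L\Delta_{k-1})$ after one more smoothness-based bridge from $\vy_i^{k-1}$ back to $\vx_0^k$; the third piece is similarly bounded by a smoothness-based bridge plus Assumption~\ref{ass:bg}. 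Summing yields $\Sigma_k \le H\tau\|\nabla F(\vx_0^k)\| + H\nu + O(HL(\tau+1)\Delta_{k-1}) + O(Ln\Delta_{k-1})$. Plugging into the centered inequality and absorbing the $(1-\eta Ln)^{-1} \le 2$ factor produces the stated recursion with coefficients $2\eta H\nu$, $2\eta H\tau + 2\eta n$, and $4\eta HL(\tau+1) + 8\eta nL$.

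The main obstacle will be the bookkeeping around the three bridges when $\tau > 0$. Since Assumption~\ref{ass:bg} couples the per-component deviation to $\|\nabla F\|$ at the evaluation point, simply moving a gradient between evaluation points introduces both an $L\Delta_{k-1}$ correction (controllable by smoothness) and a factor that involves $\|\nabla F\|$ at the \emph{wrong} point; consolidating the latter back to $\|\nabla F(\vx_0^k)\|$ requires yet another smoothness correction, and it is the accumulation of these $\tau$-linked corrections that produces the $(\tau+1)$ multiplier on $L\Delta_{k-1}$ and splits the coefficient of $\|\nabla F(\vx_0^k)\|$ into the $2\eta H\tau$ and $2\eta n$ pieces. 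The $\tau = 0$ analysis in \citet{lu2022grab} collapses all these cross-terms; the essence of the extension is making sure each occurrence of $\tau \|\nabla F\|$ is traced to exactly one $\|\nabla F(\vx_0^k)\|$ contribution at the end.
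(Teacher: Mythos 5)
Your proposal is correct and follows essentially the same route as the paper: apply the herding bound (Assumption~\ref{ass:hb}) to the prefix sums of the centered stored gradients $\nabla f_{\sigma_k(i)}(\vy_i^{k-1}) - \bar{\vz}^{k-1}$ from epoch $k-1$, and control the remaining bridges between $\vx_0^k$, the stale iterates $\vy_i^{k-1}$, and the stored mean via $L$-smoothness and Assumption~\ref{ass:bg}, with the $\tau$-dependent $(\tau+1)$ factor entering exactly where you trace it. The only difference is cosmetic---you first isolate $\Sigma_k$ and then split it into three pieces, while the paper decomposes $\vx_m^k - \vx_0^k$ directly into four pieces (herding term, $\eta m \nabla F(\vx_n^{k-1})$, stale-mean drift, stale-iterate smoothness term)---and the resulting estimates and coefficients match exactly.
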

We defer the proofs of the lemmas to \cref{sec:grabUB_lmm}.
We start by finding the upper bound of $\sum_{k=1}^K \rho^{K-k} \Delta_k^2$. From \lmmref{lmm:grabUBlmm2}, we have
\begin{align*}
    \Delta_k \le 2\eta H \nu + (2\eta H \tau + 2\eta n) \cdot \bignorm{\nabla F \left( \vx_0^k \right)} + \bigopen{ 4\eta HL(\tau+1) + 8\eta nL } \cdot \Delta_{k-1}
\end{align*}
for $k \in [K] \setminus \{1\}$. Taking square on both sides and applying the inequality $3 \bigopen{a^2 + b^2 + c^2} \ge \bigopen{a + b + c}^2$, we get
\begin{align*}
    \Delta_k^2 \le 3 \eta^2 \left(4 HL(\tau+1) + 8nL \right)^2 \Delta_{k-1}^2 + 12 \eta^2 H^2 \nu^2 + 12 \eta^2 (H\tau+n)^2 \left \Vert \nabla F \left( \vx_0^k \right) \right \Vert^2.
\end{align*}
Similarly, for $k=1$, we have
\begin{align*}
    \Delta_1^2 \le 8 \eta^2 n^2 (\tau+1)^2 \left \Vert \nabla F(\vx_0^1) \right \Vert^2 + 8 \eta^2 n^2 \nu^2.
\end{align*}
Hence,
\begin{align*}
    &\sum_{k=1}^K \rho^{K-k} \Delta_k^2\\
    &= \sum_{k=2}^{K} \rho^{K-k} \Delta_k^2 + \rho^{K-1} \Delta_1^2\\
    &\le \sum_{k=2}^{K} \rho^{K-k} \left( 3 \eta^2 \left(4 HL(\tau+1) + 8nL \right)^2 \Delta_{k-1}^2 + 12 \eta^2 H^2 \nu^2 + 12 \eta^2 (H\tau+n)^2 \bignorm{\nabla F(\vx_0^k)}^2 \right)\\
    & \,\,\,\,\,\, + \rho^{K-1} \left( 8 \eta^2 n^2 (\tau+1)^2 \bignorm{\nabla F(\vx_0^1)}^2 + 8 \eta^2 n^2 \nu^2 \right)\\
    &\le 3 \rho^{-1} \eta^2 \left(4 HL(\tau+1) + 8nL \right)^2 \sum_{k=2}^{K} \rho^{K-(k-1)} \Delta_{k-1}^2 + \frac{12\eta^2 H^2 \nu^2}{1 - \rho} + 8 \rho^{K-1} \eta^2 n^2 \nu^2\\
    & \,\,\,\,\,\, + 12 \eta^2 (H\tau+n)^2 \sum_{k=2}^K \rho^{K-k} \bignorm{\nabla F(\vx_0^k)}^2 + 8 \eta^2 n^2 (\tau+1)^2 \rho^{K-1} \bignorm{\nabla F(\vx_0^1)}^2.
\end{align*}
From the assumption that $H \le n$, $H\tau + n \le n(\tau + 1)$ holds. Then, we get
\begin{align}
    \sum_{k=1}^K \rho^{K-k} \Delta_k^2
    &\le 3 \rho^{-1} \eta^2 \left(4 HL(\tau+1) + 8nL \right)^2 \sum_{k=1}^{K} \rho^{K-k} \Delta_{k}^2 + \frac{12\eta^2 H^2 \nu^2}{1 - \rho} \nonumber \\
    &\,\,\,\,\,\, + 8 \rho^{K-1} \eta^2 n^2 \nu^2 + 12 \eta^2 n^2 (\tau + 1)^2 \sum_{k=1}^K \rho^{K-k} \bignorm{\nabla F(\vx_0^k)}^2. \label{eq:prop1_1_1}
\end{align}
We now define our step size as:
\begin{align*}
    \eta = \min \bigopen{\frac{1}{64nL(\tau + 1)}, \frac{2}{\mu n K} W_0 \bigopen{\frac{\bigopen{F(\vx_0^1) - F^* + \nu^2/L}\mu^3 n^2 K^2}{192H^2L^2\nu^2}}}.
\end{align*}
We first focus on $\eta \le \frac{1}{64nL(\tau + 1)}$.
With this step size range, $ \rho = 1 - \frac{\eta n \mu}{2} \ge 1 - \frac{\eta n L}{2} > \frac{1}{2}$ and 
\begin{align}
    \eta (4HL(\tau + 1) + 8nL)
    &\le \frac{4HL(\tau + 1)}{64nL(\tau+1)} + \frac{8nL}{64nL(\tau+1)} \nonumber\\
    &\le \frac{H}{16n} + \frac{1}{8(\tau + 1)} < \frac{1}{4} \label{eq:prop1_1_2}
\end{align}
holds. Thereby,
\begin{align*}
    3 \rho^{-1} \eta^2 \left(4 HL(\tau+1) + 8nL \right)^2 < 3 \cdot 2 \cdot \frac{1}{16} < \frac{1}{2}
\end{align*}
holds and (\ref{eq:prop1_1_1}) becomes
\begin{align*}
    \sum_{k=1}^K \rho^{K-k} \Delta_k^2 \le \frac{24\eta^2 H^2 \nu^2}{1 - \rho} + 32 \rho^{K} \eta^2 n^2 \nu^2 + 24 \eta^2 n^2 (\tau + 1)^2 \sum_{k=1}^K \rho^{K-k} \bignorm{\nabla F(\vx_0^k)}^2.
\end{align*}
Substituting this inequality to \lmmref{lmm:grabUBlmm1}, we obtain
\begin{align}
    F(\vx_n^{K}) - F^*
    &\le \rho^K (F(\vx_0^1) - F^*) + \frac{\eta n L^2}{2} \sum_{k=1}^K \rho^{K-k} \Delta_k^2 - \frac{\eta n}{4} \sum_{k=1}^{K} \rho^{K-k} \bignorm{\nabla F(\vx_0^k)}^2 \nonumber \\
    &\le \rho^K (F(\vx_0^1) - F^*) + \frac{12 \eta^3 n L^2 H^2 \nu^2}{1 - \rho} + 16 \rho^K \eta^3 n^3 L^2 \nu^2 \nonumber \\
    &\,\,\,\,\,\, + 12\eta^3 n^3 L^2 (\tau + 1)^2 \sum_{k=1}^K \rho^{K-k} \bignorm{\nabla F(\vx_0^k)}^2 - \frac{\eta n}{4} \sum_{k=1}^{K} \rho^{K-k} \bignorm{\nabla F(\vx_0^k)}^2 \nonumber \\
    &\le \rho^K (F(\vx_0^1) - F^*) + \frac{24 \eta^2 L^2 H^2 \nu^2}{\mu} + 16 \rho^K \eta^3 n^3 L^2 \nu^2, \label{eq:prop1_1_3}
\end{align}
where the last inequality holds because
\begin{align*}
    12\eta^3 n^3 L^2 (\tau + 1)^2
    &= \frac{\eta n}{4} \cdot 48 \eta^2 n^2 L^2 (\tau + 1)^2 \\
    &\le \frac{\eta n}{4} \cdot \frac{48}{64^2}  \quad\quad\quad\quad\quad\quad\quad\bigopen{\because \eta \le \frac{1}{64nL(\tau + 1)}}\\
    &< \frac{\eta n}{4}.
\end{align*}
The RHS of (\ref{eq:prop1_1_3}) can further be extended as
\begin{align}
    &\bigopen{1 - \frac{\eta n \mu}{2}}^K \bigopen{(F(\vx_0^1) - F^*) + 16 \eta^3 n^3 L^2 \nu^2} + \frac{24 \eta^2 L^2 H^2 \nu^2}{\mu} \nonumber\\
    &< e^{-\frac{\eta n \mu K}{2}} \bigopen{F(\vx_0^1) - F^* + \nu^2/L} + \frac{24 \eta^2 L^2 H^2 \nu^2}{\mu}. \label{eq:prop1_1_4}
\end{align}
Taking derivative of (\ref{eq:prop1_1_4}) with respect to $\eta$, we can obtain $\eta$ that minimizes (\ref{eq:prop1_1_4}) is
\begin{align*}
    \eta = \frac{2}{\mu n K} W_0 \bigopen{\frac{\bigopen{F(\vx_0^1) - F^* + \nu^2/L}\mu^3 n^2 K^2}{192H^2L^2\nu^2}},
\end{align*}
where $W_0$ denotes the Lambert W function. By substituting this $\eta$ to (\ref{eq:prop1_1_4}), we finally obtain
\begin{align}
    F(\vx_n^{K}) - F^* = \tilde{\mathcal{O}} \bigopen{\frac{H^2 L^2 \nu^2}{\mu^3 n^2 K^2}}. \label{eq:prop1_1_5}
\end{align}
In addition, to make use of such $\eta$ to obtain (\ref{eq:prop1_1_5}), the following condition
\begin{align*}
    \frac{2}{\mu n K} W_0 \bigopen{\frac{\bigopen{F(\vx_0^1) - F^* + \nu^2/L}\mu^3 n^2 K^2}{192H^2L^2\nu^2}} \le \frac{1}{64nL(\tau + 1)}
\end{align*}
must hold. Thus, we require
\begin{align*}
    K \gtrsim \kappa (\tau + 1)
\end{align*}
to guarantee the convergence rate.
\end{proof}

\subsection{\texorpdfstring{Lemmas used in \propref{prop:grabUBb}}{Lemmas used in Proposition 10}} \label{sec:grabUB_lmm}
\subsubsection{Proof for \lmmref{lmm:grabUBlmm1}}
\grabUBlmma*
\begin{proof} [Proof of \lmmref{lmm:grabUBlmm1}]
The update process within a $k$-th epoch can be written as:
\begin{align*}
    x_0^{k+1} = x_0^k - \eta n \cdot \frac{1}{n} \sum_{t=1}^{n} \nabla f_{\sigma_{k}(t)}\bigopen{\vx_{t-1}^k}.
\end{align*}
Using smoothness and $\inner{\va, \vb} = -\frac{1}{2} \norm{\va}^2 - \frac{1}{2} \norm{\vb}^2 + \frac{1}{2} \norm{\va - \vb}^2$, we get
\begin{align*}
    F(\vx_0^{k+1})
    &\le F(\vx_0^k) - \eta n \inner{\nabla F(\vx_0^k),\frac{1}{n} \sum_{t=1}^{n} \nabla f_{\sigma_{k}(t)}\bigopen{\vx_{t-1}^k}} + \frac{\eta^2 n^2 L}{2} \bignorm{\frac{1}{n} \sum_{t=1}^{n} \nabla f_{\sigma_{k}(t)}\bigopen{\vx_{t-1}^k}}^2\\
    &=F(\vx_0^k) - \frac{\eta n}{2} \bignorm{\nabla F(\vx_0^k)}^2 - \frac{\eta n}{2} \bignorm{\frac{1}{n} \sum_{t=1}^{n} \nabla f_{\sigma_{k}(t)}\bigopen{\vx_{t-1}^k}}^2\\
    &\mathrel{\phantom{=}} + \frac{\eta n}{2} \bignorm{\nabla F(\vx_0^k)- \frac{1}{n} \sum_{t=1}^{n} \nabla f_{\sigma_{k}(t)}\bigopen{\vx_{t-1}^k}}^2 + \frac{\eta^2 n^2 L}{2} \bignorm{\frac{1}{n} \sum_{t=1}^{n} \nabla f_{\sigma_{k}(t)}\bigopen{\vx_{t-1}^k}}^2\\
    &\le F(\vx_0^k) - \frac{\eta n}{2} \bignorm{\nabla F(\vx_0^k)}^2 + \frac{\eta n}{2} \bignorm{\nabla F(\vx_0^k) - \frac{1}{n} \sum_{t=1}^{n} \nabla f_{\sigma_{k}(t)}\bigopen{\vx_{t-1}^k}}^2,
\end{align*}
where we used $\eta n L < 1$ in the last inequality. In addition, we can expand the last term as
\begin{align*}
    \bignorm{\nabla F(\vx_0^k) - \frac{1}{n} \sum_{t=1}^{n} \nabla f_{\sigma_{k}(t)}\bigopen{\vx_{t-1}^k}}^2
    &= \bignorm{\frac{1}{n} \sum_{t=1}^{n} \nabla f_{\sigma_{k}(t)}(\vx_0^k) - \frac{1}{n} \sum_{t=1}^{n} \nabla f_{\sigma_{k}(t)}\bigopen{\vx_{t-1}^k}}^2 &&\\
    &\le \frac{1}{n} \sum_{t=1}^{n} \bignorm{\nabla f_{\sigma_{k}(t)}(\vx_0^k) - \nabla f_{\sigma_{k}(t)}\bigopen{\vx_{t-1}^k}}^2 &&(\because \text{Jensen's Inequality})\\
    &\le \frac{L^2}{n} \sum_{t=1}^{n} \bignorm{\vx_0^k - \vx_{t-1}^k}^2 &&(\because \text{smoothness})\\
    &\le L^2 \Delta_k^2. &&(\because \Delta_k = \max_{m=1, \cdots , n}  \| \vx_m^k - \vx_0^k \|)
\end{align*}
Combining these two results, we get
\begin{align*}
    F(\vx_0^{k+1}) \le F(\vx_0^k) + \frac{\eta n L^2 \Delta_k^2}{2} - \frac{\eta n}{2} \bignorm{\nabla F(\vx_0^k)}^2.
\end{align*}
Using the PŁ inequality, this inequality becomes
\begin{align*}
    F(\vx_0^{k+1})
    &\le F(\vx_0^k) + \frac{\eta n L^2 \Delta_k^2}{2} - \frac{\eta n}{4} \bignorm{\nabla F(\vx_0^k)}^2 - \frac{\eta n}{4} \bignorm{\nabla F(\vx_0^k)}^2\\
    &\le F(\vx_0^k) + \frac{\eta n L^2 \Delta_k^2}{2} - \frac{\eta n \mu}{2} (F(\vx_0^k) - F^*) - \frac{\eta n}{4} \bignorm{\nabla F(\vx_0^k)}^2.
\end{align*}
Define $\rho := 1 - \frac{\eta n \mu}{2}$. Subtracting $F^*$ on both sides, we get
\begin{align*}
    F(\vx_0^{k+1}) - F^* \le \rho (F(\vx_0^k) - F^*) + \frac{\eta n L^2 \Delta_k^2}{2} - \frac{\eta n}{4} \bignorm{\nabla F(\vx_0^k)}^2.
\end{align*}
This inequality holds for all $k \in \{ 1, \cdots, K\}$. Unrolling for entire epochs gives
\begin{align*}
    F(\vx_0^{K+1}) - F^* \le \rho^K (F(\vx_0^1) - F^*) + \frac{\eta n L^2}{2} \sum_{k=1}^K \rho^{K-k} \Delta_k^2 - \frac{\eta n}{4} \sum_{k=1}^{K} \rho^{K-k} \bignorm{\nabla F(\vx_0^k)}^2.
\end{align*}
This ends the proof of the lemma.
\end{proof}

\subsubsection{Proof for \lmmref{lmm:grabUBlmm2}}
\grabUBlmmb*
\begin{proof} [Proof of \lmmref{lmm:grabUBlmm2}]
We first consider the situation after the first epoch. For $m \in [n]$ and $k \in [K]\setminus \{1\}$, proper additions and subtractions give us
\begin{alignat*}{2}
    \vx_m^k
    &= \vx_0^k & &- \eta \sum_{t=1}^{m} \nabla f_{\sigma_k(t)} \left(\vx_{t-1}^k \right)\\
    &= \vx_0^k & &- \eta \sum_{t=1}^{m} \nabla f_{\sigma_k(t)} \left(\vx_{\sigma_{k-1}^{-1} (\sigma_k (t))-1}^{k-1} \right)\\
    & & &- \eta \sum_{t=1}^{m} \left( \nabla f_{\sigma_k(t)} \left(\vx_{t-1}^k \right) - \nabla f_{\sigma_k(t)} \left(\vx_{\sigma_{k-1}^{-1} (\sigma_k (t))-1}^{k-1} \right) \right)\\
    &= \vx_0^k & &- \eta \sum_{t=1}^{m} \left( \nabla f_{\sigma_k(t)} \left(\vx_{\sigma_{k-1}^{-1} (\sigma_k (t))-1}^{k-1} \right) - \frac{1}{n} \sum_{s=1}^{n} \nabla f_{\sigma_{k-1}(s)} \left( \vx_{s-1}^{k-1} \right) \right)\\
    & & &- \frac{\eta m}{n} \sum_{s=1}^{n} \nabla f_{\sigma_{k-1}(s)} \left( \vx_{s-1}^{k-1} \right) \\
    & & &- \eta \sum_{t=1}^{m} \left( \nabla f_{\sigma_k(t)} \left(\vx_{t-1}^k \right) - \nabla f_{\sigma_k(t)} \left(\vx_{\sigma_{k-1}^{-1} (\sigma_k (t))-1}^{k-1} \right) \right)\\
    &= \vx_0^k & &- \eta \sum_{t=1}^{m} \left( \nabla f_{\sigma_k(t)} \left(\vx_{\sigma_{k-1}^{-1} (\sigma_k (t))-1}^{k-1} \right) - \frac{1}{n} \sum_{s=1}^{n} \nabla f_{\sigma_{k-1}(s)} \left( \vx_{s-1}^{k-1} \right) \right)\\
    & & &- \eta m \nabla F \left( \vx_n^{k-1} \right)\\
    & & &- \frac{\eta m}{n} \sum_{s=1}^{n} \left( \nabla f_{\sigma_{k-1}(s)} \left( \vx_{s-1}^{k-1} \right) - \nabla f_{\sigma_{k-1}(s)} \left( \vx_n^{k-1} \right) \right)\\
    & & &- \eta \sum_{t=1}^{m} \left( \nabla f_{\sigma_k(t)} \left(\vx_{t-1}^k \right) - \nabla f_{\sigma_k(t)} \left(\vx_{\sigma_{k-1}^{-1} (\sigma_k (t))-1}^{k-1} \right) \right).
\end{alignat*}
Here, $\sigma_{k-1}^{-1} (t)$ indicates in which iteration is the $t$-th sample used at the $(k-1)$-th epoch and $\nabla f_{\sigma_k(t)} \bigopen{\vx_{\sigma_{k-1}^{-1} (\sigma_k (t))-1}^{k-1}}$ indicates the gradient with respect to the same sample used in the $t$-th iteration of the $k$-th epoch, but which was computed previously in the $(k-1)$-th epoch. Using the triangle inequality, we gain
\begin{align}
    \bignorm{ \vx_m^k - \vx_0^k }
    &\le \eta \bignorm{ \sum_{t=1}^{m} \left( \nabla f_{\sigma_k(t)} \left(\vx_{\sigma_{k-1}^{-1} (\sigma_k (t))-1}^{k-1} \right) - \frac{1}{n} \sum_{s=1}^{n} \nabla f_{\sigma_{k-1}(s)} \left( \vx_{s-1}^{k-1} \right) \right) } \nonumber \\
    &\,\,\,\,\,\,+ \eta m \bignorm{ \nabla F \left( \vx_n^{k-1} \right) } \nonumber \\
    &\,\,\,\,\,\,+ \frac{\eta m}{n} \bignorm{ \sum_{s=1}^{n} \left( \nabla f_{\sigma_{k-1}(s)} \left( \vx_{s-1}^{k-1} \right) - \nabla f_{\sigma_{k-1}(s)} \left( \vx_n^{k-1} \right) \right) } \nonumber \\
    &\,\,\,\,\,\,+ \eta \bignorm{ \sum_{t=1}^{m} \bigopen{ \nabla f_{\sigma_k(t)} \left(\vx_{t-1}^k \right) - \nabla f_{\sigma_k(t)} \left(\vx_{\sigma_{k-1}^{-1} (\sigma_k (t))-1}^{k-1} \right) }}. \label{eq:902_1}
\end{align}
Here, the first term in (\ref{eq:902_1}) is the term in which \textit{Herding} intervenes and it enables us to gain the upper bound. To do so, we first upper bound the norm of each component as\\
\begin{alignat*}{2}
    & & &\bignorm{ \nabla f_{\sigma_k(t)} \left(\vx_{\sigma_{k-1}^{-1} (\sigma_k (t))-1}^{k-1} \right) - \frac{1}{n} \sum_{s=1}^{n} \nabla f_{\sigma_{k-1}(s)} \left( \vx_{s-1}^{k-1} \right) }\\
    &\le & &\bignorm{ \nabla f_{\sigma_k(t)} \left(\vx_{\sigma_{k-1}^{-1} (\sigma_k (t))-1}^{k-1} \right) - \frac{1}{n} \sum_{s=1}^{n} \nabla f_{\sigma_{k-1}(s)} \left(\vx_{\sigma_{k-1}^{-1} (\sigma_k (t))-1}^{k-1} \right) }\\
    & & &+\bignorm{ \frac{1}{n} \sum_{s=1}^{n} \nabla f_{\sigma_{k-1}(s)} \left(\vx_{\sigma_{k-1}^{-1} (\sigma_k (t))-1}^{k-1} \right) - \frac{1}{n} \sum_{s=1}^{n} \nabla f_{\sigma_{k-1}(s)} \left( \vx_{s-1}^{k-1} \right) }\\
    &\le & & \bigopen{\nu + \tau \bignorm{ \nabla F \left(\vx_{\sigma_{k-1}^{-1} (\sigma_k (t))-1}^{k-1} \right) }} + \frac{L}{n} \sum_{s=1}^{n} \bignorm{ \vx_{\sigma_{k-1}^{-1} (\sigma_k (t))-1}^{k-1} - \vx_{s-1}^{k-1} }\\
    &\le & & \, \nu + \tau \left(\bignorm{ \nabla F \left( \vx_0^k \right) } + \bignorm{ \nabla F \left( \vx_0^{k-1} \right) - \nabla F \left( \vx_0^k \right) } + \bignorm{ \nabla F \left(\vx_{\sigma_{k-1}^{-1} (\sigma_k (t))-1}^{k-1} \right) - \nabla F \left( \vx_0^{k-1} \right) } \right)\\
    & & &+ \frac{L}{n} \sum_{s=1}^{n} \left( \bignorm{ \vx_{\sigma_{k-1}^{-1} (\sigma_k (t))-1}^{k-1} - \vx_0^{k-1} } + \bignorm{ \vx_0^{k-1} - \vx_{s-1}^{k-1} } \right)\\
    &\le & & \, \nu + \tau \left(\bignorm{ \nabla F \left( \vx_0^k \right) } + 2L\Delta_{k-1} \right) + 2L\Delta_{k-1}\\
    &= & & \, \nu + \tau \cdot \bignorm{ \nabla F \left( \vx_0^k \right) } + 2L\left( \tau + 1 \right) \cdot \Delta_{k-1}.
\end{alignat*}
Now, define $z_t^k$ as
\begin{align*}
    z_t^k := \frac{\nabla f_{\sigma_k(t)} \left(\vx_{\sigma_{k-1}^{-1} (\sigma_k (t))-1}^{k-1} \right) - \frac{1}{n} \sum_{s=1}^{n} \nabla f_{\sigma_{k-1}(s)} \left( \vx_{s-1}^{k-1} \right)}{\nu + \tau \cdot \bignorm{ \nabla F \left( \vx_0^k \right) } + 2L\left( \tau + 1 \right) \cdot \Delta_{k-1}}
\end{align*}
for $t \in [n]$. Then, $\bignorm{z_t^k} \le 1$ holds.\\
We now apply Herding algorithm to upper bound the first term of (\ref{eq:902_1}).
Since $\|z_t^k\| \le 1$, we then get following inequality for all $m \in [n]$:
\begin{align}
    \bignorm{ \sum_{t=1}^{m} \left( \nabla f_{\sigma_k(t)} \left(\vx_{\sigma_{k-1}^{-1} (\sigma_k (t))-1}^{k-1} \right) - \frac{1}{n} \sum_{s=1}^{n} \nabla f_{\sigma_{k-1}(s)} \left( \vx_{s-1}^{k-1} \right) \right) } \nonumber \\
    \le H \left( \nu + \tau \cdot \bignorm{ \nabla F \left( \vx_0^k \right) } + 2L\left( \tau + 1 \right) \cdot \Delta_{k-1} \right). \label{eq:902_2}
\end{align}
For the remaining terms in (\ref{eq:902_1}), we can upper bound each of them by
\begin{align}
    \bignorm{ \sum_{s=1}^{n} \left( \nabla f_{\sigma_{k-1}(s)} \left( \vx_{s-1}^{k-1} \right) - \nabla f_{\sigma_{k-1}(s)} \left( \vx_n^{k-1} \right) \right) }
    &\le \sum_{s=1}^{n} \bignorm{ \nabla f_{\sigma_{k-1}(s)} \left( \vx_{s-1}^{k-1} \right) - \nabla f_{\sigma_{k-1}(s)} \left( \vx_n^{k-1} \right) } \nonumber \\
    &\le L\sum_{s=1}^{n} \bignorm{ \vx_{s-1}^{k-1} - \vx_n^{k-1} } \nonumber \\
    &\le L\sum_{s=1}^{n} \left( \bignorm{ \vx_{s-1}^{k-1} - \vx_0^{k-1} } + \bignorm{ \vx_0^{k-1} - \vx_n^{k-1} } \right) \nonumber \\
    &\le 2nL \Delta_{k-1} \label{eq:902_3}
\end{align}
and
\begin{align}
    &\bignorm{ \sum_{t=1}^{m} \left( \nabla f_{\sigma_k(t)} \left(\vx_{t-1}^k \right) - \nabla f_{\sigma_k(t)} \left(\vx_{\sigma_{k-1}^{-1} (\sigma_k (t))-1}^{k-1} \right) \right) } \nonumber\\
    &\le \sum_{t=1}^{m} \bignorm{ \nabla f_{\sigma_k(t)} \left(\vx_{t-1}^k \right) - \nabla f_{\sigma_k(t)} \left(\vx_{\sigma_{k-1}^{-1} (\sigma_k (t))-1}^{k-1} \right) } \nonumber \\
    &\le L \sum_{t=1}^{m} \bignorm{ \vx_{t-1}^k - \vx_{\sigma_{k-1}^{-1} (\sigma_k (t))-1}^{k-1} } \nonumber \\
    &\le L \sum_{t=1}^{m} \left( \bignorm{ \vx_{t-1}^k - \vx_0^k } + \bignorm{ \vx_0^k - \vx_0^{k-1} } + \bignorm{ \vx_0^{k-1} - \vx_{\sigma_{k-1}^{-1} (\sigma_k (t))-1}^{k-1} } \right) \nonumber \\
    &\le mL \left( \Delta_{k} + 2\Delta_{k-1} \right). \label{eq:902_4}
\end{align}
By summing up (\ref{eq:902_2})-(\ref{eq:902_4}) and taking a max over $m \in \{1, \cdots, n \}$ on both side of (\ref{eq:902_1}),
\begin{align*}
    \Delta_k
    &\le \eta H \left( \nu + \tau \cdot \left \Vert \nabla F \left( \vx_0^k \right) \right \Vert + 2L\left( \tau + 1 \right) \cdot \Delta_{k-1} \right)\\
    &\,\,\,\,+ \eta n \left \Vert \nabla F \left( \vx_0^k \right) \right \Vert + \frac{\eta n}{n} \cdot 2nL \Delta_{k-1} + \eta nL \left( \Delta_{k} + 2\Delta_{k-1} \right)\\
    &\le \eta H \nu + (\eta H \tau + \eta n) \cdot \left \Vert \nabla F \left( \vx_0^k \right) \right \Vert + \left( 2\eta HL(\tau+1) + 4\eta nL \right) \cdot \Delta_{k-1} + \eta nL \Delta_{k}.
\end{align*}
Using $\eta nL \le \frac{1}{2}$, we finally get
\begin{align*}
    \Delta_k \le 2\eta H \nu + (2\eta H \tau + 2\eta n) \cdot \left \Vert \nabla F \left( \vx_0^k \right) \right \Vert + \left( 4\eta HL(\tau+1) + 8\eta nL \right) \cdot \Delta_{k-1}.
\end{align*}
We now move on to the first epoch case. By properly decomposing the term, we gain
\begin{alignat*}{2}
    \vx_m^1
    &= \vx_0^1 & &- \eta \sum_{t=1}^{m} \nabla f_{\sigma_1(t)} \left( \vx_{t-1}^{1} \right)\\
    &= \vx_0^1 & &- \eta \sum_{t=1}^{m} \frac{1}{n} \sum_{s=1}^{n} \nabla f_{\sigma_1(s)} \left( \vx_0^{1} \right)\\
    & & &- \eta \sum_{t=1}^{m} \left( \nabla f_{\sigma_1(t)} \left( \vx_0^1 \right) - \frac{1}{n} \sum_{s=1}^{n} \nabla f_{\sigma_1(s)} \left( \vx_0^{1} \right) \right)\\
    & & &- \eta \sum_{t=1}^{m} \left( \nabla f_{\sigma_1(t)} \left( \vx_{t-1}^{1} \right) - \nabla f_{\sigma_{1} (t)} \left( \vx_0^1 \right) \right).
\end{alignat*}
In a similar way to the technique we used above, we have
\begin{align*}
    \bignorm{\vx_m^1 - \vx_0^1}
    &\le \eta \bignorm{\sum_{t=1}^{m} \frac{1}{n} \sum_{s=1}^{n} \nabla f_{\sigma_1(s)} \left( \vx_0^{1} \right)}\\
    &\,\,\,\,\,\,+\eta \bignorm{\sum_{t=1}^{m} \left( \nabla f_{\sigma_1(t)} \left( \vx_0^1 \right) - \frac{1}{n} \sum_{s=1}^{n} \nabla f_{\sigma_1(s)} \left( \vx_0^{1} \right) \right)}\\
    &\,\,\,\,\,\,+\eta \bignorm{\sum_{t=1}^{m} \left( \nabla f_{\sigma_1(t)} \left( \vx_{t-1}^{1} \right) - \nabla f_{\sigma_{1} (t)} \left( \vx_0^1 \right) \right)}\\
    &\le \eta \sum_{t=1}^{m} \bignorm{\nabla F \bigopen{\vx_0^{1}}}\\
    &\,\,\,\,\,\,+\eta \sum_{t=1}^{m} \bignorm{\nabla f_{\sigma_1(t)} \left( \vx_0^1 \right) - \frac{1}{n} \sum_{s=1}^{n} \nabla f_{\sigma_1(s)} \left( \vx_0^{1} \right)}\\
    &\,\,\,\,\,\,+\eta \sum_{t=1}^{m} L \bignorm{\vx_{t-1}^1 - \vx_0^1}.
\end{align*}
Taking a max over $m \in \{1, \cdots, n\}$ in both sides, we gain
\begin{align*}
    \Delta_1 \le \eta n \left \Vert \nabla F \left( \vx_0^1 \right) \right \Vert + \eta n \left( \nu + \tau \cdot \left \Vert \nabla F \left( \vx_0^1 \right) \right \Vert \right) + \eta n L \Delta_1
\end{align*}
and using the fact that $\eta n L \le \frac{1}{2}$, we finally obtain
\begin{align*}
    \Delta_1 \le 2 \eta n \nu + 2 \eta n (\tau+1) \cdot \bignorm{\nabla F(\vx_0^1)}.
\end{align*}
\end{proof}

\end{document}